\documentclass[1pt]{article}

\usepackage[letterpaper,margin=1in]{geometry}
\usepackage{amsthm}
\usepackage{microtype}
\usepackage{newpxtext,newpxmath} 
\usepackage{wrapfig}
\usepackage{sourcesanspro}       
\usepackage{inconsolata}         
\usepackage{enumitem,booktabs,graphicx}
\usepackage[hidelinks]{hyperref}
\usepackage[nameinlink]{cleveref}
\usepackage{empheq}   
\usepackage{array}
\usepackage{needspace} 

\usepackage[table]{xcolor}
\definecolor{flatblue}{HTML}{EAF2FF} 

\usepackage{natbib}

\usepackage{amsmath,amsfonts,bm}
\usepackage{pifont}

\def\eqref#1{equation~\ref{#1}}
\def\Eqref#1{Equation~\ref{#1}}

\def\1{\bm{1}}

\def\mA{{\bm{A}}}
\def\mB{{\bm{B}}}
\def\mC{{\bm{C}}}
\def\mD{{\bm{D}}}

\def\mF{{\bm{F}}}
\def\mG{{\bm{G}}}
\def\mH{{\bm{H}}}

\def\mI{{\bm{I}}}

\def\mK{{\bm{K}}}
\def\mL{{\bm{L}}}
\def\mM{{\bm{M}}}

\def\mP{{\bm{P}}}
\def\mQ{{\bm{Q}}}
\def\mR{{\bm{R}}}
\def\mS{{\bm{S}}}

\def\mU{{\bm{U}}}
\def\mV{{\bm{V}}}
\def\mW{{\bm{W}}}
\def\mX{{\bm{X}}}
\def\mx{{\bm{x}}}
\def\mY{{\bm{Y}}}
\def\mZ{{\bm{Z}}}

\def\mLambda{{\bm{\Lambda}}}
\def\mSigma{{\bm{\Sigma}}}

\DeclareMathAlphabet{\mathsfit}{\encodingdefault}{\sfdefault}{m}{sl}
\SetMathAlphabet{\mathsfit}{bold}{\encodingdefault}{\sfdefault}{bx}{n}

\def\gJ{{\mathcal{J}}}

\def\gL{{\mathcal{L}}}

\def\gO{{\mathcal{O}}}

\newcommand{\E}{\mathbb{E}}

\newcommand{\R}{\mathbb{R}}

\newcommand{\Var}{\mathrm{Var}}

\newcommand{\Cov}{\mathrm{Cov}}

\definecolor{ForestGreen}{rgb}{0.13, 0.55, 0.13}
\definecolor{coolblack}{rgb}{0.0, 0.18, 0.39}
\DeclareMathOperator*{\argmax}{arg\,max}
\DeclareMathOperator*{\argmin}{arg\,min}

\DeclareMathOperator{\sign}{sign}

\newcommand{\loss}{\mathcal{L}}

\newcommand{\cov}{\bm{\Sigma}}
\newcommand{\noise}{\bm{\epsilon}}
\newcommand{\mNG}{\tilde{\bm{G}}}

\newcommand{\mEG}{\bar{\mG}}
\DeclareMathOperator{\polar}{\mathrm{Polar}}
\DeclareMathOperator{\tr}{\mathrm{Tr}}
\DeclareMathOperator{\EVD}{\mathrm{EVD}}

\DeclareMathOperator{\var}{\mathrm{Var}}
\DeclareMathOperator{\diag}{\mathrm{Diag}}
\DeclareMathOperator{\erf}{\mathrm{erf}}

\newcommand{\arosink}{ARO-Sinkhorn}
\newcommand{\alignscore}{S_A}
\newcommand{\Corr}{\mathrm{Corr}}

\newtheorem{definition}{Definition}[section]
\newtheorem{theorem}{Theorem}[section]

\newtheorem{lemma}[theorem]{Lemma}

\newtheorem{assumption}{Assumption}[section]

\definecolor{primary}{HTML}{1F3A56}  
\definecolor{accentBlue}{HTML}{3D8BFD}   
\definecolor{softbgBlue}{HTML}{EAF2FF}   

\definecolor{softbgPink}{RGB}{255, 230, 240}  
\definecolor{accentPink}{RGB}{230, 80, 120}

\definecolor{softbgPurple}{HTML}{F2ECFF}  
\definecolor{accentPurple}{HTML}{463668}  

\definecolor{soapgreen}{HTML}{E8F5E9}   %
\definecolor{spluspurple}{HTML}{F3E8FF} %
\definecolor{muonorange}{HTML}{FFE9D6}  %

\definecolor{softbgGreen}{HTML}{F7FFF2}   
\definecolor{accentGreen}{HTML}{07641D}   

\definecolor{AROBase}{HTML}{3B82F6}    
\definecolor{AROFull}{HTML}{2654A0}    
\definecolor{Muon}{HTML}{94A3B8}       
\definecolor{Dion}{HTML}{F97316}       
\definecolor{Eigen}{HTML}{93BAFA}      
\definecolor{AdamW}{HTML}{10B981}      

\hypersetup{colorlinks=true, linkcolor=accentBlue, citecolor=accentBlue, urlcolor=accentBlue}

\usepackage{subfigure}
\usepackage{caption}
\usepackage{subcaption}

\usepackage{algorithm}
\usepackage{algorithmic}

\usepackage{titlesec}
\titleformat{\section}{\Large\bfseries\sffamily}{\thesection}{0.6em}{}
\titleformat{\subsection}{\large\bfseries\sffamily}{\thesubsection}{0.6em}{}
\titleformat{\subsubsection}{\normalsize\bfseries\sffamily}{\thesubsubsection}{0.6em}{}
\setlist{itemsep=0.25em, topsep=0.3em}

\newtheorem{remark}{Remark}

\usepackage{titling}
\usepackage[most]{tcolorbox}
\newcommand{\name}{ARO} 
\tcbset{enhanced, frame hidden, boxrule=0pt}

\newcommand{\wrapfill}{%
  \par
  \ifnum\value{WF@wrappedlines}>0
    \addtocounter{WF@wrappedlines}{-1}%
    \null\vspace{\arabic{WF@wrappedlines}\baselineskip}%
    \WFclear
  \fi
}

\newcommand{\appendixtitle}[1][Appendix]{%
  \clearpage
  \appendix
  \phantomsection                           
  \addcontentsline{toc}{section}{#1}        
  \begin{center}
    {\LARGE\bfseries\sffamily #1\par}       
  \end{center}
  \vspace{0.8em}
}

\preauthor{\par\centering\rmfamily\large}
\postauthor{\par\centering\vspace{0.25em}}
\predate{} \postdate{}

\newlength{\boxedabstractindent}
\newenvironment{boxedabstract}
{%
  \begin{tcolorbox}[colback=softbgBlue, left=10pt, right=10pt, top=8pt, bottom=8pt, arc=1mm]%
  {\centering\large\bfseries Abstract\par}%
  \vspace{0.4\baselineskip}%
  \setlength{\parskip}{0.6\baselineskip}%
  \setlength{\parindent}{\boxedabstractindent}
  \noindent\ignorespaces
}
{%
  \end{tcolorbox}\vspace{0.5em}%
}

\usepackage{fancyhdr}
\newtcolorbox{questions}[1][]{
  enhanced,
  breakable,
  colback=softbgBlue,     
  colframe=accentBlue, 
  boxrule=0.6pt,
  arc=1mm,
  left=10pt, right=10pt, top=10pt, bottom=10pt,
  fonttitle=\sffamily\bfseries,
  boxed title style={colback=white, colframe=accentPurple, boxrule=0.6pt, arc=1mm},
  #1
}

\newtcolorbox{remarks}[1][]{
  enhanced,
  breakable,
  colback=softbgBlue,      
  colframe=accentBlue,   
  boxrule=0.6pt,
  arc=1mm,
  left=10pt, right=10pt, top=10pt, bottom=10pt,
  fonttitle=\sffamily\bfseries,
  boxed title style={colback=white, colframe=accentBlue, boxrule=0.6pt, arc=1mm},
  #1
}

\newtcolorbox{findings}[1][]{
  enhanced,
  breakable,
  colback=softbgPink,    
  colframe=accentPink,   
  boxrule=0.6pt,
  arc=1mm,
  left=10pt, right=10pt, top=10pt, bottom=10pt,
  fonttitle=\sffamily\bfseries,
  boxed title style={colback=white, colframe=accentPink, boxrule=0.6pt, arc=1mm},
  #1
}

\newtcolorbox{takeaways}[1][]{
  enhanced,
  breakable,
  colback=white,     
  colframe=black,
  boxrule=1.0pt,
  frame style={draw=black, fill=none},
  arc=1mm,
  left=10pt, right=10pt, top=10pt, bottom=10pt,
  #1
}

\newcounter{rq}

\usepackage{enumitem}
\newlist{rqlist}{enumerate}{1}
\setlist[rqlist]{%
  label=\textbf{Q\arabic*:},     
  leftmargin=2.6em,
  itemsep=0.25em, topsep=0.25em,
  before=\setcounter{rqlisti}{\value{rq}},
  after=\global\setcounter{rq}{\value{rqlisti}}%
}

\makeatletter

\newcommand{\ms}[1]{\textsuperscript{\textcolor{accentBlue}{\ensuremath{#1}}}}

\newcommand{\titlefootnote}[1]{%
  \g@addto@macro\@titlefootnotes{%
    \par\noindent\footnotesize #1%
  }%
}
\newcommand{\@titlefootnotes}{} 

\newcommand{\printtitlefootnotes}{%
  \begingroup
  \renewcommand{\thefootnote}{}%
  \footnotetext{%
    \@titlefootnotes
  }%
  \endgroup
}

\renewcommand{\maketitle}{%
  \begin{center}
    {\LARGE\bfseries\sffamily \@title \par}
    \vspace{0.8em}
    {\large\rmfamily \@author \par}
  \end{center}
  \vspace{0.8em}
  \printtitlefootnotes
}

\makeatother

\title{\textbf{ARO: A New Lens On Matrix Optimization For Large Models}}

\author{%
  Wenbo Gong\ms{1, \star},\;
  Javier Zazo\ms{1},\;
  Qijun Luo\ms{2, \ddagger},\; \\
  Puqian Wang\ms{3, \ddagger},\; 
  James Hensman\ms{1},\;
  Chao Ma\ms{1, \star}
  \\[0.4em]
  {\fontsize{10.5}{12}\selectfont\emph{
    \ms{1}Microsoft Research \quad
    \ms{2}The Chinese University of Hong Kong, Shenzhen \quad
    \ms{3}University of Wisconsin{-}Madison
  }}
}

\titlefootnote{\textcolor{accentBlue}{\ensuremath{\star}} Equal contribution.}
\titlefootnote{\textcolor{accentBlue}{\ensuremath{\ddagger}} Work by Qijun Luo and Puqian Wang was done during their internships at Microsoft Research Cambridge.}
\titlefootnote{Corresponds to: Chao Ma (\href{mailto:chaoma@microsoft.com}{chaoma@microsoft.com}) and Wenbo Gong (\href{mailto:wenbogong@microsoft.com}{wenbogong@microsoft.com}).}

\begin{document}

\maketitle

\begin{boxedabstract}
Matrix-based optimizers have attracted growing interest for improving LLM training efficiency, with significant progress centered on orthogonalization/whitening based methods. While yielding substantial performance gains, a fundamental question arises: can we develop new paradigms beyond orthogonalization, pushing the efficiency frontier further? We present \textbf{Adaptively Rotated Optimization (\name{})}, a new matrix optimization framework that treats gradient rotation as a first class design principle. \name{} accelerates LLM training by performing normed steepest descent in a rotated coordinate system, where the rotation is determined by a novel norm-informed policy. This perspective yields update rules that go beyond existing orthogonalization and whitening optimizers, improving sample efficiency in practice. To make comparisons reliable, we propose a rigorously controlled benchmarking protocol that reduces confounding and bias. Under this protocol, {\name{} consistently outperforms AdamW (by  1.3 $\sim$1.35$\times$) and orthogonalization methods (by 1.1$\sim$1.15$\times$) in LLM pretraining at up to 8B activated parameters, and up to $8\times$ overtrain budget, without evidence of diminishing returns}. Finally, we discuss how \name{} can be reformulated as a symmetry-aware optimizer grounded in rotational symmetries of residual streams, motivating advanced designs that enable computationally efficient exploitation of cross-layer/cross module couplings.
\end{boxedabstract}

\begin{figure}[h]
    \centering
\includegraphics[width=0.8\linewidth]{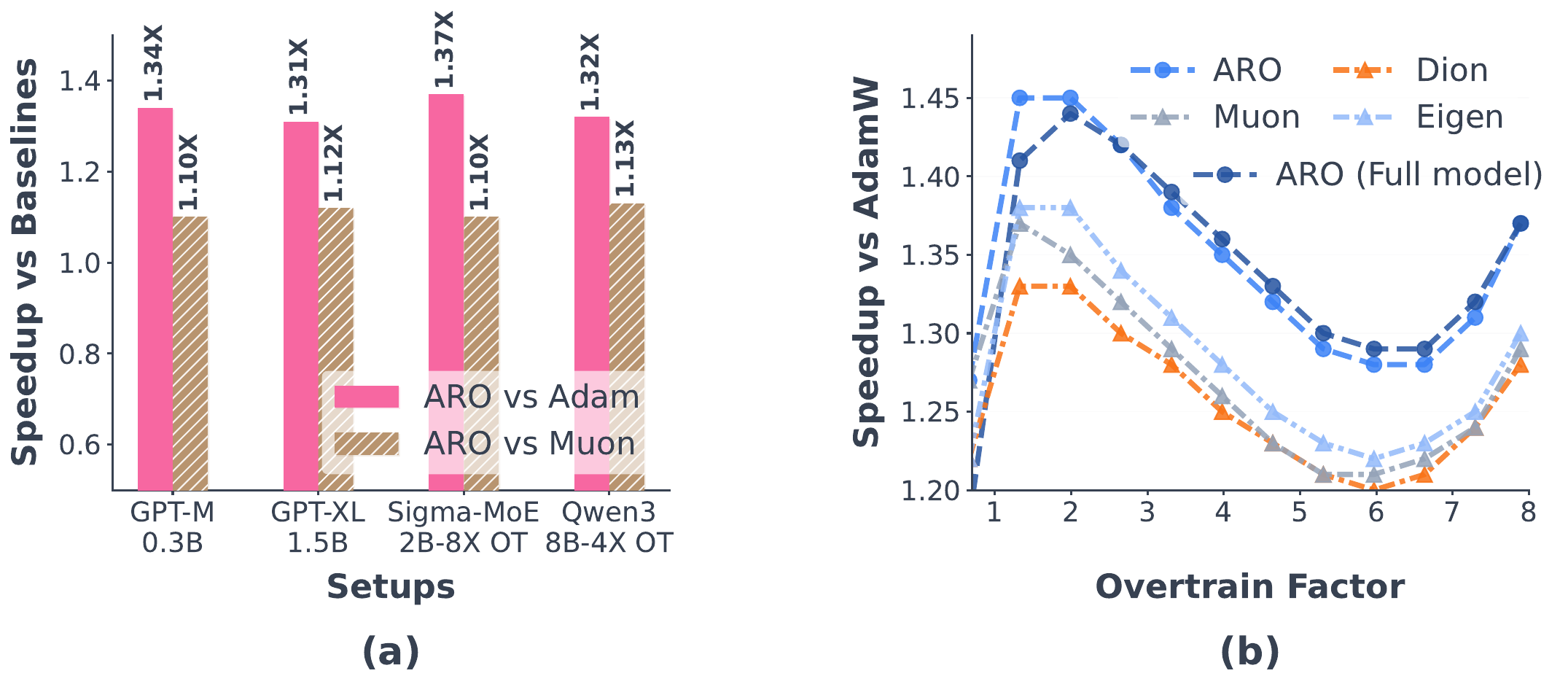}
    \caption{Scaling results preview of \name{} under a rigorous, controlled benchmarking protocol.
\textbf{(a)} \textbf{\name{} delivers consistent, non-diminishing speedups over AdamW and Muon} across model scales (up to 8B activated parameters) and training budgets (up to $8\times$ overtrain, denoted by $\texttt{OT}$).
\textbf{(b)} In the Sigma-MoE-2B \citep{hu2025sigmamoetinytechnicalreport} example, \textbf{\textcolor{AROBase}{\name{}}} consistently outperforms AdamW,
orthogonalization-based methods (\textbf{\textcolor{Muon}{Muon}}, \textbf{\textcolor{Dion}{Dion}}),
as well as conventional eigenvector-rotated optimizers (\textbf{\textcolor{Eigen}{Eigen}}) across overtraining budgets.
Moreover, \name{} is applicable to \textbf{all matrix parameters}
(\textbf{\textcolor{AROFull}{\name{} full-model}} mode, including embeddings and LM heads), yielding better long-horizon performance than \textbf{\textcolor{AROBase}{\name{}}} applied to hidden layers only.
}
    \label{fig: frontpage}
\end{figure}

\tableofcontents

\newpage
\section{Introduction}
Training large language models \citep{achiam2023gpt, guo2025deepseek, yang2025qwen3} is computationally demanding due to their scale and the complexity of the optimization landscape. In frontier model training, optimization has been dominated by the Adam family \citep{kingma2014adam, loshchilov2017decoupled, hernandez2025apertus}. Recently, \emph{matrix-based} optimizers \citep{ carlson2015stochastic, carlson2015stochasticb, carlson2015preconditioned, gupta2018shampoo, vyas2024soap, anil2020scalable, jordan2024muon, ma2024swan, pethick2025training, scetbon2025gradient, gong2025towards, ahn2025dion, frans2025really} have attracted growing interest for their potential to improve training efficiency. Early successful explorations of new optimizers in frontier model training \citep{liu2025muon, zeng2025glm, team2025kimi} reflects a trend toward geometry-aware optimizers for large-scale training.

To date, a majority of matrix optimization methods for LLMs literature has concentrated on orthogonalization /whitening-based approaches \citep{yang2008principal, carlson2015stochastic, carlson2015stochasticb, carlson2015preconditioned, gupta2018shampoo, anil2020scalable, jordan2024muon, ma2024swan, pethick2025training, gong2025towards, ahn2025dion, vyasimproving, lu2025understanding, eschenhagen2025purifying, xie2025structured, an2025asgo}. While there has been important progress in improving Muon \citep{lau2025polargrad, ahn2025dion, ma2024swan, pethick2025training, he2025root, shulgin2025beyond, amsel2025polar, si2025adamuon, li2025normuon, behrouz2025nested, wen2025hyperball, xie2026controlled, wang2026olionapproachinghadamardideal, yang2026prismstructuredoptimizationanisotropic}, the majority of those advances retain orthogonalization as the core backbone, composing it with complementary components such as adaptive scaling \citep{ si2025adamuon, li2025normuon, yang2026prismstructuredoptimizationanisotropic}, multi-scale momentum \citep{behrouz2025nested, ueaj_multiscale_muon}, additional constraints \citep{pethick2025training, wen2025hyperball, xie2026controlled}, or other add-ons \citep{wang2026olionapproachinghadamardideal}. Therefore, it remains unclear if orthogonalization is the canonical primitive for matrix optimization, or one effective instance of a broader principle that leads to qualitatively different designs. This leads to a fundamental question: 

\begin{takeaways}
    \begin{center}
        \emph{Can we develop new matrix optimization paradigms that move beyond orthogonalization, \\ and thereby push the efficiency frontier further?}
    \end{center}
\end{takeaways}

Additionally, early explorations with matrix optimizers also reveals several practical challenges. First, matrix-based approaches are not yet \emph{full-model} optimizers in common practices: key parameter groups, such as embeddings and LM head parameters, are often excluded and continue to use element-wise Adam-style updates \citep{zhao2024galore, jordan2024muon, liu2025muon, pethick2025training}. This have been identified as one of the top open challenges in \citep{liu2025muon}. While these parameter groups are smaller in sheer count than hidden layers, leaving important parts of the model governed by legacy heuristics is an indication of gaps in our understanding of LLM optimization. 

Second, most existing matrix optimizers are designed and applied at a per-layer granularity: each weight matrix is updated in isolation. As a result, they largely ignore cross-layer and cross-module geometric couplings. Full-parameter second-order methods \citep{abreu2025potential} in principle exploits these couplings, but are infeasible at large scale. This motivates the need for new principles for exploiting cross-layer geometry economically.

Finally, recent benchmarking studies \citep{wen2025fantastic, semenov2025benchmarking} raise a concerns about optimizer evaluation bias: with careful tuning, the speed-up and loss improvements of a number of new optimizers over AdamW diminishes at scale. While advanced techniques has been proposed to mitigate this issue \citep{wen2025hyperball}, their speedup over AdamW is still much more modest compared with the claimed $2$--$3\times$ gains in early reports \citep{liu2025muon, ahn2025dion, liu2023sophia,  ma2024swan}.  Therefore, it raises need for carefully controlled benchmarking of LLM optimizers at large-scale over-training regime, that is less commonly covered by related works.

Together, these observations underscore the need for new algorithms and principles that (i) produce novel update rules beyond gradient orthogonalization/whitening, further pushing the efficiency frontier; (ii) enable the use of unified update rule across LLM parameter types; (iii) offer mechanisms to exploit cross-layer/cross-module couplings economically, and (iv)  are benchmarked under carefully controlled, large-scale overstrained settings to debias confoundings and address diminishing return risks.

\subsection{Key insights: rotation as first class design principle} \label{sec: insights}
In this paper, we present our effort towards addressing these challenges. Our core idea is to leverage \emph{gradient rotation} as a unified interface for optimization across all model components. The starting observation is: many recent orthogonalization or whiteninig based matrix optimizers such as SOAP \citep{vyas2024soap}, Muon \citep{jordan2024muon}, and SPlus \citep{frans2025stable} can be reframed as running some version of Adam on a rotated coordiante system. Removing auxiliary designs such as moving average accumulations, and focus on one-sided preconditioning, the general form of such rotated optimization can be formulated as \textbf{rotated steepest descent} (\Cref{subsec: steepest descent under rotation}))
\begin{align} \label{eq: intro}
\Delta \mW_t &\propto -\eta \mR_tf_t(\mR_t^{\top} \mG_t), 
\end{align}
where $\eta$ is the stepsize, $\mW_t$ is a $m$ by $n$ weight matrix to be trained, and $\mG_t$ the corresponding gradient matrix. $\mR_t$ is a $m$ by $m$ orthonormal \emph{coordinate rotation matrix} $\mR_t \in SO(m)$, and $f_t$ is a nonlinear projection function $\R^{m \times n} \mapsto \R^{m \times n}$ characterizing a \emph{base optimizer} \footnote{In general, any optimizer can be written as a (stateful) function that maps the current gradient to the weight updates. See the discussion in \Cref{sec: warmup} for more details.} that ususally corresponds to steepest descent under certain norm. Intuitively, \Cref{eq: intro} performs steepest descent in a rotated coordinate system, and then rotate back to the original coordinate.

From this simplified setting, existing works exclusively consider $\mR_t$ to be the eigenvectors of $\mG_t \mG_t^\top$, or left singular vectors of $\mG_t$ \citep{nguyen2025improving}. Throughout the paper, we refer to this class of rotations as \emph{eigen-rotations}. Then, by varying $f_t$ within the class of Adam variants (Adam, SignGD \citep{bernstein2018signsgd} and row-normalized GD\footnote{We view row normalization as a special case of Adafactor \citep{shazeer2018adafactor}, and thus as a low-rank variant of Adam.} \citep{wensron, shazeer2018adafactor} ), we obtain SOAP/SPlus/Muon, respectively. See \Cref{tab: rotated_optimizers} for detail, and \cref{app: rotated_optimizers} for derivations.

\begin{table}[t]
\centering
\small
\setlength{\tabcolsep}{10pt}
\renewcommand{\arraystretch}{1.10}
\begin{tabular}{@{}
 >{\centering\arraybackslash}m{0.25\linewidth}
 >{\centering\arraybackslash}m{0.30\linewidth} 
 >{\centering\arraybackslash}m{0.40\linewidth}
@{}}

\toprule
\textbf{Method} & \(\mR_t \in SO(m)\) & \(f_t(\mX)\) \\
\midrule
SOAP \citep{vyas2024soap}
& \texttt{Eigenvectors}$(\mG_t\mG_t^\top)$
& $\text{Adam}(\mX)$ \\
\addlinespace[0.6em]
SPlus \citep{frans2025stable}
& \texttt{Eigenvectors}$(\mG_t\mG_t^\top)$
& $\text{Sign}(\mX)$ \\
\addlinespace[0.6em]
Spectral descent/Idealized Muon \citep{carlson2015preconditioned, jordan2024muon}
& \texttt{Eigenvectors}$(\mG_t \mG_t^\top)$
& $\sqrt{n}\,Q(\mX)^{-1}\mX,$ \newline  $Q(\mX) := \text{Diag}(\| \mX_{1,:}\|_2,...,\| \mX_{m,:}\|_2)$\\
\addlinespace[0.6em]
\rowcolor{flatblue}
\name{} (Ours)
& $\mathrm{QR}\!\big(\,\mG_t\, f_t( \mR_{t-1}^\top \mG_t)^\top \,\big)$, where $\mathrm{QR}(\cdot)$ returns the Q factor of its input
& General $f_t$ that is not rotational equivariant. In this paper we consider $\text{Sinkhorn}(\mX)$ \citep{scetbon2025gradient}, as well as $\text{Adam}(\mX)$, $\text{Sign}(\mX)$, and $\sqrt{n}\,Q(\mX)^{-1}\mX$.  \\
\bottomrule
\end{tabular}
\caption{Matrix optimizers unified as rotated optimization, $\Delta \mW_t \propto -\eta\,\mR_t f_t(\mR_t^\top\mG_t)$. We consider one-sided (left) rotations, and removed auxiliary designs such as moving averages/accumulations on $\mG_t$ or $\mG_t \mG_t^\top$ for clarity. Under this simplification, many existing methods (SOAP, Muon, SPlus) can be reformulated as rotated optimization: T
they rotate $\mG_t$ using eigenvectors of $\mG_t \mG_t^\top$/singular vectors of $\mG_t$, and apply specific $f_t$ on the rotated gradients. {\color[HTML]{007FFF}\name{} instead consider general $f_t$, and propose to use a non-eigen rotation informed by $f_t$, that does not necessarily align with the eigenvectors of $\mG_t \mG_t^\top$/singular vectors of $\mG_t$}.}
\label{tab: rotated_optimizers}
\end{table}

The form of \Cref{eq: intro} is not new, it is implicit in both whitening-based preconditioners \citep{vyas2024soap, frans2025stable} and spectral/orthogonalization updates of linear layers \citep{bernstein2024old}. 
We shift the emphasis from any metric-derived or norm-derived instantiation to the shared structural formula across these methods: perform steepest descent in an adaptively rotated coordinate system.
We hypothesize that a substantial fraction of the empirical gains of recent matrix optimizers stems from the rotation mechanism itself, rather than from \emph{a priori} prescribed norm (e.g., spectral norm) or to a paired fixed eigen-rotation and $f_t$. 
This hypothesis is partially supported by empirical evidence that: i) non-spectral descent variants (SOAP/Splus/Shampoo) perform competitively/even better than Muon \citep{frans2025really, wen2025fantastic}; ii) in our ablation in \Cref{subsec: universal effective} \Cref{fig: m1.2}, we show that sample efficiency is affected more by rotation than the choice of $f_t$.  

This leads to the core insight and methodology of our paper. Instead of treating the rotated steepest descent view \Cref{eq: intro} as a re-interpretation of existing methods, we elevate it to a first class principle to construct new first order optimizers. This allows us to consider general choices of $\mR_t$ beyond eigen-rotation, and general $f_t$ beyond Adam families, rather than constraining to canonical designs derived from structured second order methods \citep{gong2025towards, xie2025structured, morwani2024new} or whitening \citep{lu2025understanding, frans2025really}. We begin with the following questions:

\begin{questions}
\begin{rqlist}
  \item For a given base optimizer $f_t$, can we design rotation policies beyond canonical eigen-rotations - policies informed by $f_t$ that deliver improved robustness and efficiency?
  \item Can rotations enable the use of a single update rule across all LLM matrix parameters? Can it perform competitively or better than hybrid approaches (AdamW for non-hidden layers)?
  \item Can rotations act as a economic interface to enable exploitation of cross-layer and cross-module geometric couplings?
\end{rqlist}
\end{questions}

From our perspective, in general $\mR_t$ can be considered as a \emph{rotation policy} conditioned on the choice of \emph{projection rule} $f_t$. Under this lens, existing eigen-rotation based methods appear as special cases where $\mR_t$ and $f_t$ are decoupled. This opens the design space to \emph{non-eigen-rotations} coupled to $f_t$ - a capability that prior work does not capture. In our work, we provide a concrete, new rotation policy (\Cref{sec: aro}), namely \textbf{\name{} (Adaptively Rotated Optimization)} that gives positive answers to all above questions in our empirical validation. A preview of \name{} update rule is given in \Cref{tab: rotated_optimizers}.

Finally, after validating our methodology and insights, it becomes clear that the gradient rotations play a fundamental role in improving sample efficiency. A natural question to ask is the question of \emph{why}:

\begin{questions}
\begin{rqlist}
  \item Why do gradient rotations emerge as an effective primitive for LLM optimization?
  \item What does this suggest about the organizing principle for matrix optimizer design, and about what “matrix optimization” is fundamentally doing?
\end{rqlist}
\end{questions}

We present am extended discussion towards understanding the principles behind gradient rotation (\Cref{sec:symmetry}). We reveal a deep connection between rotated steepest descent and the parameter symmetries of neural networks.
We show that \name{} update rule can be derived as a variant of symmetry teleportation \citep{zhao2022symmetry} that exploits rotational symmetries of loss landscape induced by LLM architectures. This leads to a new candidate hypothesis for understanding and improving matrix optimizers from the lens of parameter symmetries, an important global structure of loss landscape often overlooked in the literature. 

\subsection{Contributions}

Building on those insights, we present \textbf{Adaptively Rotated Optimization (\name{})}, a full-stack, unified matrix optimization framework that speeds up LLM training by applying updates in a rotated coordinate system.
Our contributions are as follows:

\begin{itemize}
    \item \textbf{Rotation as first class design principle for efficient LLM training}. \name{} treats rotation as a unifying first class design principle for sample efficiency, and proposed a general gradient rotation policy that wraps any given base optimizer (e.g., \citep{kingma2014adam, bernstein2018signsgd, shazeer2018adafactor, scetbon2025gradient}) in a unified interface, and adaptively rotates the gradient into a new coordinate system, informed by the base optimizer. These non-eigen-rotations result in updates that are both more sample efficient and robust (\Cref{subsec: universal effective}), compared with their eigen-based variants, including gradient orthogonalization-based methods. We show that \name{} provides a single unified update rule for all LLM parameter classes -- including non-hidden layer parameters such has embeddings and LM heads -- closely matching or outperforming hybrid setups, and enabling consistent optimization across the entire model parameters (\Cref{subsec: Exp Sigma-MoE});

    \item \textbf{State-of-the-art pretraining performance with consistent gains at scale} We benchmark \name{} on both dense and MoE models across scales (up to 8B activated parameters),  and training budgets (up to 8$\times$ over-train). To reduce evaluation bias and ensure realistic comparisons, we propose practical guidelines for controlled benchmarking (\Cref{exp: guidelines}) and tune AdamW baselines end-to-end, avoiding extrapolation or hyperparameter transfer whenever feasible. Under our carefully aligned setting, \name{} delivers a consistent $1.3\sim 1.35\times$ speedup over AdamW, and $1.1\sim1.15\times$ speedup over Muon, across scales. These results complement and refine prior benchmarking studies \citep{wen2025fantastic, semenov2025benchmarking}, addressing the challenge of obtaining controlled, large-scale evaluations of new optimizers. With our distributed FSDP2 and Megatron-LM implementation, the end-to-end wall-clock time overhead over Adam is kept under 3\%.

    \item \textbf{Symmetry hypothesis: a new lens on matrix optimization for LLMs}. We show that \name{} can be elegantly derived from classic \emph{symmetry teleportation} technique \citep{zhao2022symmetry,zhao2023improving} that exploits loss landscape rotational symmetries (induced by inter-layer residual stream symmetries \citep{ashkboos2024slicegpt} and intra-layer $Q$-$K$ symmetries \citep{da2025hide}) for a more favorable local update. Based on this observation, we hypothesize that a substantial part of modern matrix optimization is potentially organized around exploiting global, architecture-induced parameter symmetries \citep{zhao2022symmetry,zhao2023improving, ashkboos2024slicegpt, da2025hide} (\Cref{sec:symmetry}).
    This perspective offers a compact interface for exploiting inter and intra-layer cross parameter couplings, and motivates new optimizer designs for \name{} supported by empirical experiments. These include: 
    \begin{itemize}
        \item The use of globally or locally shared \name{} rotations across layers and modules, for improved performance. This offers a new axis for improving matrix optimizers beyond layer-wise updates, while reducing computational overhead; 
        \item A fine-grained module-specific orientation and transposition scheme for rotated updates;
        \item A principled explanation for why rotated updates can be applied effectively to all matrix parameter groups, including non-hidden-layer matrices.
    \end{itemize}
\end{itemize}

Taken together, our contributions position gradient rotations and loss landscape symmetries in a new lens for understanding matrix optimization in LLMs, grounded on global structures of architectures. We hope this view sharpens new directions beyond orthogonalization-based methods towards designing more efficient, robust, and scalable training algorithms.

\section{Warmup: Rotated Steepest Descent} \label{sec: warmup}

\subsection{Normed steepest descent }
\label{subsec: normed steepest descent}

Let $\mW_t \in \mathbb{R}^{m \times n}$ denote the model parameters and $\mathcal{L}(\mW_t)$ the objective to be minimized (e.g., cross-entropy). Let $\mG_t \in \mathbb{R}^{m \times n}$ be the gradient matrix. We use the Frobenius inner product $\langle \mA, \mB\rangle := \mathrm{tr}(\mA^\top \mB)$ for matrices. The classical steepest descent method under a chosen norm $\|\cdot\|$ computes the update direction $\Delta\mW_t$ by solving the regularized linearized problem (see \citep{bernstein2024old}):
\begin{equation*}
  \Delta \mW_t^\star
   =  \argmin_{\Delta \mW_t}
   \left[  \langle \mG_t, \Delta \mW_t \rangle
     + \tfrac{\lambda}{2}\|\Delta \mW_t\|^2 \right],
  \label{eq: quad_surrogate}
\end{equation*}
where $\lambda>0$ is a regularization  parameter and $\|\cdot\|$ is an arbitrary norm on $\mathbb{R}^{m\times n}$. The minimizer admits the closed form
\begin{equation*}
  \Delta\mW_t^\star
  \propto -\argmax_{\|\mZ_t\|\leq 1} \;\langle \mG_t, \mZ_t \rangle.
  \label{eq: steepest_descent}
\end{equation*}
The solution can be expressed in compact form as $\Delta\mW_t^\star\propto-f(\mG_t)$, where $f$ is a \emph{projection function} $f(\mG_t) : = \argmax_{\|\mZ_t\|\leq1} \langle \mG_t, \mZ_t \rangle $ and given by
\begin{equation} \label{eq: projection}
    f(\mG_t) =  \nabla_{\mG_t} \|\mG_t\|_* ,
\end{equation}
where $\|\cdot\|_*$ is the dual norm of the chosen norm $\|\cdot\|$, and $\nabla \|\cdot\|_*$ is the subgradient of the dual norm. 

\begin{remarks}
    \begin{remark}[momentum-first design of normed steepest descent] \label{remark: momentum_first_nsd} A common way to introduce momentum to normed steepest descent is to follow a \textbf{momentum-first} design, i.e., an exponential moving average (EMA) filter is first applied on $\mG_t$ to obtain
    $$\mM_t = \text{EMA}[\mG_t] := \beta \mM_{t-1} + (1 - \beta) \mG_t.$$
    Then, projection function is applied in a post-momentum fashion and obtain the final update rule
    $$\Delta\mW_t^\star \propto -f(\mM_t).$$
    We will come back to more discussions on the momentum-first designs in \Cref{sec: aro_overview}, \Cref{remark: momentum_first}.
    \end{remark}
\end{remarks}

\begin{remarks}
    \begin{remark}[Stateless and stateful projection functions]
        In the derivation of normed steepest descent, $f(\cdot)$ is a \textbf{stateless} function that does not maintain any internal optimizer states. For example, for $\|\cdot\| = \|\cdot\|_\infty$, $f(\cdot) = \text{Sign}(\cdot)$ \citep{bernstein2024old}.
        Corresponding stateful, adaptive extensions (e.g., $\text{Sign}(\cdot) \Rightarrow$ Adam) can be found in recent work on adaptive normed descent \citep{veprikov2025preconditioned, xie2025tale}, as well as prior work of whitening with structured Fisher information~\citep{gong2025towards}. In the stateful setup, $f(\cdot)$ depends on time $t$ and maintains an internal state that can be updated during optimization. We use $f_t$ to denote the general stateful projection functions. Throughout \Cref{sec: warmup,sec: aro} we assume that $f_t$ is stateless without loss of generality. In \Cref{sec: handling stateful optimizer}, we provide an extended discussion with stateful $f_t$ examples.
    \end{remark}
\end{remarks}

Throughout the paper we will use the term \emph{base optimizer} or \emph{projection function} interchangeably to refer to $f$ or $f_t$, since it is usually served as the base optimizer of \name{} that will be rotated.

\subsection{Rotated steepest descent}
\label{subsec: steepest descent under rotation}

\paragraph{Loss in rotated coordinates.} 
Let $\mathcal{L}(\mW_t)$ be some loss function defined on matrix parameters $\mW_t$.
Let $\mR_t \in SO(m) := \{\mR_t\in\mathbb{R}^{m\times m}\,|\,\mR_t^\top \mR_t=\mI,\ \det(\mR_t)=1\}$ be an orthogonal rotation acting on the row space of $\mW_t$. Introduce rotated coordinates $\mZ_t \in \mathbb{R}^{m\times n}$ via
$$
\mW_t \;=\; \mR_t\,\mZ_t.
$$
Optimizing over $\mW_t$ is then equivalent to optimizing over $\mZ_t$ under the \emph{rotated loss}
$$
\widetilde{\mathcal{L}}_{\mR_t}(\mZ_t)
\;=\;
\mathcal{L}\!\big(\mR_t\,\mZ_t\big).
$$

\paragraph{Gradients in rotated coordinates.}
By the chain rule,
$$
\nabla_{\mZ_t}\widetilde{\mathcal{L}}_{\mR_t}(\mZ_t)
\;=\;
\mR_t^\top \nabla_{\mW_t}\mathcal{L}(\mR_t\mZ_t)
\;=\;
\mR_t^\top \mG_t,
$$
so the gradient is simply left-multiplied by $\mR_t^\top$ in the rotated coordinates.

\paragraph{Applying a base rule in rotated coordinates.}
Applying the base optimizer $f_t$ in the rotated coordinates with step size $\eta$, we have:
$$
\Delta \mZ_t = -\eta f_t(\mR_t^\top \mG_t).
$$
Mapping back to original coordinates:
$$
\Delta \mW_t = \mR_t \Delta \mZ_t = -\eta \mR_t f_t(\mR_t^\top \mG_t).
$$

\paragraph{General rotated-optimizer form.}
Including step size $\eta>0$, the update becomes:
\begin{empheq}[box=\fbox]{equation}\label{eq: rotated-optimizer}
\Delta \mW_t = -\eta \mR_t f_t(\mR_t^{\top} \mG_t)
\end{empheq}
We call it \textbf{rotated steepest descent}. This template recovers existing works that exclusively set  $\mR_t$ to be \emph{eigen-rotation}, i.e., the left eigenvectors of $\mG_t \mG_t^\top$ / singular vectors of $\mG_t$; and set $f_t$ to be variants of Adam, leading to different matrix optimizers (Adam $\rightarrow$ SOAP \citep{vyas2024soap}, Row-Normalization $\rightarrow$ Muon \citep{jordan2024muon}, SignGD $\rightarrow$ Splus \citep{frans2025stable}), see \Cref{tab: rotated_optimizers}.

\section{Methodology: Adaptively Rotated Optimization}  \label{sec: aro}
  
\subsection{\name{} update overview}  \label{sec: aro_overview}

We consider \Cref{eq: rotated-optimizer} as a starting principle for designing matrix optimizers.
The base optimizer $f_t$ specifies how gradients are transformed in a rotated coordinate system and the rotation $\mR_t \in SO(m)$ specifies the coordinate system itself.
Most existing matrix optimizers (after turning off accumulations) choose $\mR_t$ to be an eigen-rotation (i.e., eigenvectors of $\mG_t\mG_t^\top$/singular vectors of $\mG_t$), and then consider a particular choice of $f_t$ independently. In contrast, we propose to treat $\mR_t$ as a \emph{rotation policy conditioned on $f_t$}, enabling non-eigen rotations that are explicitly informed by the chosen projection rule.

\paragraph{Proposed method.}
We introduce \textbf{Adaptively Rotated Optimization (\name{})}: a family of matrix optimizers that instantiates \Cref{eq: rotated-optimizer} and {adapts the rotation using the geometry induced by $f_t$}. Intuitively, \name{} selects $\mR_t$ to better align the rotated coordinate system with directions that yield a larger \emph{instantaneous} loss decrease under the update induced by $f_t$ (see \Cref{sec: instantaneous} for derivation). Ignoring auxiliary moving-average accumulations for clarity, the \name{} update proceeds in two steps:
\begin{equation*}
\begin{aligned}
\textbf{(Rotation selection)}\qquad
\mR_{t} &= \mathrm{QR}\!\big(\,\mG_t\, f_t( \mR_{t-1}^\top \mG_t)^\top \,\big),\\ 
\textbf{(Rotated update)}\qquad
\Delta \mW_t \;&=\; -\,\eta\;\mR_t\; f_t\!\big(\mR_t^\top \mG_t\big),
\end{aligned}
\end{equation*}
where $\mathrm{QR}(\cdot)$ returns the left orthonormal factor (the $\mQ \in SO(m)$ factor) of its matrix argument. %

\paragraph{Intuition.} The rotation update forms the product $\mG_t\, f_t(\mR_{t-1}^\top\mG_t)^\top $, which can be viewed as a cross-alignment/cross-Gram matrix between the raw gradient update $\mG_t$ and the update proposed by $f_t$ in the previous rotated coordinates. Applying $\mathrm{QR}$ to it yields an orthonormal basis spanning the subspace of gradient directions that are coupled to the base optimizer’s transformation in the rotated frame. \name{} rotates gradients not only according to the geometry of $\mG_t$, but also according to how the chosen base optimizer would like to transform $\mR_{t-1}^\top\mG_t$.
Compared with classic power iteration $\mR_t = \text{QR}(\mG_t\mG_t^\top \mR_{t-1})$ \citep{vyas2024soap, gong2025towards}, here  we inserted an additional $f_t$ function to project the rotated gradient $\mR_{t}^\top \mG_t$ factor on the right. Setting the $f_t$ to identity from our rotation selection equation reduces the resulting equation to standard power-iteration update for eigenvector computation. In both \Cref{sec: instantaneous} and \Cref{subsec: discussion denoising} we show that \name{} rotation can be understood as \emph{improving the loss decrease rate over the eigen-rotations}.

\paragraph{Adding momentum.} In practice, we incorporate momentum into our update rule, and utilize shifted cholesky QR decomposition \citep{fukaya2020shifted} to speedup the matrix factorization (\Cref{sec:chol_qr}). In this paper we consider a momentum-first approach (see \Cref{remark: momentum_first_nsd}, \Cref{remark: momentum_first} and \Cref{tab: momentum_first_updates}) as it has been shown successful in \citep{jordan2024muon}. This gives:

\begin{empheq}[box=\fbox]{equation}\label{eq: gf}
\begin{aligned}
\mR_{t} &= \text{QR}(\mM_t f_t(\mR_{t-1}^\top \mM_t)^\top) \\
\Delta \mW_t & = -\eta \mR_{t} f_t(\mR_{t}^\top \mM_t )
\end{aligned}
\end{empheq}

$\mM_t = \text{EMA}[\mG_t] := \beta \mM_{t-1} + (1 - \beta) \mG_t$ is the momentum of the gradient with momentum coefficient $\beta$. This realizes a non-eigen, optimizer-aware rotation that can be used uniformly across all matrix parameters. Below we discuss the relation to gradient orthogonalization, and the choice of $f_t$.

Note that although the \name{} update rule may appear complex at first glance, in \Cref{sec:symmetry} we will show that it can be derived based on fundamental symmetry assumptions about the loss landscape. In this section we focus on the instantaneous loss decrease perspective since it needs the least mathematical preparation.

\begin{table}[t]
\centering
\scriptsize
\setlength{\tabcolsep}{5pt}
\renewcommand{\arraystretch}{1.18}
\begin{tabular}{@{}
 >{\centering\arraybackslash}m{0.14\linewidth}
 >{\centering\arraybackslash}m{0.42\linewidth}
 >{\centering\arraybackslash}m{0.42\linewidth}
@{}}
\toprule
\textbf{Method} & \textbf{Original} & \textbf{Momentum-first variant} \\
\midrule

\rowcolor{soapgreen}
SOAP \citep{vyas2024soap}
&
$ \begin{aligned}
\mC_t &= \text{EMA}[\mG_t\mG_t^\top],\quad \mR_t=\texttt{Eigenvectors}(\mC_t)\\
\mM_t &= \text{EMA}[\mG_t],\quad \mV_t=\text{EMA}\!\big[(\mR_t^\top \mG_t)^{\odot 2}\big]\\
\Delta\mW_t &\propto -\,\mR_t\!\Big(\mR_t^\top \mM_t \oslash \sqrt{\mV_t}\Big)
\end{aligned} $
&
$ \begin{aligned}
\mM_t &= \text{EMA}[\mG_t],\quad \mR_t=\texttt{Eigenvectors}(\mM_t\mM_t^\top)\\
\mV_t &= \text{EMA}\!\big[(\mR_t^\top \mM_t)^{\odot 2}\big]\\
\Delta\mW_t &\propto -\,\mR_t\!\Big(\mR_t^\top \mM_t \oslash \sqrt{\mV_t}\Big)
\end{aligned} $
\\

\addlinespace[0.45em]
\rowcolor{spluspurple}
SPlus \citep{frans2025stable}
&
$ \begin{aligned}
\mC_t &= \text{EMA}[\mG_t\mG_t^\top],\quad \mR_t=\texttt{Eigenvectors}(\mC_t)\\
\Delta\mW_t &\propto -\,\mR_t\,\text{Sign}(\mR_t^\top \mM_t)
\end{aligned} $
&
$ \begin{aligned}
\mM_t &= \text{EMA}[\mG_t],\quad \mR_t=\texttt{Eigenvectors}(\mM_t\mM_t^\top)\\
\Delta\mW_t &\propto -\,\mR_t\,\text{Sign}(\mR_t^\top \mM_t)
\end{aligned} $
\\

\addlinespace[0.45em]
\rowcolor{muonorange}
Muon (with idealized orthogonalization) \citep{jordan2024muon, ahn2025dion}
&
\multicolumn{2}{>{\centering\arraybackslash}m{\dimexpr 0.84\linewidth + 2\tabcolsep\relax}}{%
$ \begin{aligned}
\mM_t &= \text{EMA}[\mG_t],\quad \mR_t=\texttt{Eigenvectors}(\mM_t\mM_t^\top)\\
f_{\text{RN}}(\mX) &= \sqrt{n}\,Q(\mX)^{-1}\mX\\
\Delta\mW_t &\propto -\,\mR_t\, f_{\text{RN}}(\mR_t^\top \mM_t)
\end{aligned} $} \\

\addlinespace[0.45em]
\rowcolor{flatblue}
\name{} (Ours)
&
\multicolumn{2}{>{\centering\arraybackslash}m{\dimexpr 0.84\linewidth + 2\tabcolsep\relax}}{%
$ \begin{aligned}
\mM_t &= \text{EMA}[\mG_t],\quad
\mR_t=\mathrm{QR}\!\big(\mM_t\, f_t(\mR_{t-1}^\top \mM_t)^\top\big)\\
\Delta\mW_t &\propto -\,\mR_t\, f_t(\mR_t^\top \mM_t)
\end{aligned} $} \\
\bottomrule
\end{tabular}
\caption{Comparing the original vs.\ momentum-first implementations for matrix optimizers (one-sided/left rotation only). Original implementation usually involves storing additional buffers for the accumulation of $\mG_t\mG_t^\top$, where momentum-first implementation (popularized by Muon) uses only one buffer shared between the momentum and rotation estimation. For SOAP, its momentum-first implementation interestingly yields a two-fold Adam (AdaMomentum) rule \citep{wang2021rethinking} (i.e., $\mV_t$ is accumulated using $\mM_t$ rather than $\mG_t$), which has been observed to yield better generalization than standard Adam. See \Cref{sec: handling stateful optimizer} for detailed discussion. }
\label{tab: momentum_first_updates}
\end{table}

\begin{remarks}
    \begin{remark}[momentum-first design of rotations] \label{remark: momentum_first} Both the update from \Cref{eq: gf} and Muon optimizer follow a \textbf{momentum-first rotation} design (see \Cref{tab: momentum_first_updates}), akin to the proposal from Adadiag \citep{nguyen2025improving}. This means that when estimating rotations using $\mR_{t} = \text{QR}(\mM_t f_t(\mR_{t}^\top \mM_t)^\top)$, we reuse the same momentum buffer $\mM_t$ that is being used in the update equation $\Delta \mW_t = -\eta \mR_{t} f_t(\mR_{t}^\top \mM_t )$. Similarly, for eigen-space rotations, a momentum-first implementation results in the power iteration $\mR_{t} = \text{QR}(\mM_t \mM_t^\top \mR_{t-1})$.
    This is opposed to the original eigen-rotation implementation used in SOAP \citep{vyas2024soap} (in the one-sided case) $\mR_{t} = \text{QR}(\text{EMA} [\mG_t \mG_t^\top] \mR_{t-1})$, where SOAP accumulates a separate EMA buffer $\text{EMA} [\mG_t \mG_t^\top]$ different from $\mM_t \mM_t^{\top}$.
    In our exploratory analysis we have found that momentum-first implementations give similar performance to the original implementation, while consuming much less optimizer state memory.
    Therefore, throughout this paper for all variants of \name{} and baselines (including SOAP, SPlus, Muon), we always use momentum-first implementation.
    \end{remark}
\end{remarks}

\begin{figure}
    \centering
    \includegraphics[width=1.0\linewidth]{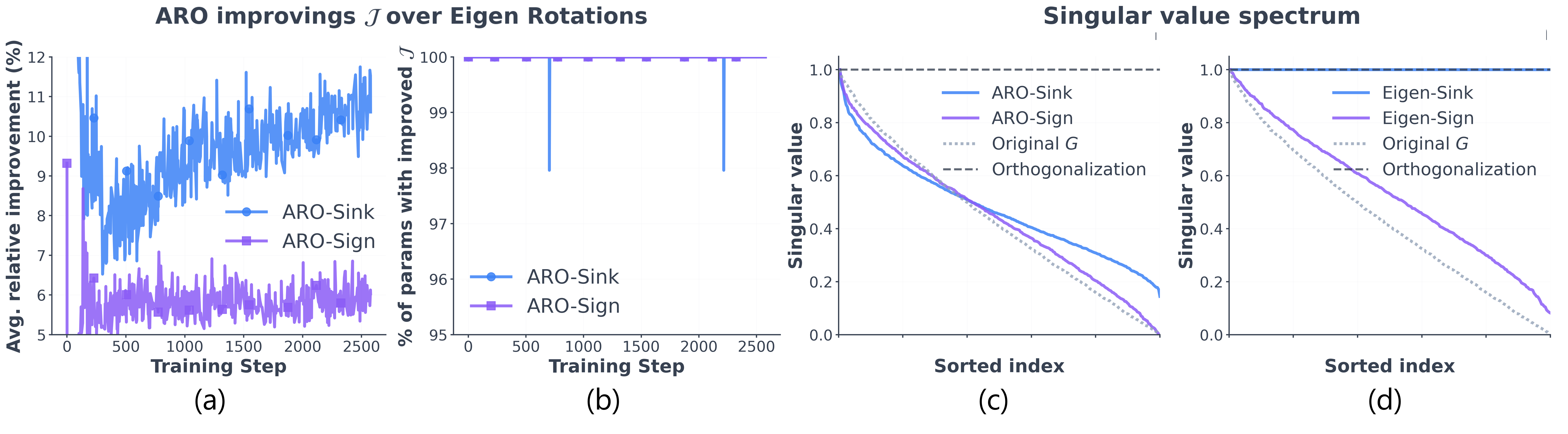}
    \caption{Properties of \name{} rotations. \textbf{(a)-(b) ARO rotations give higher rotation objective value.} During 130M nanoGPT training with \name{} using SinkGD \citep{scetbon2025gradient} or SignGD \citep{bernstein2018signsgd} as base optimizers, we evaluate at each step $t$ and for each momentum matrix $\mM_t$ the objective $\mathcal{J}(\mR;\mM_t,f)$ for the ARO rotation $\mR_t^{\text{ARO}}$ (\Cref{eq: gf_r}) and the exact eigen-rotation baseline $\mR_t^{\text{eig}}$ (\Cref{eq: eig_rotation}). (a) shows the mean relative gain $\big(\mathcal{J}(\mR_t^{\textsc{aro}})-\mathcal{J}(\mR_t^{\text{eig}})\big)/|\mathcal{J}(\mR_t^{\text{eig}})|$ (percent) over all parameters, and (b) shows the fraction of parameters with $\mathcal{J}(\mR_t^{\textsc{aro}})>\mathcal{J}(\mR_t^{\text{eig}})$. ARO scores higher for most parameters throughout training. \textbf{(c)-(d) \name{} updates are non-orthonormal.}
    We plot the singular value spectrum (normalized to have maximum 1) of the update matrix on a random Gaussian gradient. (c) Both \arosink{} and ARO-Sign updates deviate from orthogonalization. (d) Using $\mR_t^{\text{eig}}$ instead over SinkGD/SignGD base optimizers yields rotated updates closer to orthonormal, with Eigen-Sink exactly orthonormal. Exact definitions of baselines can be found in \Cref{sec: baselines}.} 
    \label{fig: stats_of_aro}
\end{figure}

\subsection{\name{} as instantaneous loss decrease rate optimization}  \label{sec: instantaneous}

\paragraph{Instantaneous loss decrease rate.} Consider the gradient flow of rotated steepest descent with base optimizer~$f_t$:  
\begin{equation}  
\dot{\mW_t} \;=\; -\,\mR_t\, f_t(\mR_t^\top \mG_t), \qquad \mR_t\in SO(m).  
\label{eq: ARO update}
\end{equation}  
The corresponding instantaneous loss decrease is  
\begin{equation}  
\frac{d}{dt}\,\loss(\mW_t)\;=\;\langle \mG_t, \dot{\mW_t}\rangle\;=\;-\,\underbrace{\langle \mG_t,\mR_t f_t(\mR_t^\top \mG_t)\rangle}_{\mathcal{J}(\mR_t;\mG_t,f)}.  \label{eq: J}
\end{equation}  
Intuitively, improving $\mathcal{J}(\mR_t;\mG_t,f)= \mathrm{tr}(\mG_t^\top \mR_t f(\mR_t^\top \mG_t))$ selects the rotation that results in better instantaneous loss decrease rate, hence potentially better sample efficiency. However, directly optimizing $\mathcal{J}$ over $SO(m)$ is difficult. For example, for $f_t=\text{Sign}(\cdot)$ this becomes classic $l_1$-PCA problem which is NP hard \citep{markopoulos2017efficient}. 

\paragraph{Approximate solution.} We therefore adopt a one-shot, closed-form rotation rule obtained by \emph{decoupling} the two appearances of $\mR_t$ in $\mathcal{J}$. At iteration $t$ we evaluate the projected term using a previous rotation value, and optimize only the remaining linear dependence on $\mR_t$. That is, we solve
\begin{equation}  
\mR_t^\star \;\in\;\argmax_{\mR_t\in SO(m)}\; \mathrm{tr}\big(\mG_t\,f_t(\mR_{t-1}^\top\mG_t )^\top \,\mR_t^\top\big). \label{eq: procrustes} 
\end{equation} 
The maximization problem in \Cref{eq: procrustes} is known as the orthogonal Procrustes problem \citep{schonemann1966generalized, myronenko2009closed}. Its solution is given by 
\begin{equation}
    \mR_{t}^\star = \mathrm{Orthogonalize}\!\big(\,\mG_t\, f_t( \mR_{t-1}^\top \mG_t)^\top \,\big).
\end{equation}
In \name{}, we choose QR decomposition as the orthogonalization operation due to its superior practical performance and robustness to gradient noise (more discussion is deferred to \cref{subsec: discussion denoising}). This gives:
\begin{equation}
    \mR_{t}^{\text{ARO}} = \mathrm{QR}\!\big(\,\mG_t\, f_t( {\mR_{t-1}^{\text{ARO}}}^\top \mG_t)^\top \,\big) .\label{eq: gf_r}
\end{equation}

\paragraph{\name{} update.} We apply $f_t$ in the new basis, turn on the accumulation on $\mG_t$ following momentum-first design (i.e., replacing $\mG_t$ with $\mM_t$, see \Cref{tab: momentum_first_updates}). This gives the \name{} update from \Eqref{eq: gf}:
\begin{align*}
\mR_{t}^{\text{ARO}} &= \text{QR}(\mM_t f_t({\mR_{t-1}^{\text{ARO}}}^\top \mM_t)^\top) \\
\Delta \mW_t & = -\eta \mR_{t}^{\text{ARO}} f_t({\mR_{t}^{\text{ARO}}}^\top \mM_t )
\end{align*}
In practice, we found that \Cref{eq: gf_r} consistently improves $\mathcal{J}(\mR_t;\mM_t,f)$ over the canonical eigen-rotation solution (under momentum-first design presented in \Cref{remark: momentum_first}), 
\begin{equation}
    \mR_t^{\text{eig}} = \texttt{Eigenvectors}[\mM_t \mM_t^T]. \label{eq: eig_rotation}
\end{equation}
See \Cref{fig: stats_of_aro} (a-b) for details. Therefore, \name{} rotation can be understood as improving the eigen-rotation toward a better loss decrease rate.

\begin{remarks}
    \begin{remark}[Discrete case]
        So far the objective $\mathcal{J}$ is derived from the continuous-time rate with infinitesimal step size. A more proper treatment is to use descent lemma and perform discrete-time analysis for finite step size, which we defer to \Cref{sec: importance_of_symmetry_in_ours}.
    \end{remark}
\end{remarks}

\begin{remarks}
    \begin{remark}[\name{} has no effect on rotational-equivariant $f_t$] \label{remark: no_effect_on_rot_equivariant} If $f_t$ of choice is rotational-equivariant, we have $\mR_{t} f_t(\mR_{t}^\top \mM_t ) = \mR_{t} \mR_{t}^\top f_t( \mM_t ) = f_t( \mM_t ) $. That is, \name{} has no effect compared with the base optimizer. Examples of such $f_t$ include spectral descent, gradient descent, and similar.
    \end{remark}
\end{remarks}

\subsection{Extended features of \name{}}

Treating gradient rotations as a core design principle not only yields new update rules in \name{}, but also unlocks additional capabilities for \name{}. We defer the motivation and detailed discussion of these features to \Cref{sec:symmetry}, since they rely on further prerequisites. Below we provide a brief preview.

\paragraph{Full-model mode}
\name{} supports a full-model mode, meaning that we apply \name{} to essentially all matrix or vector-valued parameters. The only exceptions are scalar parameters, which are rare in modern LLM architectures. For tensors of order $\leq 2$, including embedding and LM head parameters, we apply the same rotation rule as in \Cref{eq: gf}. For 1D parameters, we set $\mR_t=\mI$ (no rotation), or optionally sharing a rotation estimated from other related modules. We validate this feature at large scale in \Cref{subsec: Exp Sigma-MoE} and discuss it as a corollary of the symmetry hypothesis in \Cref{sec:symmetry}.

\paragraph{Exploiting cross-layer and cross-module couplings}
In \Cref{sec:symmetry}, we formalize rotations as a unified and economical interface for exploiting correlations across parameters from different layers and modules. In particular, these couplings are induced by symmetry structures of LLM loss landscapes and can be implemented as architecture $+$ topology-derived constraints on rotation estimation.  This will be discussed in \Cref{sec:symmetry} and validated in \Cref{sec: symmetry_predictions}. Related works are discussed in \Cref{sec: related}.

\subsection{Gradient orthogonalization as a special case of \name{}} \label{sec: ortho_as_special_case}

Below we discuss the connections between \name{} and gradient orthogonalization techniques. 

\paragraph{\name{} produce non-orthonormal updates.} The \name{} update $\Delta \mW_t $ given by \Cref{eq: gf} is not orthonormal in general. As shown in \Cref{fig: stats_of_aro} (c-d), \name{} significantly deviates from orthogonalization on randomly sampled gradients. Moreover, compared to \name{} rotations, eigen-rotations drive updates towards orthonormal updates. In \Cref{subsec: universal effective} and \Cref{subsec: Exp Sigma-MoE}, we show that \name{}-families consistently outperform both Eigen-families as well as gradient orthogonalization, indicating that the canonical choice of eigen rotations or gradient orthogonalization might not be optimal.

\paragraph{Gradient orthogonalization is a special case.} \name{} reduces to gradient orthogonalization when $f_t$ is chosen to be row normalization
\begin{align}\label{eq:rownorm-optimizer}
    f_{RN}(\mX) = \sqrt{n} Q(\mX)^{-1} \mX,
\end{align}
where $\mX$ has shape $m$ by $n$, $Q(\mX) = \text{Diag}(\| \mX_{1,:}\|_2,...,\| \mX_{m,:}\|_2)$ is the diagonal matrix whose diagonal coefficients are the $l_2$-norm of the rows of $\mX$. The optimizer corresponds to
steepest descent under max-row norm $\|\mW_t\| =\frac{\max\limits_{i\in[|1,m|]} \Vert W_{i,:}\Vert_2}{\sqrt{n}}$; see \citep{scetbon2025gradient, bernstein2024old}.
The equivalence between \name{} and gradient orthogonalization under $f_t(\mX) = \sqrt{n} Q(\mX)^{-1} \mX $ follows after noticing
$$\mR_{t} = \text{QR}(\mM_t f_{RN}(\mR_{t-1}^\top \mM_t)^\top) = \text{QR}(\mM_t \mM_t^\top \mR_{t-1} Q(\mR_{t-1}^\top \mM_t)^{-1}) = \text{QR}(\mM_t \mM_t^\top \mR_{t-1}), $$
which reduces to one step of standard power iteration of solving left-singular subspace $\mU_t$ of $\mM_t = \mU_t \mS_t \mV_t^\top$. At convergence of power iteration, we get $\mR_t = \mU_t$. Therefore, $$\Delta \mW_t  = -\eta \mU_{t} f_{RN}(\mU_{t}^\top \mM_t) = -\eta \mU_{t} Q(\mS_t \mV_t^\top)^{-1} (\mS_t \mV_t^\top) = -\eta \mU_t\mV_t^\top,$$ which is exactly the idealized  gradient orthogonalization/spectral descent step. Note that this is technically not equivalent to the Muon's Newton-Schulz (NS) implementation: as suggested by \citep{ahn2025dion}, the power-iteration-based approach might offer slightly better orthogonalization quality compared with NS. In our experiments, we found that depending on the setup, power-iteration-based approach can be both better (\Cref{fig: m4}) or worse (\Cref{fig:sigma speedup}) than NS-based approach.

\begin{remarks}
\begin{remark}[Understanding gradient orthogonalization]
Our analysis also clarifies how gradient orthogonalization fits into the \name{} framework. Viewed through \name{}, gradient orthogonalization corresponds to a rotated update applied to steepest descent under the max-row norm, where the rotation arises from the spectral norm geometry. This perspective separates the role of the rotation from the choice of base norm itself. Consistent with this interpretation, \name{} uncovers a broader optimizer family that outperforms gradient orthogonalization (\Cref{subsec: universal effective}, \Cref{exp: gpt}, \Cref{fig:sigma speedup}).
\end{remark}
\end{remarks}

\section{Practical Considerations}

\subsection{Fast and stable rotation with shifted Cholesky QR}
\label{sec:chol_qr}
Computing the QR decomposition required for rotations is essential in \name{}, but standard QR decomposition is expensive for large batched matrices. We adopt a shifted Cholesky-QR (SCQR) method \citep{fukaya2020shifted} for efficiency while maintaining numerical stability, as well as better performance on non-Nvida GPUs. Given a matrix $\mA \in \mathbb{R}^{m \times n}$, we first form the regularized Gram matrix
$$
\mP = \mA^\top \mA + \epsilon \mI_n,
$$
where $\epsilon \mI_n$ is a small regularization term to improve numerical stability and conditioning. We then compute its Cholesky factorization $\mP = \mL\mL^\top$ and obtain the orthonormal basis via
$$
\mQ = \mA \mL^{-1}.
$$
Since $\mL$ is triangular, $\mL^{-1}$ can be solved efficiently with forward substitution in $\mathcal{O}(n^2)$. This approach is significantly faster than full QR when $\mP$ is well-conditioned, and we fall back to standard QR only for batches where Cholesky fails or produces non-finite values. In practice, this drop-in replacement achieves near-QR accuracy with substantial speedups with rare fall backs, making re-estimating the rotation \emph{at every training step}  practical for large-scale training. 

When writing the manuscript, we found that the concurrent work of \citep{ahn2025dion} also implemented Cholesky QR (without regularization) as an alternative QR decomposition. However, we found that in our setup, the vanilla Cholesky QR falls back to QR more frequently at large scale settings and becomes less efficient.

\subsection{SinkGD: a new base optimizer for rotated optimization} \label{sec: sink}

The \name{} update rule \Cref{eq: gf} is very general and can in principle be applied to any base optimizer $f_t$;
we found that \name{} universally improves the performance of many base optimizers to a competitive level (\Cref{subsec: universal effective}, \Cref{fig: m1.1}). This allows us to extend beyond Adam variants, and consider advanced base optimizer such as SinkGD \citep{scetbon2025gradient}. SinkGD is a lightweight, stateless matrix optimizer recently proposed in our prior work, and achieved a good balance between simplicity and practical performance \citep{scetbon2025gradient}. We name \name{}-rotated SinkGD optimizer as \emph{\arosink{}}.

\paragraph{SinkGD: a lightweight, stateless base optimizer.}
Given the gradient $\mG_t \in \R^{m \times n}$ and iteration count $L \ge 1$, SinkGD applies $L$ rounds of alternating row- and column-wise $\ell_2$ normalization. The correspoding $f_t = f_{\text{Sink}}$ base function is given by \Cref{alg: sink}, and the original (un-rotated) SinkGD optimizer is given by \cref{alg: sinkgd}. Intuitively, SinkGD performs iterative row and column-wise normalization on gradient matrices before update. In the original SinkGD algorithm presented in \citep{scetbon2025gradient}, the row and column normalization are performed in a alternating manner, which corresponds to a reparameterization of the classical Sinkhorn-Knopp algorithm \citep{sinkhorn1967concerning}. In this paper, we made a simple modification where the row and column normalization are performed simultaneously (\Cref{alg: sink}). This is inspired by the variant presented in \citep{knight2014symmetry}, which gives slightly better performance. Following \citep{scetbon2025gradient}, we use Sinkhorn iteration $L=5$.

\begin{figure}[h!]
    \centering
    \begin{minipage}[t]{0.49\textwidth}
        \centering
        \begin{algorithm}[H]
           \caption{$f_\text{Sink}(\mG,L)$}
           \label{alg: sink}
        \begin{algorithmic}
           \STATE Initialize $\mX= \mG$.
           \FOR{$\ell=1$  {\bfseries to} $L$}
           \STATE $Q(\mX) =\text{Diag}(\Vert \mX_{1,:}\Vert_2, \dots, \Vert \mX_{m,:}\Vert_2)$
           \STATE $R(\mX) =\text{Diag}(\Vert \mX_{:,1}\Vert_2,\dots,\Vert \mX_{:,n}\Vert_2)$
           \STATE $\mX \propto Q(\mX)^{-1}\mX R(\mX)^{-1}$
           \ENDFOR
            \STATE Return $\mX$
        \end{algorithmic}
        \end{algorithm}
    \end{minipage}
    \hfill
    \begin{minipage}[t]{0.49\textwidth}
        \centering
        { 
        \begin{algorithm}[H]
           \caption{SinkGD update rule}
           \label{alg: sinkgd}
        \begin{algorithmic}
            \STATE {\bfseries Input:} $T\geq 1$ the number of updates, $(\eta_t)_{0\leq t\leq T}$ the global step-sizes, $\mathcal{L}$ the loss to minimize, and $L\geq 1$ the number $f_{\text{Sink}}$ iterations.
           \FOR{$t=1$  {\bfseries to} $T$}
           \STATE $\mG_t = \nabla_{\mW_t}\gL(\mW_t)$
            \STATE $\mW_{t+1} \gets \mW_t - \eta_t f_{\text{Sink}}(\mG_t, L) $
            \ENDFOR
            \STATE Return $\mW_t$
        \end{algorithmic}
        \end{algorithm}
        \vspace{-1.3em}
        }
    \end{minipage}
\end{figure}

\paragraph{\arosink{}.} The update rule of \arosink{} is given by:
\begin{align*}
\mR_{t} &= \text{QR}(\mM_t f_{\text{Sink}}(\mR_{t-1}^\top \mM_t)^\top) \\
\Delta \mW_t & = -\eta \mR_{t} f_{\text{Sink}}(\mR_{t}^\top \mM_t )
\end{align*}

\paragraph{Why SinkGD?} The main reasons for choosing SinkGD as our main base optimizer are as follows.

\begin{itemize}
    \item SinkGD is competitive optimizer while being \emph{stateless}, i.e., it does not maintain any moving average states as in AdamW. As shown in \citep{scetbon2025gradient}, SinkGD is able to match the performance of full AdamW in a small scale setting. This is a very convenient property for \name{} since the additional memory overhead due to $f_t$ is eliminated. With $f_{\text{Sink}}$, \arosink{} only need to maintain two optimizer states: the rotation $\mR_t$, and the momentum $\mM_t$, which has the same memory overhead as AdamW.
    \item $f_{\text{Sink}}$ is not rotational equivariant, a property needed for \name{} to be applicable (see \Cref{remark: no_effect_on_rot_equivariant}); other matrix optimizers such as Muon are already rotational equivariant and cannot be further boosted by \name{}.
    \item Many base optimizers such as Adam use variance adaptation mechanisms, while SinkGD does not. By using $f_{\text{Sink}}$, the resulting \arosink{} is a pure geometric solution, allowing us to disentangle from other orthogonal contributions from variance adaptation \citep{frans2025really}.
\end{itemize}

\paragraph{Computation, communication and memory overheads.} Applying the $\gO(mn)$-cost $f_{\text{Sink}}$ operation twice in \arosink{} could introduce some overhead. However, in practice, under large-batch distributed training, forward and backward passes dominate the total cost. As a result, \arosink{} achieves nearly the same throughput as AdamW, with only about a $1\%$ slowdown in large-scale models (\Cref{subsec: qwen3-8B}, \Cref{fig: Qwen throughput}). The communication required by $f_{\text{Sink}}$ is fully shared with that of $\mR_t$, so it incurs no additional distribution cost. Finally, since we target large-scale distributed training where optimizer states are typically sharded, the AdamW-like memory overhead is not a limiting factor, and further memory optimization would bring limited benefit.  Note that both claims above does not necessarily apply to small scale settings (small model, small batch size, no sufficient parallelization etc), where optimizer time is more dominant compared with forward-backward time.

\subsection{Handling other stateful base optimizers $f_t$}
\label{sec: handling stateful optimizer}

In previous sections, without loss of generality, we have treated the base optimizer $f_t(\cdot)$ as a stateless projection function. As discussed in \Cref{remark: momentum_first}, $f_t$ acts on the momentum $\mM_t$ (or its rotated version $\mR_t^\top \mM_t$). Many practical optimizers, however, maintain internal statistics \citep{xie2025tale, veprikov2025preconditioned}, and prior work \citep{frans2025really, vyas2024soap, gong2025towards} shows that such stateful $f_t$ can improve gradient whitening via variance adaptation. In this section, we describe how \name{} can rely on stateful base projection functions.

\paragraph{Projection function for Adam.} 
According to \Cref{remark: momentum_first_nsd}, let
$$
\mM_t = \beta \mM_{t-1} + (1 - \beta)\,\mG_t
$$
denote the first-order EMA of the gradient. We treat $\mM_t$ as \emph{input} to the base optimizer and maintain the second-moment accumulator $\mV_t$ depend only on momentum input. At step $t$, the update rule is given by
$$
\mV_t = \beta_2 \mV_{t-1} + (1 - \beta_2)\,\mM_t \odot \mM_t,
$$
where $\odot$ represents element-wise product. Because $\mV_t$ is built from $\mM_t$ instead of the raw gradient $\mG_t$, this update coincides with two-fold Adam (AdaMomentum) \citep{wang2021rethinking}, which has been observed to yield better generalization than standard Adam. 

Given $\mM_t$ and the current state $\mV_t$,  we can write down the corresponding stateful projection function of (two-fold) Adam:
$$
f_t(\mM_t;\mV_t) := \frac{\mM_t}{\sqrt{\mV_t}},
$$
with the square root applied element-wise. The parameter update produced by Adam in our framework is therefore
$$
\Delta \mW_t \propto - f_t(\mM_t;\mV_t) = -\frac{\mM_t}{\sqrt{\mV_t}}.
$$

To emphasize the dependence on the state we write $f_t(\mM_t;\mV_t)$, but conceptually $f_t$ is still a projection in its first argument. The first-order EMA $\mM_t$ is \emph{not} counted as part of the internal state of $f_t$.

\paragraph{General stateful projection function.} 
The Adam example illustrates a pattern that covers many adaptive optimizers. Under a momentum-first design, we can define a stateful based optimizer with:
\begin{itemize}
    \item internal state tensors $\{\mS_t^{(k)}\}_{k=1}^K$,
    \item a state-update rule
    $$
    \{\mS_t^{(k)}\}_{k=1}^K
    = \Phi_t\big(\{\mS_{t-1}^{(k)}\}_{k=1}^K,\,\mM_t\big),
    $$
    \item and a projection map
    $$
    f_t(\mM_t;\{\mS_t^{(k)}\}_{k=1}^K),
    $$
    which returns the search direction at step $t$.
\end{itemize}
By convention, the states are always updated using the same quantity that $f_t$ acts on: either $\mM_t$ or a rotated version (e.g., $\mR_t^\top \mM_t$). This ensures that all statistics in $\{\mS_t^{(k)}\}_{k=1}^K$ are computed from the same EMA-filtered gradients that drive the projection. For Adam in the previous paragraph, we have $K=1$ with $\mS_t^{(1)} = \mV_t$, $\Phi_t$ given by the EMA update $\mV_t = \beta_2 \mV_{t-1} + (1 - \beta_2)\,\mM_t \odot \mM_t$, and $f_t(\mM_t;\mV_t) = \mM_t/\sqrt{\mV_t}$.

\paragraph{Integration with \name{}.} 
The subtlety is that the \name{} update \Cref{eq: gf} defines the new rotation $\mR_t$ using the projection of the \emph{old} rotated momentum $\mR_{t-1}^\top \mM_t$. 
At iteration $t$ we have the current momentum $\mM_t$, the previous rotation $\mR_{t-1}$, and the current internal states $\{\mS_{t-1}^{(k)}\}_{k=1}^K$. To compute the new rotation and the update while keeping the states consistent, we conceptually perform two steps:

\begin{enumerate}
    \item \textbf{Look-ahead under the old rotation.}  
    We first form the momentum in the old rotated basis, $\mR_{t-1}^\top \mM_t$. Conceptually, we apply a single state-update step
    $$
        \{\tilde{\mS}_t^{(k)}\}_{k=1}^K
        = \Phi_t\big(\{\mS_{t-1}^{(k)}\}_{k=1}^K,\mR_{t-1}^\top \mM_t\big),
    $$
    and evaluate the lookahead direction
    $$
        \tilde{\mD}_t
        = f_t\big(\mR_{t-1}^\top \mM_t;\{\tilde{\mS}_t^{(k)}\}_{k=1}^K\big).
    $$
    Implementation-wise, this step is carried out \emph{functionally}: the temporary states are realized only in scratch memory via $\Phi_t$ and are not written back to $\{\mS_{t-1}^{(k)}\}_{k=1}^K$. We then plug $\tilde{\mD}_t$ into \Cref{eq: gf} to obtain the new rotation $\mR_t$.
    \item \textbf{Update states with the new rotation.}  
    Once $\mR_t$ is obtained, we project the momentum in the new rotated basis, $\mR_t^\top \mM_t$, and then update the optimizer states
    $$
        \{\mS_t^{(k)}\}_{k=1}^K
        = \Phi_t\big(\{\mS_{t-1}^{(k)}\}_{k=1}^K,\mR_t^\top \mM_t\big).
    $$
    Using these updated states, we compute the actual parameter update
    $$
        \Delta \mW_t
        = -\eta\,\mR_t f_t\big(\mR_t^\top \mM_t;\{\mS_t^{(k)}\}_{k=1}^K\big).
    $$
\end{enumerate}

Returning to the Adam example, the temporary state used to compute the new rotation is
$$
\mV_t^{\mathrm{tmp}}
= \beta_2 \mV_{t-1} + (1 - \beta_2)(\mR_{t-1}^\top \mM_t)\odot(\mR_{t-1}^\top \mM_t),
$$
and \Cref{eq: gf} uses the corresponding look-ahead direction $f_t(\mR_{t-1}^\top \mM_t;\mV_t^{\mathrm{tmp}})$. After $\mR_t$ is computed, we do not keep $\mV_t^{\mathrm{tmp}}$ as a persistent state; instead, we update the second moment with the final rotated momentum,
$$
\mV_t = \beta_2 \mV_{t-1} + (1 - \beta_2)(\mR_t^\top \mM_t)\odot(\mR_{t-1}^\top \mM_t),
$$
and then form the update direction $f_t(\mR_t^\top \mM_t;\mV_t)$. \Cref{alg: stateful optimizer} summarizes an implementation of this procedure for a general stateful optimizer. In practice we implement the temporary state update functionally, without materializing an additional copy of the optimizer state; for EMA-type states (e.g., the second moment of Adam) the lookahead direction
$$
f_t\big(\mR_{t-1}^\top \mM_t;\,\Phi_t(\{\mS_{t-1}^{(k)}\}_{k=1}^K,\mR_{t-1}^\top \mM_t)\big)
$$
can be computed directly from $\{\mS_{t-1}^{(k)}\}_{k=1}^K$ and $\mR_{t-1}^\top \mM_t$ using a scratch buffer.

\begin{algorithm}[H]
   \caption{Handling stateful optimizer with momentum-first design}
   \label{alg: stateful optimizer}
\begin{algorithmic}
   \STATE {\bfseries Input:} Momentum $\mM_t$, rotation $\mR_{t-1}$, base optimizer $f_t$ with states $\{\mS_{t-1}\}_{1}^K$ and state-update rule $\Phi_t$.
   \STATE $\tilde{\mD}_t \gets f_t\big(\mR_{t-1}^\top \mM_t;\,\Phi_t(\{\mS_{t-1}\}_{1}^K,\mR_{t-1}^\top \mM_t)\big)$ \hfill (lookahead; temporary states in scratch)
   \STATE $\mR_t \gets \text{Cholesky\_QR}\big(\mM_t \tilde{\mD}_t^\top\big)$
   \STATE $\{\mS_t\}_{1}^K \gets \Phi_t(\{\mS_{t-1}\}_{1}^K, \mR_t^\top \mM_t)$
   \STATE $\Delta \mW_t \gets -\eta\,\mR_t f_t\big(\mR_t^\top \mM_t;\{\mS_t\}_{1}^K\big)$
   \STATE Return $\Delta \mW_t$, $\mR_t$, $\{\mS_t\}_{1}^K$
\end{algorithmic}
\end{algorithm}

\begin{remarks}
    \begin{remark}[Imperfect update of internal states]
        \Cref{alg: stateful optimizer} deliberately keeps the state-update rule abstract. For many adaptive methods, a ``perfect'' update that is fully consistent with the changing rotations is computationally intractable. Consider Adam as an example. At time $t$, the second-moment accumulator $\mV_{t-1}$ conceptually aggregates the history $\{(\mR_i^\top \mM_i)^2\}_{i=1}^{t-1}$, where each term is computed under a different rotation $\mR_i$. To make $\mV_t$ perfectly consistent with a new rotation $\mR_t$, one would have to (i) undo all past rotations to recover $\{\mM_i\}_{i=1}^{t-1}$, (ii) apply $\mR_t$ to each of them, and (iii) recompute the EMA of their element-wise squares. This would require storing every past momentum $\mM_i$ and rotation $\mR_i$, which is impractical.

        In practice, following \citep{vyas2024soap, gong2025towards, zhao2024galore}, we use the simple approximate update
        \begin{align*}
            \mV_t = \beta_2 \mV_{t-1} + (1 - \beta_2) (\mR_t^\top \mM_t)\odot(\mR_t^\top \mM_t),
        \end{align*}
        which empirically yields good performance. Alternative correction rules have been proposed \citep{robert2024ldadam}; a systematic study of their effect, and the search for improved schemes, is left for future work.
    \end{remark}
\end{remarks}

\subsection{Hyperparameter transfer and alignment for benchmarking}
\label{sec:hyperparam}
Due to the dominance of AdamW optimizers in AI model training, its learning rate setting is relatively well-understood across wide range of model scales. Thus, for any given new proposed optimizer, directly transferring the hyperparameters from Adam enables a good enough shortcut to start with. Historically \citep{agarwallearning} first proposed the idea of learning rate grafting from Adam optimizers. Prior work \citep{gong2025towards, scetbon2025gradient} derived that a wider whitening-based optimizers and normalization/projection-based optimizers should be aligned to have constant RMS update norms. \citep{liu2025muon, kexuefm-11267} further derived that the AdamW has approximately $0.2$ constant RMS norm. Therefore, for all non-Adam baselines including \name{} considered in this paper, we adopt the following re-normalization rule:
\begin{align*}
    \Delta \mW_t \leftarrow 0.2 \frac{\Delta \mW_t}{\|\Delta \mW_t\|}\sqrt{mn}
\end{align*}

Apart from learning rate transfer, another crucial reason for such renormalization in our paper is to strictly align all methods to have the same update budged (same RMS norm), hence isolating the update direction to be the only factor driving the optimization performance.

\subsection{Distributed implementation}
\label{sec: distributed implementation}

Standalone optimization is not sufficient for training frontier-sized LLMs; distributed training is a practical requirement. However, modern distributed training pipelines \citep{shoeybi2019megatron,rasley2020deepspeed,paszke2019pytorch} are highly optimized for element-wise optimizers (e.g., Adam/AdamW) and do not natively support matrix-valued optimizers such as ARO. In the following, we summarize how we adapt two popular pipelines---PyTorch FSDP2 and Megatron-LM \citep{shoeybi2019megatron,paszke2019pytorch}---to support distributed training with ARO.

We abuse terminology and use parameters (or parameter shards) to refer to model parameters together with their associated gradients and optimizer states. 

\subsubsection{FSDP2 with PyTorch}
\label{subsec: FSDP2}

FSDP2 is a data-parallel (DP) scheme that shards model weights and optimizer states across a DP group to reduce per-GPU memory usage. During forward/backward, FSDP2 materializes full parameters when needed and then re-shards gradients/states, so we can assume that at the optimizer step each GPU has access to its local parameter/gradient/state shards.

\paragraph{Vanilla approach.}
A simple approach is to all-gather the shards so that every GPU reconstructs a complete copy of each matrix parameter when needed, runs ARO locally, and then applies updates to its local shards. While straightforward, this incurs substantial overhead: communication scales poorly with the number of DP ranks since all ranks participate in full-matrix reconstruction, and computation is redundantly repeated across GPUs.

\paragraph{Round-Robin distributed implementation.}
To make ARO efficient under FSDP2, we adopt a \emph{Round-Robin ownership} strategy inspired by the distributed Muon described in \cite{lim2025motif}. The key idea is to avoid reconstructing all matrices on all ranks. Instead, we partition matrix parameters into batches and assign a unique \emph{owner} GPU to each parameter within a batch. For each batch:
(i) we gather only the necessary shards to the owner ranks;
(ii) each owner executes the ARO update for its assigned parameters; and
(iii) we perform a scatter step so that updated parameter shards are placed back onto the correct ranks for the next iteration.


\begin{remarks}
\begin{remark}[Overlapping communication and computation.]
    The round-robin schedule naturally supports pipelining: owner devices compute the ARO update for the current batch, the system gathers shards for the next batch on a separate stream, and delays the scatter until the update is ready. In practice, this overlap is important for hiding the cost and keeping optimizer-step latency close to AdamW.
\end{remark}
\end{remarks}

\begin{remarks}
\begin{remark}[Practical throughput]
With this round-robin FSDP2 implementation, the throughput of ARO is comparable to AdamW when scaling up (see \cref{fig: Qwen throughput} and \cref{fig: sigma throughput} for 8B dense and 2B MoE, respectively), indicating that the additional computation and communication can be amortized across DP ranks.  
\end{remark}

\end{remarks}

\subsubsection{Megatron-LM}
\label{subsec: megatron-lm}

Megatron-LM \citep{shoeybi2019megatron} introduces additional parallelism (e.g., Tensor parallelism) and employs bucketed buffers for efficient communication. We provided a prototype implementation of ARO that supports Megatron-LM, extending the same high-level round-robin principle to the joint DP$\times$TP setting.

Concretely, our Megatron-LM integration combines three ideas:
(i) \textbf{Round-robin ownership} similar to FSDP 2 to shard ARO compute across ranks (avoiding redundant per-rank updates);
(ii) \textbf{fused TP gathering} to avoid per-parameter TP collectives calls across TP ranks; and
(iii) \textbf{boundary-only DP gathering} that leverages Megatron-LM's flattened buffers, so DP communication focuses on the small set of cross-boundary parameters rather than gathering/scattering the full parameter list. 
\begin{remarks}
    \begin{remark}
        We want to emphasize that this only represents a preliminary prototype implementation that only supports DP, TP and pipeline parallelism. Megatron-LM also supports other parallelisms (e.g., expert parallelism, context parallel, etc), different setup of GPU mesh grid will have different impacts on practical throughput. We leave the comprehensive benchmarking to future work for Megatron-LM. For simpilicity, for all experiments that require distributed training in this paper, we use Pytorch FSDP 2 implementation. 
    \end{remark}
\end{remarks}


\section{Experiments} \label{exp}

\subsection{Return to the basics: general guidelines for fair and controlled benchmarking}  \label{exp: guidelines}

We evaluate optimizers mainly on LLM pretraining, spanning model sizes from 130M to 8B parameters. Our goal is to make comparisons \emph{fair and controlled}: differences in loss should solely come from the optimizer, not from other confoundings or mismatched settings. Concretely, we adopt the following rules across runs:
\begin{itemize}
    \item \textbf{Always perform mixed precision training} (BF16 training with FP32 master weight). While mixed precision training is default in pretraining, we emphasize this since the pure BF16 training is popular in memory efficient optimizer literature \citep{zhao2024galore,  zhu2024apollo, ma2024swan}, where we have observed unrealistic high speedup numbers over AdamW, or less reliable conclusions, due to the use of BF16 master weights. 
    \item \textbf{Benchmark at realistic scale (batch size and context length).} Depending on model size and architecture, we use sufficiently large batchsizes (from 1M to 14M tokens per batch) and sequence length (from 1024 to 4096 context length) for a more realistic benchmarking. 
    \item \textbf{Train long enough to reveal long-horizon behavior.} Across all experiments and models, we always consume training tokens for at least 1$\times$ Chinchilla laws, and for certain model sizes (such as 2B and 8B models), we overtrain the model for up to 8$\times$ Chinchilla regime to get enough signal on the long term efficiencies of every optimizer (and to validate the risks of diminishing returns at scale).
    \item \textbf{Align lr schedules and step counts.} To properly estimate the sample efficiency of different methods, we align the learning rate schedule of all methods and verify all methods are trained with the same number of iterations, and learning rates are fully decayed (to target) \citep{kaddour2023no}. This is to avoid the evaluation bias in the common ``target loss to reach'' approach \citep{ahn2025dion, loshchilov2024ngpt}, where proposed optimizers enter and finish the learning rate decay phase earlier than AdamW baseline. While the rankings of methods may be unaffected, this favors the early lead of an algorithm and over-estimate speedups. 
    \item \textbf{Align non-hidden-layer optimizers.} Matrix optimizers are often used in a hybrid setup, applied only to hidden-layer parameters. To compare hidden-layer optimizers fairly, the optimizer on non-hidden-layer (non-matrix, 1D) parameters must be aligned. In all evaluations under hybrid setup, we fix AdamW for the non-hidden-layer parameters, rather than SignGD/Lion \citep{pethick2025training,ahn2025dion} -- to avoid conflating the observed gains. This rule does not apply for full model setups (same optimizer is applied to all matrix parameters). 
    \item \textbf{End-to-end learning rate tuning without scaling law extrapolation.} Good tuning matters for \emph{all} methods, including baselines, but exhaustive tuning becomes prohibitively expensive for large-model overtraining. Instead of extrapolating hyperparameters from scaling-law fits (which can be sensitive and introduce bias) as in \citep{wen2025fantastic}, we run end-to-end learning-rate sweeps for AdamW on the full overtraining runs and pick the learning rate that minimizes the final loss. We then transfer AdamW learning rates to other methods using RMS-norm matching (\Cref{sec:hyperparam}) following \citep{liu2025muon}. We use this protocol throughout, except for our largest 8B run where an end-to-end sweep is beyond our computational budget at the time of drafting, and we instead rely on established settings from prior work.
    \item \textbf{Control effective step size via aligned RMS-norm budget.} Continuing the previous point,  we use the RMS norm matching method \citep{liu2025muon} (\Cref{sec:hyperparam}) to ensure all methods operate under a comparable update RMS-norm budget to the end-to-end tuned Adam baseline. This is helpful to reduce the bias caused by significantly different effective learning rates, or the use of  per-layer learning-rate schemes (common in low-rank optimization practice \citep{zhao2024galore}). We found that this method is also sufficient to establish non-diminishing return over Adam across scales. An alternative, also promising method is to use $\mu$P based approaches for robust hyperparameter transfer \citep{qiu2025hyperparameter}, which might further improve performance \citep{xie2026controlled}. We leave it as future work since it requires optimizer-specific derivations and relies on certain assumptions that is not satisfied in our setup (e.g., MoE training).

\end{itemize}

\subsection{\name{} and baseline specifications} \label{sec: baselines}

\paragraph{\name{} family.}
Our main method is \name{} (\Cref{eq: gf}). Different choices of the base optimizer projection function $f_t$ define a family of \name{} instantiations:
\begin{itemize}
    \item \textbf{\arosink{}} (default): $f_t = f_{\text{Sink}}$ from \Cref{sec: sink}.
    \item \textbf{\name{}-Adam}: $f_t$ is the momentum-first, two-fold Adam projection from \Cref{sec: handling stateful optimizer}.
    \item \textbf{\name{}-Sign}: $f_t$ is the element-wise sign map.
    \item \textbf{\name{}-RowNorm}: $f_t$ is row-wise matrix normalization. As shown in \Cref{sec: ortho_as_special_case}, this is mathematically equivalent to orthogonalization. We keep this due to completeness.
\end{itemize}

For each instantiation, we evaluate two deployment modes:
\begin{itemize}
    \item \textbf{Hybrid setup:} Applying \name{} only to hidden-layer weight matrices; optimize embeddings, LM head, layer norms, and other non-matrix parameters with AdamW.
    \item \textbf{Full-model setup:} Applying \name{} to all matrix parameters (tensors with order $\leq 2$). For 1D parameters, we use \name{} with $\mR_t=\mI$ (i.e., no rotation).
\end{itemize}

For orthogonalization, we consider two QR implementations depending on the experiment goal: 
\begin{itemize}
    \item \textbf{Standard QR:} Using the default implementation in pytorch. It is slower but numerically more stable.
    \item \textbf{Shifted Cholesky-QR (SCQR):} a faster variant \cite{fukaya2020shifted} (see \Cref{sec:chol_qr}).  
\end{itemize}
Unless otherwise stated, we use the fast Shifted Cholesky-QR variant for \name{} family. We will also benchmark the standard QR variant to understand the lossless performance of \name{}.

\paragraph{Baselines}
Across experiments, we compare \name{} with AdamW, gradient orthogonalization, and eigen-rotation baselines. To isolate the effect of their core matrix update mechanism, we omit complementary add-ons such as variance adaptation \citep{si2025adamuon, li2025normuon, yang2026prismstructuredoptimizationanisotropic}, multi-scale momentum \citep{behrouz2025nested, ueaj_multiscale_muon}, or additional constraints \citep{pethick2025training, wen2025hyperball, xie2026controlled} that can be composed on both \name{} and other baselines. Our baselines are defined as follows:
\begin{enumerate}
    \item \textbf{AdamW} \citep{loshchilov2017decoupled}.
    \item \textbf{Muon} (Moonlight version) \citep{liu2025muon}: a strong baseline from the orthogonalization family.
    \item \textbf{Full-rank Dion} \citep{ahn2025dion}: using full-rank orthogonalization with standard QR decomposition. This baseline is included to separate algorithmic effects from numerical differences with orthogonalization solvers.
    \item \textbf{Eigen-rotated family} (momentum-first, one-sided): Eigen-rotated optimizers directly rotates the gradient using eigen rotation. In our momentum-first implementation principle (\Cref{remark: momentum_first}), this is done by setting $\mR_t$ to be the left singular vectors of $\mM_t$ (or equivalently, eigen vectors of $\mM_t \mM_t^\top$). The update rule is given  by
    \begin{align*}
        \mR_{t} &= \text{Eigenvectors}((\mM_t \mM_t^\top)) \approx \text{PowerIter}(\mM_t \mM_t^\top) \\
        \Delta \mW_t & = -\eta \mR_{t} f_{t}(\mR_{t}^\top \mM_t ).
    \end{align*}

We will always apply 1 step power iteration using $R_{t-1}$ as initialization: $\mR_t = \text{QR}(\mM_t\mM_t^\top \mR_{t-1})$ \citep{vyas2024soap}, unless specified (e.g., in \Cref{sec: aro}, \cref{fig: stats_of_aro} we used exact eigen vectors via SVD). By varying the base projection $f_t$, the eigen-rotated family recovers one-sided, momentum-first versions of many whitening optimizers (including SOAP, SPlus, and Muon, see \Cref{tab: momentum_first_updates} for details.). We use this family as a control reference point: it lets us compare \name{} against eigenvector-based rotations while holding fixed the rest of the design choices (one-sided rotation, momentum, and the eigenvector solver based on power iteration). We consider:
\begin{itemize}
    \item \textbf{Eigen-Adam:} $f_t$ is the two-fold Adam projection from \Cref{sec: handling stateful optimizer}. This can be viewed as one-sided, momentum-first variant of SOAP using power iteration (\Cref{tab: momentum_first_updates}).
    \item \textbf{Eigen-Sign:} $f_t$ is elementwise sign. This corresponds to a one-sided, momentum-first implementation of SPlus \citep{frans2025stable} using power iteration (\Cref{tab: momentum_first_updates}).
    \item \textbf{Eigen-RowNorm:} $f_t$ is row-wise normalization. This is equivalent to the full rank Dion \citep{ahn2025dion}, which is equivalent to idealized orthogonalization if power iteration is run until convergence. 
    \item \textbf{Eigen-Sinkhorn:} $f_t = f_{\text{Sink}}$. 
\end{itemize}

As with \name{}, each eigen-rotated optimizer is evaluated in both hybrid and full-model setups. Depending on the experiment, we also report targeted ablations:
\begin{itemize}
    \item \textbf{No-rotation variants}: base optimizers without rotation.
    \item \textbf{QR implementation variants}: Swapping the QR routine (standard QR vs.\ Shifted Cholesky-QR).
\end{itemize}
\end{enumerate}

\paragraph{Metric.}
Across the experiments, we will use one or some of the following metrics for evaluating the optimizers' performance:
\begin{enumerate}
    \item \textbf{Speedup factor}: This measure the convergence speed of target optimizer against the baseline, which is computed as the step ratio of baseline over target optimizer when achieving a target loss. 
    \item \textbf{Loss}: This can be either training or validation loss depending on the setup. 
    \item \textbf{Throughput}: This measures the wall-clock speed of the optimizer, which can be measured as the \emph{token processed per second} or \emph{time taken per iteration}. 
\end{enumerate}

\subsection{Understanding the effectiveness of \name{} rotation}
\label{subsec: universal effective}

\begin{figure}
    \centering
    \includegraphics[width=1.0\linewidth]{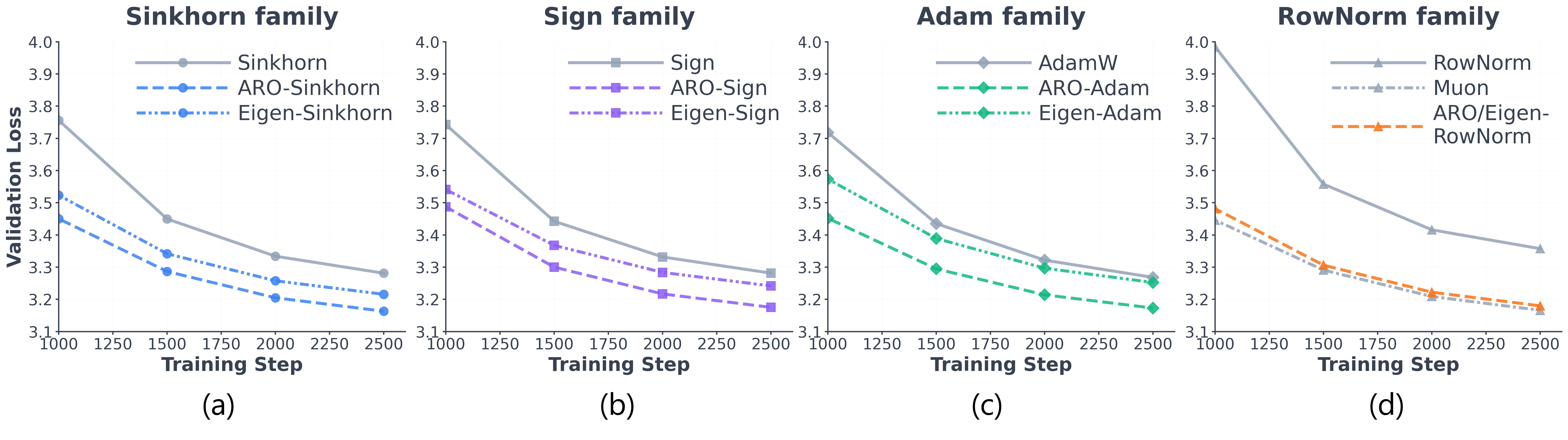}
    \caption{Comparing \name{} rotation with eigen-rotations, across various of base optimizers. \textbf{Results indicates that the non-eigen-rotations of \name{} consistently outperform eigen-rotations, and improves all base optimizers to a competitive level}. Experiments were done on GPT2-124M model. By default, we use SCQR implementation across all \name{} and eigen-rotation methods. Ablations on QR implementation can be found in \cref{fig: m2.1}. }
    \label{fig: m1.1}
\end{figure}

In this section, we start with a smaller scale models (GPT2-124M), and focus on validating the effectiveness of the non-eigen-rotations of \name{}, across a variety of base optimizers defined.  For formal benchmarking and comparisons of baselines under practical large-scale setup, as well as the full-model training capabilities of \name{}, we defer to \Cref{exp: gpt}, \Cref{subsec: Exp Sigma-MoE}, and \Cref{subsec: qwen3-8B}.

\paragraph{Setup and how guidelines are followed.} We consider 124M parameter GPT-2 model following the setup of NanoGPT repo \citep{Karpathy2022}, trained on OpenWebText dataset with 1$\times$ Chinchilla law regime and 1024 context length. All methods uses the hybrid setup (AdamW is used for non-hidden layer parameters, as well as 1D parameters). We follow the guidelines in \Cref{exp: guidelines} to perform mixed precision training, align lr schedules and non-hidden layer optimizers across all methods, optimally tune the AdamW learning rates end-to-end, and then transfer to all rotation based methods. 

Results are shown in \Cref{fig: m1.1}, \Cref{fig: m2.1} and \Cref{fig: m2.2}. Our core findings are as follows:

\begin{findings} 

\textbf{Findings 1: Under SCQR, \name{} consistently outperforms eigen-rotations across base optimizers, and improves all base optimizers to a competitive level. Both \name{} and eigen-rotation methods consistently outperform non-rotated base optimizers.}    
\end{findings}

 \Cref{fig: m1.1} (a)-(d) shows the performance of \name{} and eigen-rotated baselines, across all four base optimizers (Sinkhorn, Sign, Adam, and RowNorm) under the fast SCQR implementation (\Cref{sec:chol_qr}).  Rotation strategies improve the performance of base optimizers universally. Moreover, \name{}-rotated versions consistently outperform Eigen-rotated versions by a large margin. 

\begin{findings}
    \textbf{Findings 2: Across all base optimizers tested, loss landscape rotation is the top contributor to optimizer performance gains.}
\end{findings}

\Needspace{18\baselineskip}
\begin{wrapfigure}{r}{0.45\linewidth}
    \centering
    \includegraphics[width=\linewidth]{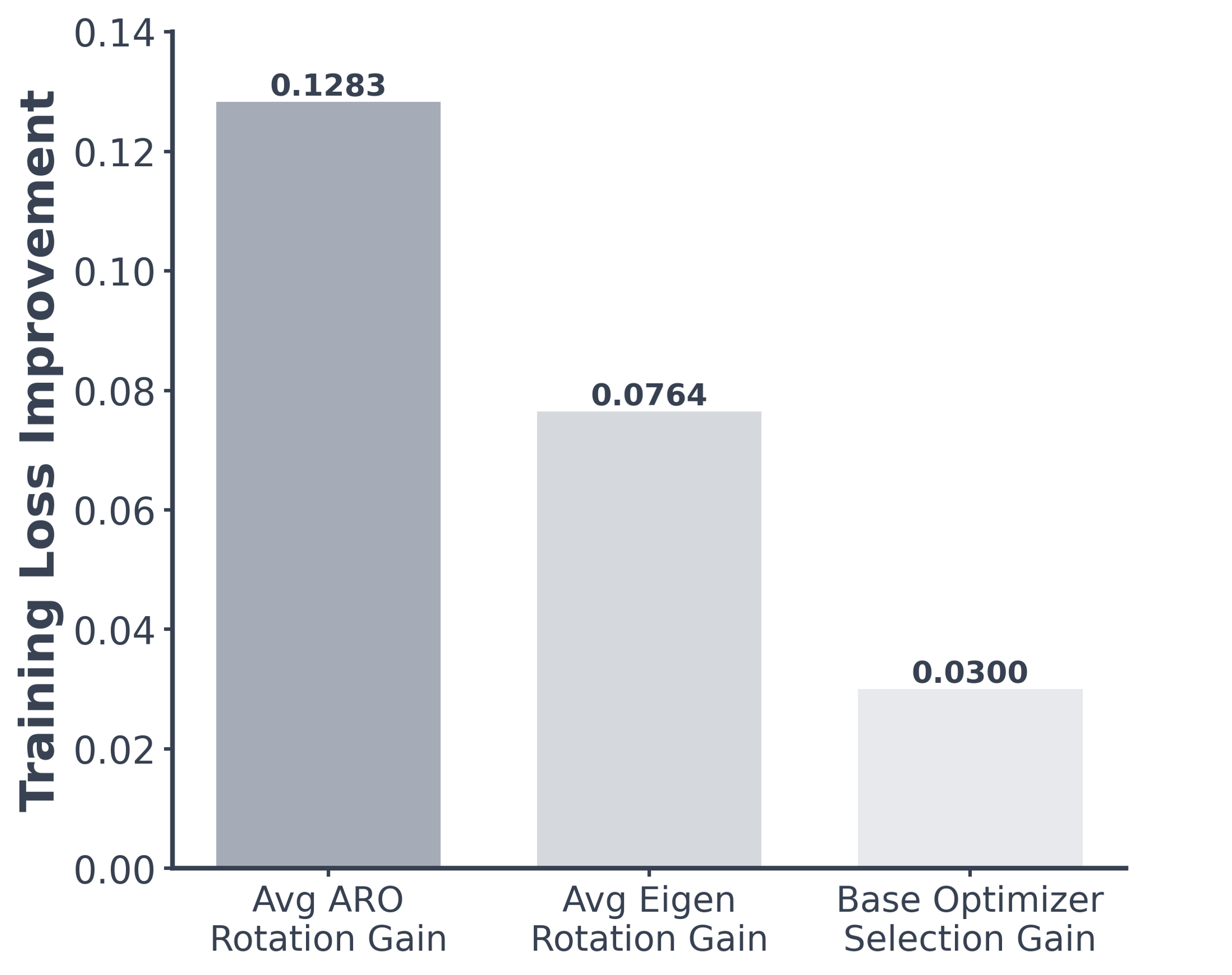}
    \caption{The impact of the choice of rotation policies and base optimizers on training loss. \textbf{Rotations have significant impact on performance.}}
    \label{fig: m1.2}
\end{wrapfigure}

As discussed in \Cref{sec: insights}, many modern optimizers can be written in the form
$ \Delta \mW_t \propto - \eta \, \mR_t f_t( \mR_t^{\top} \mG_t ) $
for a chosen rotation $ \mR_t $. The performance of this very general optimizer class is therefore determined by two factors: the choice of base optimizer $ f_t $ and the choice of rotation policy $ \mR_t $. To isolate the impact of rotations, we compare the effect of $ \mR_t $ against the effect of $ f_t $. Specifically, we compute the averaged loss difference between \name{}-rotated methods and their non-rotated versions across four base optimizers. Losses are averaged over the last $ 5\% $ of training steps. We also compute the loss difference between the best-performing base optimizer and the mean performance across all base optimizers, again averaged over the last $ 5\% $ of steps. Results on the 124M GPT-2 model show that \name{}-rotations and eigen-rotations have a larger impact on loss than the choice of base optimizer. This supports the view of \Cref{sec: insights} that rotation policies should be treated as a core design principle.


\par \wrapfill \par

\begin{remarks}
    \begin{remark} (Other important factors) Technically speaking, a rotated optimizer is determined by three elements: 
    \begin{enumerate}
        \item A rotation policy (e.g., eigen-rotation, \name{}-rotation)
        \item Base optimizer $f$;
        \item A orientation rule that determines, for each parameter in the model, which side will the rotation be applied.
    \end{enumerate}
    In this section we mainly discussed the first two factors. The third apspect is also important, which will be deferred to \Cref{sec: symmetry_predictions} as it requires more pre-requisite discussions. 
    \end{remark}
\end{remarks}

\begin{findings} 
\textbf{Findings 3: the performance gain of \name{} rotations over eigen-rotation is twofold: 
\begin{itemize}
    \item it produces geometrically better rotations regardless of which QR scheme is used;
    \item it massively improves the numerical stability of the computation and makes SCQR usable.
\end{itemize}
Together, \name{} rotation enables better rotation with less compute.}
\end{findings}

\begin{figure} [!h]
    \centering
\includegraphics[width=0.85\linewidth]{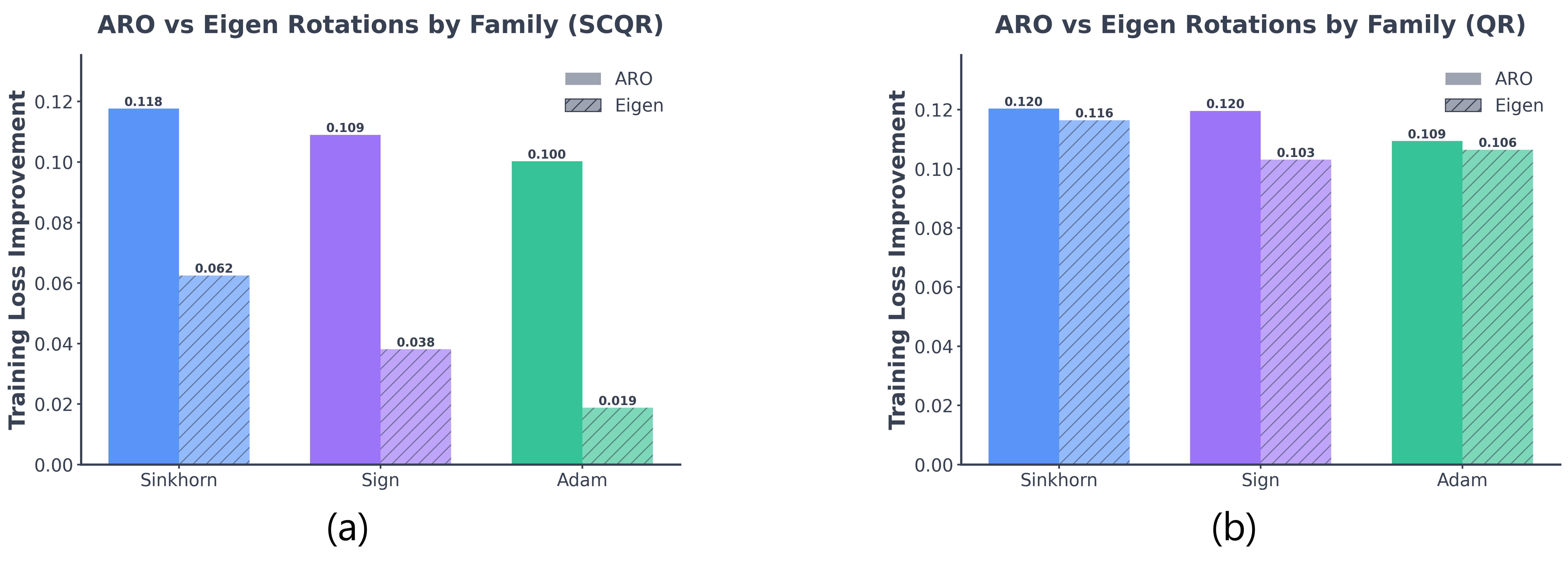}
    \caption{The performance of different rotation policies under both SCQR (left) and QR (right) implementations, across different base optimizers. RowNorm family is omitted as \name{} coincide with eigen-rotation, as shown in \Cref{sec: ortho_as_special_case}. \textbf{The results suggests that \name{} not only provides a better rotation direction, but also enables fast QR computation by improving the conditioning.}}
    \label{fig: m2.1}
\end{figure}

Next, we conduct an ablation study to understand the sources of the performance gains of \name{} rotations over eigen-rotation. Under the SCQR implementation, the improvement from \name{} may arise from two factors: i) producing a geometrically better rotated direction, and ii) improving the numerical conditioning of the SCQR decomposition through the additional normalization term in $\mM_t f_t(\mR_t^\top \mM_t)^\top$. To separate these effects, we repeat the same set of experiments on the 124M GPT-2 model but replace the fast SCQR implementation with an standard QR decomposition, which is much more well-behaved and usually does not suffer from numerical conditioning issues. This allows us to compare the loss improvement from \name{}-rotations against that of eigen-rotations, across all base optimizers, without conflating gains from SCQR conditioning. Results are shown in \Cref{fig: m2.1}. We observe that when switching from SCQR to standard QR, the gap between \name{}-rotations and eigen-rotations becomes much more moderate across all base optimizers. This indicates that a large portion of the improvements of \name{}-rotations is indeed from improved conditioning. However, even under standard QR, the gap is significant. On Sign and Sinkhorn families, the loss gap is $\approx 0.02$ and $\approx 0.008$, respectively, both are often considered as significant in LLM training. On Adam base optimizer class, the gap is smaller $\approx 0.003$, partially justifying that why in the literature eigen-rotation is often used for Adam-like base optimizers. Together, our results suggests that \name{} not only provides a better rotation direction, but also enables fast QR computation by improving the conditioning. 

\begin{remarks}
    \begin{remark}
        To further ablate the effect of improved numerical conditioning with \name{}, we have also compared \name{} rotations with other variants that are not derived from maximizing \name{} objective as in \Cref{sec: instantaneous}. For example, instead of doing $\mR_{t} = \text{QR}(\mM_t f_t(\mR_{t}^\top \mM_t)^\top)$, one can also try $\mR_{t} = \text{QR}(f_t(\mM_t) f_t(\mR_{t}^\top \mM_t)^\top)$ or $\mR_{t} = \text{QR}( f_t(\mM_t  \mM_t^\top \mR_t))$, both will also improve the numerical conditioning of SCQR. However, in our experiments we found that they have worse performance compared with \name{}, under both standard QR and SCQR.
    \end{remark}
\end{remarks}

\begin{findings} 
\textbf{Findings 4: At least locally, the \name{} objective $\mathcal{J}(\mR_t;\mM_t,f)$ predicts the performance ranking of different rotation strategies.}
\end{findings}

\Needspace{18\baselineskip}
\begin{wrapfigure}{r}{0.45\linewidth}
    \vspace{-1.2em}
    \centering
    \includegraphics[width=\linewidth]{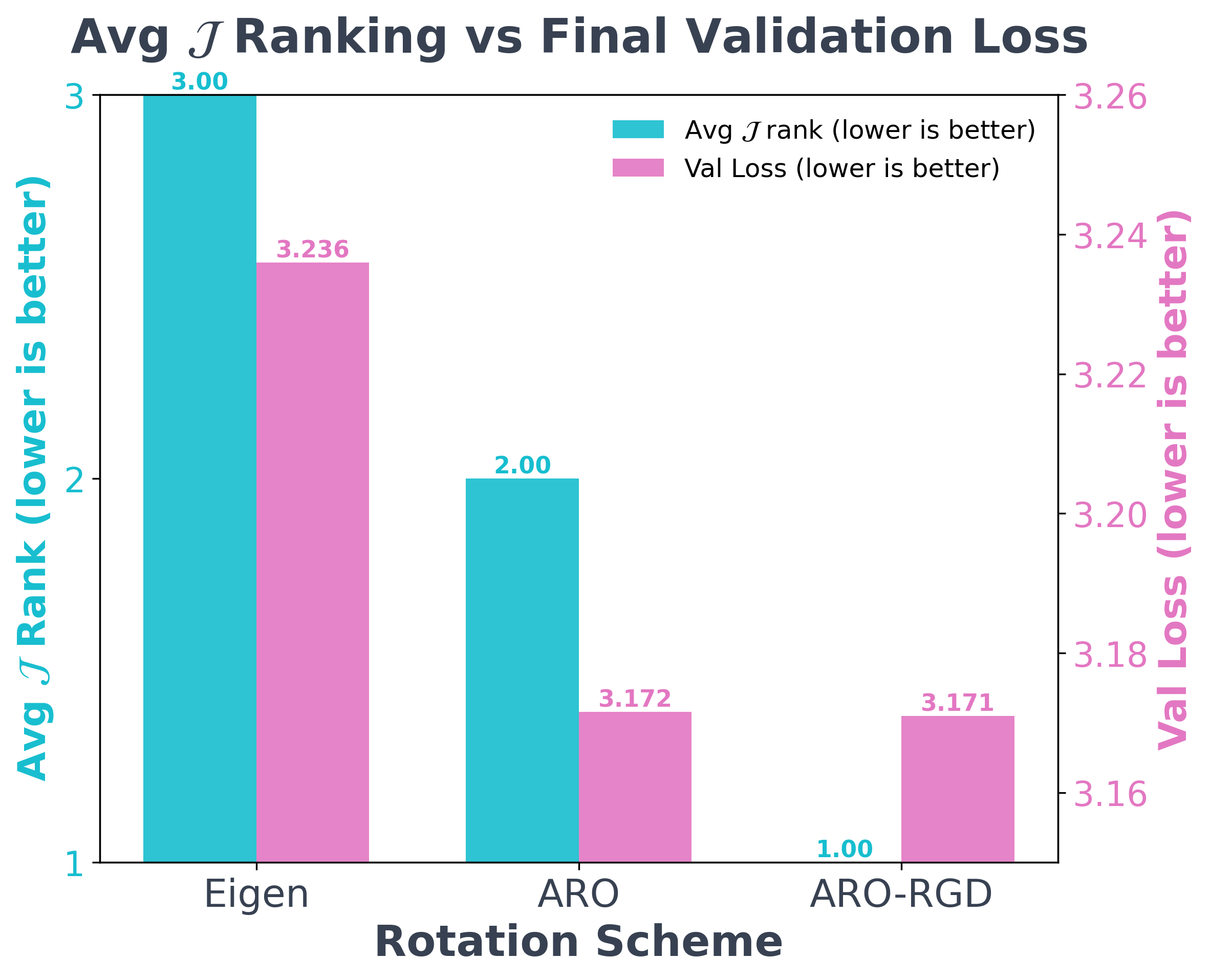}
    \caption{Alignment between relative rankings of  $\mathcal{J}(\mR_t;\mM_t,f)$ and final loss, across 3 different rotation schemes. \textbf{$\mathcal{J}(\mR_t;\mM_t,f)$ positively correlates with pretraining performance.}}
    \label{fig: m2.2}
    \vspace{-1.2em}
\end{wrapfigure}

Finally, we provide preliminary analysis on the correlation between the maximization of $\mathcal{J}(\mR_t;\mM_t,f)$, and the final performance of the resulting rotated optimizers. On the same GPT2-124M model pretraining setup, we use the sign base optimizer, and consider the following rotation policies:

\begin{itemize}
    \item \name{} rotation with SCQR using \Cref{eq: gf};
    \item Eigen-rotation, given by the eigenvectors of the momentum
    $$  \mR_t^\star = \text{Eigenvectors}(\mM_t\mM_t^\top) $$
    We use 1-step power iteration to compute eigenvectors.
    \item \name{}-RGD rotation, that uses 10-step Riemannian gradient descent (with QR retraction) to locally improve $\mathcal{J}(\mR_t;\mM_t,f)$ using $\mR_{t-1}$ as initialization at time $t$.    
\end{itemize}

\par \wrapfill \par
We train 3 different runs using these rotation policies, respectively. Note that directly comparing raw $\mathcal{J}(\mR_t;\mM_t,f)$ values across policies is not meaningful, because each optimizer follows a different trajectory with distinct gradient statistics. To make $\mathcal{J}$ comparable across rotation policies, we focus on one optimization trajectory at a time. At each step along its trajectory, we take the current gradient from a intermediate layer, compute the hypothetical update direction for all three rotation methods, and log their $\mathcal{J}(\mR_t;\mM_t,f)$ values. These candidate updates are discarded after logging. For robustness, we repeat the analysis and average the resulting \emph{relative} $\mathcal{J}(\mR_t;\mM_t,f)$ rankings across all 3 runs 
(lower rank number, e.g., rank=1 denotes the best). The results obtained along the \name{}-Sign trajectory are shown in \Cref{fig: m2.2}.

We observe that, for the rotation policies considered, larger $\mathcal{J}(\mR_t;\mG_t,f)$  (i.e., lower rank number) are associated with better final loss. However, although \name{}-RGD achieves a higher $\mathcal{J}(\mR_t;\mG_t,f)$ than the default \name{} rotation, its improvement in training loss is negligible. This suggests diminishing returns from further optimizing $\mathcal{J}(\mR_t;\mG_t,f)$ and we conjecture the possibility that aggressively maximizing $\mathcal{J}(\mR_t;\mG_t,f)$ may eventually hurt optimization performance. This is consistent with the argument in \citep{yun2023riemannian} that maximizing $\mathcal{J}(\mR_t;\mG_t,f)$ correlate with increased sharpness.

\begin{takeaways}
    \textbf{Takeaways for \Cref{subsec: universal effective}: \name{} rotations consistently outperform eigen-rotations and no-rotation base optimizers. The rotation policy has the most significant impact on optimization performance; its quality is locally aligned with $\mathcal{J}$ objective, reinforcing rotations as a primitive design principle.}
\end{takeaways}

\subsection{GPT2-XL-1.5B pretraining} \label{exp: gpt}

\begin{figure}[!h]
    \centering
\includegraphics[width=1.03\linewidth]{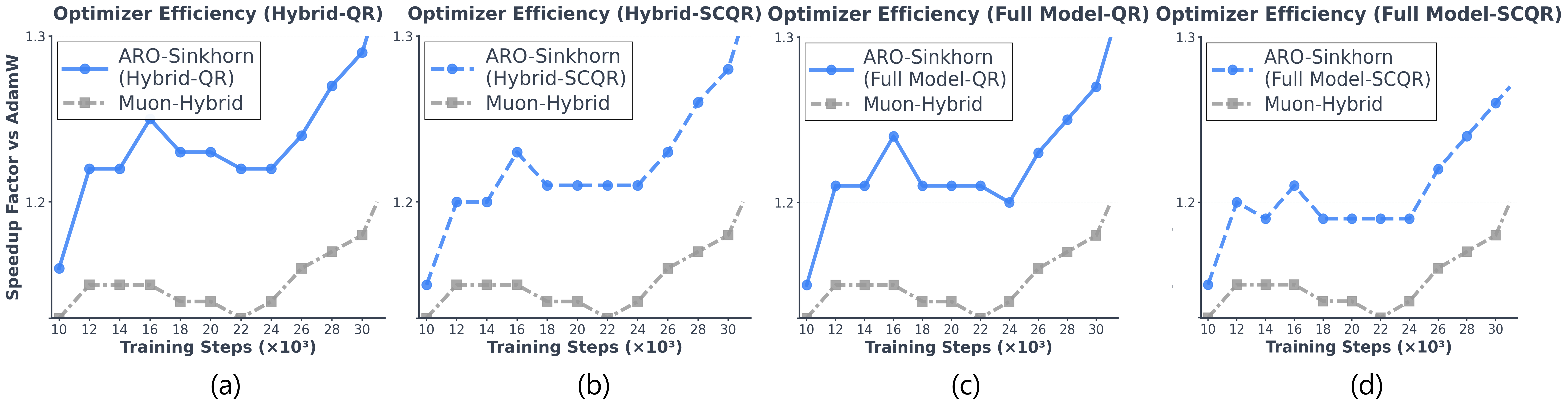}
    \caption{Tracking the speedup factor of \arosink{} against AdamW during training. We consider all 4 regimes of $\{ \text{Hybrid}, \text{Full model} \}$ $\times$ $\{ \text{standard QR}, \text{fast SCQR}\}$ setups. Speedup at $t$ is calculated by $\frac{t}{t_{\text{ARO}}}$, where $t_{\text{ARO}}$ is the number of steps \arosink{} required to reach the loss of AdamW at $t$. Since Muon diverges under full model setup (\Cref{fig: m6}), we compare to Muon-hybrid baselines in all regimes. \textbf{\arosink{} significantly outperforms both AdamW and Muon across all 4 setups, reaching up to around $1.3\times$ speedup over AdamW, while Muon achieves around $1.2\times$ speedup.}}
    \label{fig: m3}
\end{figure}

From this section, we proceed to formally benchmark the performance of \name{} and various of baselines, as well as the full-model capabilities of \name{}, starting with a GPT2-XL-1.5B model.

\paragraph{Setup and how guidelines are followed.} We consider GPT2-XL-1.5B model following the setup of NanoGPT \citep{Karpathy2022}, trained on FineWeb dataset with 1$\times$ Chinchilla law regime and 1024 context length. Most methods are compared both with the hybrid setup and the full model setup. We follow the guidelines in \Cref{exp: guidelines} with mixed precision training, aligned lr schedules, optimally tuned AdamW learning rates end-to-end, and then transferred to all rotation based methods. Note that the alignment of non-hidden-layer optimizers is only enforced under hybrid setup, and does apply to full model setup.

\begin{figure}[!h]
    \centering
\includegraphics[width=0.99\linewidth]{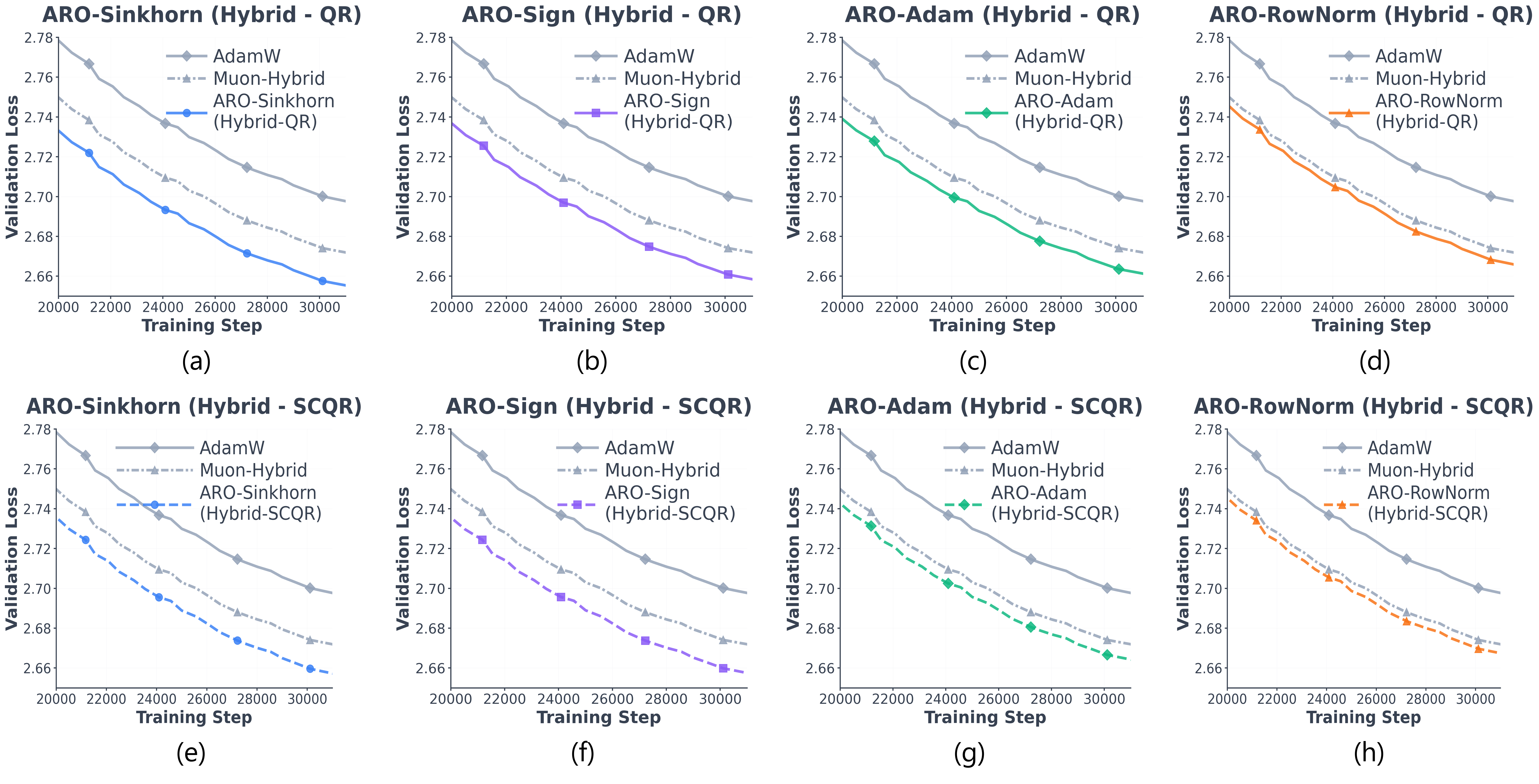}
    \caption{Benchmarking the performance of \name{} on GPT2-XL-1.5B, compared to AdamW and Muon optimizers, across all four base optimizers (Sinkhorn, Sign, Adam, and RowNorm), under the hybrid setup. \textbf{\name{} family consistently outperforms AdamW and Muon; and \arosink{} gives the best performance while being robust to the choice of QR schemes.}}
    \label{fig: m4}
\end{figure}

\begin{findings}
\textbf{Findings 5: On GPT2-XL-1,5B, under the hybrid setup, \name{} family consistently outperforms AdamW and Muon. \arosink{} gives the best performance while being robust to the choice of QR schemes, reaching $\sim 1.3\times$ against AdamW, while Muon achieves $~1.2\times$ speedup.}
\end{findings}

\Cref{fig: m4} shows the performance of \name{} in comparison with AdamW and Muon optimizers, across all four base optimizers (Sinkhorn, Sign, Adam, and RowNorm), under the hybrid setup. Particularly, \Cref{fig: m4} (a)-(d) use the standard QR implementation, and (e)-(h) use the fast SCQR implementation (\Cref{sec:chol_qr}).  We observe that:
\begin{itemize}
    \item The \name{} family (Sinkhorn, Sign, Adam) consistently outperforms both AdamW and Muon baseline. The only exception is the RowNorm family where \name{} gives near identical performance under standard QR, as analyzed by \Cref{sec: ortho_as_special_case}.
    \item Within \name{} family, \arosink{} gives the best performance. As shown in \Cref{fig: m3} (a), \arosink{} (Hybrid-QR) reaches around $\sim 1.3\times$ speedup against AdamW at the end of training; while Muon achieves around $~1.2\times$ speedup.
    \item Switching from standard QR (\Cref{fig: m4} (a)-(d)) to fast SCQR (\Cref{fig: m4} (e)-(h)) in general result in performance degradation. This is more pronounced in Adam family, while \arosink{} and \name{}-Sign are more resilient to the change. \emph{This reveals the potential limitation of Adam as base optimizer.}
    \item \Cref{fig: m3} (b) tracks the speedup factor of \arosink{} (hybrid-SCQR) against AdamW, during training. Compared with standard QR (\Cref{fig: m3} (a)), \arosink{} (hybrid-SCQR) has lower speedup during training. However, the gap shrinks at the end of training, and \arosink{} (hybrid-SCQR) reaches a $1.28\times$ speedup factor. This justifies the use of SCQR, especially under the default Sinkhorn base optimizer.
\end{itemize}

\begin{figure}[!h]
    \centering
\includegraphics[width=0.99\linewidth]{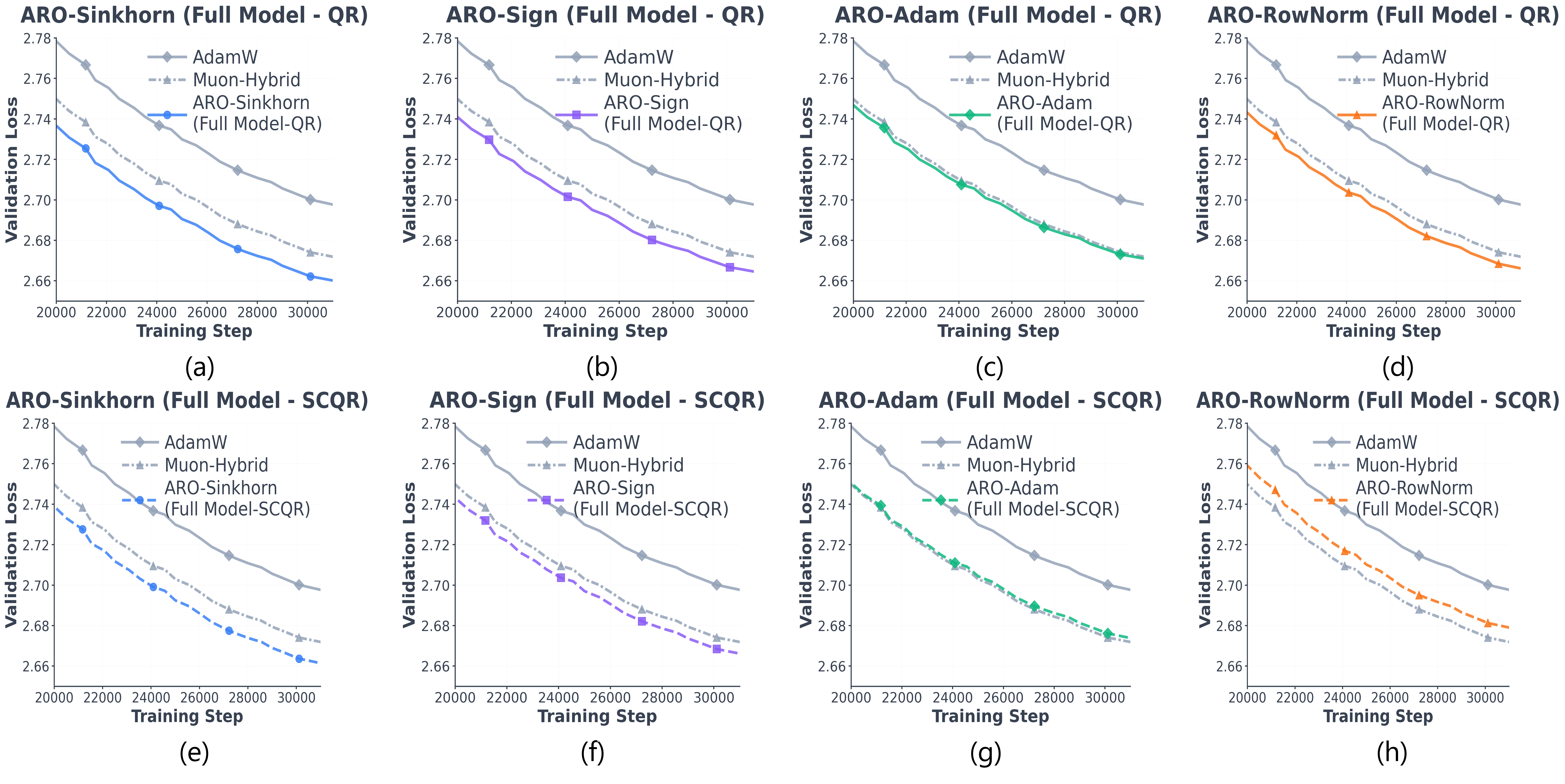}
    \caption{Performance of \name{} on pretraining GPT2-XL-1.5B, in comparison with AdamW and Muon, across all four base optimizers (Sinkhorn, Sign, Adam, and RowNorm), under the \textbf{full model} optimization setup. \textbf{\name{} family consistently performs well, while Muon diverges when applied to all matrix parameters (\Cref{fig: m6}). Particularly, \arosink{} gives the best and robust performance under full model setup, making full model setup a viable option; while other base optimizers show various limitations. }}
    \label{fig: m5}
\end{figure}

\begin{findings}
    \textbf{Findings 6: On GPT2-XL-1.5B, under the full-model setup, \name{}-family consistently performs well, while Muon diverges. Particularly, \arosink{} gives the best and robust performance under full model setup, outperforming the Muon$+$Adam-hybrid; while other base optimizers show various limitations. This result reveals the importance of both \name{} rotation, as well as choosing base optimizer wisely.}
\end{findings}

\Needspace{18\baselineskip}
\begin{wrapfigure}{r}{0.35\linewidth}
    \vspace{-1.2em}
    \centering
\includegraphics[width=\linewidth]{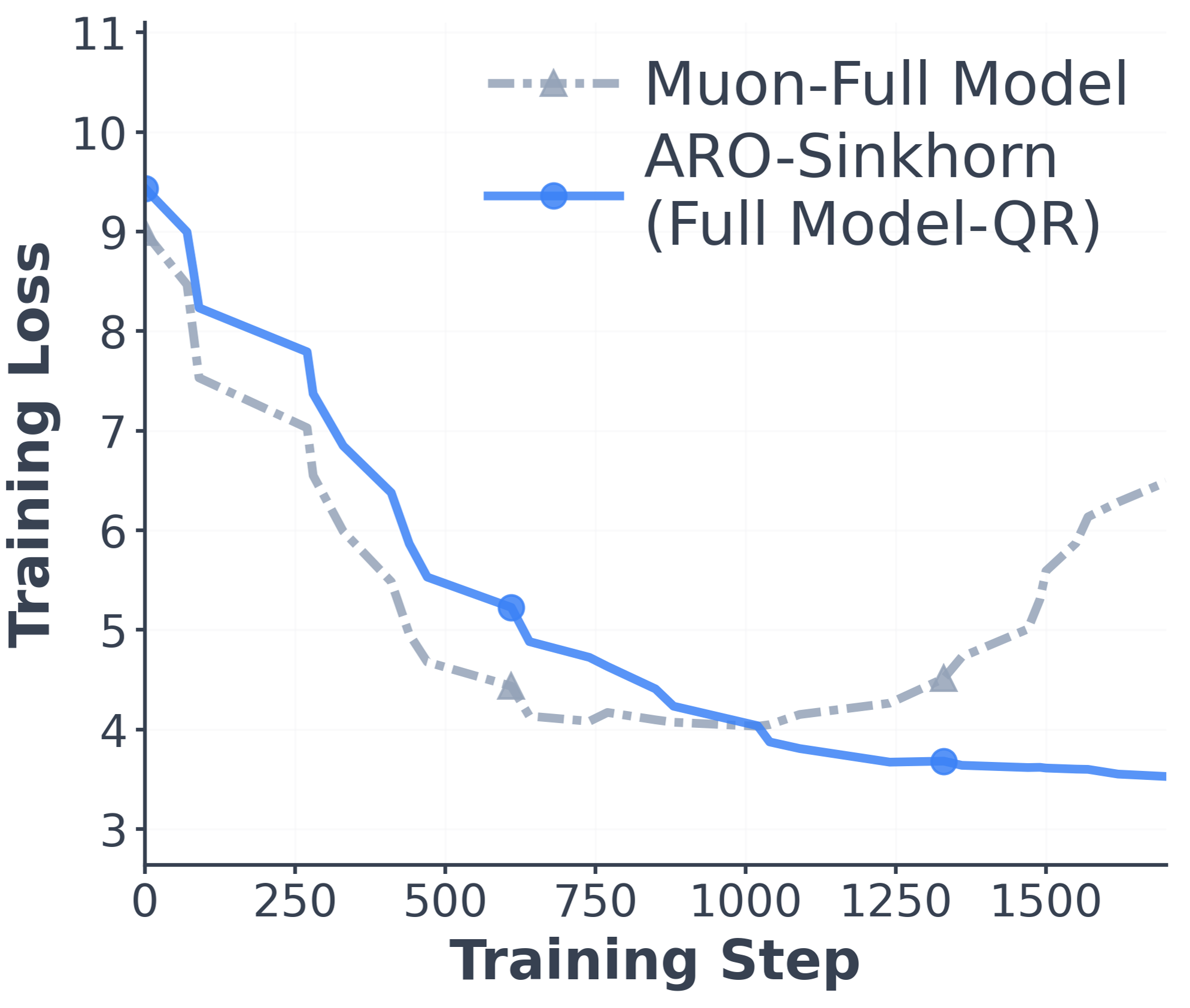}
    \caption{Muon diverges when applied to all matrix parameters. In our analysis, we infer that \textbf{this is due to both the numerical quality of Newton-Schulz solver, as well as the limitation of its RowNorm base optimizer.}}
    \label{fig: m6}
    \vspace{-1.2em}
\end{wrapfigure}

\Cref{fig: m5} shows the full-model optimization performance of \name{} in comparison with AdamW and Muon optimizers, across all four base optimizers (Sinkhorn, Sign, Adam, and RowNorm). Since Muon diverges under full model setup (\Cref{fig: m6}), we still compare to Muon-hybrid baselines. We observe that:

\begin{itemize}
    \item Under full-model setup, Muon diverges (\Cref{fig: m6}), while \name{} family still manages to converge and outperforms AdamW, especially under standard QR (\Cref{fig: m5} (a) - (d)). 
    \item Comparing to their hybrid versions (\Cref{fig: m4}), \name{}-Adam and \name{}-Sign both suffer from much more pronounced degradation under full model setup. This reveals that while \name{} rotation is beneficial for full model optimization, the choice of base optimizer is also important. \emph{Adam/SignGD as base optimizers seem to have strong limitations in a full model optimization setup}.
    \item \arosink{} performs robustly well under full model setup. As shown in \Cref{fig: m3} (c), \arosink{} (full model-QR) reaches around $1.27\times$ speedup against AdamW, outperforming Muon-hybrid. 
    \item Moreover, when switching from standard QR to fast SCQR, the \name{}-RowNorm (i.e., Dion) takes a significant performance hit, while \arosink{} is much more robust.  Combining with previous observations, this also reveals that \emph{the failure of Muon under full model setup is due to both the numerical quality of Newton-Schulz solver, as well as the limitation of its RowNorm base optimizer.}
\end{itemize}

\par \wrapfill \par
\begin{remarks}
    \begin{remark}
        For the full model setup, we mainly showed that \name{} family, especially \arosink{} performs competively; but it did not showed better performance than hybrid setup. In fact, in \Cref{subsec: Exp Sigma-MoE} we will further push to a 2B-parameter model $+$ 8$\times$ over-train scale, and show that at the first $1\times$ Chinchilla law regime, full model training indeed performs worse; however, when we continue to over-train the model, full model training with \name{} started to outperform its hybrid versions, and gives the best performance. 
    \end{remark}
\end{remarks}

\begin{takeaways}
    \textbf{Takeaways for \Cref{exp: gpt}: on large-scale pretraining tasks with GPT2-XL-1.5B, \name{} (especially \arosink{}) significantly outperforms both AdamW and Muon, and makes full model training a viable option.}
\end{takeaways}

\subsection{Sigma-MoE-2B pretraining}
\label{subsec: Exp Sigma-MoE}
To scale up ARO and evaluate its effectiveness across different architectures, we adopt \arosink{} to train a large-scale MoE language model, Sigma \citep{qu2025sigma}, with $2$B total parameters ($200$M activated). This architecture enables extreme sparsity while maintaining top-tier performance. We choose an MoE architecture to compare against dense LLMs and evaluate ARO optimizer performance. We compare \arosink{} against the following representative baselines: (1) AdamW; (2) Muon; (3) Dion; (4) Eigen-Sinkhorn; and (5) ARO-Adam. In addition, we include a full-model training setup for \arosink{}. 

\paragraph{Setup and how guidelines are followed.}
We follow all guidelines detailed in \cref{exp: guidelines}. Apart from the full-model setup of \name{}, we use hybrid setup for all other optimizers. We use $2$K sequence length, $4M$ token batch sizes and $24000$ steps, corresponding to $100$B training tokens. Although there is no single universally accepted compute optimal ratio for MoE model, $100$B can be regarded as long-horizon setup \citep{ludziejewski2025joint}. 
We use the Nemontron-CC dataset \citep{su2025nemotron} with \emph{kind2=actual} split. We choose $5E-4$ as the learning rate for AdamW based on the end-to-end hyperparameter tuning and transfer to others following \cref{sec:hyperparam}. Due to the lack of validation dataset and the fact that we never exhaust the entire training data, we compute the speed-up using the training loss. For detailed setup, refer to \cref{subapp: sigma setup}.

\paragraph{Distributed training.} We use standard distributed data parallel (DDP) for all methods with A100 NVIDIA GPUs. 

\begin{figure}
\begin{minipage}[t]{0.45\linewidth}
\centering
    \includegraphics[width=1\textwidth]{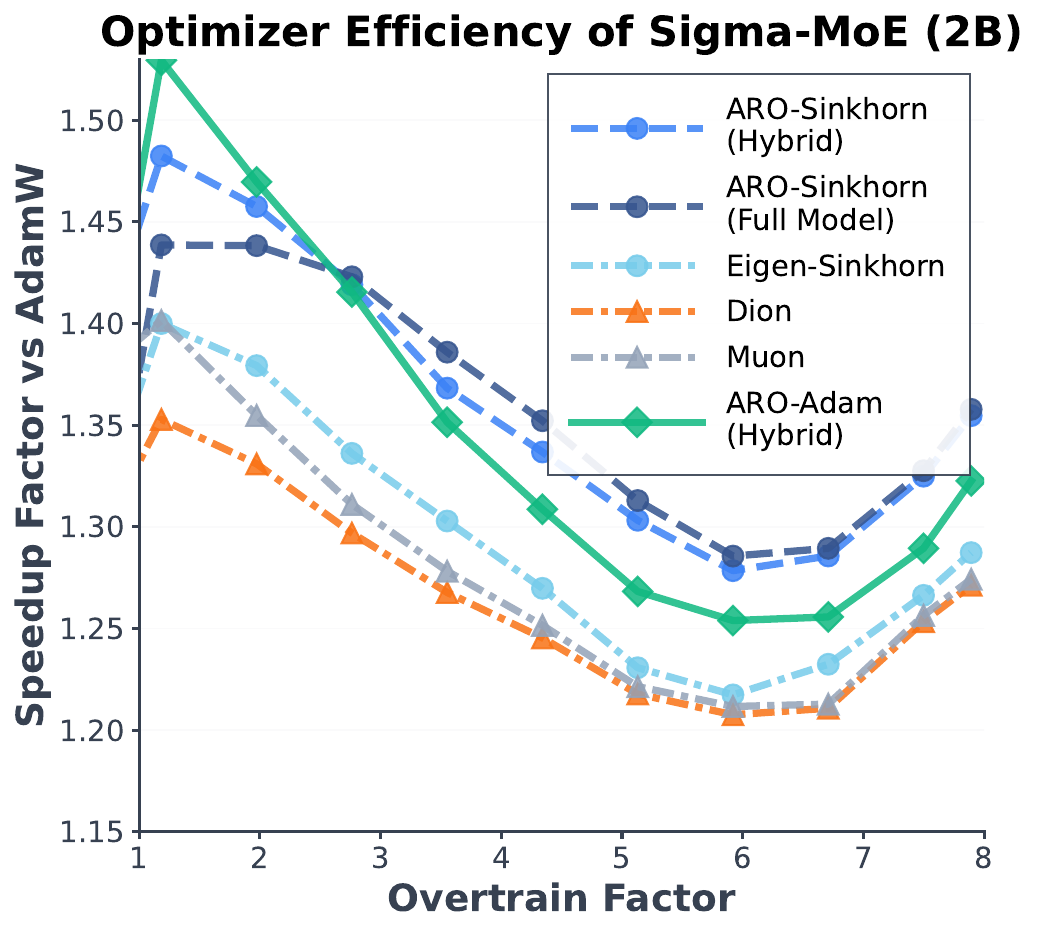}
    \caption{The speedup factors of different optimizers compared to AdamW at different overtrain factors. The $1\times$ overtrain is computed as $20\sqrt{\text{total param}\times \text{activated param}}$. The target loss is the loss value achieved by AdamW at the chosen data scale.}
    \label{fig:sigma speedup}
\end{minipage}\hfill
\begin{minipage}[t]{0.45\linewidth}
\centering
    \includegraphics[width=1\textwidth]{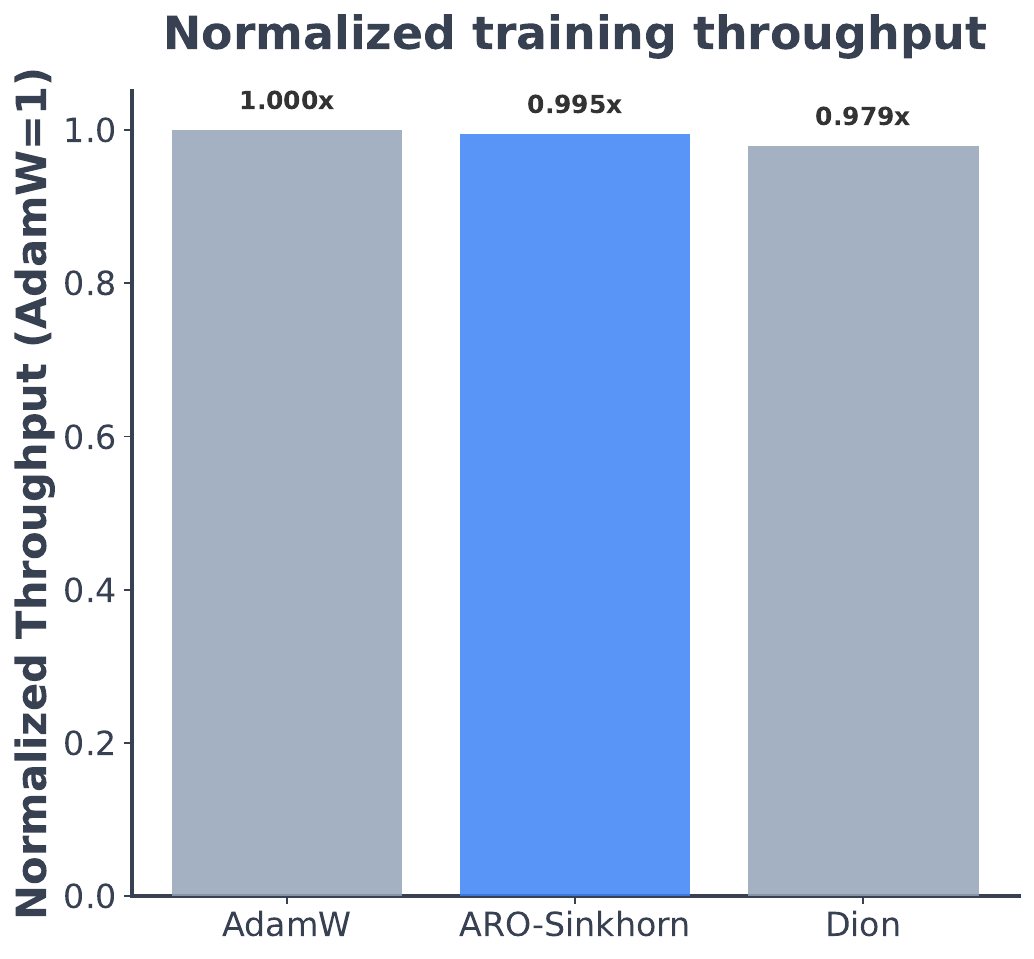}
    \caption{The normalized throughput averaged over entire training for different optimizers. Since the \emph{Hybrid} and \emph{full-model} setup only differ on a small number of parameters to be updated, their throughput are almost identical. This throughput is equivalent to optimizers' wall-clock speed.}
    \label{fig: sigma throughput}
\end{minipage}
\end{figure}

\begin{findings}
\textbf{Findings 7: The advantage of \arosink{} and \name{}-rotation observed on dense models in \Cref{subsec: universal effective} and \Cref{exp: gpt} continue to hold on MoE architecture. Particularly,  \arosink{}  outperforms other baselines, with no diminishing returns at 2B + 8$\times$ over-train regime.}
\end{findings}

\Cref{fig:sigma speedup} plots the speedup factors of different optimizers compared to AdamW at different over-training regimes (measured from the same 8$\times$ over-train run). We observe that:
\begin{itemize}
    \item \name{} family (i.e., \arosink{} and ARO-Adam) consistently achieve speedups over AdamW ($> 1.35 \times $) and outperform other orthogonalization-based methods (Muon, Dion) ($\sim 1.1\times$).
    \item Within \name{} family, \arosink{} shows a clear advantage over Eigen-Sinkhorn, confirming Finding~1 in \cref{subsec: universal effective}, demonstrating that ARO-based rotations are consistently more effective than commonly used eigen-based rotations, even when scaling up the model. 
    \item Within the family of Eigen-rotation methods, Eigen-Sinkhorn slightly outperforms Muon and Dion, suggesting that SinkGD serves as a stronger base optimizer than RowNorm. 
    \item Importantly, we observe that at least at the 2B + 8$\times$ over-train regime, \arosink{} does not suffer from the diminishing returns observed in \cite{wen2025fantastic}. Compared with GPT-XL experiments (1.5B + 1 $\times$ compute optimal regime, reaching around 1.3$\times$ speedup \Cref{fig: m3}), the speedup factor vs AdamW on Sigma MoE reaches around 1.36$\times$ at the end of training.  The speedup curve at different over-training regimes in \Cref{fig:sigma speedup} does not follow a monotonically diminishing trend either; it follows a pattern of increase $\rightarrow$ decrease $\rightarrow$ increase,  as we continue to over-train the model.  \emph{The same pattern is observed in subsequent 8B model overtrained runs from \Cref{subsec: qwen3-8B}, indicating that the there is no strong evidence of diminishing return. }
\end{itemize}

\begin{findings}
\textbf{Findings 8: \arosink{} applied to all matrix parameters (including non-hidden layer parameters), outperforms its hybrid variant in the long term. }
\end{findings}

In \Cref{exp: gpt}, we showed that under 1$\times$ compute optimal regime, when applied to all matrix parameters, \name{} (especially \arosink{})  performs competitively to its hybrid version (non-hidden layer parameters trained by AdamW). The results for 2B Sigma-MoE + 8$\times$ over-train regime show that the true benefit of full-model optimization arises at over-train regime. Particularly, in \Cref{fig:sigma speedup}, we observe that from the 3$\times$-4$\times$ over-training stage, \arosink{}-full-model mode starts to outperform its hybrid counterpart, and remains strictly better for the rest of training. This evidence highlights \emph{the potential of using \name{} as a unified update rule across all matrix parameters, addressing the open challenge described in \cite{liu2025muon}}.

\begin{findings}
\textbf{Findings 9: The throughput of \arosink{} is similar to AdamW}
\end{findings}
Better data efficiency/speedup in steps does not necessarily translate into practical wall-clock gains when the per-step computation is more expensive than that of the baselines. \Cref{fig: sigma throughput} reports the normalized training throughput against AdamW, averaged over the entire training run. We observe that \arosink{} achieves throughput comparable to Muon, Dion, and AdamW. On average, \arosink{} incurs only a $0.5\%$ total overhead relative to AdamW. This indicates that the overhead of \arosink{} is manageable, and the improved data efficiency/speedup of \arosink{} can be effectively translated into wall-clock speedups.

\begin{takeaways}
    \textbf{Takeaways for \Cref{subsec: Exp Sigma-MoE}: on large-scale pretraining of Sigma-MoE-2B + up to 8$\times$ overtrain budget, \name{} significantly outperforms AdamW, gradient orthogonalization, and eigen-rotation variants. Particularly, \arosink{} with full model mode gives the best long horizon performance.}
\end{takeaways}

\subsection{Qwen3-8B pretraining}
\label{subsec: qwen3-8B}
To test the performance of ARO when scaling up even further, we pretrain an 8B Qwen3 dense model \citep{yang2025qwen3} with SlimPajama \citep{cerebras2023slimpajama}. Based on the performance analysis in previous sections and the expensive nature of training a large model, we pick \arosink{} in the ARO family and compare it with AdamW and Muon at the time of drafting.  

\paragraph{Setup and how guidelines are followed.} 
We use a hybrid setup for ARO and mixed precision training. Following \Cref{exp: guidelines}, we train with a larger $4$K sequence length and a $14$M-token batch size, using SlimPajama which is approximately $4\times$ the Chinchilla compute-optimal regime \citep{hoffmann2022training}. Since end-to-end learning-rate tuning is infeasible at the 8B-plus overtraining scale, we follow the hyperparameter setting used for LLaMA~3.1~8B \citep{grattafiori2024llama}: a peak learning rate of $3\times10^{-4}$ with a cosine decay schedule, linear warm-up, and $400$ warm-up steps. We use decoupled weight decay with coefficient $0.1$. Finally, we align \arosink{} and Muon via RMS-norm matching to AdamW (\Cref{sec:hyperparam}). We use the entire validation dataset to compute the validation loss, containing $500M$ tokens. For detailed setup, please refer to appendix \cref{subapp: Qwen3 8B setup}.
\paragraph{Distributed training.}
We adopt the Pytorch FSDP2 distributed strategy with efficient ARO implementation (\cref{subsec: FSDP2}) and B200 NVIDIA GPUs. All tensors are sharded on the first dimension.

\begin{figure}
    \centering
    \begin{minipage}[t]{0.3\linewidth}
        \includegraphics[width=1\textwidth]{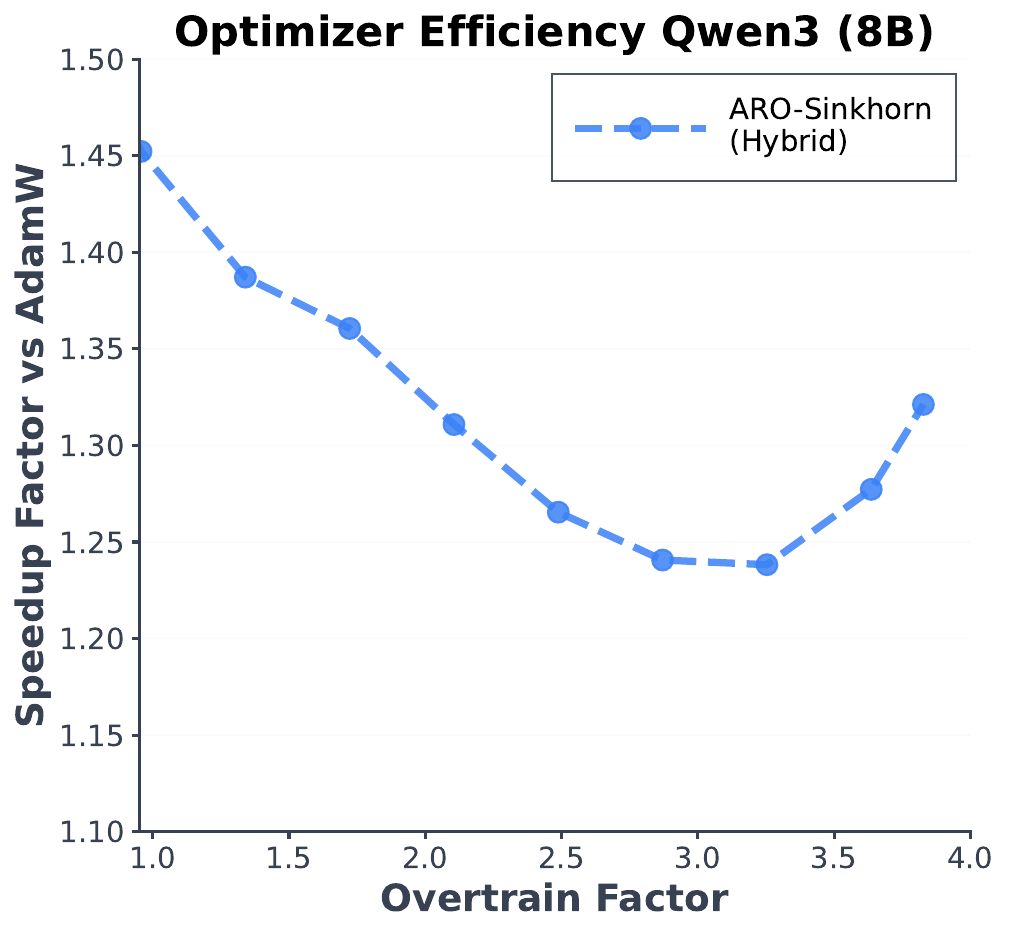}
        \caption{The validation speedup factors of \arosink{} compared to AdamW at different overtrain factors ($1\times$ is Chinchilla compute optimal). The target loss is the AdamW loss at the chosen scale.}
        \label{fig: Qwen speedup}
    \end{minipage}\hfill
    \begin{minipage}[t]{0.3\linewidth}
        \includegraphics[width=1\textwidth]{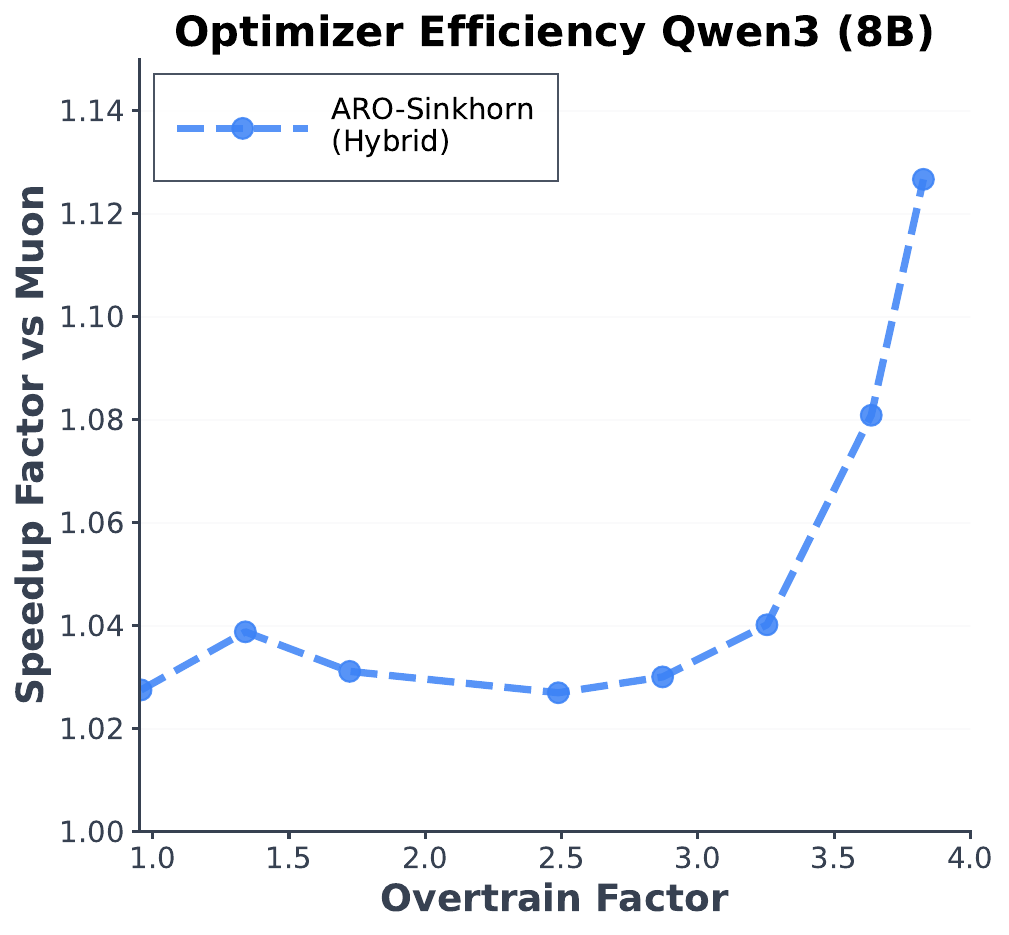}
        \caption{The validation speedup factors of \arosink{} compared to Muon at different overtrain factors ($1\times$ is Chinchilla compute optimal). The target loss is the Muon loss at the chosen scale.}
        \label{fig: Qwen speedup Muon}
    \end{minipage}\hfill
    \begin{minipage}[t]{0.3\linewidth}
        \includegraphics[width=1\textwidth]{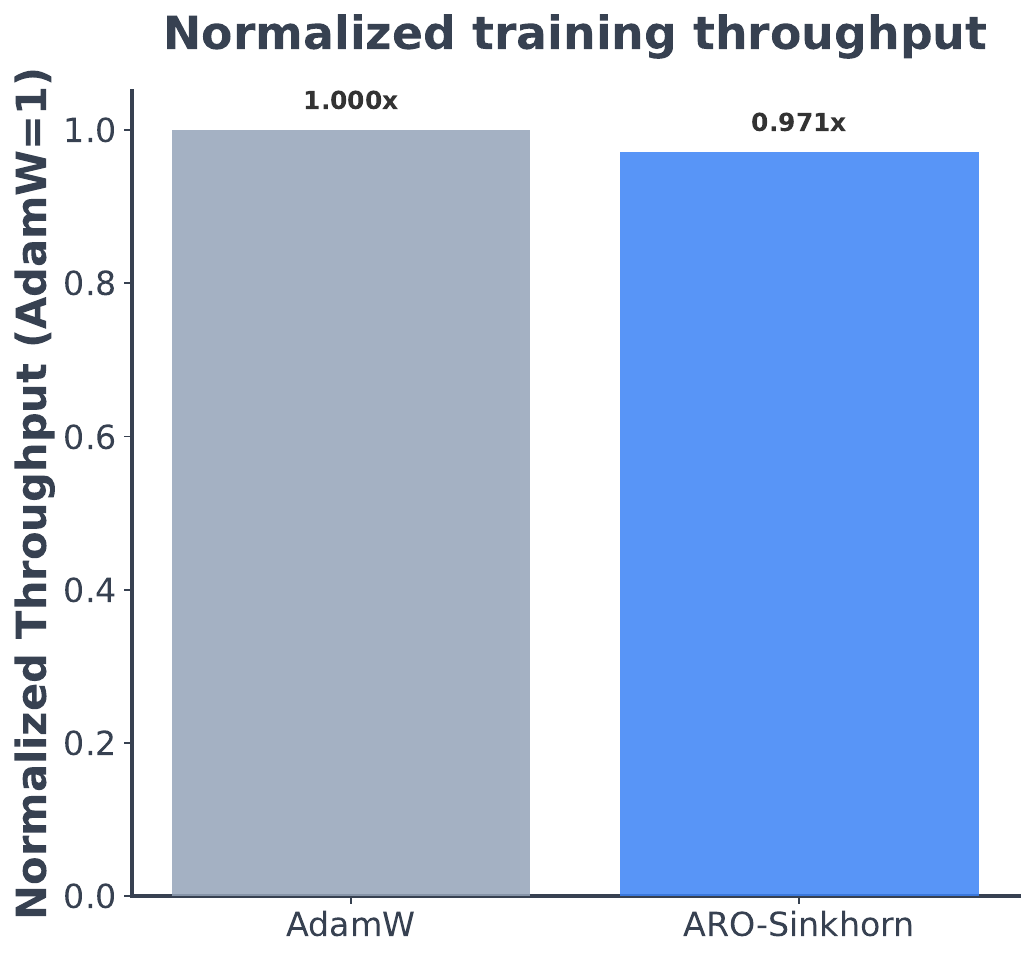}
        \caption{The throughput of \arosink{} compared to AdamW averaged across the entire training. }
        \label{fig: Qwen throughput}
    \end{minipage}
\end{figure}

\begin{findings}
\textbf{Findings 10: At 8B model + 4$\times$-overtrain scale, \arosink{} achieves non-trivial speed-up compared to AdamW and Muon with no evidence of diminishing returns.}
\end{findings}
\Cref{fig: Qwen speedup,fig: Qwen speedup Muon} reports the validation speedup factors of \arosink{} relative to AdamW and validation curve v.s. Muon, respectively. We observe:
\begin{itemize}
    \item \arosink{} achieves non-trivial speed-up compared to both AdamW and Muon. At the end of training, \arosink{} reaches a speedup factor of around 1.32$\times$ and 1.13$\times$ respectively, both consistent with results on smaller scales in previous sections. 
    \item Compared with \Cref{fig:sigma speedup} in \cref{subsec: Exp Sigma-MoE}, we observe that despite the difference in architecture (MoE vs dense) and scale (2B vs 8B), and overtrain setup (8$\times$ vs 4 $\times$), the speedup curve over AdamW follows a similar and robust pattern. Both initially reaches around 1.5$\times$ at the early $1/4$ stage, then decrease until around $3/4$ of the entire training; then start increase again at the later stages of training, both reaching around 1.35$\times$ speedup.  
\end{itemize}

Together, these observations suggest that for \name{} \emph{we do not observe evidence of diminishing returns observed in \cite{wen2025fantastic}}. The overall speedup does not exhibit a diminishing-return behavior with respect to data scale, and its speedup appears to be robustly transferable across model architectures (i.e., MoE and dense models) and model sizes (i.e., 2B and 8B), in an over-trained setup. \textbf{We also note that, when compared on a similar pretraining FLOPs budget, the data-efficiency gains from \arosink{} are comparable with recent architectural improvements such as Canon \citep{Allen2025-resonate} and mHC \citep{xie2025mhc}.}

Finally, compared with prior work on hyperparameter transfer for matrix optimizers \citep{qiu2025hyperparameter}, \textbf{our results across scales is obtained i) following general guidelines in \Cref{exp: guidelines} under standard parameterization, without optimizer-specific $\mu P$ scaling; ii) verified at a much larger scale model + up to over-trained regime, which is not covered in \citep{qiu2025hyperparameter}}. Further deriving the exact $\mu P$ scaling rules for \name{} may further improve performance, which we leave for future work.

\begin{findings}
\textbf{Findings 11: At larger scale \arosink{} has similar throughput as AdamW translating step-wise speed-up to wall-clock time advantages}
\end{findings}
\Cref{fig: Qwen throughput} plots the normalized training throughput averaged over the entire training against AdamW. While the pure computation time consumed by \name{} is larger than AdamW, we found that the forward/backward computation takes the majority of time during training. As a result, the end-to-end training throughput of \arosink{} is $3.0\%$ lower than that of AdamW. Similar to Findings~2 in \cref{subsec: Exp Sigma-MoE}, this overhead is quite manageable in practice, and the data efficiency speedup of \arosink{} can be translated to wall-clock time advantages.  We note that under settings where forward/backward computation is much less dominant (e.g., small global batch), the overhead of \arosink{} might be significantly larger than the reported number.

\begin{takeaways}
\textbf{Takeaway for \Cref{subsec: qwen3-8B}: On large-scale pretraining of Qwen-dense-8B with up to a $4\times$ overtraining budget, \arosink{} delivers a strong and consistent speedup over both AdamW and Muon, consistent with the patterns observed in our smaller-scale experiments in previous sections.}
\end{takeaways}

\subsection{Scaling analysis: putting it all together}

Finally, we focus on understanding the scaling behavior of \name{}, by aggregating and extending results from previous sections. We use \name{}-Sinkhorn variant (hybrid, SCQR) in this section.

\begin{figure}
    \centering
    \begin{minipage}[t]{0.4\linewidth}
        \includegraphics[width=1\textwidth]{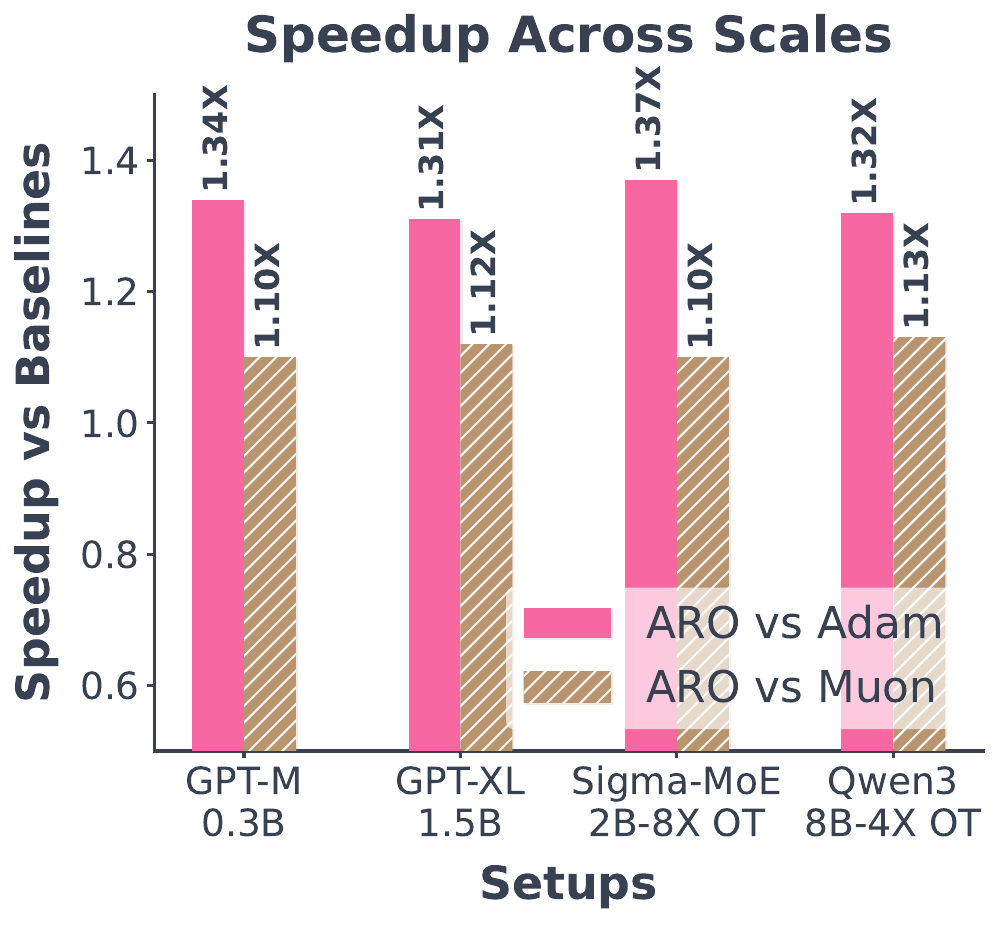}
        \caption{Speedup of \name{} over AdamW and Muon across model scales and training regimes. Consistent gains are observed from small GPT-style models to larger Sigma-MoE and Qwen variants, spanning standard (1$\times$) and overtrained (4$\times$, 8$\times$ Chinchilla) settings.}
        \label{fig: speedup_overall}
    \end{minipage}\hfill
    \begin{minipage}[t]{0.4\linewidth}
        \includegraphics[width=1\textwidth]{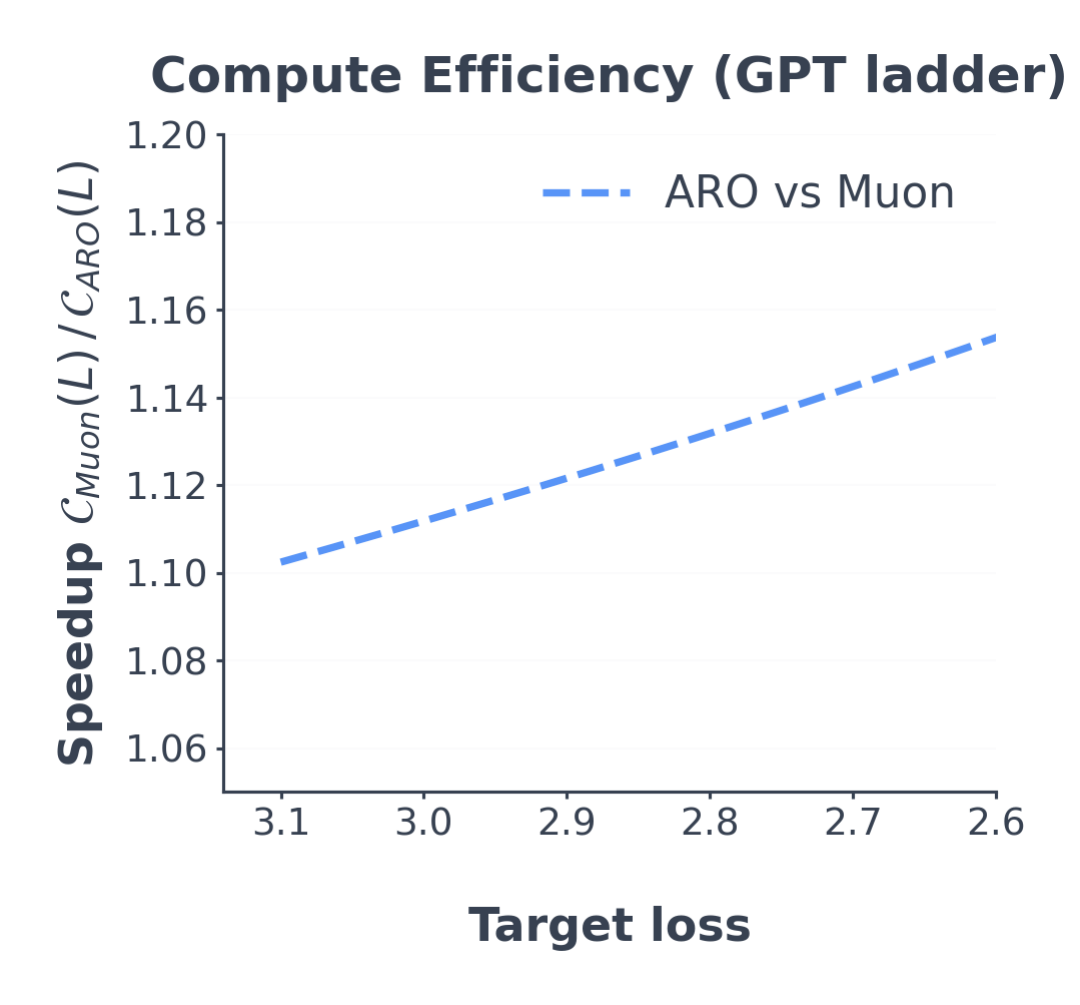}
        \caption{Compute efficiency at fixed target loss on the GPT-2 ladder. We plot $C_{\text{Muon}}(L)/C_{\text{\name{}}}(L)$, where $C(L)$ is the estimated training FLOPs required to reach loss $L$. Values $>1$ indicate better efficiency. Results show \name{} is consistent $1.1$--$1.15\times$ more efficient than Muon.}
        \label{fig: scaling}
    \end{minipage}\hfill
\end{figure}

\paragraph{Scaling across diverse setups and budgets} \Cref{fig: speedup_overall}(a) shows the speedup of \name{}-Sinkhorn relative to AdamW and Muon across a diverse set of model scales and training regimes covered in previous sections. We observe consistent improvements across all four settings, spanning small to large models and both standard (1$\times$ Chinchilla) and overtraining budgets (4$\times$, 8$\times$). Notably, the magnitude of speedup remains stable as scale increases, with no evidence of diminishing returns when moving to larger architectures or longer training horizons. This consistency indicates that \textbf{the gains from \name{}-Sinkhorn are robust to both model size and compute budget, and persist under aggressive overtraining regimes commonly used in modern LLM scaling.}

\paragraph{Compute efficiency vs.\ Muon over the GPT ladder.}
Finally, we evaluate the computational efficiency of \name{}-Sinkhorn relative to Muon, under a slightly different metric. We train both methods on a GPT-2 ladder consisting of four model sizes (124M, 350M, 700M, and 1.5B) under a fixed training protocol, and measure the total training compute $C$ (in FLOPs) required to reach a given validation loss $L$. Following standard practice, we fit a two-parameter scaling law of the form $L(C) = A\,C^{-\alpha}$ to the loss--compute trajectories across model sizes, for both \name{}-Sinkhorn and Muon.  Inverting the fitted relations yields the compute required to reach a target loss, $C(L) = (A / L)^{1/\alpha}$, which allows us to define the compute speedup at loss $L$ as $S(L) = C_{\text{Muon}}(L) / C_{\name{}}(L)$. We report $S(L)$ over the interpolation range $L \in [2.6, 3.1]$, avoiding extrapolation beyond the observed losses. As shown in \Cref{fig: scaling}, \name{}-Sinkhorn consistently achieves $S(L) > 1$, corresponding to a $1.1\times$--$1.15\times$ less compute required than Muon, with the speedup increasing as the target loss decreases. This trend indicates improved sample efficiency for \name{}-Sinkhorn that persists and slightly strengthens along the GPT ladder.

\paragraph{Cautions (interpreting speedup).}
In this paper so far we have used two metrics of speedup: \emph{matched-run step speedup} (\Cref{fig: m3}, \Cref{fig:sigma speedup}, \Cref{fig: Qwen speedup}, \Cref{fig: Qwen speedup Muon}, \Cref{fig: speedup_overall}), defined at training step $t$ as $t/t_{\text{method}}$, where $t_{\text{method}}$ is the step at which the proposed method first attains the baseline loss at step $t$ under identical training settings (model/data/schedule); and \emph{compute-to-target-loss speedup} (\Cref{fig: scaling}) on the GPT-2 ladder, estimated as $C_{\text{baseline}}(L)/C_{\text{method}}(L)$ from inverted scaling fits. The latter metric could overstate absolute gains because a faster optimizer may reach the target loss earlier and effectively traverse the decay phase with fewer epochs, implicitly rewarding early lead; we therefore treat these ratios as qualitative evidence for additional insight. In both cases, they should not be interpreted at face value or compared directly across pipelines.

\section{Why Rotations: The Symmetry Hypothesis for Matrix Optimization}
\label{sec:symmetry}

Previous sections presented \name{} as an optimizer family built around gradient rotations and validated it empirically, while intentionally postponing the theory. With the rotated-optimization abstraction developed earlier in hand, we now return to the ``why'' questions that we asked in \Cref{sec: insights}:

\setcounter{rq}{2}

\begin{questions}
\begin{rqlist}
  \item \textbf{Why do gradient rotations emerge as an effective primitive for LLM optimization?}
  \item \textbf{What does this suggest about the organizing principle for matrix optimizer design, and about what “matrix optimization” is fundamentally doing?}
\end{rqlist}
\end{questions}

These two questions are tightly linked. Under the rotated-optimization abstraction, a large portion of modern ``matrix optimization'' can be expressed as choosing a \emph{rotation policy} for the gradient and then applying a base update rule in that rotated coordinate system. Then, understanding \emph{why rotations help} is also a direct route to understanding \emph{what matrix optimization is capturing}.  Much of the matrix optimization literature motivates specific rotation/geometry choices via non-Euclidean descent and matrix-norm control (e.g., spectral descent \citep{carlson2015preconditioned,carlson2015stochastic,carlson2015stochasticb,bernstein2024old,jordan2024muon}) or via structured curvature/whitening surrogates \citep{gong2025towards,xie2025structured,gupta2018shampoo,morwani2024new,zhang2024transformers}. Yet these interpretations still depend on choosing particular ``oracles'' (norms, Kronecker factorizations, whitening rules, eigendirections) among many plausible alternatives, so it remains unclear \emph{a priori} why certain canonical choices should be preferred on modern LLM architectures such as Transformers.

Our methodological advancement in \name{} sharpens this ambiguity. Empirically, \name{} improves a wide range of base optimizer rules, and several of its best-performing instantiations do not reduce to either
(i) spectral descent, nor
(ii) eigen-rotation/whitening updates that explicitly track eigendirections of second-moment surrogates. 
This motivates two hypothesis. First, there should not be strong intrinsic privilege attached to any \emph{single} geometric oracle (e.g., the spectral norm) as a starting point for understanding why matrix optimizers work. Second, successful constructions are not arbitrary: the fact that \name{} works robustly across many settings, while going beyond standard templates, indicates that some more fundamental organizing principle is at play.

In this section, we hypothesize that a strong candidate is global \emph{symmetry structures} of loss landscape, that is, transformations of model parameters that leave the network function (and hence the loss) unchanged \citep{zhao2022symmetries,zhao2022symmetry,zhao2025symmetry,ziyin2025parameter, ashkboos2024slicegpt}. These symmetries organize the loss landscape into equivalence classes (orbits) and induce large, structured subsets of loss level sets. 

We show that, under suitable conditions, \name{}-family (hence its specific instantiations) can be recast within the symmetry teleportation (ST) framework \citep{zhao2022symmetry}. ST exploits loss-invariant symmetries to move along level sets and thereby reshape the local descent geometry, and has been shown to improve optimization behavior in practice \citep{zhao2023improving}.  Particularly, we show that \name{} can be derived as non-Euclidean symmetry teleportation under \emph{rotational symmetry groups}, a symmetry structure that exists in transformer architectures \citep{ashkboos2024slicegpt}. Therefore, we advance and hypothesize a unifying viewpoint:

\begin{takeaways}
    \begin{center}
\textbf{Symmetry Hypothesis} \emph{: a substantial part of matrix optimizers can be derived from, understood through, and improved using global symmetry properties of the loss landscape induced by model architectures.}
\end{center}

\end{takeaways}

From this viewpoint, several effective matrix optimizers (including \name{}, Muon, and even AdamW) naturally emerge as symmetry-exploiting procedures for specific invariances that (at least approximately) exists in deep neural networks, revealing a non-trivial connection that leads to the core nature of matrix optimization. We hypothesize that optimizer design can and should leverage those architecture-specific symmetries as intrinsic global structure. 

\subsection{Loss landscape symmetries and Symmetry teleportation}

A large fraction of optimization theory for deep learning is phrased in terms of \emph{local} differential information of the loss landscape (gradients, smoothness, curvature surrogates). In contrast, \emph{symmetries} are \emph{global} structures: they describe transformations of parameters that leave the loss unchanged, thereby organizing the loss landscape into large families of equivalent points. This global viewpoint is particularly natural in overparameterized models, where many distinct parameter values represent the same input--output function. 

\paragraph{Loss landscape symmetry and parameter symmetry.}
Let $\gL:\mathcal{W}\to\R$ be a loss function defined on a parameter space $\mathcal{W}$ (for us, typically $\mathcal{W}\subseteq \R^{m\times n}$ for a matrix parameter). Let $G$ be a group acting on $\mathcal{W}$; we write the action as
$
g\cdot \mW \in \mathcal{W}
$
for $g\in G$ and $\mW\in\mathcal{W}$. We say that $\gL$ is \emph{$G$-invariant} (or has a \emph{$G$-symmetry}) if
\begin{equation}
\gL(g\cdot \mW) \;=\; \gL(\mW),
\qquad \forall \mW\in\mathcal{W},~\forall g\in G.
\label{eq: G_invariance}
\end{equation}
For a fixed $\mW$, the \emph{orbit} of $\mW$ is the set
$
\mathcal{O}_G(\mW):=\{g\cdot\mW: g\in G\}.
$
Under \Cref{eq: G_invariance}, the entire orbit $\mathcal{O}_G(\mW)$ lies on the same loss level set, i.e., every point in the orbit achieves the same loss value.

Such symmetries are widespread in neural networks: many neural network architectures admit parameter-space symmetries, including discrete symmetries (e.g., hidden unit permutations) and continuous symmetries (e.g., rotations, rescalings and other reparametrizations), arising from the compositional and overparameterized nature of deep models \citep{zhao2022symmetry, zhao2023improving, zhao2022symmetries, zhao2025symmetry, ziyin2025parameter}. These symmetries induce large, structured level sets in parameter space. Importantly, while the loss is constant along an orbit, the \emph{optimization dynamics} (gradients and update directions) can vary dramatically across points on the same orbit, which motivates exploiting symmetries to accelerate training.

\paragraph{Symmetry teleportation (ST).}
Symmetry teleportation \citep{zhao2022symmetry, zhao2023improving} is built on a simple idea to exploit symmetries:

\begin{quote}
\emph{If we can move to a different point with the same loss but a more favorable gradient geometry, then subsequent optimization steps can improve.}
\end{quote}

In other words, ST treats symmetries as additional degrees of freedom to improve optimization, which is as opposed to gauge invariant methods such as natural gradient descent \citep{amari2016information}. In its basic form, ST augments gradient-based optimization by occasionally applying a symmetry transformation that preserves the loss, but changes the local properties of the gradient field. In particular, ST teleport the current weight along the orbit to a point with ``steeper'' gradient (hence faster instantaneous convergence), then take a gradient step there.

To make this concrete, consider standard gradient descent
\begin{equation}
\mW_{t+1} \;=\; \mW_t \;-\;\eta\,\nabla \gL(\mW_t),
\label{eq: gd}
\end{equation}
where $\eta>0$ is the step size. Under symmetry \Cref{eq: G_invariance}, one can move $\mW_t$ to any $\widetilde{\mW}_t\in\mathcal{O}(\mW_t)$ without changing $\gL$; the canonical choice of ST chooses the orbit element that maximizes the Euclidean gradient norm (which corresponds to local instantaneous loss improvement rate \citep{zhao2022symmetry}):
\begin{equation}
g_t \in \argmax_{g\in G}\ \big\|\nabla \gL(g\cdot \mW_t)\big\|_2,
\qquad 
\widetilde{\mW}_t := g_t\cdot \mW_t.
\label{eq: st_choice}
\end{equation}
Then ST applies a gradient step at the teleported point, and (optionally) maps back by undoing the group action.

To make the mechanism explicit, it is helpful to rewrite ST as a \emph{teleport--optimize} cycle. At iteration $t$, given $\mW_t$, choose a symmetry element $g_t\in G$, teleport to $\widetilde{\mW}_t:=g_t\cdot \mW_t$, take a gradient step there:
\begin{equation}
\mW_t \xrightarrow{\text{choose }g_t}
\widetilde{\mW}_t := g_t\cdot \mW_t
\ \xrightarrow{\text{GD step}}\ 
{\mW}_{t+1}:=\widetilde{\mW}_t - \eta\,\nabla \gL(\widetilde{\mW}_t).
\label{eq: st_euclid}
\end{equation}

When $\gL$ is exactly $G$-invariant, teleportation itself does not change the loss value, i.e.\ $\gL(\widetilde{\mW}_t)=\gL(\mW_t)$. This decouples the act of moving far/moving fast, from the loss decrease, allowing ST to search globally within a level set for points with improved local geometric properties encountered by the optimizer. 

Empirically and theoretically, ST is appealing as it can accelerate convergence and improve generalization in various settings \citep{zhao2023improving,zhao2025symmetry}. Also, in an idealized setting, ST behaves similarly to Newton's method locally:

\begin{remarks}
\begin{remark}[Symmetry teleportation locally aligns with the update direction of Newton's method]
A striking connection established in \citep{zhao2023improving} is that, under strict convexity assumption on the objective, teleporting to a \emph{critical point on a loss level set} can make the subsequent gradient step behave like a (damped) Newton step. Concretely, let $w'$ solve the constrained maximization
$$
w' \in \argmax_{w}\ \frac{1}{2}\|\nabla \gL(w)\|_2^2
\quad\text{s.t.}\quad \gL(w)=\gL(w_0),
$$
and assume $\nabla \gL(w')\neq 0$. Then \citep{zhao2023improving} shows that one gradient step from $w'$ can be written as
\begin{equation}
w_1 \;=\; w' - \gamma \nabla \gL(w')
\;=\; w' - \gamma \lambda_0 \,\nabla^2\gL(w')^{-1}\nabla \gL(w'),
\label{eq: st_newton_like}
\end{equation}
for some scalar $0\le \lambda_0 \le \lambda_{\max}(\nabla^2\gL(w_0))$. In other words, once teleportation reaches such a level-set critical point, the subsequent first-order step aligns with the Newton direction up to an (unknown) scalar factor. This provides a principled explanation for why symmetry-based ``jumps'' can inject higher-order behavior without explicitly computing Hessians.
\end{remark}
\end{remarks}

\subsection{\name{} as generalized symmetry teleportation} \label{sec: deriving_optimizer}

We now connect \name{} to ST from a symmetry viewpoint. We will:
(i) specify the symmetry group relevant for matrix parameters (we focus on rotational symmetries);
(ii) generalize ST to non-Euclidean base rules (so that continuous rotational symmetries become exploitable);
(iii) show that \name{} is an instantiation of generalized ST under rotational symmetry.

\paragraph{(one-sided) Rotational symmetries.}
Let $\mW\in\R^{m\times n}$ denote a matrix parameter, and we focus on \emph{left} orthogonal transformations acting on the row space. Let
$
SO(m) := \{\mR\in\R^{m\times m} : \mR^\top\mR=\mI,\ \det(\mR)=1\}.
$
We say $\gL$ has \emph{one-sided rotation invariance} if
\begin{equation}
\gL(\mR\mW) = \gL(\mW), \qquad \forall \mR\in SO(m),~\forall \mW\in\R^{m\times n}.
\label{eq: left_rot_invariance}
\end{equation}

Variants of this invariance exist widely in modern architectures. For example, recent work has identified exact one-sided rotational invariances in transformers \citep{ashkboos2024slicegpt}, which play a crucial role in model quantization. We will return to the transformers architecture and show how they contain rich rotational symmetries that can be expressed in exact same form as \Cref{eq: left_rot_invariance} in detail in \Cref{sec: importance_of_symmetry_in_ours}.

\paragraph{Generalized symmetry teleportation (GST).}
The original ST formulation \citep{zhao2022symmetry} is most naturally paired with Euclidean geometry, e.g., by selecting $g$ to maximize $\|\nabla \gL(g\!\cdot\!\mW)\|_2$ along a symmetry orbit. For the rotational symmetries considered above, this Euclidean objective can become \emph{uninformative}: the Euclidean gradient norm can be constant along the orbit of orthogonal/rotational transformations. To obtain a nontrivial selection signal, we generalize ST by (i) allowing a non-Euclidean base update rule and (ii) allowing the orbit search under general criterion for teleportation.

Recall that in \Cref{subsec: normed steepest descent}, we defined a base optimizer $f(\mG)$ via
\begin{equation}
f(\mG)\in \argmax_{\|\mZ\|\le 1}\langle \mG,\mZ\rangle,
\qquad\text{equivalently}\qquad
f(\mG)=\nabla_{\mG}\|\mG\|_*,
\label{eq: f_dual}
\end{equation}
where $\|\cdot\|$ is a chosen norm on $\mathbb{R}^{m\times n}$ and $\|\cdot\|_*$ is its dual.

GST is simply ST with this non-Euclidean base rule. At iteration $t$, for a symmetry action $g\cdot \mW$ and step size $\eta$, GST performs:
\begin{equation}
\mW_t \xrightarrow{\text{choose }g_t}
\widetilde{\mW}_t := g_t\cdot \mW_t
\ \xrightarrow{\text{steepest step}}\ 
{\mW}_{t+1}:=\widetilde{\mW}_t - \eta\, f\!\big(\nabla \gL(\widetilde{\mW}_t)\big).
\label{eq: gst_general}
\end{equation}

The only remaining design choice is how to select $g_t$. We can use symmetry as an additional degree of freedom and choose any $g_t$-selection criteria based on our preference.
Given a  criteria function $\gJ$ of choice, the teleportation is solved via  
\begin{equation}
g_t \in \argmax_{g\in\mathcal{G}} \gJ(g), 
\quad \text{or generally choose $g_t$ to improve }\gJ_t(g_t)\ \text{relative to a baseline.}
\label{eq: gst_choose_g_general}
\end{equation}

A natural canonical criterion, in direct analogy with classical ST, is the \emph{dual-norm} magnitude of the gradient at the teleported point:
\begin{equation}
\gJ_t(g) :=\;\big\|\nabla \gL(g\cdot \mW_t)\big\|_*.
\label{eq: gst_choose_g_dual}
\end{equation}
When $\|\cdot\|$ is Euclidean, $\|\cdot\|_*=\|\cdot\|_2$ and \Cref{eq: gst_choose_g_dual} reduces to the classical ST objective; when $\|\cdot\|$ is not invariant to the group action, $\|\nabla \gL(g\cdot\mW)\|_*$ can vary substantially with $g$, yielding a nontrivial orbit-selection signal.

\paragraph{\name{} as (generalized) symmetry teleportation.}
We now show \name{} is a specific instantiation of GST. Assume $\gL$ is left-rotation invariant as in \Cref{eq: left_rot_invariance}, and consider the left action of $SO(m)$ on $\mW\in\mathbb{R}^{m\times n}$:
$$
g\cdot \mW := g\,\mW,
\qquad g\in SO(m).
$$
Let $\mR_t\in SO(m)$ denote an arbitrary rotation matrix produced by some rotation policy. Since $SO(m)$ is closed under transpose and $\mR_t^{-1}=\mR_t^\top$, we may parameterize the applied group element without loss of generality as
$$
g_t := \mR_t^\top \in SO(m).
$$
Teleportation then maps the iterate to an equivalent representative
$$
\widetilde{\mW}_t = g_t\cdot \mW_t = \mR_t^\top \mW_t.
$$
By left-rotation invariance, $\gL(\widetilde{\mW}_t)=\gL(\mW_t)$, and the gradient in teleported coordinates satisfies the chain rule
$$
\nabla_{\widetilde{\mW}}\gL(\widetilde{\mW}_t)
=\nabla_{\mW}\gL(\mR_t^\top \mW_t)
=\mR_t^\top \nabla_{\mW}\gL(\mW_t)
=\mR_t^\top \mG_t,
$$
where $\mG_t:=\nabla_{\mW}\gL(\mW_t)$. A base step in the teleported coordinates is therefore
$$
\widetilde{\mW}_{t+1}=\widetilde{\mW}_t-\eta\, f(\mR_t^\top \mG_t).
$$

Next, consider the functional used in \name{} (\Cref{sec: instantaneous}):
\begin{equation}
\mathcal{J}(\mR;\mG_t,f)
:= \langle \mG_t,\ \mR f(\mR^\top \mG_t)\rangle.
\label{eq: exact_max_J}
\end{equation}
Using $\langle \mA,\mB\rangle=\mathrm{tr}(\mA^\top \mB)$ and orthogonality of $\mR$, we obtain
\begin{equation}
\mathcal{J}(\mR;\mG_t,f)
= \langle \mR^\top \mG_t,\ f(\mR^\top \mG_t)\rangle
= \|\mR^\top \mG_t\|_*,
\label{eq: J_is_dual}
\end{equation}
where the last equality follows from the defining property of $f$ as the normed-steepest-descent projection (cf.\ \Cref{eq: f_dual}). Hence, any rotation policy that increases $\mathcal{J}(\mR_t;\mG_t,f)$ is exactly implementing the GST principle of increasing the dual-norm proxy $\|\nabla \gL(g\cdot \mW_t)\|_*$ along the orbit (specialized here to $SO(m)$ acting by left multiplication).

Finally, since $\gL$ is invariant under $SO(m)$, we may optionally map back to the original coordinates after the base step, without changing the represented model or the loss.
This yields
\begin{equation}
\mW_{t+1}
= \mR_t\,\widetilde{\mW}_{t+1}
= \mR_t(\mR_t^\top \mW_t - \eta f(\mR_t^\top \mG_t))
= \mW_t - \eta\,\mR_t f(\mR_t^\top \mG_t),
\label{eq: gst_equals_rotated_update}
\end{equation}
which is exactly the rotated steepest descent update \Cref{eq: rotated-optimizer}. Therefore, \name{} can be viewed as a special case of GST under left-rotational symmetry $+$ canonical criterion (dual-norm of the gradient at the teleported point).  This equivalence is conceptually important: it suggests that \name{}, a large class of matrix optimizers, can be derived and understood as a family of algorithms that exploit the underlying symmetries of the loss landscape. This is done by choosing a base geometry $f$ and searching along the symmetry orbits for better optimization behavior. In short, we have showed:

\begin{takeaways}
    \textbf{Takeaway: \name{} can be derived as (generalized) symmetry teleportation, exploiting rotational symmetries of the loss landscape.  }
\end{takeaways}

\paragraph{Partial maximization in \name{}.}
The practical \name{} rule \Cref{eq: gf} does not aggressively push $\mathcal{J}(\mR;\mG_t,f)$ to the exact optimum. Instead, it performs a conservative projection at each step, as derived in \Cref{sec: aro}:
\begin{equation}
\mR_t \;=\; \mathrm{QR}\!\big(\,\mG_t\, f(\mR_{t-1}^\top \mG_t)^\top \,\big),
\label{eq: partial_max_recall}
\end{equation}
which consistently improves $\mathcal{J}(\mR;\mG_t,f)$ over eigen-rotation solutions (\Cref{fig: stats_of_aro}, (a-b)). We view this design choice as essential in practice:
\begin{itemize}
    \item \textbf{Tractability.} Exact maximization over $SO(m)$ is expensive for general $f$; \name{} replaces this with a single QR-based step, conditioned on $\mR_{t-1}$
    \item \textbf{Robustness.} In \Cref{sec: importance_of_symmetry_in_ours} we will show that the objective $\mathcal{J}$ of \name{}'s rotation policy is derived via descent lemma; under symmetry assumptions, it forms a lower bound of loss improvement under rotated steepest descent. Hence, it is a local surrogate, and under non infinitesimal learning rates and noisy gradients, aggressive maximization can be overly myopic. 
\end{itemize}

We conjecture that our partial maximization approach is result of stability-convergence speed trade off, sitting in-between the classic ST that aims at exactly maximizing $\mathcal{J}$, and recent variants \citep{yun2023riemannian} that minimizes $\mathcal{J}$ for reduced sharpness. We more details to an extended study in \Cref{subsec: discussion denoising} where we show the importance of our partial maximization approach. 

\paragraph{Choosing base optimizer.} In the next remark, we will discuss the choice of base optimizers from the perspective of symmetries.

\begin{remarks}
\begin{remark}[Choosing a proper base optimizer $f_t$]
To leverage a symmetry group (e.g., left rotations), the base geometry must break that symmetry. Formally, if the underlying norm $\|\cdot\|$ of $f_t$ is $SO(m)$-invariant, then $\|\mR^\top \mG\|_*=\|\mG\|_*$ for all $\mR$, and maximizing \Cref{eq: exact_max_J} is vacuous. Thus, a necessary condition for nontrivial rotation selection is:
\begin{equation}
\|\mR^\top \mG\|_* \ \text{varies depending on } \mR.
\label{eq: need_break_sym}
\end{equation}

This provides a concrete way to compare candidate base optimizers via their intrinsic invariances. Below we summarize three base optimizers used repeatedly in this work.

\begin{itemize}
    \item \textbf{RowNorm.} Define the RowNorm geometry by the norm
    \begin{equation}
    \|\mX\|_{\text{row}} := \max_{i\in[m]} \frac{\|\mX_{i,:}\|_2}{\sqrt{n}}.
    \label{eq: rownorm}
    \end{equation}
    This norm is invariant to left row permutations and to right orthogonal transforms:
    \begin{equation}
    \|\mP\mX\|_{\text{row}}=\|\mX\|_{\text{row}}\ \ \forall \mP\in\mathsf{Perm}(m),
    \qquad
    \|\mX\mQ\|_{\text{row}}=\|\mX\|_{\text{row}}\ \ \forall \mQ\in O(n).
    \label{eq: rownorm_invar}
    \end{equation}
    Thus RowNorm breaks left rotational invariance (good for our purposes) and still keeps right rotational invariance.
    
    \item \textbf{SignGD/Adam.} SignGD corresponds to the elementwise $\ell_\infty$ geometry
    \begin{equation}
    \|\mX\|_{\infty} := \max_{i,j} |\mX_{ij}|.
    \label{eq: linfty}
    \end{equation}
    The elementwise $\ell_\infty$ norm is also variant under left one-sided rotations. However, it is invariant to permutations of all entries (i.e., $\mathsf{Perm}(mn)$ acting on $\mathrm{vec}(\mX)$). 
    %
    \item \textbf{SinkGD.} SinkGD \citep{scetbon2025gradient} is a multi-normalization update. Its convex relaxation can be viewed as a hybrid, single norm geometry
    \begin{equation} 
     \|\mX\|_{\text{Sink}} := \max\!\Big\{ \max_i \tfrac{\|\mX_{i,:}\|_2}{\sqrt{n}},\ \max_j \tfrac{\|\mX_{:,j}\|_2}{\sqrt{m}} \Big\}.
    \label{eq: sink_relax}
    \end{equation}
    It is straightforward to verify that this hybrid norm is invariant to left row permutations $\mathsf{Perm}(m)$ and right column permutations $\mathsf{Perm}(n)$, but is not invariant to any one-sided rotations in general. Compared with $l_\infty$ geometry, the invariance of this SinkGD hybrid norm is much smaller.
    %
\end{itemize}

Among these three, we observe that SinkGD has the following properties:

\begin{itemize}
    \item \textbf{Left transformations:} By our convention, the left handside is where \name{} rotation is applied. Among these three options, SinkGD retains a smaller symmetry subgroup than SignGD, thereby leaving more flexibility for \name{}'s rotation rule. 
    \item \textbf{Right transformations:} SinkGD preserves right-permutation invariance, but not rotational invariance. Under our orientation convention, the right side corresponds to the feature dimension (e.g., the hidden dimension in MLP layers), where hidden units have permutation symmetry, but not rotational invariance. Therefore, on thise right side where \name{} is not applied, SinkGD is consistent with the architectural inductive bias.
\end{itemize}

\arosink{} empirically provides the strongest performance comapred the above variants. We conjecture that these properties could be a useful criterion for choosing $f_t$ within the \name{} class, and leave further theoretical analysis for future work.

\end{remark}
\end{remarks}

\subsection{The role of symmetry in \name{}'s rotation policy}
\label{sec: importance_of_symmetry_in_ours}

So far, symmetry structures on loss landscape has served as a theoretical bridge between \name{} and (generalized) symmetry teleportation. We now explicitly derive the mechanistic role of symmetry in \name{}, and show that  it makes the $\mathcal{J}$ objective used in \name{} a \emph{valid} criterion for choosing $\mR_t$.

\paragraph{Background: descent lemma and the loss decrease bound.}
Fix any norm $\|\cdot\|$ on $\mathbb{R}^{m\times n}$ (the norm underlying the base optimizer $f$) and its dual $\|\cdot\|_*$. A standard smoothness assumption in this geometry is:
\begin{equation}
\gL(\mW+\Delta)\ \le\ \gL(\mW) + \langle \nabla \gL(\mW),\Delta\rangle + \frac{\beta}{2}\|\Delta\|^2,
\qquad \forall \mW,\Delta,
\label{eq: smoothness_norm}
\end{equation}
for some $\beta>0$.
Under \Cref{eq: smoothness_norm}, the normed steepest descent step $\Delta=-\eta f(\mG)$ satisfies the classical one-step bound
\begin{equation}
\gL(\mW) - \gL(\mW-\eta f(\mG))
\ \ge\
\eta \|\mG\|_* - \frac{\beta}{2}\eta^2,
\label{eq: descent_lemma_bound}
\end{equation}
and the optimal step size $\eta=\|\mG\|_*/\beta$ yields the familiar bound
\begin{equation}
\gL(\mW) - \gL(\mW-\eta f(\mG))
\ \ge\
\frac{1}{2\beta}\|\mG\|_*^2.
\label{eq: descent_certificate}
\end{equation}
which depends on both the dual norm of gradient, and the smoothness constant that did not appear in our objective $\mathcal{J}$ in \name{}.

\paragraph{Symmetry makes \name{}'s rotation objective self-consistent.}
Now consider the rotated update used by \name{} (ignoring momentum for clarity):
$$
\mW_{t+1} = \mW_t - \eta\,\mR_t f(\mR_t^\top \mG_t),
\qquad \mR_t\in SO(m).
$$
Using the same analysis in \Cref{subsec: steepest descent under rotation} \Cref{eq: gst_equals_rotated_update}, this is equivalent to optimize rotated loss
$$
\widetilde{\gL}_{\mR_t}(\mZ) := \gL(\mR_t\mZ),
$$
with
$$
\mZ_{t+1} = \mZ_t - \eta\,f(\mR_t^\top \mG_t),
\qquad \mR_t\in SO(m).
$$
but with time-dependent $\mR_t$. Repeating the descent-lemma argument on $\widetilde{\gL}_{\mR_t}(\mZ)$, we obtain the bound
\begin{equation}
\widetilde{\gL}_{\mR_t}(\mZ_t) - \widetilde{\gL}_{\mR_t}(\mZ_{t+1})
\;\ge\;
\frac{1}{2\beta_{\mR}}\;\|\mR^\top \mG\|_*^2,
\end{equation}
equivalently,
\begin{equation} \label{eq: rotated_descent_bound}
\gL(\mW_t) - \gL(\mW_{t+1})
\;\ge\;
\frac{1}{2\beta_{\mR}}\;\|\mR^\top \mG\|_*^2,
\end{equation}

Therefore, in \name{} (and in general any rotational optimizer), whenever we have a time-dependent rotation policy $\mR_t$, we are also changing the smoothness constant. Therefore, if we select rotation via optimizing for $\|\mR^\top \mG\|_*$, there is no guarantee that the lower bound on loss decrease is improved. If the underlying loss function is strongly sensitive to the choice of $\mR_t$, the underlying  $2\beta_{\mR}$ may explode when maximizing $\|\mR^\top \mG\|_*$, making the bound $\frac{1}{2\beta_{\mR}}\;\|\mR^\top \mG\|_*^2$ smaller.

The importance of symmetry comes in at this point.  If $\gL$ is left-rotation invariant, then $\gL(\mR^\top \mW)=\gL(\mW)$ for all $\mR$, meaning the teleported loss is \emph{the same function} in the new coordinates. Then, the smoothness parameter is unchanged across rotations:
$$
\beta_{\mR} \equiv \beta
\qquad\text{for all }\mR\in SO(m),
$$
because teleportation does not change the loss landscape being optimized, only the representation of the gradient within a symmetry orbit.  Therefore, applying \name{} (or any rotational otpimizer) yields
\begin{equation}
\gL(\mW_t) - \gL(\mW_{t+1})
\;\ge\;
\frac{1}{2\beta}\,\|\mR^\top \mG\|_*^2.
\label{eq: rotated_descent_bound}
\end{equation}
Hence, improving $\|\mR^\top \mG\|_*$ (equivalently improving $\mathcal{J}$ by \Cref{eq: J_is_dual}) is a \emph{principled} way to improve the guaranteed instantaneous progress.  This provides a derivation of the crucial role of symmetry:
\begin{quote}
\emph{Under symmetry, the rotation-selection objective of \name{} matches the descent-lemma guarantee because the smoothness term is invariant across rotations.}
\end{quote}

We hereby note that while we have been using $\mathcal{J} = \|\mR^\top \mG\|_*^2$ to guide rotation policy as canonical choice, the same reasoning (regarding the core role of symmetry) applies to any rotational optimizers beyond \name{}, that involves manipulating dual norms (including eigen-rotations widely used in literature).

\paragraph{What if symmetries does not hold/only hold approximately?} It is important to note that, the above descent lemma argument is quite general, applies to cases where symmetry assumption does not hold/only hold approximately. Symmetry only gives a sufficient but not necessary condition to treat $\beta_{\mR_t}$ as constant. In practice this is a matter of trade-off. See remark below.

\begin{remarks}
\begin{remark}[Symmetry in practice: the trade-off of symmetry breaking] \label{remark: symmetry_tradeoff}
Real-world models rarely satisfy exact invariance \Cref{eq: left_rot_invariance}; instead, they exhibit \emph{approximate} symmetries.
From \Cref{eq: rotated_descent_bound}, rotation selection is implicitly a trade-off:
we seek transformations that increase the dual-norm proxy $\|\mR^\top \mG\|_*^2$ while keeping the effective smoothness constant $\beta_{\mR}$ controlled.
This perspective suggests why mild symmetry breaking (or slightly enlarging the transformation family beyond exact symmetries) can still be very useful in practice: if the induced increase in $\beta_{\mR}$ is modest compared with the gain in $\|\mR^\top \mG\|_*^2$ , the net bound can improve. This can be served as a criteria of whether a particular transformation/relaxation of symmetry should be used.

Meanwhile, this insight also aligns with our use of the partial maximization strategy (as in \Cref{eq: gf}), since it is a conservative method that avoids over-committing to the myopic objective, especially when the corresponding symmetries only hold approximately for the loss.
\end{remark}
\end{remarks}

\subsection{Rotational symmetries in residual streams of neural networks} \label{sec: transformer_symmetry}

So far we have been discussing symmetries at an abstract level, and demonstrated how \name{} can be derived as ST under one-sided rotational symmetry assumptions. One crucial aspect we have not discussed in detail is how does this property emerge from model architectures. Below we show that such rotational symmetries exist as a result of matrix multiplications and residual skip connections in neural networks. We note that while parameter symmetries of neural networks/transformers have been extensively studied \citep{wangmaximal, da2025hide, zhang2025beyond}, the majority of them focuses on invariance induced by Q-K inner products per attention block. On the other hand, residual-stream induced-symmetry focuses on global, cross-layer coupling that is relatively under-explored in gauge symmetry literature. Without loss of generality, below we use modern transformer architecture as an example to demonstrate such residual stream induced rotational symmetries. This is based on well-established results from the quantization literature \citep{ashkboos2024slicegpt}.  We will first re-state the symmetry in the parameter and tensor conventions used by common transformer implementations, such as \texttt{nanoGPT}; then, we will discuss how this aligns with how \name{} is applied to LLMs in our previous experiments in \Cref{exp}.

\paragraph{Transformer architecture}
Let $d$ be the embedding dimension, $T$ the sequence length, and $V$ the vocabulary size. Let $\mW_{\rm tok}\in\R^{V\times d}$ and $\mW_{\rm pos}\in\R^{T_{\max}\times d}$ denote token and positional embeddings. Given tokens $(s_1,\dots,s_T)$, define the initial residual stream $\mX^{(0)}\in\R^{T\times d}$ row-wise by
\begin{equation}
\mX^{(0)}_{t,:}
\;:=\;
\mW_{\rm tok}[s_t,:]\;+\;\mW_{\rm pos}[t,:],
\qquad t=1,\dots,T.
\label{eq: rms_gpt_init}
\end{equation}
We use RMSNorm as a scale-free normalization on each token row,
\begin{equation}
\mathrm{RMSNorm}(\mx)
\;:=\;
\frac{\mx}{\|\mx\|_2/\sqrt{d}},
\qquad
\mathrm{RMSNorm}(\mX)\ \text{applied row-wise.}
\label{eq: rmsnorm_def}
\end{equation}
A pre-norm transformer block $\ell\in\{1,\dots,L\}$ with attention and MLP weights
$$
\mW_Q^{(\ell)},\mW_K^{(\ell)},\mW_V^{(\ell)},\mW_O^{(\ell)}\in\R^{d\times d},
\qquad
\mW_{\rm up}^{(\ell)}\in\R^{4d\times d},\ \ \mW_{\rm down}^{(\ell)}\in\R^{d\times 4d},
$$
computes (suppressing biases/dropout and keeping attention abstract):
\begin{align}
\mU^{(\ell)} &:= \mathrm{RMSNorm}(\mX^{(\ell-1)}), \label{eq: rms_gpt_unorm_attn}\\
\mQ^{(\ell)} &:= \mU^{(\ell)}(\mW_Q^{(\ell)})^\top,\quad
\mK^{(\ell)} := \mU^{(\ell)}(\mW_K^{(\ell)})^\top,\quad
\mV^{(\ell)} := \mU^{(\ell)}(\mW_V^{(\ell)})^\top, \label{eq: rms_gpt_qkv}\\
\mA^{(\ell)} &:= \mathrm{Attn}\!\big(\mQ^{(\ell)},\mK^{(\ell)},\mV^{(\ell)}\big), \qquad
\mY^{(\ell)} := \mA^{(\ell)}(\mW_O^{(\ell)})^\top, \label{eq: rms_gpt_attn_out}\\
\mX^{(\ell-\frac12)} &:= \mX^{(\ell-1)} + \mY^{(\ell)}, \label{eq: rms_gpt_res1}\\[2pt]
\mV^{(\ell)}_{\rm mlp} &:= \mathrm{RMSNorm}(\mX^{(\ell-\frac12)}), \qquad
\mH^{(\ell)} := \phi\!\big(\mV^{(\ell)}_{\rm mlp}(\mW_{\rm up}^{(\ell)})^\top\big), \label{eq: rms_gpt_mlp_hidden}\\
\mZ^{(\ell)} &:= \mH^{(\ell)}(\mW_{\rm down}^{(\ell)})^\top,\qquad
\mX^{(\ell)} := \mX^{(\ell-\frac12)} + \mZ^{(\ell)}. \label{eq: rms_gpt_res2}
\end{align}
Finally, with LM head $\mW_{\rm head}\in\R^{V\times d}$, the logits are
\begin{equation}
\mathrm{logits}
\;=\;
\mathrm{RMSNorm}(\mX^{(L)})\,\mW_{\rm head}^\top.
\label{eq: rms_gpt_head}
\end{equation}

In certain scenarios, $\mW_{\rm head}$ is tied with $\mW_{\rm tok}$. Without loss of generality, in this paper we don't make distinction between $\mW_{\rm head}$ and $\mW_{\rm tok}$.

\paragraph{Cross-layer global rotational symmetry of transformer residual streams.}
A key observation in \citep{ashkboos2024slicegpt} is that RMSNorm commutes with orthogonal changes of basis on the residual stream:
\begin{equation}
\mathrm{RMSNorm}(\mX\mR^\top)\;=\;\mathrm{RMSNorm}(\mX)\mR^\top,
\qquad \forall \mR\in O(d),
\label{eq: rms_commute}
\end{equation}
and this commutation enables an exact global invariance of the transformer function under a \emph{coupled} rotation of \emph{all} matrices that touch the residual stream. Concretely, \citep[Theorem~1]{ashkboos2024slicegpt} states that for any $\mR\in O(d)$, the RMSNorm transformer defined above is functionally invariant if we simultaneously transform parameters as
\begin{equation}
\boxed{
\begin{aligned}
\widetilde{\mW}_{\rm tok} &= \mW_{\rm tok}\mR^\top, \qquad
\widetilde{\mW}_{\rm pos} = \mW_{\rm pos}\mR^\top, \qquad
\widetilde{\mW}_{\rm head} = \mW_{\rm head}\mR^\top,\\
\widetilde{\mW}_Q^{(\ell)} &= \mW_Q^{(\ell)}\mR^\top,\quad
\widetilde{\mW}_K^{(\ell)} = \mW_K^{(\ell)}\mR^\top,\quad
\widetilde{\mW}_V^{(\ell)} = \mW_V^{(\ell)}\mR^\top,\quad
\widetilde{\mW}_O^{(\ell)} = \mR\,\mW_O^{(\ell)},\\
\widetilde{\mW}_{\rm up}^{(\ell)} &= \mW_{\rm up}^{(\ell)}\mR^\top,\qquad
\widetilde{\mW}_{\rm down}^{(\ell)} = \mR\,\mW_{\rm down}^{(\ell)},
\qquad \ell=1,\dots,L.
\end{aligned}}
\label{eq: transformer_coupled_rotation}
\end{equation}
That is, matrices that \emph{consume} the residual stream (appear as $\mX(\cdot)^\top$) are post-multiplied by $\mR^\top$, while matrices that \emph{produce} residual-stream outputs (and hence enter residual additions) are pre-multiplied by $\mR$. Because attention and MLP outputs are added back into the same residual stream, this invariance is generated via strong \textbf{cross-layer coupling across the entire network}: the same $\mR$ must be shared across embeddings, all layers, and the LM head to preserve the computation exactly \citep{ashkboos2024slicegpt}. 

This form of cross-layer, coupled rotational invariance is exactly equivalent to our starting definition of one-sided rotational invariance in \Cref{eq: left_rot_invariance}, as shown below.

\begin{remarks}
    \begin{remark}[Equivalence to \Cref{eq: left_rot_invariance} via stacked parameters]  \label{remark: stack}Next we show that the coupled rotational symmetry rule \Cref{eq: transformer_coupled_rotation} can be re-written as one-sided rotational invariance in the form of \Cref{eq: left_rot_invariance}. To see this, orient each parameter so that the residual-stream dimension $d$ is always the \emph{row} dimension (e.g., take transposes for ``input-side'' matrices). For instance, $(\mW_Q^{(\ell)})^\top\in\R^{d\times d}$ transforms as
$$
(\widetilde{\mW}_Q^{(\ell)})^\top
=
(\mW_Q^{(\ell)}\mR^\top)^\top
=
\mR\,(\mW_Q^{(\ell)})^\top,
$$
and similarly $\widetilde{\mW}_O^{(\ell)}=\mR\mW_O^{(\ell)}$, $(\widetilde{\mW}_{\rm tok})^\top=\mR(\mW_{\rm tok})^\top$, etc. If we concatenate these oriented matrices into a single ``stacked'' parameter
\begin{equation}
\mW_{\rm stack}
:=
\Big[
(\mW_{\rm tok})^\top,\ (\mW_{\rm pos})^\top,\ (\mW_{\rm head})^\top,\
\big\{(\mW_Q^{(\ell)})^\top,(\mW_K^{(\ell)})^\top,(\mW_V^{(\ell)})^\top,\mW_O^{(\ell)},(\mW_{\rm up}^{(\ell)})^\top,\mW_{\rm down}^{(\ell)}\big\}_{\ell=1}^L
\Big]\in\R^{d\times M},
\label{eq: transformer_stack}
\end{equation}
then \Cref{eq: transformer_coupled_rotation} becomes a \emph{one-sided rotation} 
\begin{equation}
\mW_{\rm stack}\ \mapsto\ \widetilde{\mW}_{\rm stack}=\mR\,\mW_{\rm stack},
\qquad \mR\in SO(d),
\label{eq: transformer_stack_left}
\end{equation}
which exactly matches our general form of one-sided rotational symmetry $\gL(\mR\mW_{\rm stack})=\gL(\mW_{\rm stack})$ in \Cref{eq: left_rot_invariance}.  This implies that in principle, we can use the $\mW_{\rm stack}$-representation of a RMSNorm-tranformer, and apply the \name{} rule to figure out the shared rotation $\mR_t$. This means that we will need to compute a single \name{}-rotation by treating $\mW_{\rm stack}$ as a single matrix parameter, and then perform \name{} update using the shared rotation.
\end{remark}
\end{remarks}

\paragraph{Chain-coupled local rotations when residual coupling is absent or weak.}
The global sharing of a single rotation $\mR$ in \Cref{eq: transformer_coupled_rotation} is enforced by residual additions: the skip term and the branch output must be expressed in the same rotated basis for $(\mX+\mY)$ (and later $(\mX+\mZ)$) to be equivariant.
When the skip contribution is absent or weak compared with the branch output--i.e.,, when
\begin{equation}
\mX^{(\ell-\frac12)}=\mX^{(\ell-1)}+\mY^{(\ell)} \approx \mY^{(\ell)}
\qquad\text{and}\qquad
\mX^{(\ell)}=\mX^{(\ell-\frac12)}+\mZ^{(\ell)} \approx \mZ^{(\ell)},
\label{eq: weak_or_no_skip}
\end{equation}
the computation no longer forces a single globally shared rotation.
In this regime, we may use a \emph{sequence} of rotations $\{\mR_\ell\}_{\ell=0}^{L}$ with $\mR_\ell\in O(d)$, and transform parameters as
\begin{equation}
\boxed{
\begin{aligned}
\widetilde{\mW}_{\rm tok} &= \mW_{\rm tok}\mR_0^\top,\qquad
\widetilde{\mW}_{\rm pos} = \mW_{\rm pos}\mR_0^\top,\qquad
\widetilde{\mW}_{\rm head} = \mW_{\rm head}\mR_L^\top,\\
\widetilde{\mW}_Q^{(\ell)} &= \mW_Q^{(\ell)}\mR_{\ell-1}^\top,\quad
\widetilde{\mW}_K^{(\ell)} = \mW_K^{(\ell)}\mR_{\ell-1}^\top,\quad
\widetilde{\mW}_V^{(\ell)} = \mW_V^{(\ell)}\mR_{\ell-1}^\top,\quad
\widetilde{\mW}_O^{(\ell)} = \mR_{\ell}\mW_O^{(\ell)},\\
\widetilde{\mW}_{\rm up}^{(\ell)} &= \mW_{\rm up}^{(\ell)}\mR_{\ell-1}^\top,\qquad
\widetilde{\mW}_{\rm down}^{(\ell)} = \mR_{\ell}\mW_{\rm down}^{(\ell)},
\qquad \ell=1,\dots,L,
\end{aligned}}
\label{eq: transformer_local_rotation}
\end{equation}
so that layer $\ell$ consumes activations rotated by $\mR_{\ell-1}$ and produces activations rotated by $\mR_{\ell}$.
Compared with the globally enforced rule \Cref{eq: transformer_coupled_rotation}, this ``chain-coupled'' family enlarges the transformation class by allowing $\mR_{\ell-1}\neq \mR_{\ell}$; when \Cref{eq: weak_or_no_skip} holds only approximately, this is an instance of the symmetry--flexibility trade-off discussed in \Cref{remark: symmetry_tradeoff}.
It also retains strictly more structure than the naive alternative that rotates each parameter matrix independently (ignoring any shared $\mR_\ell$ across the matrices in \Cref{eq: transformer_local_rotation}).
In \Cref{sec: symmetry_predictions} we provide preliminary empirical evidence that this chain-coupled rotation sharing achieves a strong performance, outperforming (i) the globally shared symmetry \Cref{eq: transformer_coupled_rotation} and (ii) the naive per-parameter rotation that ignores coupling.

\Needspace{18\baselineskip}
\begin{wrapfigure}{r}{0.35\linewidth}
    \vspace{-1.2em}
    \centering
    \includegraphics[width=\linewidth]{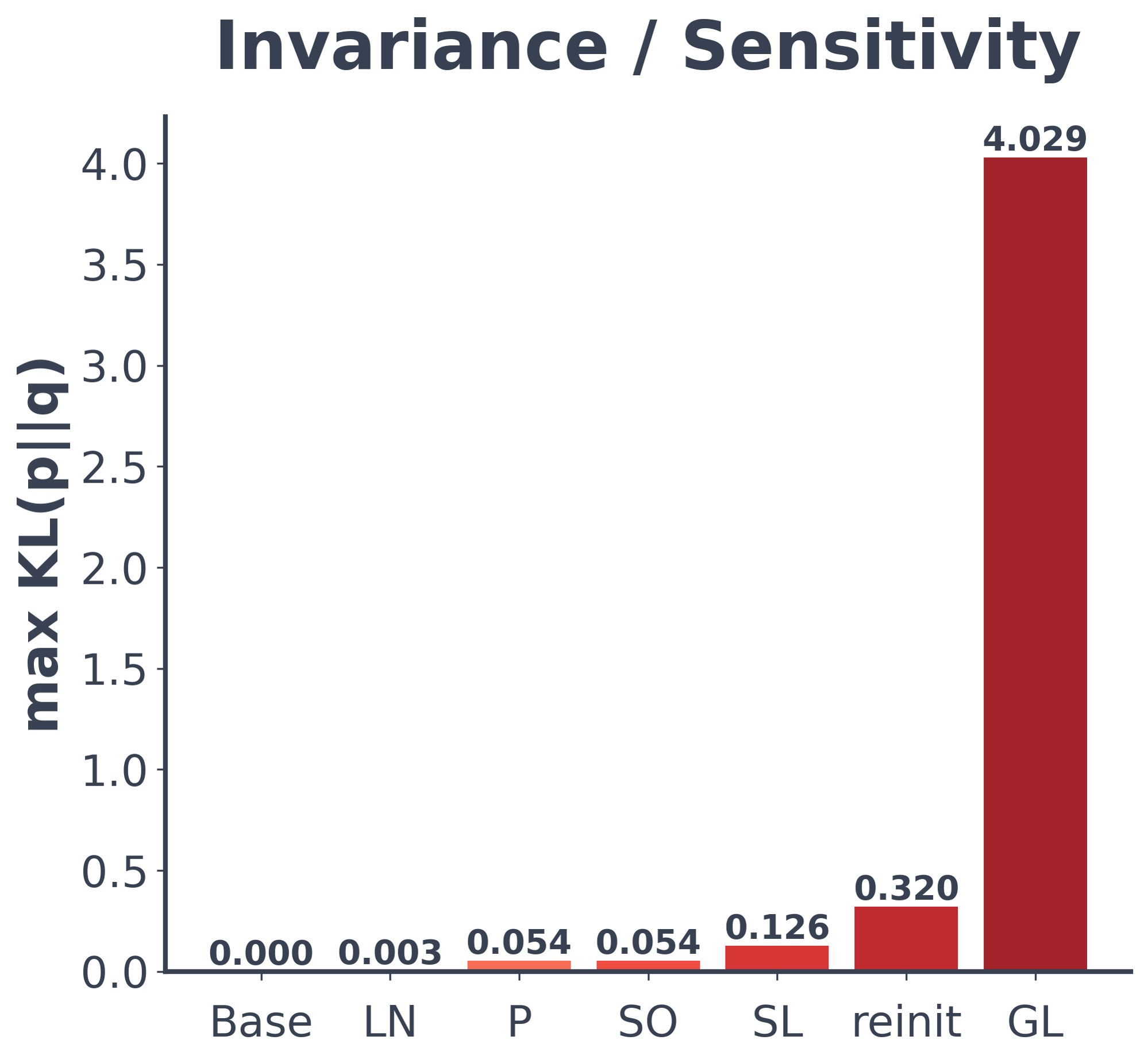}
    \caption{Benchmarking how a transformer model’s computational invariance breaks under different distortions. 
    }
    \label{fig:m7}
    \vspace{-2.2em}
\end{wrapfigure}

\paragraph{Robustness of the invariance}
For the rotational invariance of transformers under the coupled transformations in \Cref{eq: transformer_coupled_rotation} to hold exactly, two conditions are required:
\begin{itemize}
    \item Either (i) RMSNorm is used, or (ii) LayerNorm is used with unit weights, equivalently with LayerNorm weights fused into adjacent weights as in \citep{ashkboos2024slicegpt};
    \item the rotations applied across parameters in \Cref{eq: transformer_coupled_rotation} are \emph{identical} (globally shared).
\end{itemize}

In \Cref{fig:m7}, we quantify the extent to which invariance is broken when these conditions are violated. Concretely, we randomly initialize a small transformer (about 1M parameters) and measure how much its predictive distributions change after applying various parameter transformations. We define the distributional shift as the \emph{largest} change in the predicted next-token distribution, measured by KL divergence.  \texttt{Base} corresponds to an RMSNorm transformer with globally shared rotations applied exactly as in \Cref{eq: transformer_coupled_rotation}. As expected, this induces essentially no distribution shift, confirming the theorem of \cite{ashkboos2024slicegpt}. The \texttt{LN} baseline switches RMSNorm to LayerNorm. We find that the resulting shift remains very small, suggesting that in practice it can be reasonable to apply \name{}/ST to LayerNorm transformers without substantial degradation from symmetry breaking. To isolate the effect of violating global rotation sharing, we consider \texttt{P}, \texttt{SO}, \texttt{SL}, and \texttt{GL}: these apply transformations analogous to \Cref{eq: transformer_coupled_rotation}, except that each parameter receives its own independently sampled transform from the permutation group (\texttt{P}), the special orthogonal group (rotations, \texttt{SO}), the special linear group (\texttt{SL}), and the general linear group (\texttt{GL}), respectively. These baselines break the second condition above. We additionally include \texttt{reinit}, which re-initializes the model with a larger variance. Overall, we observe that the discrepancy induced by non-shared random rotations (\texttt{SO}) is significant, but relatively moderate compared with much larger shifts produced by other transformations and re-initialization.

\par \wrapfill \par
\paragraph{Aligning with \name{} practice}
In our training setting (\Cref{exp}), we apply \name{} to standard LayerNorm transformers and (by default) do not enforce global rotation sharing: each matrix parameter maintains its own $\mR_t$. From the symmetry viewpoint, these are two distinct departures from invariances above. Empirically, \Cref{fig:m7} suggests that the RMSNorm$\rightarrow$LayerNorm substitution alone induces only a very small distributional shift, whereas breaking rotation sharing yields a larger shift. As discussed in \Cref{remark: symmetry_tradeoff}, allowing parameter-specific rotations enlarges the transformation family (greater optimizer expressiveness) while moving farther from the exact symmetry class (more symmetry breaking), so the net effect is inherently a trade-off. In \Cref{sec: symmetry_predictions} we find that enforcing \emph{structured rotation sharing}--via the chain-coupled rule \Cref{eq: transformer_local_rotation}--improves performance relative to both globally shared rotations and naive per-parameter rotations.

\begin{takeaways}
    \textbf{Takeaways: symmetry as an interface for exploiting cross-layer/cross-module coupling.}  The rotational symmetries of transformers are induced by the cross-layer/module couplings in the model. As a result, in principle, \name{} rotations can be shared either globally (\Cref{eq: transformer_coupled_rotation}) or locally (\Cref{eq: transformer_local_rotation}). In \Cref{sec: symmetry_predictions} we provide empirical evidence on the effectiveness of rotation sharing strategies. Through rotation sharing, symmetry offers an natural and economic interface to exploit cross-layer couplings to improve computational efficiency.  \emph{To the best of our knowledge, \name{} is the first matrix optimizer for LLMs that explicitly exploits symmetry to efficiently account for cross-layer/module coupling.}
\end{takeaways}

\subsection{Application of symmetry hypothesis: new insights on optimizer designs} \label{sec: symmetry_predictions}

In previous discussions, we have demonstrated how the seemingly complicated update rules of \name{} can be derived from elegant assumptions on loss landscape symmetry, and how these assumptions are directly grounded in model architectures. Therefore, we can re-state symmetry hypothesis in a more accurate manner:

\begin{takeaways}
    \textbf{Symmetry Hypothesis:} \emph{a substantial part of matrix optimization can be derived, understood and improved from global symmetry properties of the loss landscape. Particularly:
    \begin{itemize}
        \item \name{} instantiations can be viewed as methods that exploit rotational symmetry of loss landscape, as additional degrees of freedom to help improve optimization (\Cref{sec: deriving_optimizer}). 
        \item Rotational symmetries exists in residual streams in neural networks, such as transformer architectures (\Cref{sec: transformer_symmetry}) under constraints (RMS norm $+$ globally/locally shared rotation). These constraints maybe relaxed depending the overall effect of symmetry breaking-optimizer expressiveness tradeoff (\Cref{remark: symmetry_tradeoff}).
    \end{itemize}
    }
\end{takeaways}

To further demonstrate and validate the practical usefulness of symmetry hypothesis, we present three new optimizer designs for \name{} implied by this theory. They include:
    \begin{itemize}
        \item exploiting cross-layer coupling via rotation sharing;
        \item The use of an unified rule for full model optimization;
        \item Optimizing rotation orientation.
    \end{itemize}

Next, we will state, analyze, and validate them respectively.

\begin{findings}
    \textbf{Design 1: due to cross-layer coupling of symmetries in transformers, in principle we can further improve the efficiency of \name{} with  global or local rotation sharing across parameters. }
\end{findings}

\paragraph{Analysis} Our analysis in \Cref{sec: transformer_symmetry} suggests that cross-layer coupling can be exploited to share rotations.
For example, in the global rotation sharing scheme (\Cref{eq: transformer_coupled_rotation}), all residual-stream matrices transform with the same left rotation, so we can compute a \emph{single} \name{} rotation by stacking them into
\begin{equation}
\mW_{\rm stack}
:=
\Big[
(\mW_{\rm tok})^\top,\ (\mW_{\rm pos})^\top,\ (\mW_{\rm head})^\top,\
\big\{(\mW_Q^{(\ell)})^\top,(\mW_K^{(\ell)})^\top,(\mW_V^{(\ell)})^\top,\mW_O^{(\ell)},(\mW_{\rm up}^{(\ell)})^\top,\mW_{\rm down}^{(\ell)}\big\}_{\ell=1}^L
\Big]\in\R^{d\times M},
\end{equation}
and applying \name{} rotation rules  using the corresponding stacked momentum $\mM_{t,{\rm stack}}$ to estimate global rotation $\mR_t$.
This requires one QR factorization of a $d\times d$ matrix (from \Cref{eq: gf}), hence $\mathcal{O}(d^3)$ per step.
By contrast, applying independent \name{} rotations to each block requires $(6L+3)$ such QR factorizations, costing $\mathcal{O}((6L+3)d^3)$.

The same procedure extends to the chain-coupled local sharing rule \Cref{eq: transformer_local_rotation}: one maintains a sequence of rotations $\{\mR_{\ell,t}\}_{\ell=0}^{L}$, each estimated from a \emph{local} stacked momentum that aggregates exactly the matrices that are coupled by $\mR_{\ell,t}$ in \Cref{eq: transformer_local_rotation}. Thus chain-coupled sharing requires only $(L+1)$ QR factorizations per step, costing $\mathcal{O}((L+1)d^3)$, interpolating between the fully global scheme ($1$ QR) and the naive per-parameter scheme ($(6L+3)$ QRs).

\paragraph{Validation}
To validate this conjecture, we conduct preliminary experiments on the $\sim$130M NanoGPT model \citep{Karpathy2022} and the NanoChat model \citep{nanochat}, following a setup similar to \Cref{subsec: universal effective}. As a baseline, we train both models using \arosink{} (with SCQR). We then evaluate \arosink{} under rotation sharing, including both global rotation sharing (\Cref{eq: transformer_coupled_rotation}) and chain-coupled local rotation sharing (\Cref{eq: transformer_local_rotation}). We also compare against Muon. For a controlled comparison, we follow the experimental guidelines in \Cref{exp: guidelines}, except for scale: we train with a batch size of 1M, context length of 1K, and $1\times$ Chinchilla-law tokens. For both models, we use learning rates tuned for AdamW and directly transfer them to all \name{} variants and Muon. We continue to use AdamW for embedding and LM head parameters, since under rotation sharing in small models, the large number of embedding columns can bias rotation estimation. Results are reported in \Cref{tab: rotation_share}.

\begin{table}[t]
\centering
\caption{Validation loss comparison across optimizers and rotation sharing schemes.}
\label{tab: rotation_share}
\begin{tabular}{lcccc}
\toprule
Architecture & Muon & ARO & ARO (global sharing) & ARO (chain-coupled) \\
\midrule
NanoGPT  & 3.161 & 3.150 & 3.158 & \textbf{3.148} \\
NanoChat & 3.136 & 3.132 & 3.126 & \textbf{3.116} \\
\bottomrule
\end{tabular}
\end{table}

We observe that:
\begin{itemize}
    \item \name{} with chain-coupled local rotation sharing consistently performs the best across two architectures. On NanoChat particularly, it \textbf{outperforms Muon with a loss gap of \textbf{$\sim$ 0.02} }, which is significant in our controlled setup.
     \item \name{} with global rotation sharing closely matches the performance of \name{} with per-parameter dedicated rotations. However, its relative behavior is architecture-dependent: it outperforms the vanilla \name{} on NanoChat, but underperforms on NanoGPT.
\end{itemize}

This suggest that, at least for the scale and setup we tested, symmetry-motivated rotation sharing has the potential to further boost the performance of \name{}, while reducing the computational overhead. This result shows a key distinction between the symmetry-driven optimizer design and many existing framework for matrix optimization \citep{bernstein2024old, gong2025towards} that usually models the layer-wise geometry independently.

\par \wrapfill \par

\begin{findings}
    \textbf{Design 2: \name{} can be applied to all matrix parameters, including embedding and lm-head parameters.}
\end{findings}

\paragraph{Analysis} From the analysis of rotational symmetry of GPT-like transformer model in \Cref{sec: transformer_symmetry}, we note that the rotational symmetries also applies to embedding/lm head parameters ($\mW_{\rm tok}$). From the perspective of symmetry, in terms of how rotation is applied, there is no fundamental distinction between embedding/lm head parameters and hidden layer parameters. Hence, in principle, (whether shared or not) the same \name{} rule can be applied to all matrix parameters without modification. 

\paragraph{Validation} This is validated at scale in our experiments in \Cref{exp}. On GPT-XL (\Cref{exp: gpt}), we showed that the \name{} family converges and outperforms Adam consistently under full model mode. Notably, on Sigma-2B model \arosink{} with full model training outperforms \arosink{} (for hidden layers) $+$ Adam (for embedding/lm head) hybrid (\Cref{fig:sigma speedup}). Together, they demonstrate the plausibility and benefit of performing rotated optimization across all layers.

\paragraph{Discussion} This observation reveals a key different between symmetry perspective and modular approaches for deep learning \citep{bernstein2024modular, pethick2025training}. Instead of assigning different atomic norms to different layers based on their semantics as in modular methods, our symmetry perspective justifies the use a unified update rule over all matrix parameters. While it could be beneficial to apply the same rotation policy over different base norms picked for different layers, our preliminary study in \Cref{subsec: universal effective} suggests the impact of the choice of base norms are much less pronounced than the effect of rotations.

\begin{findings}
    \textbf{Design 3: depending on the types of the module, their weights (hence corresponding gradients/momentum) need to be correctly oriented, before the \name{} update is applied. Particularly, in a common transformer setup (\Cref{sec: transformer_symmetry}), all weights should be transposed, except for $\mW_{\rm O}$ and $\mW_{\rm down}$}. 
\end{findings}

\paragraph{Analysis} \Cref{eq: transformer_stack} suggests that in the default one-sided left rotation setup of \name{}, we need to pre-transpose the weights of different parameters accordingly such that rotational symmetry in weight space can be established. In the common transformer setup presented in \Cref{sec: transformer_symmetry}, \Cref{eq: transformer_stack} implies that all weights must be transposed, except for $\mW_{\rm O}$ and $\mW_{\rm down}$. This is equivalent to \emph{keeping the orientation of weights unchanged, but transpose the gradients (except for $\mG_{\rm O}$ and $\mG_{\rm down}$ ) in the optimizer code, when they are being processed by \name{} rotations}; then transpose the \name{} update back during parameter update.

\paragraph{Validation} To verify the effectiveness of this new insight, we compare a range of different transpose/orientation rules on the 130M GPT model, following the model setup in \Cref{subsec: universal effective}. We compare the following strategies using \arosink{} (full model mode), listed below:

\Needspace{18\baselineskip}
\begin{wrapfigure}{r}{0.4\linewidth}
    \vspace{-1.2em}
    \centering
    \includegraphics[width=\linewidth]{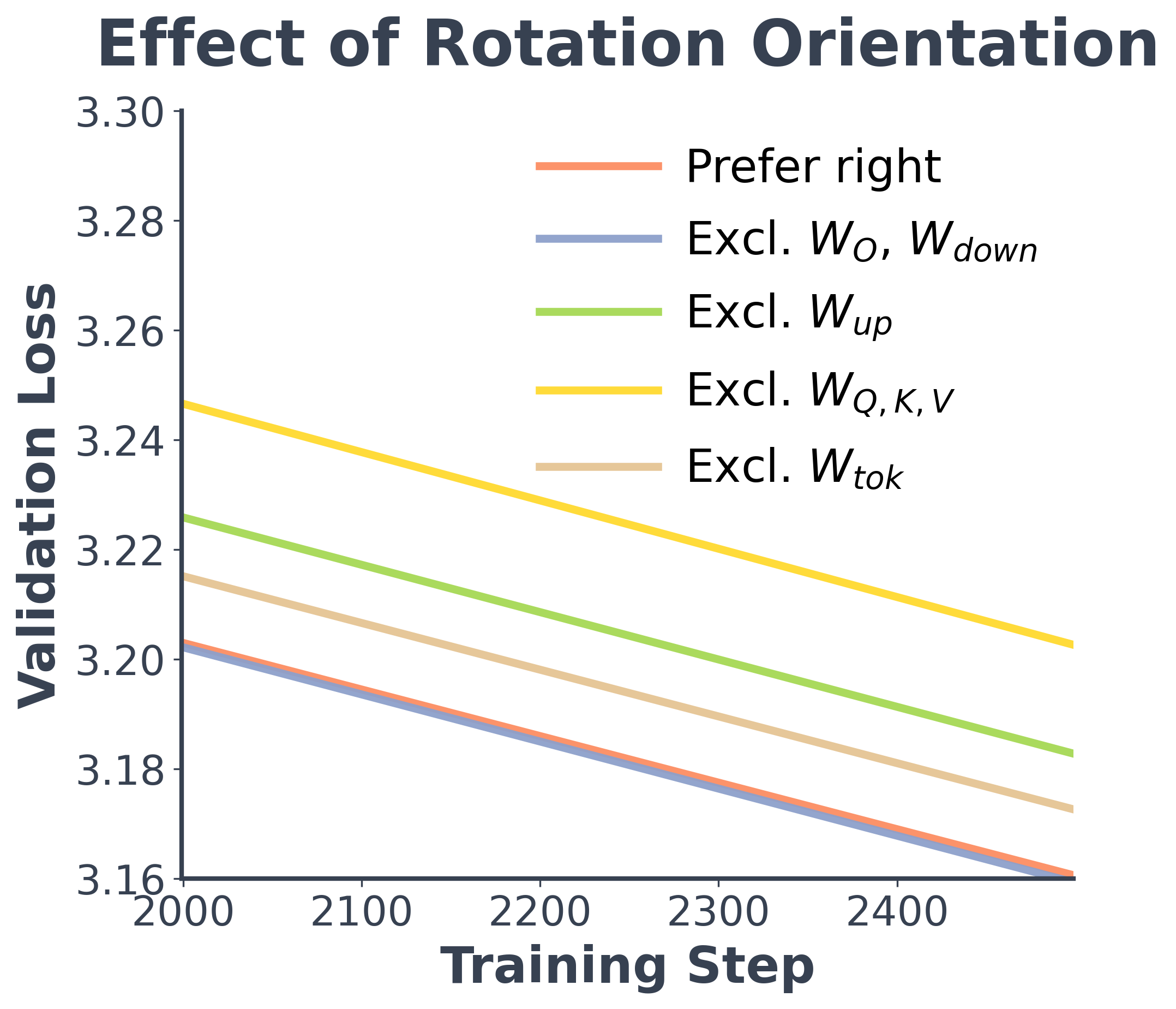}
    \caption{The impact of rotation orientation on training performance. \textbf{Excl. $\mW_{\rm O}$, $\mW_{\rm down}$, the rule that complies with rotational symmetry, gives the best performance.}}
    \label{fig: m8.2}
    \vspace{-4.2em}
\end{wrapfigure}

\begin{itemize}
    \item \textbf{Excl. $\mW_{\rm O}$, $\mW_{\rm down}$}: transpose all parameters, except for $\mW_{\rm O}$, $\mW_{\rm down}$. This is the correct rule that complies with our derived rotational symmetry relation.
    \item \textbf{Excl. $\mW_{\rm up}$}: transpose all parameters, except for up projection matrix of MLP blocks.
    \item \textbf{Excl. $\mW_{\rm Q}$, $\mW_{\rm K}$, $\mW_{\rm V}$}: transpose all parameters, except for attention weights.
    \item \textbf{Excl. $\mW_{\rm tok}$}: transpose all parameters, except for embedding/lm head parameters.
    \item \textbf{Prefer right}: transpose if $m \geq n$ and apply right rotation, otherwise left rotation. This is equivalent to transposing all parameters except $\mW_{\rm down}$.
\end{itemize}

Results are shown in \Cref{fig: m8.2}. We observe that: 

\begin{itemize}
    \item The pre-transpose strategies does have an impact on training performance across all different orientation strategies. The gap in final validation loss between the best and worst strategies  $\sim 0.04$. This shows orientation is an important factor of rotated optimization, compared with the impact of rotations (around $0.07$-$0.10$ gap depending on strategy) and base optimizers (around $0.03$ gap) in \Cref{fig: m1.2}.
    \item The orientation rule that complies with the rotational symmetry (\textbf{Excl. $\mW_{\rm O}$, $\mW_{\rm down}$}) performs the best across all baselines. 
\end{itemize}
Together, they  provide an preliminary evidence on the impact of symmetry on optimization effectiveness, as well as a practical guideline for determining orientation of gradient rotations.

\par \wrapfill \par

\begin{takeaways}
    \textbf{Takeaways: the symmetry hypothesis not only  provides an elegant principle from which  \name{} can be derived, but also provides new insights on optimizer designs that are not covered by prior work.}
\end{takeaways}

\subsection{Summary: what is matrix optimization?}

We return to the guiding question of understanding \emph{what is matrix optimization}. So far, in our paper we have shown that
\begin{itemize}
    \item \name{} with a broad class of base optimizers can significantly improve performance over Adam and Muon;
    \item The seemingly complicated rules of \name{} can be elegantly derived under symmetry assumptions over the loss landscape, as a generalized symmetry teleportation method;
    \item The symmetry view point produced novel predictions for optimizer design that are validated by preliminary empirical results;
\end{itemize}
Together, these observations justify symmetry as a promising principle introduced at the beginning of this section. In this view,  a substantial part of matrix optimizers can be derived from, understood through, and improved using global symmetry properties over the loss landscape induced by model architectures. This view hypothesizes that, matrix-wise update rules of matrix optimizers can naturally arise as a result of  the coupling of the following ingredients:
\begin{itemize}
\item a \emph{symmetry group} induced by the architecture (exact or approximate);
\item a \emph{base optimizer geometry} that intentionally breaks that symmetry so that orbit search becomes meaningful;
\item an \emph{orbit search strategy} that exploits the symmetry as an additional degree of freedom to improve optimization.
\end{itemize}

This perspective goes beyond a reinterpretation of \name{} and leads to concrete new design choices that further improve optimization (\Cref{sec: symmetry_predictions}). While these designs will require modification to scale up, we remain cautiously optimistic that symmetry could enable a more substantial advance beyond Muon and whitening-based methods, addressing an open challenge posed by \citep{frans2025really}.

As an ending remark of this section, we would like to note that while promising, symmetry is also unlikely to be the entire story behind large-scale training dynamics, and our current analysis primarily justifies local proxy objectives under symmetry-aligned transformations. Developing a fully predictive theory that connects architecture, optimization, and training outcomes (e.g., generalization) remains an open challenge.

\section{What Is a Good Rotation: A Behavioral Study on the Stability-Speed Tradeoff}
\label{subsec: discussion denoising}

Parameter symmetries provide a principled explanation for why rotations can improve optimization when training LLMs. Importantly, symmetry does not prescribe a single rotation policy. Instead, we have the flexibility to choose a policy that reinforces the behavior we care about in the target regime. This naturally leads to a key question: \textbf{What properties should a rotation have for training large language models?}

Rather than providing a complete recipe, this section focuses on understanding some potential desirable properties that the rotation should have in realistic training regimes - specifically, when gradients are stochastic due to mini-batch sampling. Earlier sections focused on an idealized, noise-free perspective. While analytically clean and simplified, it does not capture how stochastic noise interacts with rotation choices in realistic training regimes. As we will show, stochasticity can qualitatively change what rotations are effective. In particular, aggressive maximization of the instantaneous loss decrease rate $\mathcal{J}(\mR;\mG_t,f)
:= \langle \mG_t,\ \mR f(\mR^\top \mG_t)\rangle$ may not be desirable since it can result in training instability under noisy setting.

This section is organized around the following questions:
\begin{questions}
\begin{rqlist}
\item Why does the rotation that aggressively maximizes instantaneous loss decrease rate may cause instability under mini-batch noise?
\item Which classes of rotations mitigate these issues?
\item Can we find better rotations?
\end{rqlist}
\end{questions}

We summarize the main takeaways below:
\begin{takeaways}
\textbf{Takeaways: 
\begin{enumerate}
    \item Maximizing empirical instantaneous loss decrease rate under mini-batch noise may lead to unstable training behavior, inconsistent loss decrease and slow convergence.
    \item Eigen rotations greatly improves the stability of training. However, it comes with the cost of \textbf{smaller alignment magnitude} .
    \item \name{} rotations empirically achieves a better compromise between the training stability and alignment magnitude. 
\end{enumerate}
}
\end{takeaways}

We will introduce an important metric: \textbf{alignment score}. We have mentioned  \emph{training stability} several times. Alignment score is the metric that is closely related. 

\paragraph{Loss decrease, loss stability and alignment score.} Consider a dataset $\mathcal{D}$ and parameters $\mW$, one can define a loss function $\loss(\mW,\mathcal{D})$ over the entire dataset as the ground-truth loss. Any mini-batched version $\loss(\mW, \mathcal{B})$ is a stochastic approximation to it. This loss function can be highly non-linear. Therefore, the loss change under the parameter perturbation $\phi$ (i.e.~$\loss(\mW+\phi, \mathcal{D})-\loss(\mW, \mathcal{D})$) is hard to analyze. However, \textbf{within a small trust region of $\mW$} (i.e., the perturbation $\phi$ has small magnitude), the loss function can be safely approximated by its first-order expansion:
\begin{align*}
    \loss(\mW+\phi, \mathcal{D}) = \loss(\mW,\mathcal{D}) + \left\langle \nabla_\mW \loss(\mW,\mathcal{D}), \phi\right\rangle + o(||\phi||),
\end{align*}
With in the small enough trust region, the linear term dominates the high order terms and governs the loss change. Thus, we define this linear term as the alignment score:
\begin{align}
    \alignscore(\phi) = \left\langle\nabla_\mW \loss(\mW, \mathcal{D}), \phi\right\rangle.
    \label{eq: alignment score general}
\end{align}
Under the stochastic setup, the perturbation $\phi$ often comes from the output of a chosen optimizer, that is the function of the stochastic mini-batch gradient. This will make the loss $\loss(\mW+\phi,\mathcal{D})$ a stochastic objective.
Therefore, \textbf{under the small trust region assumption}, the expected loss change can be approximated by the expected alignment score:
\begin{align*}
    \E[\loss(\mW+\phi,\mathcal{D}) - \loss(\mW,\mathcal{D})] \approx \E[\alignscore(\phi)],
\end{align*}
which characterizes the expected loss decrease.

We consider training instability of the update $\phi$ under mini-batch noise by the inconsistencies of the post-update full-dataset loss $\loss(\mW+\phi, \mathcal{D})$ across minibatch draws. For sufficiently small $\phi$, this is governed by the variance of the first-order loss change $\alignscore(\phi)$. More formally:
\begin{align}
    \var(\loss(\mW+\phi,\mathcal{D})) =& \var(\loss(\mW,\mathcal{D})+\left\langle\nabla_\mW\loss, \phi\right\rangle+o(||\phi||)) \nonumber \\
    =&\E\left[\left(\loss(\mW,\mathcal{D})+\left\langle\nabla_\mW\loss, \phi\right\rangle+o(||\phi||)\right)^2\right] - \E\left[\loss(\mW,\mathcal{D})+\left\langle\nabla_\mW\loss, \phi\right\rangle+o(||\phi||)\right]^2\nonumber\\
    \approx& \E\left[\left\langle\nabla \loss, \phi\right\rangle^2\right] - \E\left[\left\langle\nabla \loss, \phi\right\rangle\right]^2\nonumber\\
    =& \var(\alignscore(\phi)).
    \label{eq: loss variance}
\end{align}

Therefore, the first two moments of the alignment score determine the leading-order behavior of the mean loss decrease and the batch-to-batch variability of the post-update loss. Since $\alignscore(\phi)$ is sensitive to the scale of $\phi$ and $\nabla_W \loss(\mW,\mathcal{D})$, we often consider its normalized version:
\begin{equation}
    \alignscore(\phi) = \frac{\left\langle\nabla_\mW \loss(\mW, \mathcal{D}), \phi\right\rangle}{\|\nabla_W\loss(\mW,\mathcal{D})\|_F\|\phi\|_F}.
    \label{eq: normalized alignment score}
\end{equation}

In the following, we abuse notation and use $\alignscore$ to denote its normalized version and assume the \name{} update, i.e.~$\phi = \mR f(\mR^T\mNG)$, where $\mNG$ is the noisy minibatch gradient.

\paragraph{Loss decrease rate with mini-batch noise}
In the noiseless setup, the loss decrease rate is defined as $\mathcal{J}(\mR;\mG_t,f):= \langle\mG_t, \mR f(\mR^\top\mG_t)\rangle$, which is equivalent to the unnormalized alignment score $\alignscore$ defined above. However, with mini-batch sampling, the clean gradient is no longer accessible. Therefore, we define the empirical instantaneous rate as $\tilde{\mathcal{J}}:=\mathcal{J}(\mR;\mNG_t, f) = \langle\mNG_t, \mR f(\mR^\top \mNG_t) \rangle$. 
Under this case, $\tilde{\mathcal{J}}$ is no longer equivalent to the unnormalized $\alignscore$ with $\mNG$, since $\mNG$ only appears in $\phi$ whereas $\tilde{\mathcal{J}}$ uses $\mNG$ for both terms. The instantaneous loss decrease rate under stochastic settings should be the unnormalized $\alignscore$ with $\mNG$. It is crucial to distinguish between those terms before diving into the following sections.

\subsection{Setups}
\label{subsubsec: discussion denoising: setups}

We introduce common setups, assumptions and notation used. Additional technical assumptions and formal definitions are deferred to \cref{subapp: denoising setups and assumptions}.

\paragraph{MNIST as a demonstration.}
To make the core phenomena easy to visualize, we consider a simple MNIST classification task using a two-layer MLP. Despite its simplicity and it does not induce rotation symmetry, this setting already exposes the key properties we care about. Detailed experimental settings are provided in \cref{subapp: MNIST setup}.

\paragraph{Sign as the base optimizer $f$.}
To simplify theoretical analysis and enable closed-form reasoning, we assume the base optimizer takes the form $f(\cdot)=\sign(\cdot)$ for certain claims. We leave the theoretical analysis of other $f_t$ to the future work.

\paragraph{Notations.}
We denote by $\mEG\in\R^{m\times n}$ the full-batch, noise-free gradient (i.e.~$\nabla \loss(\mW, \mathcal{D})$), and by $\mNG=\mEG+\noise$ (i.e.~$\nabla \loss(\mW, \mathcal{B})$) the mini-batch gradient with noise $\noise$. We further denote by $\mM\in\R^{m\times n}$ the momentum constructed from past mini-batch gradients.

\subsection{Aggressively improving decrease rate leads to unstable training}
\label{subsubsec: discussion denoise: failure of alternative projection}
In this section, we focus on one crucial argument: \textbf{Aggressively improving the empirical instantaneous loss decrease rate $\tilde{\mathcal{J}}$ may be undesirable under mini-batch noise, with the consequence of unstable training.} This analysis leads to the conclusion that low $\var(\alignscore)$ may be a desirable property in the presence of noise. 

Motivated by the generalized teleportation perspective in \cref{sec:symmetry}, a natural objective for selecting a rotation is to maximize the instantaneous loss decrease rate. However, in practice, the true instantaneous rate $\mathcal{J}$ is not accessible for large scale training. One only has access to its empirical version $\tilde{\mathcal{J}}$. We argue that directly maximizing this empirical rate may not always be desirable.

A fundamental difficulty is that maximizing it (i.e.,~$\tilde{\mathcal{J}}$) is, in general, analytically intractable under arbitrary choices of $f$. A representative example arises when $f=\sign(\cdot)$: in this case, maximizing $\tilde{\mathcal{J}}$ reduces to the well-known L1-PCA problem \citep{kwak2008principal}, for which no closed-form solution is known.

Rather than insisting on exact maximization, we consider a more practical question: \emph{Is it always beneficial to apply a rotation that increases $\tilde{\mathcal{J}}$?} An appealing aspect of this reformulation is that there exists a projection procedure with a guaranteed monotonic non-decreasing property of the objective. We refer to this procedure as the \emph{Polar scheme}.

Concretely, we consider the following two-step projection scheme:
\begin{enumerate}
    \item Fix $\mA_{t-1} = f(\mR_{t-1}^{*\top}\mNG)^\top$;
    \item Solve $\mR_t^* = \argmax_{\mR_t \in \mathcal{O}} \tr(\mNG_t \mA_{t-1} \mR_t^\top).$
\end{enumerate}

Step (2) corresponds to the classical Procrustes problem \citep{schonemann1966generalized,myronenko2009closed}, which admits a closed-form solution via polar decomposition:
\begin{align}
\mR^*_{t}
\;=\;
\text{Orthogonalize}(\mNG_t \mA_{t-1}).
\label{eq: polar solution}
\end{align}
When the orthogonalization operator is chosen as $\polar(\cdot)$. This solution achieves the maximum value and $\mR^*_t = \mU\mV^\top$ where $\mU$, $\mV$ are left and right singular matrix of $\mNG_t \mA_{t-1}$. Importantly, the resulting projection procedure enjoys a monotonic non-decreasing property of the objective.

\begin{remarks}
\begin{remark}[Monotonically non-decreasing property]
A key property of the above projection scheme is that it guarantees a monotonically non-decreasing objective
$
\tr\!\bigl(\mNG_t\, f(\mR_t^\top \mNG)^\top \mR_t^\top\bigr)$
under repeated projections, provided that $f$ is chosen as the dual norm derivative (cf.~\cref{eq: projection}). A detailed discussion of this property is provided in \cref{subapp: failure of polar}.
\label{remark: monotonic non-decreasing polar}
\end{remark}
\end{remarks}

Since the resulting $\mR_t^*$ is itself a rotation matrix and is guaranteed to improve the instantaneous loss decrease rate, we can directly evaluate the training behavior induced by this rotation.

\begin{findings}
    \textbf{Findings 1: With our MNIS experiment setup, the polar scheme works well with the absence of noise, but its performance are greatly impact under mini-batch training.}
\end{findings}

As shown in \Cref{fig: clean sign mnist}, the Polar scheme yields improved training behavior compared to other rotation choices in the absence of mini-batch noise.

This behavior is expected, since under a suitable and small enough learning rate, the magnitude of alignment is a good proxy for the loss change and the polar scheme explicitly guarantees a larger alignment. All baselines achieves similar training loss in the end at the level of $1E-8$. 

However, in practice, such as in LLM training, one does not have access to the clean gradient. Instead, only the noisy mini-batch gradient $\mNG$ is available, rather than $\mEG$. 
Importantly, different rotation schemes have different impacts to the training stability under mini-batch noise. 

\begin{wrapfigure}[18]{r}{0.35\linewidth}
\centering
\includegraphics[width=\linewidth]{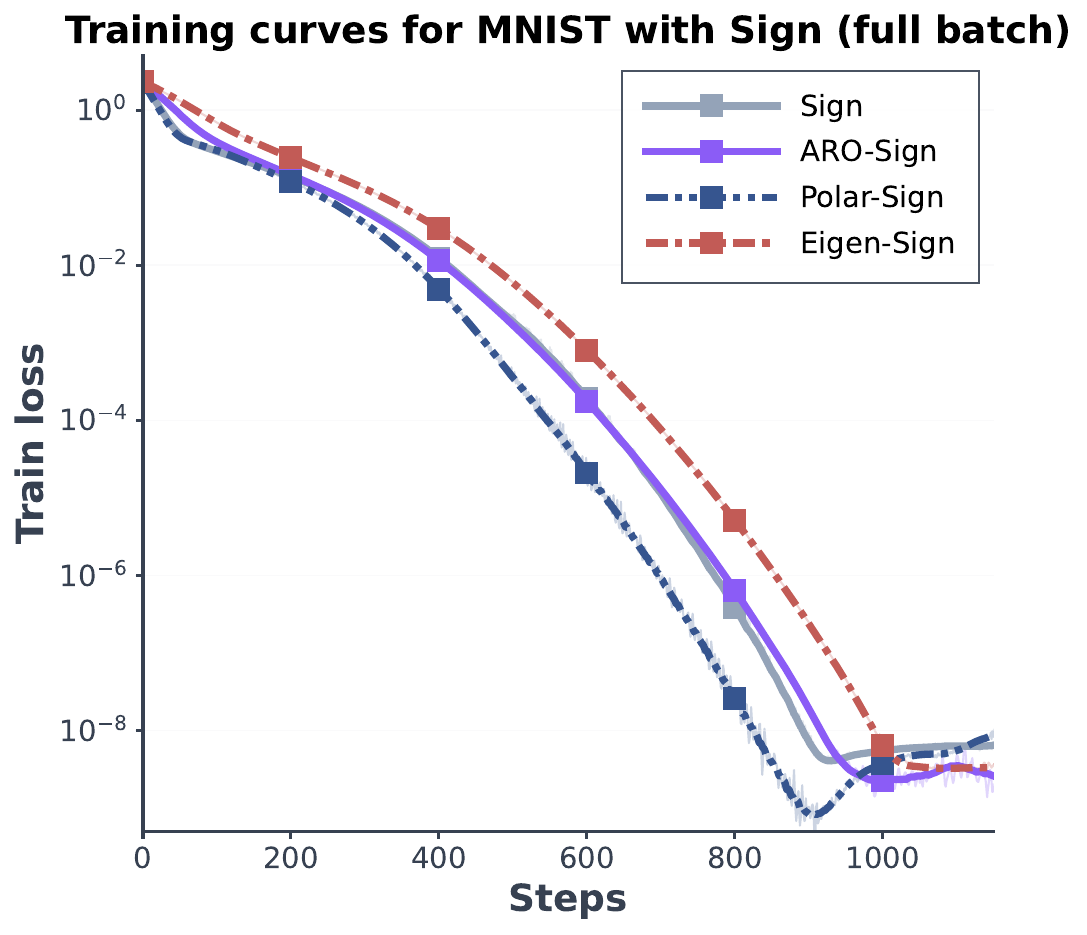}
\caption{Full-batch MNIST training curves with $f=\sign$ under different rotation choices.}
\label{fig: clean sign mnist}
\end{wrapfigure}

In contrast to the noise-free setting, \Cref{fig: MNIST} demonstrates that the training performances without rotation or with polar rotation are greatly impacted by the mini-batch noise, and only reaches the training loss of $1E-4$ to $1E-6$ (compared to $1E-8$ for clean setup).

By jointly inspecting the loss trajectories and the alignment score, we find that both polar projection and the unrotated baseline often induce substantial variability in these metrics.
Oscillations of the alignment score around zero indicate that the instantaneous training progress is not consistently descending: the optimizer frequently alternates between loss descent and loss ascent, which ultimately slows down overall convergence.

This behavior is rooted in the shift of the objective used to determine the rotation. Under clean setup, the objective $\mathcal{J}$ is equivalent to unnormalized $\alignscore$, which is a determinstic objective. Therefore maximizing it leads to a consistent high alignment score.
Under stochastic gradients, the empirical objective $\tilde{\mathcal{J}}$ is no longer equivalent to $\alignscore$. The polar scheme searches for a rotation that aligns the update with the noisy gradient $\mNG_t$, rather than with the noise-free gradient $\mEG_t$. As a consequence of mini-batch noise, $\mNG_t$ can deviate significantly from $\mEG_t$, and may even become anti-aligned with it. In such cases, the resulting $\alignscore$ is no longer guaranteed to be positive , and can instead induce instantaneous loss increase.

Unfortunately, for large models and datasets, such behavior is common. In particular, prior work has shown that momentum can become anti-aligned with the gradient in certain regimes \citep{becker2024momentum}. To make this phenomenon precise, we formalize how maximizing the empirical objective under stochastic gradients can lead to inconsistent training progress and, crucially, enlarge a \emph{danger region} in which \emph{loss ascent is guaranteed}.

\paragraph{Maximizing the empirical objective enlarges the \emph{danger region}.}

We define the \emph{danger region} as the region of noisy gradients or momentum for which the resulting update
$
\Delta \mW \;\propto\; \mR\, f(\mR^\top \mNG_t)
$
is \textbf{guaranteed} to be anti-aligned with the ground-truth gradient $\mEG_t$ (i.e.~$\alignscore$ is negative). When the observed gradient (e.g., $\mM_t$ or $\mNG_t$) lies in this region, the corresponding update induces an instantaneous increase in the true loss. \textbf{\Cref{thm: failure of polar} in \cref{subapp: failure of polar}} formally characterizes this behavior, with the proof deferred to \cref{subapp: proof of failure of polar}.

The theorem shows that maximizing the empirical objective can expand this danger region: $\mM_t$ or $\mNG_t$ becomes more likely to fall inside it. This provides a concrete explanation for the observed oscillations of the alignment score under polar scheme.

\begin{figure}
\centering
    \begin{minipage}{0.3\textwidth}
        \includegraphics[width=1\linewidth]{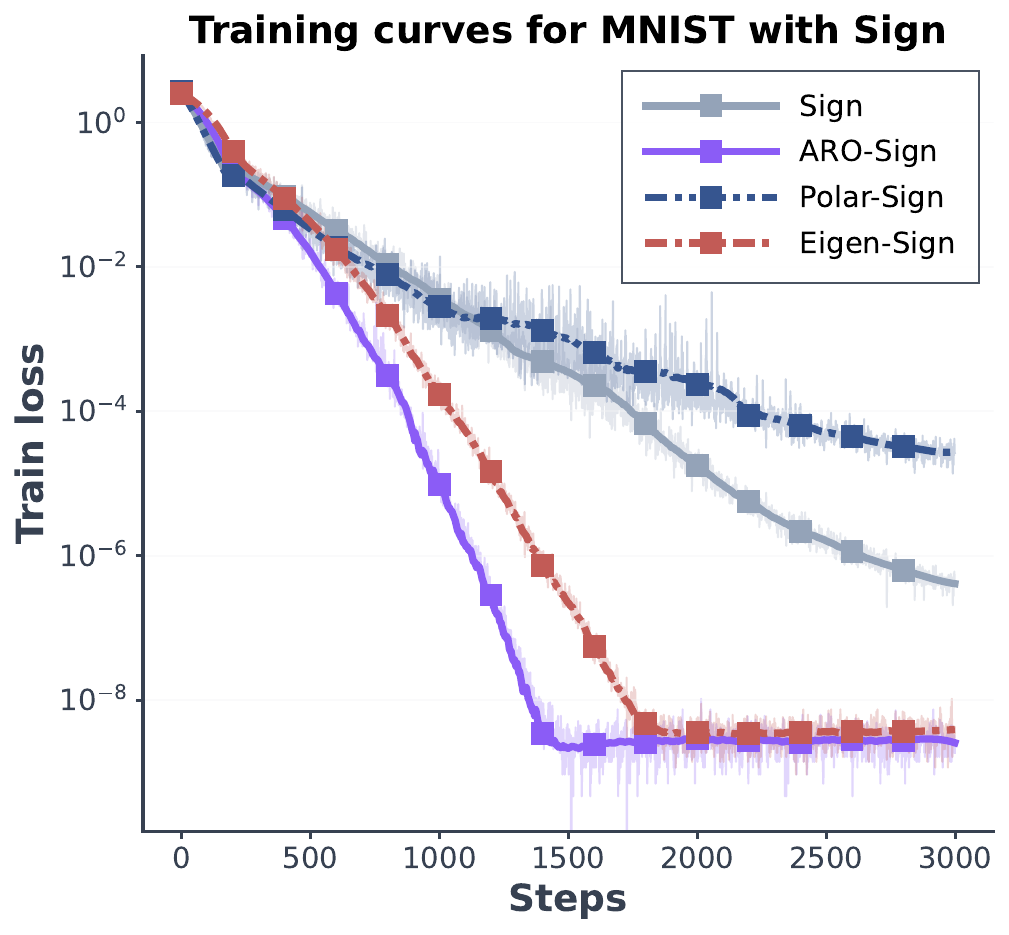}
    \end{minipage}\hfill
    \begin{minipage}{0.3\textwidth}
        \includegraphics[width=1\linewidth]{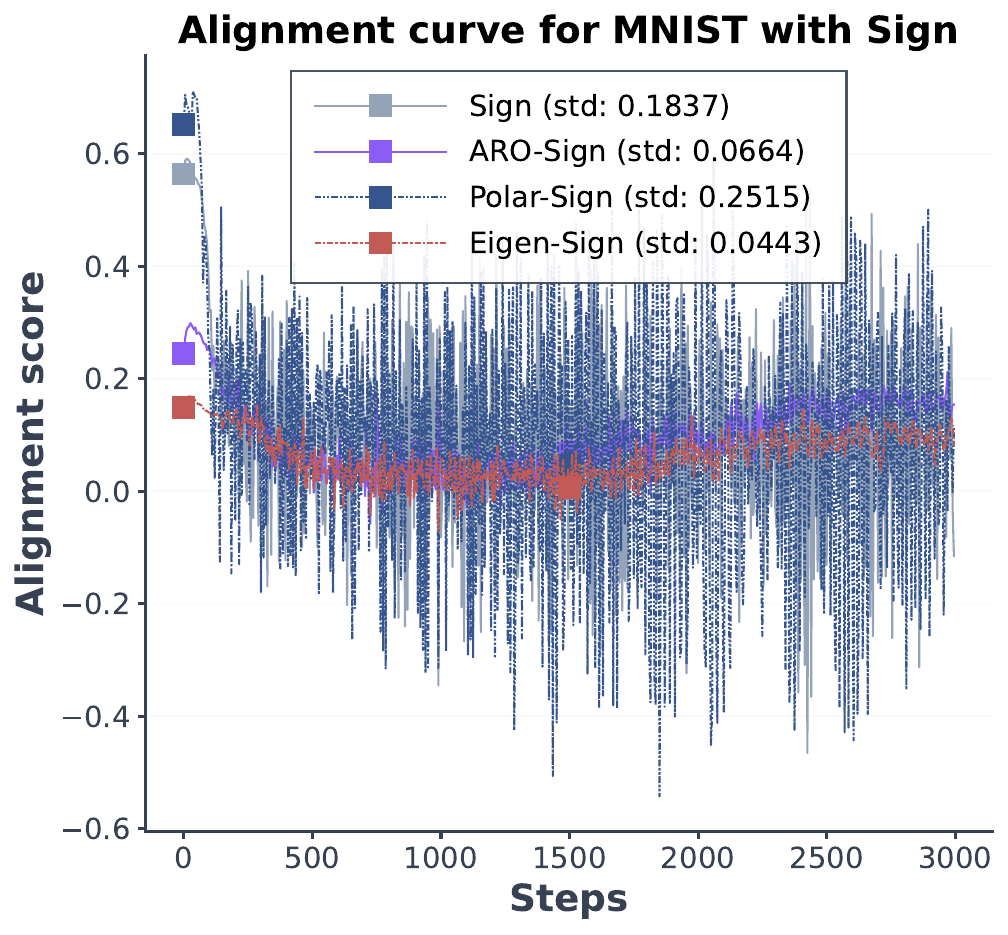}
    \end{minipage}\hfill
    \begin{minipage}{0.3\textwidth}
        \includegraphics[width=\linewidth]{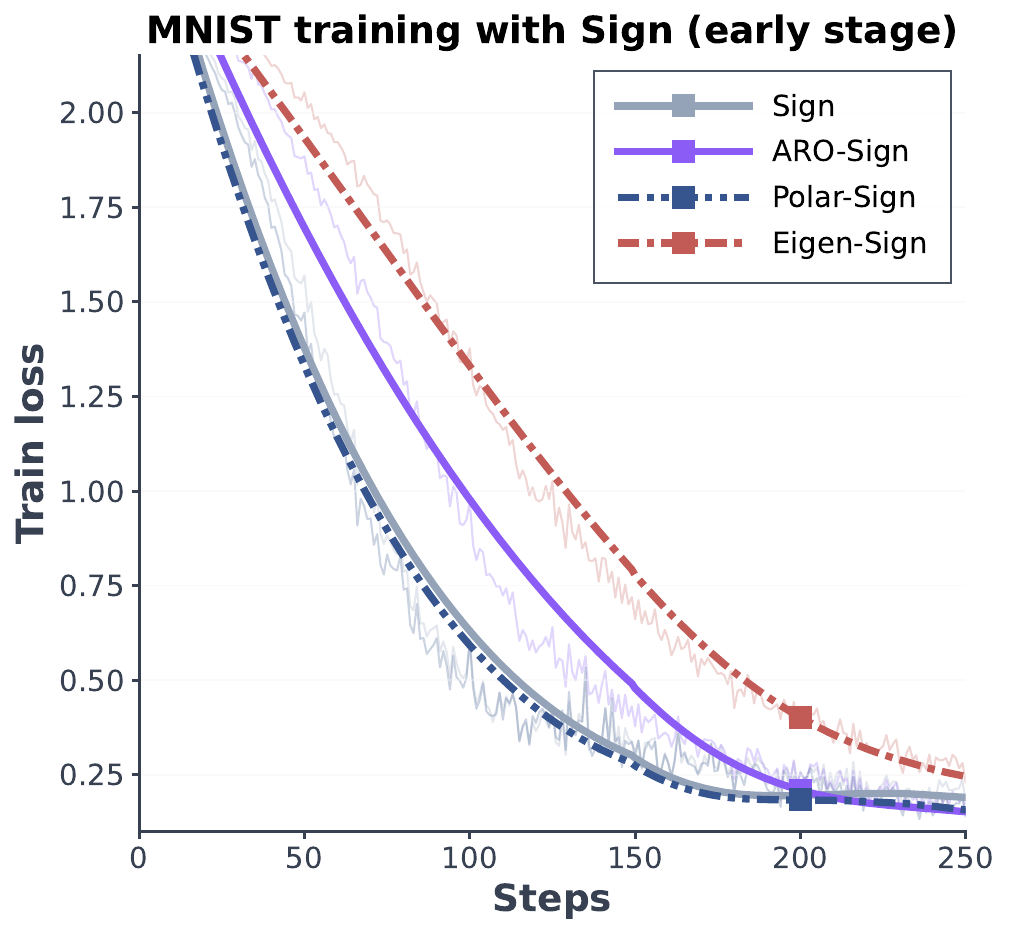}
    \end{minipage}
    \caption{MNIST training with $f_t=\sign$ and different rotation schemes. The \textbf{left} panel shows training loss curves (raw and smoothed). Due to the log axis, late-stage fluctuations of \name{}-Sign and Eigen-Sign may appear larger than their actual magnitude. The \textbf{middle} panel plots the alignment score during training, together with its \textbf{standard deviation} for each optimizer. The \textbf{right} panel plot the early-stage training curve with $f_t=\sign$. }
    \label{fig: MNIST}
\end{figure}

\subsection{Eigen-rotation improves stability but is overly conservative }
\label{subsubsec: discussion denoise: rotation as variancec reduction}
From the previous analysis, we observe that aggressively improving the empirical objective $\tilde{\mathcal{J}}$ may not be necessarily desirable in stochastic settings. 
Following this reasoning, reducing $\var(\alignscore)$ emerges as a more appropriate objective to reduce the sensitivity to minibatch noise, than solely increasing the empirical loss decrease rate. In other words, it is more desirable for an optimizer to achieve \textbf{stable and consistent loss improvement} rather than pursuing directions that are potentially faster but highly riskier.

As shown in \cref{fig: MNIST}, \name{} and eigen-rotation are much more robust to mini-batch noise compared to others, and greatly stabilize the training. In this simple setup, they even achieve the same loss level as the clean setup (i.e., 1E-8). 

\begin{findings}
\textbf{Finding 2: With our setup, eigen and \name{} rotations substantially reduce $\var(\alignscore)$, as well as realized loss curves. Their strong performance stems from making steady and reliable training progress.}
\end{findings}

Compared with polar projection and unrotated baselines, eigen and \name{} rotations produce significantly smaller variability in both loss curves and alignment scores. Notably, they tend to keep the alignment score \textbf{positive for most of the training process}, even though its magnitude is relatively small. This indicates that the optimizer consistently takes descent directions. We argue that this mechanism is one of the important drivers of their strong performance under noisy setup: rather than chasing aggressive but noisy updates, these methods make reliable progress per-step.

\paragraph{Eigen-rotation as optimal  $\var(\alignscore())$-reduction} \Cref{thm: optimal align score variance reduction} in \cref{subapp: rotation optimal variance reduction} shows that, for $f(\cdot) = \text{Sign}(\cdot)$ (i.e., under $l_\infty$ base geometry), eigen-rotation is a \textbf{local minimizer of $\operatorname{Var}(\alignscore)$} under a block-diagonal structure assumption on the gradient-noise covariance. The full derivation is provided in \cref{subapp: proof of minimium variance reduction}. This provides a principled explanation for the empirical effectiveness of eigen-rotations, and uncovers the choice of rotation selection criteria in symmetry teleportation that give rise to eigen rotations. It also gives an explanation on why eigen rotations are typically used in tandem with Sign/Adam-like base optimizer in SOAP/Shampoo/SPlus/Muon, since its optimal $\operatorname{Var}(\alignscore)$-reduction effect is true only under    $f(\cdot) = \text{Sign}(\cdot)$.

\begin{remarks}
\begin{remark}[Intuition behind the $\var(\alignscore)$ minimizer] Below we give a high-level intuition for why eigen-rotation is a local minimizer of $\operatorname{Var}(\alignscore)$. If we assume that the eigenvectors of $\mG\mG^{T}$ and the noise covariance coincide, then eigen-rotation aligns the noisy gradient with the principal noise directions, thereby maximally separating the noise component from the signal. This separation directly yields the minimum combined alignment variance. This assumption is partially supported by prior theoretical setups discussed in \cref{subapp: denoising setups and assumptions}.
\end{remark}
\end{remarks}

\begin{remarks}
\begin{remark}[Distinction from variance reduction techniques]
While our discussion on eigen rotations has been focused on its $\var(\alignscore)$-reduction effect, this is fundamentally different from variance-reduction techniques in stochastic optimization \citep{yuan2024mars, liu2025mars, cutkosky2019momentum, roux2012stochastic, johnson2013accelerating, nguyen2017sarah, zhou2020stochastic, fang2018spider}. Those variance reduction methods are typically based on methods like control variates and aim to reduce the variance of an estimator while preserving its expectation. In contrast, in our analysis, for different choices of rotations in the update
$$
\delta W \propto \mR f(\mR^T \mNG)
$$
generally induce different expected updates. This distinction highlights that rotation-based methods constitutes a qualitatively different mechanism and is orthogonal to control-variate approaches. Indeed, recent work \citep{liu2025mars} has begun to explore combining control variates \citep{yuan2024mars} with Muon (a special case of \name{}), suggesting that these techniques can be complementary.
\end{remark}
\end{remarks}

\begin{remarks}
\begin{remark}[Limitations]
Despite the empirical support, our theoretical observation on the optimal $\operatorname{Var}(\alignscore)$-reduction effect of eigen-rotation has some limitations. 
(1) The proof relies on the assumption that noise dominates the gradient signal and analyzes the variance by neglecting higher-order terms. However, we argue that this minimization effect may still holds when the gradient has stronger signal magnitude than the noise. In \cref{subapp: rotation optimal variance reduction}, we provide an synthetic experiment showing that eigen-rotation can still achieve low $\var(\alignscore)$ compared to other solutions when the signal-to-noise ratio is $10$. This may justify the behavior in early training stage of MNIST (\Cref{fig: MNIST} (middle)), where $\var(\alignscore)$-reduction effect already starts to show. 
(2) The current proof techniques are tightly coupled to the specific choice $f=\sign$ and do not readily generalize to other transformations. 
(3) In practice, the true expected gradient $\mEG$ is inaccessible for computing eigen rotations; instead, one must rely on noisy proxies such as $\mM$ or $\mNG$. The quantitative impact of using such proxies is not analyzed in this work, and we leave a careful treatment of this issue to future work.
\end{remark}
\end{remarks}

However, this desirable property comes with an important drawback: eigen rotation is overly conservative in terms of true loss decrease rate (i.e., alignment score $\alignscore$) under expectation. This effect is clearly reflected in the reduced magnitude of the alignment score observed in \cref{fig: MNIST}.

\begin{findings}
    \textbf{Finding 3: With our setup, we find eigen rotated optimizers is overly conservative at decreasing the loss compared to \name{}.}
\end{findings}

From \Cref{fig: MNIST}, eigen rotation often yields a smaller \emph{magnitude} of the alignment score than other methods. This behavior is also evident in right panel of \Cref{fig: MNIST}, where eigen rotation reduces the loss more slowly than \name{}, polar projection, and the unrotated baseline during the early phase of training.

Importantly, this behavior can be understood as an over-conservative effect induced by the $\operatorname{Var}(\alignscore)$-reduction property of the eigen-rotation. We characterize this conservativeness theoretically, under the same assumptions used in the variance minimization analysis.

\paragraph{Eigen rotation also acts as an alignment score minimizer}

\Cref{thm: rotation as loss rate minimizer} in \cref{subapp: rotation optimal variance reduction} shows that eigen rotation is also a local minimizer of the expected alignment score when $f=\sign$:
\begin{align*}
\E\left[\left\langle \mEG,\; \mR \sign\!\left(\mR^{\top}\mNG\right) \right\rangle\right],
\end{align*}
under the same assumptions as \cref{thm: optimal align score variance reduction}.

\begin{remarks}
\begin{remark}[Two sides of a coin]
\Cref{thm: optimal align score variance reduction} and \cref{thm: rotation as loss rate minimizer} reveal two complementary aspects of rotation: 
(1) $\var(\alignscore)$, and 
(2) $\E[\alignscore]$. 
Under the considered stochastic setting, no single rotation can simultaneously minimize $\var(\alignscore)$ and maximize $\E[\alignscore]$. 
Eigen rotation emphasizes the former at the expense of low magnitude, whereas polar projection or un-rotated baselilne (e.g. $\mR=\mI$) favors high magnitude while inducing large alignment variance. 
Empirically, variance minimization appears more favorable for training neural networks in stochastic regimes. We hypothesize that this is closely related to the loss landscape geometry, and defer a deeper theoretical explanation to future work.
\end{remark}
\end{remarks}

\begin{remarks}
    \begin{remark}[Limitation] Our theoretical result assumes a low signal-to-noise magnitude. However, 
        Similar to the $\var(\alignscore)$ reduction, we have shown through a synthetic experiment in \cref{subapp: rotation optimal variance reduction} that the $\E[\alignscore]$ property can, to certain extend, robustly extrapolate beyond this regime. 
    \end{remark}
\end{remarks}

\subsection{\name{} rotation: a speed--stability tradeoff}
\label{subsec: rotation loss decrease rate minimizer}

The analysis above highlights a potential limitation of eigen rotations, which are widely used in matrix optimizers. By reducing $\var(\alignscore)$, eigen rotations can substantially improve training stability. However, this variance reduction can also make the update overly conservative. Combining with our observations for the failure mode of Polar rotation, we therefore argue that a desirable rotation should avoid the two extremes:

\begin{takeaways}
\textbf{A desirable rotation should avoid the two extremes: it should not inflate alignment $S_A$ into the high $\var(\alignscore)$ regime (as Polar rotation), nor suppress alignment too strongly (as eigen rotation). Instead, it should increase $S_A$ moderately while preserving the $\var(\alignscore)$-reduction benefit.}
\end{takeaways}

Does \name{} satisfy this property? We empirically evaluate whether \name{} rotation alleviates this conservatism while preserving the stability benefits. As shown in \Cref{fig: MNIST}, \name{} rotation consistently achieves a higher alignment than eigen rotation, while still retaining the benefits of improved stability via $\var(\alignscore)$-reduction. \Cref{fig: stats_of_aro} shows that \name{} rotation, under both $f=\sign$ and $f=\text{sinkhorn}$, improves the empirical $\tilde{\mathcal{J}}$ relative to eigen rotation across all layers of the network. One should note that although empirical loss decrease rate $\tilde{\mathcal{J}}$ is not equivalent to alignment score as discussed in the beginning, we find it it is sufficient here to capture the qualitative behavior: \name{} improves loss decrease rate without requiring an aggressive increase in alignment. This leads us to the below finding:

\begin{findings}
\textbf{Finding 4: Empirically, \name{} rotation provides a good trade-off between $\var(\alignscore)$-reduction and loss decrease rate.}
\end{findings}

Overall, these results suggest that \name{} rotation induces a moderately higher alignment, which helps explain its stronger empirical performance across a wide range of experiments, while still retaining the stabilizing effect of $\var(\alignscore)$-reduction. Importantly, this behavior is consistent with the design of \name{}: rather than fully maximizing the rotation criterion (instantaneous loss decrease rate as a noisy surrogate of alignment $S_A$, see \Cref{sec: aro} and \Cref{sec: deriving_optimizer} ), \name{} performs only a partial maximization, targeting an improvement over eigen rotation while avoiding the overly conservative regime.

\subsection{Summary: the stability--speed tradeoff under noisy setup}
\label{subsec: denoising closing remakr}

The main driver of this section is to make an effort to the question: \textbf{What properties a desirable rotation should have}. We gradually build our answer by examining:
\begin{itemize}

\item Unlike the noise-free setting, an aggressive improvement in empirical loss decrease rate $\tilde{\mathcal{J}}$ may not be desirable in the presence of mini-batch noise, since it may induce high alignment variance and lead to unstable or inconsistent training progress as demonstrated in our preliminary experiment.

\item Eigen rotations aim to optimally reduce training instability under assumptions, resulting in stable and consistent loss decrease. However, it is often overly conservative, leading to a low alignment magnitude.

\item In contrast, empirically, \name{} rotation provides a better trade-off by avoiding the two extremes: it moderately improves alignment while preserving the desirable stabilizing property.

\end{itemize}

We emphasize that this section does not provide a complete recipe for choosing rotations. Instead, it offers a preliminary analysis of rotation through the lens of the stability--speed tradeoff under noisy settings, which serves as one plausible guiding principle. Many important questions remain open regarding this aspect, including but not limited to:

\begin{enumerate}

\item Are there alternative rotations that can achieve a better balance?

\item How to generalize our theoretical analysis to practical large scale LLM training?

\item Are there other aspects that are crucial for practical large scale LLM training?

\end{enumerate}

\section{Related Works}

\label{sec: related}
Below we briefly review related works in modern  matrix optimization for LLMs.

\paragraph{Gradient orthogonalization} Gradient orthogonalization methods optimize matrix-valued parameters by mapping each (momentum-) gradient to the closest orthogonal update, effectively equalizing progress across singular directions and improving conditioning relative to coordinatewise methods such as AdamW. Gradient orthogonalization methods are tightly connected to Shampoo \citep{gupta2018shampoo, anil2020scalable}: \citep{bernstein2024old} first observe that disabling Shampoo's gradient accumulation yields spectral descent \citep{carlson2015preconditioned, carlson2015stochastic, carlson2015stochastic}, and proposed efficient orthogonalization via Newton-Schulz iterations. These ideas were materialized as an practical optimizer (Muon) in \citep{jordan2024muon}, with an momentum-first implementation that resolves performance issues in prior work \citep{tuddenham2022orthogonalising}. This is subsequently modified for large scale distributed training with direct Adam grafting \citep{kexuefm-11267} and weight decay in \citep{liu2025muon}, which becomes the default Muon implementation used in our baselines.

\paragraph{Iterative normalization/iterative whitening} A direct extension to steepest descent under non-Euclidean norms (e.g., spectral descent/muon) is to consider multiple geometry simultaneously. In our prior work \citep{scetbon2025gradient}, we propose to perform steepest descent under multiple norm constraints, resulting in an iterative normalization scheme on  gradient matrices to improve update quality. This result in two practical instantiations: i) the combination of gradient orthogonalization and row-wise normalization, proposed in \citep{maswan}, and generalized in \citep{scetbon2025gradient}. Later, a similar idea was also explored in NorMuon \citep{li2025normuon}. ii) the SinkGD optimizer \Cref{sec: sink}, a lightweight, efficient matrix optimizer that applies a reparameterized version of Sinkhorn-Knopp algoritm to project gradients matrices \citep{sinkhorn1967concerning} before updates. Following on this, in \citep{swan2} we discussed the potential opportunity of using SinkGD in rotated space. This idea leads to the use of SinkGD as a base optimizer in \name{}.  Similarly, \citep{vyasimproving} proposed to iteratively whiten the gradients for improved performance from a different perspective. The author further conjectured the possibility of rotation sharing across layers. In our work, we explicitly derived concrete rotation sharing schemes based on rotational symmetries induced by cross-layer coupling in neural networks, and validated its empirical performance on \name{}.  

\paragraph{Improving Muon}  Following the initial successes of gradient orthogonalization, there has been a recent burst of work aimed at improving Muon. A dominant direction is to add variance/stepsize adaptation on top of orthogonal updates, including elementwise or blockwise second-moment scaling \citep{si2025adamuon, li2025normuon}, adaptive step size \citep{zhang2025adagrad}, as well as incorporating explicit variance-reduction techniques \citep{liu2025mars}. Complementarily, several works interpret/improves Muon through explicit or implicit regularization on weight space constraints \citep{pethick2025training, xie2026controlled, wen2025hyperball}. A third line studies additional momentum terms, proposing multi-timescale momentum constructions to better mix short- and long-horizon gradient information \citep{behrouz2025nested}. Finally, improving the numerical quality/efficency of orthogonalization is also an important direction \citep{amsel2025polar, ahn2025dion, lau2025polargrad}. Together, they offer substantial insights on practical optimizer design, especially on the choice of auxiliary elements overlooked by original implementations of Muon. Compared to those work, our work focus on pushing the frontier beyond the primitives of orthogonalization/spectral descent methods itself. Crucially, the performance gain of \name{} over Adam and Muon comes from its new geometric solution. We believe that in principle,  most of the advancements above for improving Muon (variance adaptation, multi-momentum buffer, Dion V2, weight decay alternatives with hyperball/spectral sphere, etc) can also be adapted to improve \name{}. We leave those extensions for future work.

\paragraph{K-FAC, and optimization on rotated spaces} K-FAC and its variants are among the earliest approaches to move beyond diagonal preconditioning in neural network optimization, using a Kronecker-factored approximation to curvature blocks to obtain efficient matrix preconditioning updates \citep{martens2015optimizing}. A complementary line of work exploits the induced eigenspaces of such structured curvature approximations, yielding algorithms that run first-order adaptivity in a rotated (approximately natural-gradient) basis \citep{george2018fast}, while related E-FAC-style designs introduce a diagonal preconditioner that is updated between expensive K-FAC inversion steps. The use of Kronecker factored structures also motivates a broad family of structured matrix/tensor preconditioners, such as Shampoo \citep{gupta2018shampoo, anil2020scalable, shi2023distributed}, SOAP \citep{vyas2024soap}. and SPlus \citep{frans2025stable}. Recently, \citep{vantowards} revisit Kronecker-factored eigenbasis (KFE) directly and proposed adaptive optimization in the KFE basis. In parallel, recent analyses reveals the basis dependence behavior of Adam's optimization performance \citep{maes2024understanding, xie2025adamexploitsellinftygeometryloss}. Particularly, \citep{maes2024understanding} argues that explaining Adam requires rotation-dependent assumptions and identifies eigen rotations that empirical improve performance of Adam. This line of work influenced our insight on treating as a first-class design principle, and automate the rotation selection policy informed by base optimizers.

\paragraph{Symmetry-aware optimization}
Over-parameterized neural network models often induce rich parameter symmetries where many distinct parameterizations implement the same function (e.g., permutations, rescalings, and internal basis changes). These symmetry structures has been extensively studied or exploited \citep{lecun2002gradient, hecht1990algebraic, kuurkova1994functionally, kuurkova1994functionally, garipov2018loss, draxler2018essentially, kunin2020neural, tanaka2021noether, entezari2021role, ainsworth2022git, zhao2022symmetries, zhao2025symmetry, wangmaximal, wang2025complete, da2025hide, zhang2025beyond, ashkboos2024slicegpt, liu2024spinquant, ashkboos2024quarot}, raising the question of how to utilize those information during optimization.  Existing methods respond in two complementary ways: \emph{exploiting} symmetry as additional degrees of freedom, or \emph{quotienting} it out via gauge-/reparameterization-invariant geometry. On the exploitation side, symmetry teleportation \citep{zhao2022symmetry} composes standard training steps with explicit loss-preserving group actions, teleporting along symmetry orbits to representatives where the base optimizer makes better progress (by maximizing gradient norm at teleported location); follow-up work reports faster convergence and improved generalization \citep{zhao2023improving}. Closely related ideas appear in permutation-aware mode connectivity and model merging \citep{entezari2021role, ainsworth2022git}, and in rotation-based matching/model fusion \citep{zhang2025beyond}.  On the quotient/invariant optimization side, 
natural gradient descent \citep{amari1998natural,amari2016information} performs steepest descent under the Fisher-Rao metric and is invariant to smooth reparameterizations. Recent work emphasizes that reparameterization-consistent dynamics can be obtained more generally by choosing an explicit Riemannian metric and transforming it correctly \citep{kristiadi2023geometry}. Grounded on transformers, quotient-Riemannian treatment were then used to explicitly derive symmetry-invariant metrics and projects gradients to remove orbit directions. \citep{da2025hide,wang2025gauge,wang2025complete}. The symmetry hypothesis of our work is heavily influenced by the above prior work. Building on these work, \name{} targets at residual-stream rotational symmetries that are extensively used in quantization \citep{ashkboos2024slicegpt,liu2024spinquant,ashkboos2024quarot}, but less explored in parameter symmetry literature. We further extends ST to non-Euclidean base optimizers to handle rotations, and uses a milder teleportation policy that improves a dual norm over eigen-solutions, sitting between the original gradient norm-maximization ST and its norm-minimization variant \citep{yun2023riemannian}.

\paragraph{Exploiting cross-layer coupling in matrix optimizers for LLMs} In the literature of LLM training, this is an under-explored topic due to the prohibitive memory and compute overhead of representing and applying non–block-diagonal preconditioners at LLM scale, especially under distributed training.  \citep{abreu2025potential} studied the potential of full Gauss-Newton methods for LLM pretraining, and found that layer-wise method closely match the performance of full GN method.  In contrast, \name{} uses a symmetry view to induce an economical coupling mechanism across layers and module types, improving performance while reducing overhead; suggesting that optimizer design should be grounded in architecture. Around the same time as our work, TEON \citep{zhang2026teontensorizedorthonormalizationlayerwise} extends Muon to higher-order tensors and couples adjacent layers by orthogonalizing stacked $QKV$ matrices. Mathematically, this is equivalent to sharing the eigen-rotations across different parameters based on \emph{module types}. The key difference to our work is three fold. First, our rotation sharing scheme enables exploiting both cross layer \emph{and} cross module-type coupling, grounded on architecture. In early development we have also explored similar, module type-based rotation sharing schemes. It is also an effective approach, but requires more computational overheads than our current approach. Second, our framework is base-optimizer agnostic and applies to any \name{} instantiation, handling very wide/rectangular matrices and global sharing more robustly. This opens more flexibility for setups that are not recommended in TEON for numerical stability concerns. Finally, materializing rotations in \name{} offers potential solutions for implementing rotation sharing with reduced communication cost.

\paragraph{Preconditioned SGD} PSGD is an relatively under-explored class of matrix optimizers that learns a non-diagonal preconditioner online via a Hessian-fitting criterion \citep{li2017preconditioned, li2018preconditioner, li2024stochastic}. PSGD has a similar design philosophy to our method in spirit - both tries to automate the learned preconditioning as a first-class design choice. PSGD is constructed around fitting a preconditioner to local curvature, whereas we explicitly utilize the rotated optimization form and find such rotations informed by general utility criteria. The two families overlap in special cases (both recovers gradient orthogonalization as special case), yet in general their objectives and algorithmic constructions differ.


\section{Conclusion and Future Directions}

This work introduces \textbf{Adaptively Rotated Optimization (\name{})}, a matrix optimization framework that advances the efficiency frontier of LLM pretraining over AdamW and gradient orthogonalization methods. By reformulating key existing matrix optimizers as instances of rotated steepest descent, we elevate gradient rotation to a first-class design principle. This perspective exposes a previously under-explored design space, leading to general rotation policies beyond eigen-rotations and base optimizers beyond Adam variants. Leveraging this view, we design a novel rotation policy that is informed by, and tailored to the base optimizer, yielding new update rules that extend beyond conventional eigen-rotation-based and orthogonalization approaches.

To ensure a more reliable and generalizable conclusions, we introduce rigorous benchmarking protocols to reduce confounding and bias. Under these protocols, \name{} consistently outperforms strong baselines (AdamW and Muon), across diverse LLM pretraining setups spanning model scales (up to 8B activated parameters), training budgets (up to 8 $\times$ overtrain), and architectures (both dense and MoE), with no clear signs of diminishing returns at scale.

In extended discussions, we discussed the theoretical principles that derives \name{}. We argue that the convergence of many key matrix optimizers to a special case of rotational forms might not be an coincidence. We derive \name{} as a form of generalized symmetry teleportation, that exploits exact or approximate rotational symmetries in NNs induced by cross-layer couplings. This leads to new optimizer designs that further exploits those couplings, validated in preliminary experiments. These motivates the \textbf{symmetry hypothesis}, an proposal that matrix optimization can be understood and improved through the lens of symmetry, a global structure largely overlooked in LLM optimization. 

Finally, \name{} opens up a new lens on understanding matrix optimization for large models. Promising future directions for developing next generation of optimizers may include:

\begin{itemize}
    \item Developing new rotation policies for \name{} that either further improves optimization performance, or reduce the computational overheads. 
    \item The search for new or exisiting base optimizer projection function $f$ that further improves the performance over our SinkGD. \name{} unlocks significant opportunities due to the large volume of available base optimizers in the literature.
    \item Further improving the performance of \name{} by adopting recent advancements in the Muon literature, such as variance adaptation, multi-momentum buffers, weight decay alternatives, or better hyperparameter transfer with $\mu$P, etc.
    \item Scaling up the rotation sharing schemes and rotation orientation strategies for \name{}, demonstrated in \Cref{sec: symmetry_predictions}, with necessary improvements and modifications. We believe this will opens up opportunities for algorithm - infrastructure co-design.
    \item Extending and improving \name{} by exploiting more symmetry structures of LLMs beyond residual stream rotations. For example, per-head rotational symmetries \citep{da2025hide}, as well as other symmetries characterized in  the literature \citep{wang2025complete}.
    \item Theoretical understanding on the symmetry breaking - optimizer expressiveness tradeoff.
\end{itemize}

\section*{Acknowledgment}

We are grateful to Ted Meeds and Aditya Nori for their continuous support and insightful feedback throughout the project. We thank Sushrut Karmalkar, Teodora Pandeva, Alicia Curth, and Riccardo Grazzi for valuable discussions and comments. We also thank Yordan Zaykov and Jonathan Tims for discussions on distributed training, GPU onboarding, and resource management that made this study possible. We would also like to thank Yeyun Gong and Peng cheng for their support and helpful discussion on testing \name{} on Sigma, as well as Yuchen Ding, Laihan Ren, Yaoxiang Wang for constructive feedback.  Finally, we thank Marvin F. da Silva and Bo Zhao for discussions on symmetry-aware optimization.

\bibliographystyle{plain} 
\bibliography{refs} 

\begin{thebibliography}{100}

\bibitem{abreu2025potential}
Natalie Abreu, Nikhil Vyas, Sham Kakade, and Depen Morwani.
\newblock The potential of second-order optimization for llms: A study with full gauss-newton.
\newblock {\em arXiv preprint arXiv:2510.09378}, 2025.

\bibitem{achiam2023gpt}
Josh Achiam, Steven Adler, Sandhini Agarwal, Lama Ahmad, Ilge Akkaya, Florencia~Leoni Aleman, Diogo Almeida, Janko Altenschmidt, Sam Altman, Shyamal Anadkat, et~al.
\newblock Gpt-4 technical report.
\newblock {\em arXiv preprint arXiv:2303.08774}, 2023.

\bibitem{agarap2018deep}
Abien~Fred Agarap.
\newblock Deep learning using rectified linear units (relu).
\newblock {\em arXiv preprint arXiv:1803.08375}, 2018.

\bibitem{agarwallearning}
Naman Agarwal, Rohan Anil, Elad Hazan, Tomer Koren, and Cyril Zhang.
\newblock Learning rate grafting: Transferability of optimizer tuning.

\bibitem{ahn2025dion}
Kwangjun Ahn and Byron Xu.
\newblock Dion: A communication-efficient optimizer for large models.
\newblock {\em arXiv e-prints}, pages arXiv--2504, 2025.

\bibitem{ainsworth2022git}
Samuel~K Ainsworth, Jonathan Hayase, and Siddhartha Srinivasa.
\newblock Git re-basin: Merging models modulo permutation symmetries.
\newblock {\em arXiv preprint arXiv:2209.04836}, 2022.

\bibitem{Allen2025-resonate}
Zeyuan {Allen-Zhu}.
\newblock {Physics of Language Models: Part 4.2, Canon Layers at Scale where Synthetic Pretraining Resonates in Reality}, 2025.
\newblock Code released at \url{https://github.com/facebookresearch/PhysicsLM4}.

\bibitem{amari1998natural}
Shun-Ichi Amari.
\newblock Natural gradient works efficiently in learning.
\newblock {\em Neural computation}, 10(2):251--276, 1998.

\bibitem{amari2016information}
Shun-ichi Amari.
\newblock {\em Information geometry and its applications}, volume 194.
\newblock Springer, 2016.

\bibitem{amsel2025polar}
Noah Amsel, David Persson, Christopher Musco, and Robert~M Gower.
\newblock The polar express: Optimal matrix sign methods and their application to the muon algorithm.
\newblock {\em arXiv preprint arXiv:2505.16932}, 2025.

\bibitem{an2025asgo}
Kang An, Yuxing Liu, Rui Pan, Yi~Ren, Shiqian Ma, Donald Goldfarb, and Tong Zhang.
\newblock Asgo: Adaptive structured gradient optimization.
\newblock {\em arXiv preprint arXiv:2503.20762}, 2025.

\bibitem{anil2020scalable}
Rohan Anil, Vineet Gupta, Tomer Koren, Kevin Regan, and Yoram Singer.
\newblock Scalable second order optimization for deep learning.
\newblock {\em arXiv preprint arXiv:2002.09018}, 2020.

\bibitem{ashkboos2024slicegpt}
Saleh Ashkboos, Maximilian~L Croci, Marcelo Gennari~do Nascimento, Torsten Hoefler, and James Hensman.
\newblock Slicegpt: Compress large language models by deleting rows and columns.
\newblock {\em arXiv preprint arXiv:2401.15024}, 2024.

\bibitem{ashkboos2024quarot}
Saleh Ashkboos, Amirkeivan Mohtashami, Maximilian~L Croci, Bo~Li, Pashmina Cameron, Martin Jaggi, Dan Alistarh, Torsten Hoefler, and James Hensman.
\newblock Quarot: Outlier-free 4-bit inference in rotated llms.
\newblock {\em Advances in Neural Information Processing Systems}, 37:100213--100240, 2024.

\bibitem{becker2024momentum}
Marlon Becker, Frederick Altrock, and Benjamin Risse.
\newblock Momentum-sam: Sharpness aware minimization without computational overhead.
\newblock {\em arXiv preprint arXiv:2401.12033}, 2024.

\bibitem{behrouz2025nested}
Ali Behrouz, Meisam Razaviyayn, Peilin Zhong, and Vahab Mirrokni.
\newblock Nested learning: The illusion of deep learning architectures.
\newblock {\em arXiv preprint arXiv:2512.24695}, 2025.

\bibitem{bernstein2024modular}
Jeremy Bernstein and Laker Newhouse.
\newblock Modular duality in deep learning.
\newblock {\em arXiv preprint arXiv:2410.21265}, 2024.

\bibitem{bernstein2024old}
Jeremy Bernstein and Laker Newhouse.
\newblock Old optimizer, new norm: An anthology.
\newblock {\em arXiv preprint arXiv:2409.20325}, 2024.

\bibitem{bernstein2018signsgd}
Jeremy Bernstein, Yu-Xiang Wang, Kamyar Azizzadenesheli, and Animashree Anandkumar.
\newblock signsgd: Compressed optimisation for non-convex problems.
\newblock In {\em International conference on machine learning}, pages 560--569. PMLR, 2018.

\bibitem{carlson2015stochastic}
David Carlson, Volkan Cevher, and Lawrence Carin.
\newblock Stochastic spectral descent for restricted boltzmann machines.
\newblock In {\em Artificial intelligence and statistics}, pages 111--119. PMLR, 2015.

\bibitem{carlson2015stochasticb}
David Carlson, Ya-Ping Hsieh, Edo Collins, Lawrence Carin, and Volkan Cevher.
\newblock Stochastic spectral descent for discrete graphical models.
\newblock {\em IEEE Journal of Selected Topics in Signal Processing}, 10(2):296--311, 2015.

\bibitem{carlson2015preconditioned}
David~E Carlson, Edo Collins, Ya-Ping Hsieh, Lawrence Carin, and Volkan Cevher.
\newblock Preconditioned spectral descent for deep learning.
\newblock {\em Advances in neural information processing systems}, 28, 2015.

\bibitem{cutkosky2019momentum}
Ashok Cutkosky and Francesco Orabona.
\newblock Momentum-based variance reduction in non-convex sgd.
\newblock {\em Advances in neural information processing systems}, 32, 2019.

\bibitem{da2025hide}
Marvin~F Da~Silva, Felix Dangel, and Sageev Oore.
\newblock Hide \& seek: Transformer symmetries obscure sharpness \& riemannian geometry finds it.
\newblock {\em arXiv preprint arXiv:2505.05409}, 2025.

\bibitem{draxler2018essentially}
Felix Draxler, Kambis Veschgini, Manfred Salmhofer, and Fred Hamprecht.
\newblock Essentially no barriers in neural network energy landscape.
\newblock In {\em International conference on machine learning}, pages 1309--1318. PMLR, 2018.

\bibitem{entezari2021role}
Rahim Entezari, Hanie Sedghi, Olga Saukh, and Behnam Neyshabur.
\newblock The role of permutation invariance in linear mode connectivity of neural networks.
\newblock {\em arXiv preprint arXiv:2110.06296}, 2021.

\bibitem{eschenhagen2025purifying}
Runa Eschenhagen, Aaron Defazio, Tsung-Hsien Lee, Richard~E Turner, and Hao-Jun~Michael Shi.
\newblock Purifying shampoo: Investigating shampoo's heuristics by decomposing its preconditioner.
\newblock {\em arXiv preprint arXiv:2506.03595}, 2025.

\bibitem{fang2018spider}
Cong Fang, Chris~Junchi Li, Zhouchen Lin, and Tong Zhang.
\newblock Spider: Near-optimal non-convex optimization via stochastic path-integrated differential estimator.
\newblock {\em Advances in neural information processing systems}, 31, 2018.

\bibitem{frans2025really}
Kevin Frans, Pieter Abbeel, and Sergey Levine.
\newblock What really matters in matrix-whitening optimizers?
\newblock {\em arXiv preprint arXiv:2510.25000}, 2025.

\bibitem{frans2025stable}
Kevin Frans, Sergey Levine, and Pieter Abbeel.
\newblock A stable whitening optimizer for efficient neural network training.
\newblock {\em arXiv preprint arXiv:2506.07254}, 2025.

\bibitem{fukaya2020shifted}
Takeshi Fukaya, Ramaseshan Kannan, Yuji Nakatsukasa, Yusaku Yamamoto, and Yuka Yanagisawa.
\newblock Shifted cholesky qr for computing the qr factorization of ill-conditioned matrices.
\newblock {\em SIAM Journal on Scientific Computing}, 42(1):A477--A503, 2020.

\bibitem{garipov2018loss}
Timur Garipov, Pavel Izmailov, Dmitrii Podoprikhin, Dmitry~P Vetrov, and Andrew~G Wilson.
\newblock Loss surfaces, mode connectivity, and fast ensembling of dnns.
\newblock {\em Advances in neural information processing systems}, 31, 2018.

\bibitem{george2018fast}
Thomas George, C{\'e}sar Laurent, Xavier Bouthillier, Nicolas Ballas, and Pascal Vincent.
\newblock Fast approximate natural gradient descent in a kronecker factored eigenbasis.
\newblock {\em Advances in neural information processing systems}, 31, 2018.

\bibitem{gong2025towards}
Wenbo Gong, Meyer Scetbon, Chao Ma, and Edward Meeds.
\newblock Towards efficient optimizer design for llm via structured fisher approximation with a low-rank extension.
\newblock {\em arXiv preprint arXiv:2502.07752}, 2025.

\bibitem{grattafiori2024llama}
Aaron Grattafiori, Abhimanyu Dubey, Abhinav Jauhri, Abhinav Pandey, Abhishek Kadian, Ahmad Al-Dahle, Aiesha Letman, Akhil Mathur, Alan Schelten, Alex Vaughan, et~al.
\newblock The llama 3 herd of models.
\newblock {\em arXiv preprint arXiv:2407.21783}, 2024.

\bibitem{guo2025deepseek}
Daya Guo, Dejian Yang, Haowei Zhang, Junxiao Song, Peiyi Wang, Qihao Zhu, Runxin Xu, Ruoyu Zhang, Shirong Ma, Xiao Bi, et~al.
\newblock Deepseek-r1 incentivizes reasoning in llms through reinforcement learning.
\newblock {\em Nature}, 645(8081):633--638, 2025.

\bibitem{gupta2018shampoo}
Vineet Gupta, Tomer Koren, and Yoram Singer.
\newblock Shampoo: Preconditioned stochastic tensor optimization.
\newblock In {\em International Conference on Machine Learning}, pages 1842--1850. PMLR, 2018.

\bibitem{he2025root}
Wei He, Kai Han, Hang Zhou, Hanting Chen, Zhicheng Liu, Xinghao Chen, and Yunhe Wang.
\newblock Root: Robust orthogonalized optimizer for neural network training.
\newblock {\em arXiv preprint arXiv:2511.20626}, 2025.

\bibitem{hecht1990algebraic}
Robert Hecht-Nielsen.
\newblock On the algebraic structure of feedforward network weight spaces.
\newblock In {\em Advanced Neural Computers}, pages 129--135. Elsevier, 1990.

\bibitem{hernandez2025apertus}
Alejandro Hern{\'a}ndez-Cano, Alexander H{\"a}gele, Allen~Hao Huang, Angelika Romanou, Antoni-Joan Solergibert, Barna Pasztor, Bettina Messmer, Dhia Garbaya, Eduard~Frank {\v{D}}urech, Ido Hakimi, et~al.
\newblock Apertus: Democratizing open and compliant llms for global language environments.
\newblock {\em arXiv preprint arXiv:2509.14233}, 2025.

\bibitem{hoffmann2022training}
Jordan Hoffmann, Sebastian Borgeaud, Arthur Mensch, Elena Buchatskaya, Trevor Cai, Eliza Rutherford, Diego de~Las Casas, Lisa~Anne Hendricks, Johannes Welbl, Aidan Clark, et~al.
\newblock Training compute-optimal large language models.
\newblock {\em arXiv preprint arXiv:2203.15556}, 2022.

\bibitem{hu2025sigmamoetinytechnicalreport}
Qingguo Hu, Zhenghao Lin, Ziyue Yang, Yucheng Ding, Xiao Liu, Yuting Jiang, Ruizhe Wang, Tianyu Chen, Zhongxin Guo, Yifan Xiong, Rui Gao, Lei Qu, Jinsong Su, Peng Cheng, and Yeyun Gong.
\newblock Sigma-moe-tiny technical report, 2025.

\bibitem{kexuefm-11267}
Su~Jianlin.
\newblock Why adam has update rms norm of 0.2, Sep 2025.

\bibitem{johnson2013accelerating}
Rie Johnson and Tong Zhang.
\newblock Accelerating stochastic gradient descent using predictive variance reduction.
\newblock {\em Advances in neural information processing systems}, 26, 2013.

\bibitem{jordan2024muon}
Keller Jordan, Yuchen Jin, Vlado Boza, Jiacheng You, Franz Cesista, Laker Newhouse, and Jeremy Bernstein.
\newblock Muon: An optimizer for hidden layers in neural networks, 2024.

\bibitem{kaddour2023no}
Jean Kaddour, Oscar Key, Piotr Nawrot, Pasquale Minervini, and Matt~J Kusner.
\newblock No train no gain: Revisiting efficient training algorithms for transformer-based language models.
\newblock {\em Advances in Neural Information Processing Systems}, 36:25793--25818, 2023.

\bibitem{Karpathy2022}
Andrej Karpathy.
\newblock \text{NanoGPT}.
\newblock \url{https://github.com/karpathy/nanoGPT}, 2022.

\bibitem{nanochat}
Andrej Karpathy.
\newblock nanochat: The best chatgpt that \$100 can buy, 2025.

\bibitem{kingma2014adam}
Diederik~P Kingma and Jimmy Ba.
\newblock Adam: A method for stochastic optimization.
\newblock {\em arXiv preprint arXiv:1412.6980}, 2014.

\bibitem{knight2014symmetry}
Philip~A Knight, Daniel Ruiz, and Bora U{\c{c}}ar.
\newblock A symmetry preserving algorithm for matrix scaling.
\newblock {\em SIAM journal on Matrix Analysis and Applications}, 35(3):931--955, 2014.

\bibitem{kristiadi2023geometry}
Agustinus Kristiadi, Felix Dangel, and Philipp Hennig.
\newblock The geometry of neural nets' parameter spaces under reparametrization.
\newblock {\em Advances in Neural Information Processing Systems}, 36:17669--17688, 2023.

\bibitem{kunin2020neural}
Daniel Kunin, Javier Sagastuy-Brena, Surya Ganguli, Daniel~LK Yamins, and Hidenori Tanaka.
\newblock Neural mechanics: Symmetry and broken conservation laws in deep learning dynamics.
\newblock {\em arXiv preprint arXiv:2012.04728}, 2020.

\bibitem{kuurkova1994functionally}
Vera Kurkova and Paul~C Kainen.
\newblock Functionally equivalent feedforward neural networks.
\newblock {\em Neural Computation}, 6(3):543--558, 1994.

\bibitem{kwak2008principal}
Nojun Kwak.
\newblock Principal component analysis based on l1-norm maximization.
\newblock {\em IEEE transactions on pattern analysis and machine intelligence}, 30(9):1672--1680, 2008.

\bibitem{lau2025polargrad}
Tim Tsz-Kit Lau, Qi~Long, and Weijie Su.
\newblock Polargrad: A class of matrix-gradient optimizers from a unifying preconditioning perspective.
\newblock {\em arXiv preprint arXiv:2505.21799}, 2025.

\bibitem{lecun2002gradient}
Yann LeCun, L{\'e}on Bottou, Yoshua Bengio, and Patrick Haffner.
\newblock Gradient-based learning applied to document recognition.
\newblock {\em Proceedings of the IEEE}, 86(11):2278--2324, 2002.

\bibitem{li2017preconditioned}
Xi-Lin Li.
\newblock Preconditioned stochastic gradient descent.
\newblock {\em IEEE transactions on neural networks and learning systems}, 29(5):1454--1466, 2017.

\bibitem{li2018preconditioner}
Xi-Lin Li.
\newblock Preconditioner on matrix lie group for sgd.
\newblock {\em arXiv preprint arXiv:1809.10232}, 2018.

\bibitem{li2024stochastic}
Xi-Lin Li.
\newblock Stochastic hessian fittings with lie groups.
\newblock {\em arXiv preprint arXiv:2402.11858}, 2024.

\bibitem{li2025normuon}
Zichong Li, Liming Liu, Chen Liang, Weizhu Chen, and Tuo Zhao.
\newblock Normuon: Making muon more efficient and scalable.
\newblock {\em arXiv preprint arXiv:2510.05491}, 2025.

\bibitem{lim2025motif}
Junghwan Lim, Sungmin Lee, Dongseok Kim, Taehyun Kim, Eunhwan Park, Jeesoo Lee, Jeongdoo Lee, Junhyeok Lee, Wai~Ting Cheung, Dahye Choi, et~al.
\newblock Motif 2 12.7 b technical report.
\newblock {\em arXiv preprint arXiv:2511.07464}, 2025.

\bibitem{liu2023sophia}
Hong Liu, Zhiyuan Li, David Hall, Percy Liang, and Tengyu Ma.
\newblock Sophia: A scalable stochastic second-order optimizer for language model pre-training.
\newblock {\em arXiv preprint arXiv:2305.14342}, 2023.

\bibitem{liu2025muon}
Jingyuan Liu, Jianlin Su, Xingcheng Yao, Zhejun Jiang, Guokun Lai, Yulun Du, Yidao Qin, Weixin Xu, Enzhe Lu, Junjie Yan, et~al.
\newblock Muon is scalable for llm training.
\newblock {\em arXiv preprint arXiv:2502.16982}, 2025.

\bibitem{liu2025mars}
Yifeng Liu, Angela Yuan, and Quanquan Gu.
\newblock Mars-m: When variance reduction meets matrices.
\newblock {\em arXiv preprint arXiv:2510.21800}, 2025.

\bibitem{liu2024spinquant}
Zechun Liu, Changsheng Zhao, Igor Fedorov, Bilge Soran, Dhruv Choudhary, Raghuraman Krishnamoorthi, Vikas Chandra, Yuandong Tian, and Tijmen Blankevoort.
\newblock Spinquant: Llm quantization with learned rotations.
\newblock {\em arXiv preprint arXiv:2405.16406}, 2024.

\bibitem{loshchilov2024ngpt}
Ilya Loshchilov, Cheng-Ping Hsieh, Simeng Sun, and Boris Ginsburg.
\newblock ngpt: Normalized transformer with representation learning on the hypersphere.
\newblock {\em arXiv preprint arXiv:2410.01131}, 2024.

\bibitem{loshchilov2017decoupled}
Ilya Loshchilov and Frank Hutter.
\newblock Decoupled weight decay regularization.
\newblock {\em arXiv preprint arXiv:1711.05101}, 2017.

\bibitem{lu2025understanding}
Yanqing Lu, Letao Wang, and Jinbo Liu.
\newblock Understanding soap from the perspective of gradient whitening.
\newblock {\em arXiv preprint arXiv:2509.22938}, 2025.

\bibitem{ludziejewski2025joint}
Jan Ludziejewski, Maciej Pi{\'o}ro, Jakub Krajewski, Maciej Stefaniak, Micha{\l} Krutul, Jan Ma{\l}a{\'s}nicki, Marek Cygan, Piotr Sankowski, Kamil Adamczewski, Piotr Mi{\l}o{\'s}, et~al.
\newblock Joint moe scaling laws: Mixture of experts can be memory efficient.
\newblock {\em arXiv preprint arXiv:2502.05172}, 2025.

\bibitem{maswan}
Chao Ma, Wenbo Gong, Meyer Scetbon, and Edward Meeds.
\newblock Swan: Sgd with normalization and whitening enables stateless llm training.
\newblock In {\em Forty-second International Conference on Machine Learning}.

\bibitem{swan2}
Chao Ma, Wenbo Gong, Meyer Scetbon, and Edward Meeds.
\newblock {S}{W}{A}{N}:{S}{G}{D} with {N}ormalization and {W}hitening {E}nables {S}tateless {L}{L}{M} {T}raining.
\newblock \url{https://chao-ma.org/wp-content/uploads/2025/09/swan-2.pdf}.
\newblock [Accessed 03-12-2025].

\bibitem{ma2024swan}
Chao Ma, Wenbo Gong, Meyer Scetbon, and Edward Meeds.
\newblock Swan: Preprocessing sgd enables adam-level performance on llm training with significant memory reduction.
\newblock {\em arXiv preprint arXiv:2412.13148}, 2024.

\bibitem{maes2024understanding}
Lucas Maes, Tianyue~H Zhang, Alexia Jolicoeur-Martineau, Ioannis Mitliagkas, Damien Scieur, Simon Lacoste-Julien, and Charles Guille-Escuret.
\newblock Understanding adam requires better rotation dependent assumptions.
\newblock {\em arXiv preprint arXiv:2410.19964}, 2024.

\bibitem{markopoulos2017efficient}
Panos~P Markopoulos, Sandipan Kundu, Shubham Chamadia, and Dimitris~A Pados.
\newblock Efficient l1-norm principal-component analysis via bit flipping.
\newblock {\em IEEE Transactions on Signal Processing}, 65(16):4252--4264, 2017.

\bibitem{martens2015optimizing}
James Martens and Roger Grosse.
\newblock Optimizing neural networks with kronecker-factored approximate curvature.
\newblock In {\em International conference on machine learning}, pages 2408--2417. PMLR, 2015.

\bibitem{mori2022power}
Takashi Mori, Liu Ziyin, Kangqiao Liu, and Masahito Ueda.
\newblock Power-law escape rate of sgd.
\newblock In {\em International Conference on Machine Learning}, pages 15959--15975. PMLR, 2022.

\bibitem{morwani2024new}
Depen Morwani, Itai Shapira, Nikhil Vyas, Eran Malach, Sham Kakade, and Lucas Janson.
\newblock A new perspective on shampoo's preconditioner.
\newblock {\em arXiv preprint arXiv:2406.17748}, 2024.

\bibitem{myronenko2009closed}
Andriy Myronenko and Xubo Song.
\newblock On the closed-form solution of the rotation matrix arising in computer vision problems.
\newblock {\em arXiv preprint arXiv:0904.1613}, 2009.

\bibitem{nguyen2017sarah}
Lam~M Nguyen, Jie Liu, Katya Scheinberg, and Martin Tak{\'a}{\v{c}}.
\newblock Sarah: A novel method for machine learning problems using stochastic recursive gradient.
\newblock In {\em International conference on machine learning}, pages 2613--2621. PMLR, 2017.

\bibitem{nguyen2025improving}
Son Nguyen, Bo~Liu, Lizhang Chen, and Qiang Liu.
\newblock Improving adaptive moment optimization via preconditioner diagonalization.
\newblock {\em arXiv preprint arXiv:2502.07488}, 2025.

\bibitem{paszke2019pytorch}
Adam Paszke, Sam Gross, Francisco Massa, Adam Lerer, James Bradbury, Gregory Chanan, Trevor Killeen, Zeming Lin, Natalia Gimelshein, Luca Antiga, et~al.
\newblock Pytorch: An imperative style, high-performance deep learning library.
\newblock {\em Advances in neural information processing systems}, 32, 2019.

\bibitem{pethick2025training}
Thomas Pethick, Wanyun Xie, Kimon Antonakopoulos, Zhenyu Zhu, Antonio Silveti-Falls, and Volkan Cevher.
\newblock Training deep learning models with norm-constrained lmos.
\newblock {\em arXiv preprint arXiv:2502.07529}, 2025.

\bibitem{qiu2025hyperparameter}
Shikai Qiu, Zixi Chen, Hoang Phan, Qi~Lei, and Andrew~Gordon Wilson.
\newblock Hyperparameter transfer enables consistent gains of matrix-preconditioned optimizers across scales.
\newblock {\em arXiv preprint arXiv:2512.05620}, 2025.

\bibitem{qu2025sigma}
Lei Qu, Lianhai Ren, Peng Cheng, Rui Gao, Ruizhe Wang, Tianyu Chen, Xiao Liu, Xingjian Zhang, Yeyun Gong, Yifan Xiong, et~al.
\newblock Sigma: An ai-empowered training stack on early-life hardware.
\newblock {\em arXiv preprint arXiv:2512.13488}, 2025.

\bibitem{rasley2020deepspeed}
Jeff Rasley, Samyam Rajbhandari, Olatunji Ruwase, and Yuxiong He.
\newblock Deepspeed: System optimizations enable training deep learning models with over 100 billion parameters.
\newblock In {\em Proceedings of the 26th ACM SIGKDD international conference on knowledge discovery \& data mining}, pages 3505--3506, 2020.

\bibitem{robert2024ldadam}
Thomas Robert, Mher Safaryan, Ionut-Vlad Modoranu, and Dan Alistarh.
\newblock Ldadam: Adaptive optimization from low-dimensional gradient statistics.
\newblock {\em arXiv preprint arXiv:2410.16103}, 2024.

\bibitem{roux2012stochastic}
Nicolas Roux, Mark Schmidt, and Francis Bach.
\newblock A stochastic gradient method with an exponential convergence \_rate for finite training sets.
\newblock {\em Advances in neural information processing systems}, 25, 2012.

\bibitem{scetbon2025gradient}
Meyer Scetbon, Chao Ma, Wenbo Gong, and Edward Meeds.
\newblock Gradient multi-normalization for stateless and scalable llm training.
\newblock {\em arXiv preprint arXiv:2502.06742}, 2025.

\bibitem{schonemann1966generalized}
Peter~H Sch{\"o}nemann.
\newblock A generalized solution of the orthogonal procrustes problem.
\newblock {\em Psychometrika}, 31(1):1--10, 1966.

\bibitem{semenov2025benchmarking}
Andrei Semenov, Matteo Pagliardini, and Martin Jaggi.
\newblock Benchmarking optimizers for large language model pretraining.
\newblock {\em arXiv preprint arXiv:2509.01440}, 2025.

\bibitem{shazeer2018adafactor}
Noam Shazeer and Mitchell Stern.
\newblock Adafactor: Adaptive learning rates with sublinear memory cost.
\newblock In {\em International Conference on Machine Learning}, pages 4596--4604. PMLR, 2018.

\bibitem{shi2023distributed}
Hao-Jun~Michael Shi, Tsung-Hsien Lee, Shintaro Iwasaki, Jose Gallego-Posada, Zhijing Li, Kaushik Rangadurai, Dheevatsa Mudigere, and Michael Rabbat.
\newblock A distributed data-parallel pytorch implementation of the distributed shampoo optimizer for training neural networks at-scale.
\newblock {\em arXiv preprint arXiv:2309.06497}, 2023.

\bibitem{shoeybi2019megatron}
Mohammad Shoeybi, Mostofa Patwary, Raul Puri, Patrick LeGresley, Jared Casper, and Bryan Catanzaro.
\newblock Megatron-lm: Training multi-billion parameter language models using model parallelism.
\newblock {\em arXiv preprint arXiv:1909.08053}, 2019.

\bibitem{shulgin2025beyond}
Egor Shulgin, Sultan AlRashed, Francesco Orabona, and Peter Richt{\'a}rik.
\newblock Beyond the ideal: Analyzing the inexact muon update.
\newblock {\em arXiv preprint arXiv:2510.19933}, 2025.

\bibitem{si2025adamuon}
Chongjie Si, Debing Zhang, and Wei Shen.
\newblock Adamuon: Adaptive muon optimizer.
\newblock {\em arXiv preprint arXiv:2507.11005}, 2025.

\bibitem{sinkhorn1967concerning}
Richard Sinkhorn and Paul Knopp.
\newblock Concerning nonnegative matrices and doubly stochastic matrices.
\newblock {\em Pacific Journal of Mathematics}, 21(2):343--348, 1967.

\bibitem{cerebras2023slimpajama}
Daria Soboleva, Faisal Al-Khateeb, Robert Myers, Jacob~R Steeves, Joel Hestness, and Nolan Dey.
\newblock {SlimPajama: A 627B token cleaned and deduplicated version of RedPajama}.
\newblock \url{https://www.cerebras.net/blog/slimpajama-a-627b-token-cleaned-and-deduplicated-version-of-redpajama}, 2023.

\bibitem{su2025nemotron}
Dan Su, Kezhi Kong, Ying Lin, Joseph Jennings, Brandon Norick, Markus Kliegl, Mostofa Patwary, Mohammad Shoeybi, and Bryan Catanzaro.
\newblock Nemotron-cc: Transforming common crawl into a refined long-horizon pretraining dataset.
\newblock In {\em Proceedings of the 63rd Annual Meeting of the Association for Computational Linguistics (Volume 1: Long Papers)}, pages 2459--2475, 2025.

\bibitem{tanaka2021noether}
Hidenori Tanaka and Daniel Kunin.
\newblock Noether’s learning dynamics: Role of symmetry breaking in neural networks.
\newblock {\em Advances in Neural Information Processing Systems}, 34:25646--25660, 2021.

\bibitem{team2025kimi}
Kimi Team, Yifan Bai, Yiping Bao, Guanduo Chen, Jiahao Chen, Ningxin Chen, Ruijue Chen, Yanru Chen, Yuankun Chen, Yutian Chen, et~al.
\newblock Kimi k2: Open agentic intelligence.
\newblock {\em arXiv preprint arXiv:2507.20534}, 2025.

\bibitem{tuddenham2022orthogonalising}
Mark Tuddenham, Adam Pr{\"u}gel-Bennett, and Jonathan Hare.
\newblock Orthogonalising gradients to speed up neural network optimisation.
\newblock {\em arXiv preprint arXiv:2202.07052}, 2022.

\bibitem{ueaj_multiscale_muon}
{ueaj}.
\newblock Multiscale muon.

\bibitem{vantowards}
Tycho~FA van~der Ouderaa, Mohamed Baioumy, Matt Beton, Seth Howes, Gelu Vrabie, and Alex Cheema.
\newblock Towards large scale training on apple silicon.
\newblock In {\em ES-FoMo III: 3rd Workshop on Efficient Systems for Foundation Models}.

\bibitem{veprikov2025preconditioned}
Andrey Veprikov, Arman Bolatov, Samuel Horv{\'a}th, Aleksandr Beznosikov, Martin Tak{\'a}{\v{c}}, and Slavomir Hanzely.
\newblock Preconditioned norms: A unified framework for steepest descent, quasi-newton and adaptive methods.
\newblock {\em arXiv preprint arXiv:2510.10777}, 2025.

\bibitem{vyas2024soap}
Nikhil Vyas, Depen Morwani, Rosie Zhao, Mujin Kwun, Itai Shapira, David Brandfonbrener, Lucas Janson, and Sham Kakade.
\newblock Soap: Improving and stabilizing shampoo using adam.
\newblock {\em arXiv preprint arXiv:2409.11321}, 2024.

\bibitem{vyasimproving}
Nikhil Vyas, Rosie Zhao, Depen Morwani, Mujin Kwun, and Sham Kakade.
\newblock Improving soap using iterative whitening and muon.

\bibitem{wangmaximal}
Hong Wang and Kelly Wang.
\newblock Maximal gauge symmetry in transformer architectures.

\bibitem{wang2025complete}
Hong Wang and Kelly Wang.
\newblock Complete characterization of gauge symmetries in transformer architectures.
\newblock In {\em NeurIPS 2025 Workshop on Symmetry and Geometry in Neural Representations}, 2025.

\bibitem{wang2025gauge}
Hong Wang and Kelly Wang.
\newblock Gauge fiber bundle geometry of transformers.
\newblock 2025.

\bibitem{wang2023theoretical}
Mingze Wang and Lei Wu.
\newblock A theoretical analysis of noise geometry in stochastic gradient descent.
\newblock {\em arXiv preprint arXiv:2310.00692}, 2023.

\bibitem{wang2021rethinking}
Yizhou Wang, Yue Kang, Can Qin, Huan Wang, Yi~Xu, Yulun Zhang, and Yun Fu.
\newblock Rethinking adam: A twofold exponential moving average approach.
\newblock {\em arXiv preprint arXiv:2106.11514}, 2021.

\bibitem{wang2026olionapproachinghadamardideal}
Zixiao Wang, Yifei Shen, and Huishuai Zhang.
\newblock Olion: Approaching the hadamard ideal by intersecting spectral and $\ell_{\infty}$ implicit biases, 2026.

\bibitem{wen2025hyperball}
Kaiyue Wen, Xingyu Dang, Kaifeng Lyu, Tengyu Ma, and Percy Liang.
\newblock Fantastic pretraining optimizers and where to find them 2.1: Hyperball optimization, 12 2025.

\bibitem{wen2025fantastic}
Kaiyue Wen, David Hall, Tengyu Ma, and Percy Liang.
\newblock Fantastic pretraining optimizers and where to find them.
\newblock {\em arXiv preprint arXiv:2509.02046}, 2025.

\bibitem{wensron}
Ziqing Wen, Yanqi Shi, Jiahuan Wang, Ping Luo, Linbo Qiao, Dongsheng Li, and Tao Sun.
\newblock Sron: State-free llm training via row-wise gradient normalization.

\bibitem{xie2025adamexploitsellinftygeometryloss}
Shuo Xie, Mohamad~Amin Mohamadi, and Zhiyuan Li.
\newblock Adam exploits $\ell_\infty$-geometry of loss landscape via coordinate-wise adaptivity, 2025.

\bibitem{xie2025structured}
Shuo Xie, Tianhao Wang, Sashank Reddi, Sanjiv Kumar, and Zhiyuan Li.
\newblock Structured preconditioners in adaptive optimization: A unified analysis.
\newblock {\em arXiv preprint arXiv:2503.10537}, 2025.

\bibitem{xie2025tale}
Shuo Xie, Tianhao Wang, Beining Wu, and Zhiyuan Li.
\newblock A tale of two geometries: Adaptive optimizers and non-euclidean descent.
\newblock {\em arXiv preprint arXiv:2511.20584}, 2025.

\bibitem{xie2026controlled}
Tian Xie, Haoming Luo, Haoyu Tang, Yiwen Hu, Jason~Klein Liu, Qingnan Ren, Yang Wang, Wayne~Xin Zhao, Rui Yan, Bing Su, et~al.
\newblock Controlled llm training on spectral sphere.
\newblock {\em arXiv preprint arXiv:2601.08393}, 2026.

\bibitem{xie2025mhc}
Zhenda Xie, Yixuan Wei, Huanqi Cao, Chenggang Zhao, Chengqi Deng, Jiashi Li, Damai Dai, Huazuo Gao, Jiang Chang, Liang Zhao, et~al.
\newblock mhc: Manifold-constrained hyper-connections.
\newblock {\em arXiv preprint arXiv:2512.24880}, 2025.

\bibitem{yang2025qwen3}
An~Yang, Anfeng Li, Baosong Yang, Beichen Zhang, Binyuan Hui, Bo~Zheng, Bowen Yu, Chang Gao, Chengen Huang, Chenxu Lv, et~al.
\newblock Qwen3 technical report.
\newblock {\em arXiv preprint arXiv:2505.09388}, 2025.

\bibitem{yang2026prismstructuredoptimizationanisotropic}
Yujie Yang.
\newblock Prism: Structured optimization via anisotropic spectral shaping, 2026.

\bibitem{yang2008principal}
Zhirong Yang and Jorma Laaksonen.
\newblock Principal whitened gradient for information geometry.
\newblock {\em Neural Networks}, 21(2-3):232--240, 2008.

\bibitem{yuan2024mars}
Huizhuo Yuan, Yifeng Liu, Shuang Wu, Xun Zhou, and Quanquan Gu.
\newblock Mars: Unleashing the power of variance reduction for training large models.
\newblock {\em arXiv preprint arXiv:2411.10438}, 2024.

\bibitem{yun2023riemannian}
Jihun Yun and Eunho Yang.
\newblock Riemannian sam: Sharpness-aware minimization on riemannian manifolds.
\newblock {\em Advances in Neural Information Processing Systems}, 36:65784--65800, 2023.

\bibitem{zeng2025glm}
Aohan Zeng, Xin Lv, Qinkai Zheng, Zhenyu Hou, Bin Chen, Chengxing Xie, Cunxiang Wang, Da~Yin, Hao Zeng, Jiajie Zhang, et~al.
\newblock Glm-4.5: Agentic, reasoning, and coding (arc) foundation models.
\newblock {\em arXiv preprint arXiv:2508.06471}, 2025.

\bibitem{zhang2025beyond}
Binchi Zhang, Zaiyi Zheng, Zhengzhang Chen, and Jundong Li.
\newblock Beyond the permutation symmetry of transformers: The role of rotation for model fusion.
\newblock {\em arXiv preprint arXiv:2502.00264}, 2025.

\bibitem{zhang2025adagrad}
Minxin Zhang, Yuxuan Liu, and Hayden Schaeffer.
\newblock Adagrad meets muon: Adaptive stepsizes for orthogonal updates.
\newblock {\em arXiv preprint arXiv:2509.02981}, 2025.

\bibitem{zhang2026teontensorizedorthonormalizationlayerwise}
Ruijie Zhang, Yequan Zhao, Ziyue Liu, Zhengyang Wang, Dongyang Li, Yupeng Su, Sijia Liu, and Zheng Zhang.
\newblock Teon: Tensorized orthonormalization beyond layer-wise muon for large language model pre-training, 2026.

\bibitem{zhang2024transformers}
Yushun Zhang, Congliang Chen, Tian Ding, Ziniu Li, Ruoyu Sun, and Zhiquan Luo.
\newblock Why transformers need adam: A hessian perspective.
\newblock {\em Advances in neural information processing systems}, 37:131786--131823, 2024.

\bibitem{zhao2022symmetry}
Bo~Zhao, Nima Dehmamy, Robin Walters, and Rose Yu.
\newblock Symmetry teleportation for accelerated optimization.
\newblock {\em Advances in neural information processing systems}, 35:16679--16690, 2022.

\bibitem{zhao2022symmetries}
Bo~Zhao, Iordan Ganev, Robin Walters, Rose Yu, and Nima Dehmamy.
\newblock Symmetries, flat minima, and the conserved quantities of gradient flow.
\newblock {\em arXiv preprint arXiv:2210.17216}, 2022.

\bibitem{zhao2023improving}
Bo~Zhao, Robert~M Gower, Robin Walters, and Rose Yu.
\newblock Improving convergence and generalization using parameter symmetries.
\newblock {\em arXiv preprint arXiv:2305.13404}, 2023.

\bibitem{zhao2025symmetry}
Bo~Zhao, Robin Walters, and Rose Yu.
\newblock Symmetry in neural network parameter spaces.
\newblock {\em arXiv preprint arXiv:2506.13018}, 2025.

\bibitem{zhao2024galore}
Jiawei Zhao, Zhenyu Zhang, Beidi Chen, Zhangyang Wang, Anima Anandkumar, and Yuandong Tian.
\newblock Galore: Memory-efficient llm training by gradient low-rank projection.
\newblock {\em arXiv preprint arXiv:2403.03507}, 2024.

\bibitem{zhou2020stochastic}
Dongruo Zhou, Pan Xu, and Quanquan Gu.
\newblock Stochastic nested variance reduction for nonconvex optimization.
\newblock {\em Journal of machine learning research}, 21(103):1--63, 2020.

\bibitem{zhu2024apollo}
Hanqing Zhu, Zhenyu Zhang, Wenyan Cong, Xi~Liu, Sem Park, Vikas Chandra, Bo~Long, David~Z Pan, Zhangyang Wang, and Jinwon Lee.
\newblock Apollo: Sgd-like memory, adamw-level performance.
\newblock {\em arXiv preprint arXiv:2412.05270}, 2024.

\bibitem{ziyin2025parameter}
Liu Ziyin, Yizhou Xu, Tomaso Poggio, and Isaac Chuang.
\newblock Parameter symmetry potentially unifies deep learning theory.
\newblock {\em arXiv preprint arXiv:2502.05300}, 2025.

\end{thebibliography}

\appendixtitle

\section{SOAP, SPlus, and Muon as Rotated Optimizers}\label{app: rotated_optimizers}

This appendix derives how SOAP \citep{vyas2024soap}, SPlus \citep{frans2025stable}, and Muon \citep{jordan2024muon,maes2024understanding,ahn2025dion}
reduce to the strict rotated-optimizer form
\begin{align*}
\Delta \mW_t \propto -\,\mR_t\, f_t(\mR_t^\top \mG_t;\text{state}_t),
\end{align*}
after two simplifying steps. First, we impose one-sided (left) rotations by setting the right rotation to $\mI_n$.
Second, we remove auxiliary accumulation designs to expose the core interface between a rotation $\mR_t$ and a base map $f_t$:
we replace the EMA buffers used to estimate rotations (e.g., $\text{EMA}[\mG_t\mG_t^\top]$ and $\text{EMA}[\mG_t^\top\mG_t]$)
and the first-moment momentum by instantaneous quantities. For SOAP, we still retain the Adam-style second-moment state
and make it explicit as part of $f_t$.

Throughout, for a PSD matrix $\mP=\mU\mD\mU^\top$, we write $\texttt{Eigenvectors}(\mP):=\mU$.

\subsection{SOAP \citep{vyas2024soap}}

SOAP is Adam-style elementwise normalization performed in an eigen-rotated coordinate system.
In a two-sided form, define the PSD statistics and their eigen-rotations by
\begin{align*}
\mC_{L,t} &= \beta_3\,\mC_{L,t-1} + (1-\beta_3)\,\mG_t\mG_t^\top, \qquad
\mQ_{L,t} = \texttt{Eigenvectors}(\mC_{L,t}),\\
\mC_{R,t} &= \beta_3\,\mC_{R,t-1} + (1-\beta_3)\,\mG_t^\top \mG_t, \qquad
\mQ_{R,t} = \texttt{Eigenvectors}(\mC_{R,t}),
\end{align*}
and rotate the gradient into this basis:
\begin{align*}
\mX_t := \mQ_{L,t}^\top \mG_t \mQ_{R,t}.
\end{align*}
SOAP maintains Adam-style moments in the rotated coordinates:
\begin{align*}
\mM_t &= \beta_1\,\mM_{t-1} + (1-\beta_1)\,\mX_t,\\
\mV_t^\top &= \beta_2\,\mV_{t-1} + (1-\beta_2)\,\mX_t \odot \mX_t,
\end{align*}
and applies the normalized step in the original coordinates by rotating back:
\begin{align*}
\Delta \mW_t \propto -\,\mQ_{L,t}\Big(\mM_t \oslash \sqrt{\mV_t^\top}\Big)\mQ_{R,t}^\top.
\end{align*}

Impose one-sided (left) rotation by setting $\mQ_{R,t}=\mI_n$ and define $\mR_t:=\mQ_{L,t}$.
Then $\mX_t=\mR_t^\top \mG_t$, and the update becomes
\begin{align*}
\Delta \mW_t \propto -\,\mR_t\Big(\mM_t \oslash \sqrt{\mV_t^\top}\Big),
\qquad
\mM_t = \beta_1\,\mM_{t-1} + (1-\beta_1)\,\mX_t,
\qquad
\mV_t^\top = \beta_2\,\mV_{t-1} + (1-\beta_2)\,\mX_t \odot \mX_t.
\end{align*}

Now replace the rotation-estimation buffers by instantaneous quantities and remove first-moment momentum,
\begin{align*}
\mC_{L,t} \leftarrow \mG_t\mG_t^\top,
\qquad
\mR_t \leftarrow \texttt{Eigenvectors}(\mG_t\mG_t^\top),
\qquad
\mM_t \leftarrow \mX_t,
\qquad
\mX_t := \mR_t^\top \mG_t,
\end{align*}
while retaining the second-moment state $\mV_t^\top$:
\begin{align*}
\mV_t^\top = \beta_2\,\mV_{t-1} + (1-\beta_2)\,\mX_t \odot \mX_t.
\end{align*}
Substituting $\mM_t=\mX_t$ into the one-sided update gives
\begin{align*}
\Delta \mW_t \propto -\,\mR_t\Big(\mX_t \oslash \sqrt{\mV_t^\top}\Big)
= -\,\mR_t\, f_t(\mX_t;\mV_t^\top),
\end{align*}
where:
\begin{align*}
f_t(\mX_t;\mV_t^\top) := \mX_t \oslash \sqrt{\mV_t^\top},
\qquad
\mV_t^\top = \beta_2\,\mV_{t-1} + (1-\beta_2)\,\mX_t \odot \mX_t,
\qquad
\mX_t=\mR_t^\top \mG_t.
\end{align*}
with eigen-rotations $\mR_t=\texttt{Eigenvectors}(\mG_t\mG_t^\top)$. This is our standard rotated optimization form given in \Cref{eq: intro}.

\subsection{SPlus \citep{frans2025stable}}

SPlus applies an elementwise sign map in an eigen-rotated coordinate system.
In a two-sided form, define the PSD statistics and eigen-rotations
\begin{align*}
\mC_{L,t} &= \beta_3\,\mC_{L,t-1} + (1-\beta_3)\,\mG_t\mG_t^\top, \qquad
\mQ_{L,t} = \texttt{Eigenvectors}(\mC_{L,t}),\\
\mC_{R,t} &= \beta_3\,\mC_{R,t-1} + (1-\beta_3)\,\mG_t^\top \mG_t, \qquad
\mQ_{R,t} = \texttt{Eigenvectors}(\mC_{R,t}),
\end{align*}
and maintain a momentum buffer
\begin{align*}
\mM_t = \beta_1\,\mM_{t-1} + (1-\beta_1)\,\mG_t.
\end{align*}
The two-sided SPlus update can then be written as
\begin{align*}
\Delta \mW_t \propto -\,\mQ_{L,t}\,\text{Sign}\!\big(\mQ_{L,t}^\top \mM_t \mQ_{R,t}\big)\,\mQ_{R,t}^\top.
\end{align*}

Impose one-sided (left) rotation by setting $\mQ_{R,t}=\mI_n$ and define $\mR_t:=\mQ_{L,t}$:
\begin{align*}
\Delta \mW_t \propto -\,\mR_t\,\text{Sign}(\mR_t^\top \mM_t).
\end{align*}

Remove auxiliary accumulations by replacing the rotation-estimation buffer and momentum by instantaneous quantities:
\begin{align*}
\mC_{L,t} \leftarrow \mG_t\mG_t^\top,
\qquad
\mR_t \leftarrow \texttt{Eigenvectors}(\mG_t\mG_t^\top),
\qquad
\mM_t \leftarrow \mG_t.
\end{align*}
Then $\mX_t:=\mR_t^\top \mG_t$ and
\begin{align*}
\Delta \mW_t \propto -\,\mR_t\,\text{Sign}(\mX_t)
= -\,\mR_t\, f(\mX_t),
\qquad
f(\mX):=\text{Sign}(\mX),
\qquad
\mX_t=\mR_t^\top \mG_t,
\end{align*}
which is the rotated-optimizer form (\Cref{eq: intro}) with $\mR_t=\texttt{Eigenvectors}(\mG_t\mG_t^\top)$.

\subsection{Muon/Spectral descent \citep{carlson2015preconditioned, jordan2024muon}}

The idealized update of Muon is given by:
\begin{align*}
\mM_t &= \beta\,\mM_{t-1} + (1-\beta)\,\mG_t,\\
\mM_t &= \mU_t \mS_t \mV_t^\top,\\
\Delta \mW_t &\propto -\,\mU_t\mV_t^\top.
\end{align*}

Setting rotation as $\mR_t:=\mU_t$ and removing gradient accumulation, we have 
\begin{align*}
\mG_t = \mU_t \mS_t \mV_t^\top, \quad \mX_t:= \mR_t^\top \mG_t = \mU_t^\top(\mU_t \mS_t \mV_t^\top)=\mS_t\mV_t^\top.
\end{align*}
Consider the row-normalized base optimizer projection
\begin{align*}
f_{\text{RN}}(\mX) := \sqrt{n}\,Q(\mX)^{-1}\mX,
\qquad
Q(\mX) := \mathrm{Diag}\!\big(\|\mX_{1,:}\|_2,\ldots,\|\mX_{m,:}\|_2\big).
\end{align*}
For $\mX_t=\mS_t\mV_t^\top$, row $i$ equals $\sigma_i(\mV_{:,i}^{\text{svd}})^\top$ and has norm $\sigma_i$, hence
\begin{align*}
Q(\mX_t)=\mathrm{Diag}(\sigma_1,\ldots,\sigma_m),
\qquad
Q(\mX_t)^{-1}\mX_t = \mV_t^\top.
\end{align*}
Therefore
\begin{align*}
\mR_t\, f_{\text{RN}}(\mX_t)
= \mU_t \cdot \sqrt{n}\,\mV_t^\top
\propto \mU_t\mV_t^\top,
\end{align*}
and Muon is now equivalent to rotated-optimizer (\Cref{eq: intro}) with $\mR_t=\mU_t$ and $f_t=f_{\text{RN}}$.

\section{Experiment Details}

\subsection{Sigma-MoE setup}
\label{subapp: sigma setup}
\subsubsection{Model architecture}
Sigma is an MoE model based on the DeepSeek V3. \Cref{tab:Sigma config} presents the detailed model setup. 
\begin{table}[h]
    \centering
    \caption{Hyperparameters of the Sigma MoE Architecture}
    \label{tab:Sigma config}
    \begin{tabular}{lr}
        \toprule
        \textbf{Hyperparameter} & \textbf{Value} \\
        \midrule
        \multicolumn{2}{c}{\textit{General Architecture}} \\
        \midrule
        Vocabulary Size & 128,256 \\
        Hidden Size ($d_{model}$) & 1,024 \\
        Number of Layers & 22 \\
        Max Position Embeddings & 2,048 \\
        Hidden Activation & SiLU \\
        RMS Norm $\epsilon$ & $1 \times 10^{-6}$ \\
        \midrule
        \multicolumn{2}{c}{\textit{Attention Mechanism}} \\
        \midrule
        Attention Heads & 16 \\
        Key/Value Heads & 16 \\
        Head Dimension & 64 \\
        RoPE $\theta$ & 10,000 \\
        \midrule
        \multicolumn{2}{c}{\textit{Mixture-of-Experts (MoE)}} \\
        \midrule
        Total Routed Experts & 46 \\
        Shared Experts & 1 \\
        Experts per Token & 4 \\
        MoE Intermediate Size & 512 \\
        Scoring Function & Softmax \\
        Auxiliary Loss $\alpha$ & 0.001 \\
        \bottomrule
    \end{tabular}
\end{table}

\subsubsection{Experiment setup}
\paragraph{Hyperparameters} We follow the guideline (\cref{exp: guidelines}) and perform an end-to-end search of learning rates for AdamW among $[2.5E-4, 5E-4, 7.5E-4, 1E-3]$. $5E-4$ is chosen for its best performance. We also select $\beta_1=0.9$, $\beta_2=0.95$. For all \name{} families, Eigen families, as well as Muon/Dion \arosink{} we follow \cref{sec:hyperparam} to transfer the optimal AdamW learning rate. We use $\beta_1=0.95$ (following standard optimal configurations for Muon) for \name{} families, Eigen families, as well as Muon/Dion. We use $5$ steps of Sinkhorn, and Cholesky QR (\cref{sec:chol_qr}) with $\epsilon=1E-7$ for \arosink{}. For hybrid setup, AdamW will also be used to update 1D parameters and the last layer of the network, where we reuse the tuned AdamW hyperparameters. For all methods, the cosine scheduler will decrease the learning rate to the $10\%$ of its original value.  We use $0.1$ for weight decays. 

\paragraph{Training setup}
A100 NVIDIA GPUs with DDP are used to train Sigma-MoE. We use a sequence length of $2$K and a total batch size per step of $4M$ tokens. We also enable gradient checkpointing for memory efficiency. A cosine scheduler with $600$ linear warm-up steps is chosen.

\paragraph{Optimizer Setup}
Apart from \name{} (Full Model), we assume the hybrid setup for all optimizers. For \name{}, we use SCQR (\cref{sec:chol_qr}) as the main solver for the rotation for efficiency. For Eigen-Sinkhorn and Dion, we rely on the default Pytorch QR solver. 

\subsection{Sigma-MoE additional analysis}
\label{subapp: Sigma additional results}
\Cref{fig:sigma training curve} shows late-stage training loss trajectories for different optimizers. The curves separate cleanly into three groups: (i) unrotated baselines, (ii) eigen-rotation methods (eigen-rotated updates paired with different base rules), and (iii) the ARO family. This mirrors the qualitative pattern reported in \Cref{subsec: universal effective} (Finding~1), and shows that the same grouping persists at larger scale. Notably, the ARO rotation used here is not derived from any standard norm or metric (e.g., spectral-norm-related constructions). Its strong performance therefore provides indirect evidence that the gains of recent matrix optimizers are not solely explained by a particular prescribed norm. Instead, these results are consistent with the view that much of the benefit comes from the rotation mechanism itself, and that spectral norm-derived methods may work partly because they implicitly induce a favorable rotation. Overall, the Sigma-MoE results reinforce Finding~2 in \Cref{subsec: universal effective} under scaling.

Moreover, \Cref{fig: sigma training curve at early stage} shows the early-stage loss trajectories for different optimizers. During the first $3$B tokens, the \name{} family underperforms AdamW and orthogonalization-based methods, often by a substantial margin. This indicate that the advantages of \name{} emerge over long training horizons and may be missed by short-horizon evaluations that focus primarily on early-stage behavior.

\begin{figure}
    \centering
    \begin{minipage}[t]{0.3\textwidth}
    \includegraphics[width=1\linewidth]{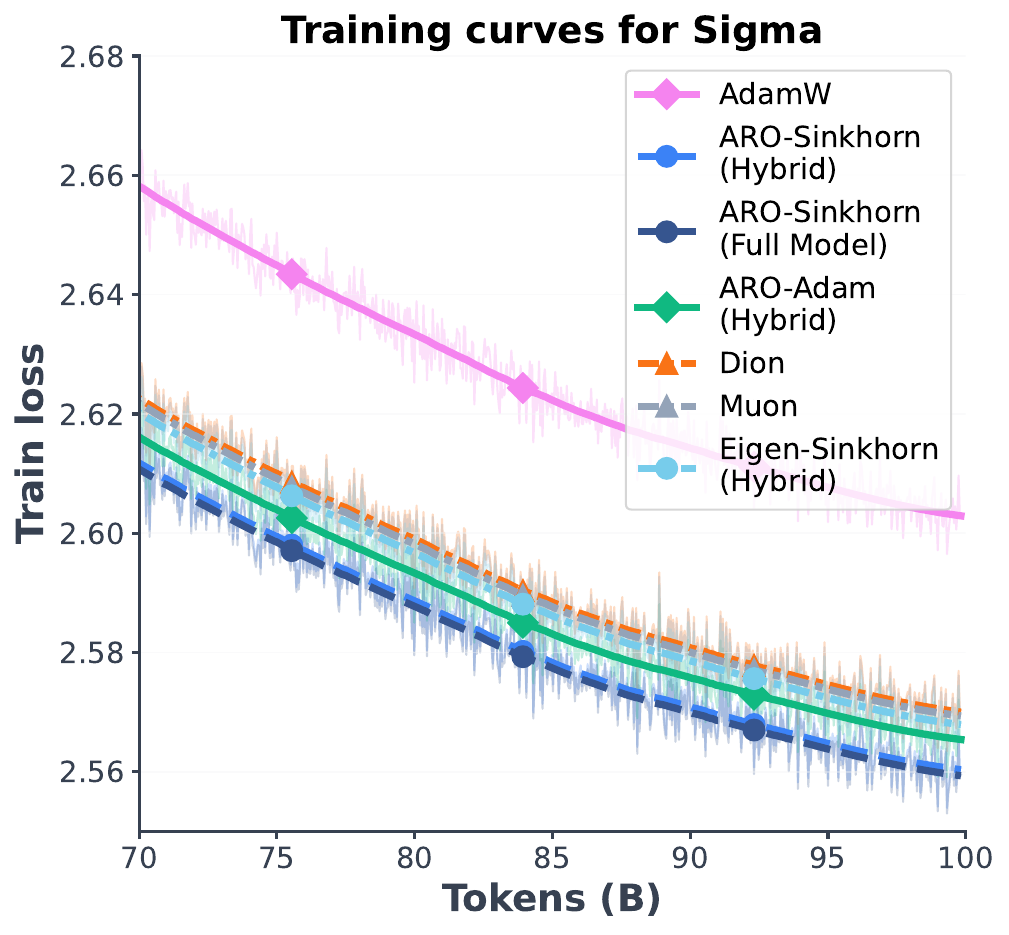}
    \caption{(Sigma-MoE-2B) The training curve of different optimizers (smoothed and actual) }
    \label{fig:sigma training curve}
    \end{minipage}\hfill
    \begin{minipage}[t]{0.3\textwidth}
        \includegraphics[width=1\linewidth]{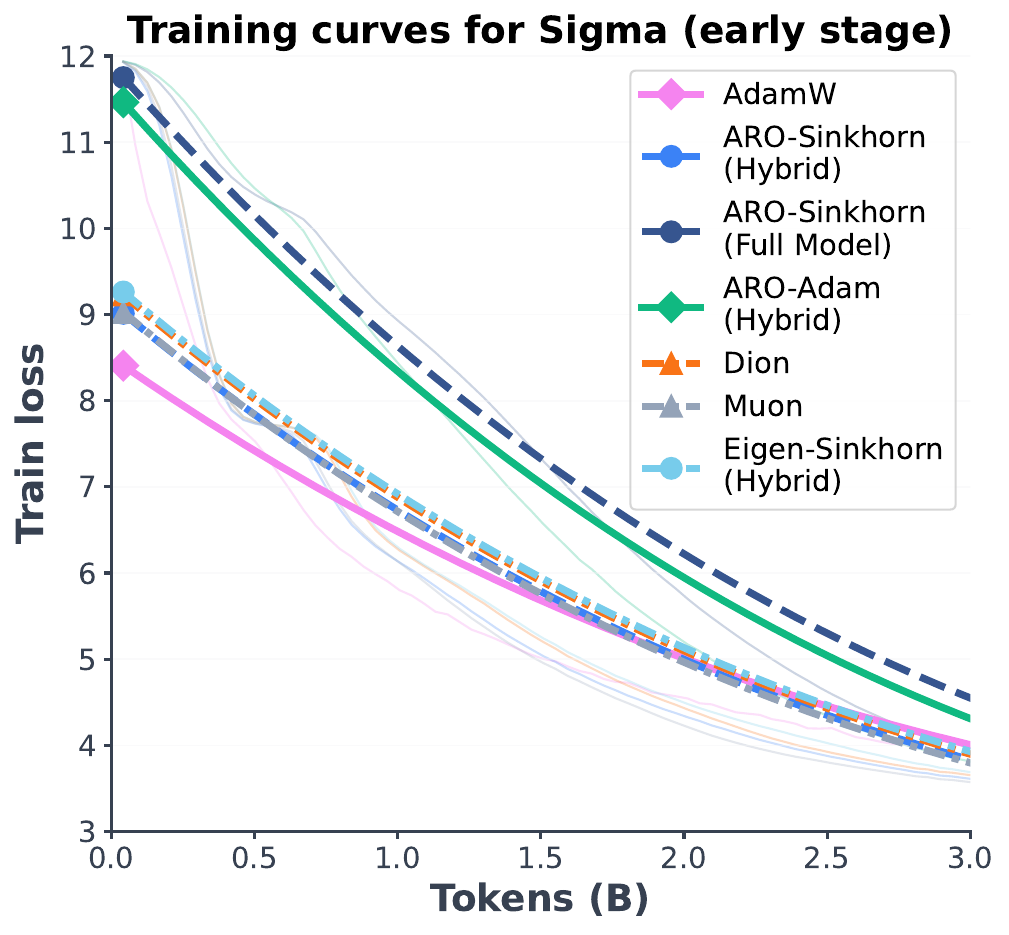}
        \caption{(Sigma-MoE-2B) The training curves at early stage (smoothed and actual).}
        \label{fig: sigma training curve at early stage}
    \end{minipage}\hfill
    \begin{minipage}[t]{0.3\textwidth}
        \includegraphics[width=1\linewidth]{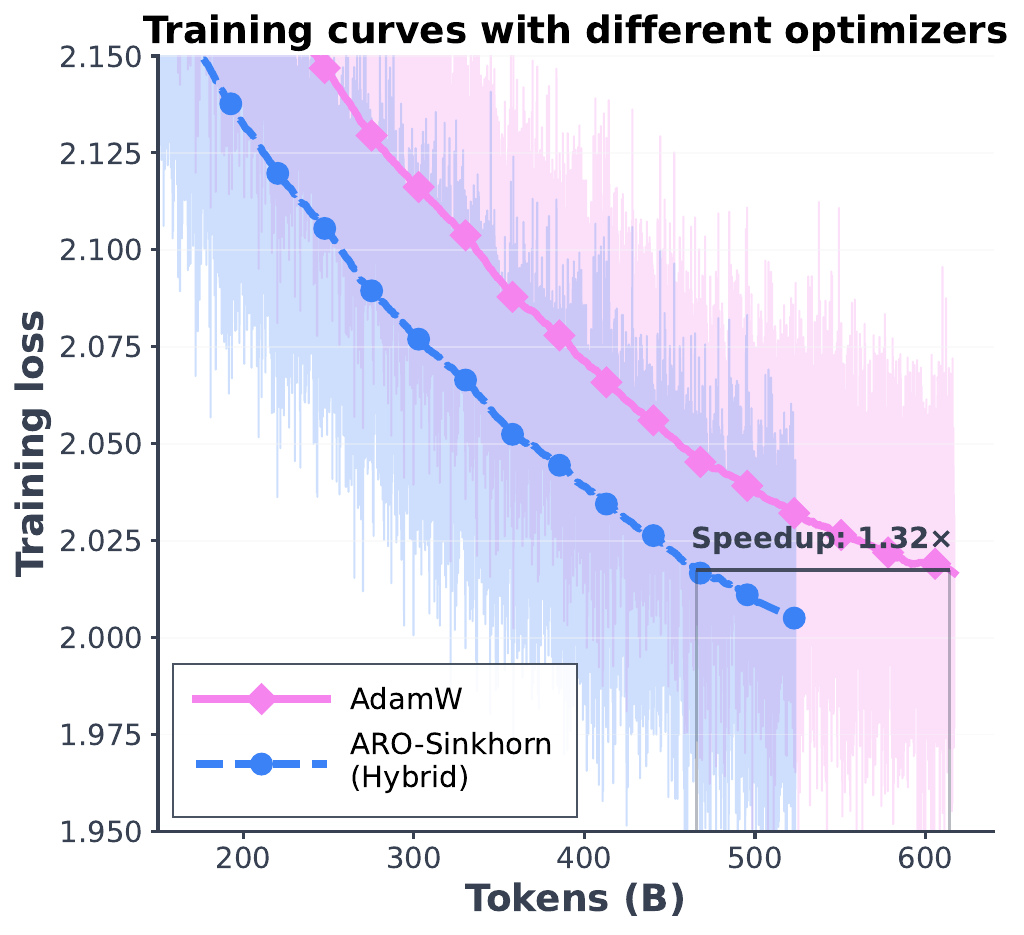}
        \caption{(Qwen 8B) The training curve of \name{} and AdamW (smoothed and actual)}
        \label{fig: Qwen training curve}
    \end{minipage}
\end{figure}
\subsection{Qwen3 8B setup}
\label{subapp: Qwen3 8B setup}
\paragraph{Architecture}
We use a Qwen3 8B architecture, with total parameter count at around 8.2 billion.  See \Cref{tab: qwen3_run_arch}.

\begin{table}[h]
    \centering
    \caption{Hyperparameters of the Qwen3 Architecture}
    \label{tab: qwen3_run_arch}
    \begin{tabular}{lr}
        \toprule
        \textbf{Hyperparameter} & \textbf{Value} \\
        \midrule
        \multicolumn{2}{c}{\textit{General Architecture}} \\
        \midrule
        Vocabulary Size & 151{,}936 \\
        Hidden Size  & 4{,}096 \\
        FFN Intermediate Size & 12{,}288 \\
        Number of Layers & 36 \\
        Max Position Embeddings & 4{,}096 \\
        Hidden Activation & SiLU \\
        RMS Norm $\epsilon$ & $1 \times 10^{-6}$ \\
        \midrule
        \multicolumn{2}{c}{\textit{Attention Mechanism}} \\
        \midrule
        Attention Heads & 20 \\
        Key/Value Heads & 20 \\
        Head Dimension & 128 \\
        RoPE $\theta$ & 1{,}000{,}000 \\
        \bottomrule
    \end{tabular}
\end{table}

\paragraph{Hyperparameters and optimizer setup}
Since end-to-end learning rate tuning for AdamW is not feasible on 8B + overtrain setups, we follow the 8B \citep{grattafiori2024llama} and use learning rate $3\times 10^{-4}$ for AdamW. For \name{} and Muon, we follow the same procedure to transfer AdamW learning rates \cref{sec:hyperparam}. For AdamW, we use $\beta_1 = 0.9$, $\beta_2 = 0.95$, and $\beta_1=0.95$ for Muon and \arosink{}.  For \arosink{}, we use hybrid setup with SCQR solver.  We use $5$ Sinkhorn steps. We also update ARO's rotation matrix at every step using shifted Cholesky QR (\cref{sec:chol_qr}) with $\epsilon=1E-7$. We use $0.1$ for decoupled weight decay across all methods.

\paragraph{Training setup}
We leverage B200 NVIDIA GPUs with a sequence length of $4K$, and a total token batch size per step of $14M$ tokens. We do not use gradient check-pointing.  The model is trained with $46000$ steps.  We also adopt the cosine scheduler with $400$ linear warm-up steps ($0.8\%$ of total training steps, comparable to $0.7\%$ used in \citep{grattafiori2024llama}); and the learning rate is decreased to $10\%$ in the end of training.

\subsection{Qwen3 8B additional results}
\label{subapp: Qwen3 8B additional results}
\Cref{fig: Qwen training curve} plots the training curve of \arosink{} and AdamW. The speedup factor are computed based on the smoothed loss through rolling averages with a smooth window $300$. At the end stage, \arosink{} can achieves a speedup of $1.32\times$, which corresponds well with the speedup from validation curve.

\section{Rotation as a stability-speed tradeoff}
\label{app: role of rotation}

\subsection{Setups and assumptions}
\label{subapp: denoising setups and assumptions}

In addition to the setups introduced in \cref{subsubsec: discussion denoising: setups}, we require the following assumptions for the theoretical analysis.

\paragraph{Noise distribution and geometry}
Since we are in a stochastic setting, we first introduce an assumption on the geometry of mini-batch noise distribution.

\begin{assumption}[Noise distribution]
We assume the noise $\noise\in\R^{m\times n}$ follows a zero-mean multivariate Gaussian distribution with shared column covariance $\cov$. Furthermore, we assume $\cov$ and $\mEG\mEG^T$ share the same eigenspace, i.e., $\EVD(\cov) = \EVD(\mEG\mEG^T)$.
\label{assumpt: noise geometry}
\end{assumption}

\begin{remarks}
\begin{remark}[Noise geometry justification]
    For the noise geometry, \cite{mori2022power, wang2023theoretical} show that the mean of mini-batch noise under uniform sampling is $0$ and its covariance can be closely approximated by $c\mF$, where $\mF$ is the empirical Fisher matrix and $c$ is a constant. Although $\mF$ should be computed by flattening all parameters, \cite{gong2025towards} shows that $\mEG\mEG^T$ is the optimal approximation to empirical Fisher matrix under the identical block-diagonal structure assumption and the Frobenius norm. This is also the strcutral assumption behind the Muon optimizer \citep{gong2025towards}. In \cref{assumpt: noise geometry}, we do not require such a strong claim; instead, we assume the noise covariance and $\mEG\mEG^T$ share the same eigenspace but can differ in eigenvalues.
\end{remark}
\end{remarks}

\paragraph{Low signal-to-noise ratio (SNR) regime}
We also define a low \emph{signal-to-noise ratio} (SNR) scenario, where noise strength dominates over the gradient signal. Depending on the data and experimental setup, this can occur at mid or late stages of training. Later in \cref{subapp: rotation optimal variance reduction}, we show through a synthetic experiment, that the following analysis may not be sensitive to this assumption. We leave the quantitative analysis to future work. 

\begin{assumption}[Noise dominated regime (low SNR)]
    Under the low-SNR regime, we assume that for a gradient matrix $\mEG$, noise covariance $\cov$, and rotation matrix $\mR$, the following approximation holds:
    \begin{align}
        \frac{(\mR^T\mEG_i)_j}{\sqrt{2(\mR^T\cov\mR)_{jj}}}\approx 0,
    \end{align}
    where $\mEG_i$ is the $i^{\text{th}}$ column of $\mEG$. Furthermore, for any differentiable function $h(x)$, when $x\approx 0$, we ignore terms beyond first order in the Taylor expansion.
    \label{assump: low SNR}
\end{assumption}

\begin{remarks}
    \begin{remark}[SNR interpretation]
        Since the mean of mini-batch noise is $0$ under uniform sampling, its strength is characterized by the covariance $\cov$. The matrix $\mR^T\cov\mR$ is the column covariance of the rotated noise $\mR^T\noise_i$. Therefore, the above ratio compares (elementwise) the strength of the rotated gradient to the marginal variance of the rotated noise.
    \end{remark}
\end{remarks}

\paragraph{Relative eigenvalue ordering}
From \cref{assumpt: noise geometry}, the eigenvalues of $\cov$ and $\mEG\mEG^T$ may differ. It is still useful to describe their relative ordering via the following notion.

\begin{definition}[Ratio Co-Monotonic (RCM)]
For two positive sequences $\lambda_1, \ldots, \lambda_m$ and $D_1, \ldots, D_m$, we say they satisfy RCM if $D_i>D_j$ implies $\frac{\lambda_i}{D_i}> \frac{\lambda_j}{D_j}$.
    \label{def: ratio co-monotonic}
\end{definition}
A direct consequence of RCM is that when $D_i>D_j$, it implies that $\lambda_i > \lambda_j$.

\subsection{Failure of Polar projection scheme}
\label{subapp: failure of polar}

\paragraph{Monotonic non-decreasing property (noise-free objective)}
It is straightforward to show that the alternating projection scheme has a monotonic non-decreasing property for the corresponding (noise-free) objective. First, the global maximizer for step (2) is
\begin{align*}
    \mR^* \;=\; \argmax_{\mR_t\in\mathcal{O}} \tr(\mNG_t\mA_{t-1}\mR_t^\top)
    \;=\; \polar(\mNG_t\mA_{t-1}).
\end{align*}
This guarantees that $\mR^*$ will not decrease the objective $\tr(\mNG_t f(\mNG_t^\top\mR_{t-1})\mR^{*\top})$. Next, we can rewrite the objective as $\tr(\mR^{*\top}\mNG_t f(\mNG_t^\top\mR_{t-1}))$. Since $f$ is defined as the dual norm solution, the optimal $f$ by definition is
\begin{align*}
    f(\mR^{*\top}\mNG_t)
    \;=\;
    \argmax_{\|\mZ_t\|\leq 1} \left\langle \mR^{*\top}\mNG_t,\; \mZ_t \right\rangle,
\end{align*}
which is exactly equivalent to substituting $\mR^*$ into $f$. Therefore, step (1) also guarantees a non-decreasing objective. Overall, multiple steps of this projection scheme will not decrease the objective we aim to maximize.

\paragraph{Empirical objective and alignment score}
Under the stochastic setup, the empirical objective $\tilde{\mathcal{J}}$ is no longer equivalent to the alignment score $\alignscore$. However, as mentioned in the beginning of \cref{subsec: discussion denoising}, $\alignscore$ controls the alignment w.r.t true descent direction $\mEG$ and training stability. This mismatch results in a crucial consequence that maximizing $\tilde{\mathcal{J}}$ may not lead to a desirable behavior. Intuitively, maximizing $\tilde{\mathcal{J}}$ pushes the update $\Delta \mW$ to align with the target $\mM$ or $\mNG$. When $\mM$ or $\mNG$ is misaligned with the ground-truth $\mEG$, the resulting update can cause negative $\alignscore$, which harms training.


To formalize this behavior, we first define \emph{misalignment}.

\begin{definition}[Momentum misalignment]
    At training time $t$, we say momentum misaligns with the full-batch gradient $\mEG$ when $\tr(\mEG^T\mM)<0$. Otherwise, we say momentum aligns with $\mEG$.
    \label{def: momentum misalignment}
\end{definition}

Next, we define the angle between $\mEG$ and $\mM$ as $\theta_{GM}$. Using this angle, we can define a \emph{danger region} boundary: when $\theta_{GM}$ is larger than the boundary angle, $\mM$ lies inside the danger region and the corresponding $\Delta \mW$ produced by the polar solution is guaranteed to be misaligned with $\mEG$. This behavior is described by the following theorem.

\begin{theorem}
    Assume the parameter update takes the form $\Delta \mW \propto \mR f(\mR^T\mM)$ (\cref{eq: ARO update}), and define
    \begin{align*}
        \cos(\theta_{GM})
        \;=\;
        \frac{\tr(\mEG^T\mM)}{\|\mEG\|_F\|\mM\|_F}
    \end{align*}
    and
    \begin{align*}
        k
        \;=\;
        \frac{\tr(\mM f(\mR^T\mM)^T\mR^T)}{\|\mM\|_F\|f(\mR^T\mM)^T\|_F}.
    \end{align*}
    Then $\Delta \mW$ is guaranteed to misalign with $\mEG$ when $\cos(\theta_{GM})<-\frac{1}{\sqrt{1+k^2}}$ and momentum misaligns with $\mEG$.
    \label{thm: failure of polar}
\end{theorem}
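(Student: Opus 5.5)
The plan is to reduce the statement to elementary spherical geometry in $\R^{m\times n}$ equipped with the Frobenius inner product $\langle \mA,\mB\rangle=\tr(\mA^\top\mB)$. I will treat three unit directions: $\hat g=\mEG/\|\mEG\|_F$, $\hat m=\mM/\|\mM\|_F$, and $\hat u=\mU/\|\mU\|_F$ with $\mU:=\mR f(\mR^\top\mM)$. The update satisfies $\Delta\mW\propto\mU$, so ``$\Delta\mW$ misaligns with $\mEG$'' means $\langle \mEG,\mU\rangle<0$.

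First I will identify $k$ as the cosine of the angle between $\hat m$ and $\hat u$. Since $\mR$ is orthogonal, $\|f(\mR^\top\mM)\|_F=\|\mR f(\mR^\top\mM)\|_F$. Moreover $\tr(\mM f(\mR^\top\mM)^\top\mR^\top)=\langle \mM,\mU\rangle$, so $k=\cos\theta_{MU}$.

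Next I will check that $k\ge 0$, so that $\theta_{MU}\in[0,\pi/2]$. By the defining property of $f$ as the dual-norm subgradient (\Cref{eq: f_dual}) and the identity \Cref{eq: J_is_dual},
\begin{align*}
\langle \mM,\mU\rangle=\langle \mR^\top\mM,f(\mR^\top\mM)\rangle=\|\mR^\top\mM\|_*\ge 0 .
\end{align*}
The hypothesis that momentum misaligns gives $\theta_{GM}\in(\pi/2,\pi]$.

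The key step is the triangle inequality for geodesic angles on the unit sphere, $\theta_{GU}\ge \theta_{GM}-\theta_{MU}$. This holds because the angle between unit vectors is a metric (the great-circle distance). It therefore suffices to show $\theta_{GM}-\theta_{MU}>\pi/2$. Since $\theta_{GU}\le\pi$, that gives $\theta_{GU}\in(\pi/2,\pi]$ and hence $\cos\theta_{GU}<0$. Because $\theta_{MU}\in[0,\pi/2]$, the condition $\theta_{GM}>\pi/2+\theta_{MU}$ is equivalent to
\begin{align*}
\cos\theta_{GM}<\cos(\pi/2+\theta_{MU})=-\sin\theta_{MU}=-\sqrt{1-k^2}.
\end{align*}

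Finally I compare this with the stated threshold. Since $(1+k^2)(1-k^2)=1-k^4\le 1$, we have $\frac{1}{\sqrt{1+k^2}}\ge\sqrt{1-k^2}$. So the hypothesis $\cos\theta_{GM}<-\frac{1}{\sqrt{1+k^2}}$ implies the needed bound $\cos\theta_{GM}<-\sqrt{1-k^2}$, and the theorem follows. The stated condition is thus sufficient but not sharp.

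I expect the main obstacle to be mostly bookkeeping. The sign of $k$ must genuinely rely on $f$ being the dual-norm map; otherwise $\theta_{MU}$ could exceed $\pi/2$ and the cosine-to-angle conversion breaks. The degenerate cases $\mM=0$ or $f(\mR^\top\mM)=0$, where $k$ is undefined, should be excluded. If one prefers to avoid the spherical triangle inequality, the fallback is a direct argument: decompose $\hat u=k\hat m+\sqrt{1-k^2}\,\hat v$ with $\hat v\perp\hat m$, and bound $\langle\hat g,\hat v\rangle\le\sqrt{1-\cos^2\theta_{GM}}$ by Cauchy--Schwarz. This gives $\cos\theta_{GU}\le k\cos\theta_{GM}+\sqrt{1-k^2}\sin\theta_{GM}$, which is negative under the same condition.
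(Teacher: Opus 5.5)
Your proof is correct, and it takes a different (and sharper) route than the paper. The paper decomposes $\mEG=\alpha\mM+\mEG_{\perp}$ with $\mEG_\perp\perp\mM$ and writes $\langle\mEG,\mV\rangle=\alpha\langle\mM,\mV\rangle+\langle\mEG_\perp,\mV\rangle$, where $\mV=\mR f(\mR^\top\mM)$ is your $\mU$. It then bounds the second term by the full Cauchy--Schwarz product $\|\mEG_\perp\|_F\|\mV\|_F$. This gives $\langle\mEG,\mV\rangle\le\|\mEG\|_F\|\mV\|_F\,(kc+\sqrt{1-c^2})$ with $c=\cos\theta_{GM}$. For $c<0\le k$, negativity of $kc+\sqrt{1-c^2}$ is exactly the condition $c<-1/\sqrt{1+k^2}$.

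The constant in the statement is therefore an artifact of that bound. It ignores that only the part of $\mV$ orthogonal to $\mM$, whose norm is $\sqrt{1-k^2}\,\|\mV\|_F$, can pair with $\mEG_\perp$. Your fallback keeps this factor and gets $\cos\theta_{GU}\le kc+\sqrt{1-k^2}\sqrt{1-c^2}=\cos(\theta_{GM}-\theta_{MU})$. Your main argument reaches the same sharp sufficient condition, $\theta_{GM}>\pi/2+\theta_{MU}$ (i.e.\ $c<-\sqrt{1-k^2}$), in one line from the triangle inequality for the angle metric on the sphere. You then recover the stated result via $\sqrt{1-k^2}\le 1/\sqrt{1+k^2}$.

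The paper's route has the merit of producing the stated threshold directly by elementary algebra. Yours proves a strictly stronger statement, and it is best possible among conditions depending only on $c$ and $k$: if it fails, place the update direction in the plane of $\mEG$ and $\mM$, at angle $\theta_{MU}$ from $\mM$ toward $\mEG$, and it is not misaligned. It also still supports the paper's interpretation, since $-\sqrt{1-k^2}$ increases with $k$ and reaches $0$ at $k=1$.

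Both arguments need $k\ge 0$ from the dual-norm property of $f$. In yours, that is what guarantees $\pi/2+\theta_{MU}\le\pi$, so the angle condition converts to the cosine condition.
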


This theorem implies that when maximizing the empirical objective (i.e., the numerator of $k$), the value of $k$ increases, which decreases $\frac{1}{\sqrt{1+k^2}}$. As a result, the condition $\cos(\theta_{GM})<-\frac{1}{\sqrt{1+k^2}}$ becomes easier to satisfy, meaning it is easier for $\mM$ to fall into the danger region and for $\Delta \mW$ to become guaranteed misaligned, leading to instantaneous loss increase.
Note that this theorem characterizes a \emph{worst-case} scenario: it does not imply that if $\theta_{GM}$ is outside the danger region then $\Delta \mW$ must align with $\mEG$. In general, a larger danger region makes misalignment more likely. The same argument also holds when replacing $\mM$ by $\mNG$. A detailed proof is provided in \cref{subapp: proof of failure of polar}.

\subsection{Rotation as an optimal $\var(\alignscore)$ and alignment magnitude minimizer}
\label{subapp: rotation optimal variance reduction}

The alignment score $\alignscore$ measures how $\Delta \mW$ aligns with the ground truth $\mEG$:
\begin{align}
    \alignscore
    \;=\;
    \left\langle\mEG, \mR f(\mR^T\mNG)\right\rangle.
    \label{eq: alignment score with noise G}
\end{align}
This metric determines whether the current update increases or decreases the loss under an infinitesimal learning rate. Therefore, $\var(\alignscore)$ controls the stability of the training. Empirically (see \cref{subsubsec: discussion denoise: rotation as variancec reduction}), Eigen rotation significantly reduces the $\var(\alignscore)$, which correlates strongly with improved performance. Below we validate this stability improvement mechanism theoretically under the stochastic setup.

\begin{theorem}[Optimal $\var(\alignscore)$ reduction]
Under \cref{assump: low SNR} (low-SNR), \cref{assumpt: noise geometry} and $f=\sign$, let $\bm{\lambda}=[\lambda_1, \ldots, \lambda_m]$ and $\mD=[D_1,\ldots, D_m]$ be the positive, non-repeated, descending-ordered eigenvalues of $\mEG\mEG^T$ and $\cov$, respectively. We further assume the column noises are mutually independent. Define the variance of alignment score as
\begin{align*}
    \var(\alignscore) = \E[\alignscore^2] - \E[\alignscore]^2.
\end{align*}
Then,
\begin{align}
    \mR^* \;=\; \EVD(\mEG\mEG^T)
    \label{eq: optimal align score variance reduction rotation}
\end{align}
is a local minimizer of the above variance when $\bm{\lambda}$ and $\mD$ satisfy RCM.
\label{thm: optimal align score variance reduction}
\end{theorem}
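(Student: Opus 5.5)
The plan is to reduce $\var(\alignscore)$ to an explicit function of $\mR$ using Gaussian sign identities, linearized under \cref{assump: low SNR}. We then check first- and second-order optimality at $\mR^*=\EVD(\mEG\mEG^T)$ along geodesics $\mR(s)=\mR^*\exp(s\mX)$, with $\mX$ skew-symmetric.

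\textbf{Closed form.} Write $\mA=\mR^T\mEG$ with columns $a_i=\mR^T\mEG_i$, and let $\mC=\mR^T\cov\mR$ with diagonal entries $\sigma_j^2=\mC_{jj}$ and correlations $\rho_{jk}=\mC_{jk}/(\sigma_j\sigma_k)$. Then
\[
\alignscore=\sum_{i,j}(a_i)_j\,s_{ij},\qquad s_{ij}=\sign\bigl((a_i)_j+(\mR^T\noise_i)_j\bigr).
\]
\begin{itemize}
\item Since $\mR^T\noise_i\sim\mathcal N(0,\mC)$, we have $\E s_{ij}=\erf\bigl((a_i)_j/(\sqrt2\sigma_j)\bigr)\approx \sqrt{2/\pi}\,(a_i)_j/\sigma_j$ to first order.
\item For the pairwise moments we use the orthant (Sheppard) formula $\E[s_{ij}s_{ik}]\approx \tfrac{2}{\pi}\arcsin\rho_{jk}$. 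At low SNR the mean shifts only contribute beyond the order we keep.
\item Columns are independent by assumption, so the variance splits over $i$.
\end{itemize}
This gives, to the retained order,
\[
\var(\alignscore)\approx \sum_{j}(\mA\mA^T)_{jj}\Bigl(1-\tfrac{2}{\pi}\tfrac{1}{\sigma_j^2}\Bigr)+\tfrac{2}{\pi}\sum_{j\neq k}(\mA\mA^T)_{jk}\arcsin\rho_{jk}.
\]
Here $\sum_j(\mA\mA^T)_{jj}=\|\mEG\|_F^2$ does not depend on $\mR$. The $\mR$-dependence therefore sits in two places: the term $-\tfrac{2}{\pi}\sum_j(\mR^T\mEG\mEG^T\mR)_{jj}/(\mR^T\cov\mR)_{jj}$, and the off-diagonal coupling term.

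\textbf{Stationarity.} By \cref{assumpt: noise geometry}, at $\mR^*$ both $\mA\mA^T=\mathrm{diag}(\bm\lambda)$ and $\mC=\mathrm{diag}(\mD)$ are diagonal, so every $\rho_{jk}=0$. Along the geodesic, $(\mA\mA^T)_{jk}$ and $\rho_{jk}$ are both $O(s)$, so the coupling term is $O(s^2)$. The diagonal quantities $(\mR^T\mEG\mEG^T\mR)_{jj}$ and $(\mR^T\cov\mR)_{jj}$ have vanishing first derivatives, because the first-order change of a diagonal matrix under conjugation by $\exp(s\mX)$ is $[\mathrm{diag},\mX]$, which has zero diagonal. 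Hence the first variation vanishes and $\mR^*$ is a critical point.

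\textbf{Second order.} For each pair $j<k$ we collect the $s^2X_{jk}^2$ coefficient.
\begin{itemize}
\item Coupling term: $(\mA\mA^T)_{jk}\approx s X_{jk}(\lambda_k-\lambda_j)$ (up to a sign convention) and $\rho_{jk}\approx sX_{jk}(D_k-D_j)/\sqrt{D_jD_k}$. Their product is proportional to $(\lambda_j-\lambda_k)(D_j-D_k)/\sqrt{D_jD_k}$, which is positive because both sequences are descending.
\item Ratio term: $\sum_j \lambda_j(s)/D_j(s)$ with $\lambda_j(s)\approx\lambda_j+s^2X_{jk}^2(\lambda_k-\lambda_j)$ and $D_j(s)\approx D_j+s^2X_{jk}^2(D_k-D_j)$. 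Its second-order coefficient is a combination like $(\lambda_k-\lambda_j)(1/D_j-1/D_k)+(D_k-D_j)(\lambda_j/D_j^2-\lambda_k/D_k^2)$, entering with a minus sign.
\end{itemize}
RCM should make the net coefficient nonnegative: for $D_j>D_k$ it gives $\lambda_j/D_j>\lambda_k/D_k$, which controls the sign of the second bracket. Since $\bm\lambda$ and $\mD$ are non-repeated, the Hessian is then positive definite on the skew directions, which yields local minimality.

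\textbf{Main obstacle.} I expect the second-order bookkeeping to be the hardest step. It requires deciding exactly which terms low-SNR lets us drop, in particular whether mean-squared terms of order $\lambda/D$ are kept alongside the $\arcsin$ coupling. The critical check is that RCM, not mere co-monotonicity, is precisely what fixes the sign of the combined coefficient. I would first verify the inequality in the $m=2$ case and then argue that the general Hessian decouples into independent $2\times2$ blocks, one per pair $(j,k)$.
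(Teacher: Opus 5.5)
Your coupling computation is correct and is the paper's mechanism. The ``ratio term'' you keep, however, breaks the argument.

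\textbf{The ratio term is mis-derived.} The diagonal correction is $\sum_{i,j}(a_i)_j^2(\E s_{ij})^2\approx\frac{2}{\pi}\sum_{i,j}(a_i)_j^4/\sigma_j^2$. This is quartic in the entries of $\mR^T\mEG$ and is not a function of $\mR^T\mEG\mEG^T\mR$ and $\mR^T\cov\mR$ alone. Your $(\mA\mA^T)_{jj}\bigl(1-\frac{2}{\pi}\sigma_j^{-2}\bigr)$ is not even homogeneous: under $(\mEG,\noise)\mapsto(c\mEG,c\noise)$ the signs are unchanged, so $\var(\alignscore)$ scales by $c^2$, while $(\mA\mA^T)_{jj}/\sigma_j^2$ is invariant.

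The term is also of the same order in $x_{ij}=(a_i)_j/\sigma_j$ as corrections you discard. To first order in $\rho_{jk}$, the off-diagonal covariances are $\Cov(s_{ij},s_{ik})\approx\frac{2}{\pi}\rho_{jk}e^{-(x_{ij}^2+x_{ik}^2)/2}$. Once such column-dependent factors are kept, $\sum_i(a_i)_j(a_i)_k$ no longer factors out. Terms like $\rho_{jk}\sum_i(a_i)_j^3(a_i)_k/\sigma_j^2$ are then already first order along the geodesic. At that order $\mR^*$ need not even be stationary, and the answer is no longer determined by $\bm\lambda,\mD$.

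The low-SNR assumption is what licenses dropping \emph{all} mean effects, and that is what the paper does. Every covariance is replaced by its zero-mean value $\frac{2}{\pi}\arcsin\rho_{jk}$, including $j=k$ where it equals $1$. This gives $\var(\alignscore)\approx\frac{2}{\pi}\tr\bigl(\arcsin(\Corr(\mR^T\cov\mR))\,\mR^T\mEG\mEG^T\mR\bigr)$, with $\arcsin$ applied entrywise. That is the constant $\tr(\bm\lambda)$ plus your coupling term, and nothing else.

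\textbf{Even at face value, the ratio term has the wrong sign.} So the ``critical check'' you identify fails. Take the $(p,q)$ Givens direction with $p<q$, so $D_p>D_q$. The $\theta^2$ coefficient of $\sum_j(\mR^T\mEG\mEG^T\mR)_{jj}/(\mR^T\cov\mR)_{jj}$ is $(\lambda_qD_p-\lambda_pD_q)(D_p^{-2}-D_q^{-2})$. The second factor is negative by the ordering, and RCM makes the first factor negative. So this sum is locally \emph{minimized} at $\mR^*$, the analogue of the paper's companion result for $\E[\alignscore]$. With your minus sign it contributes negative curvature, which RCM cannot offset.

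\textbf{The repair.} Delete the ratio term. What remains is your coupling contribution $\frac{4}{\pi}s^2\sum_{j<k}X_{jk}^2(\lambda_j-\lambda_k)(D_j-D_k)/\sqrt{D_jD_k}$. This reproduces the paper's second derivative $\frac{8}{\pi}(D_p-D_q)(\lambda_p-\lambda_q)/\sqrt{D_pD_q}$ and is positive from the common descending order alone. RCM is needed only to ensure that both orderings index the same eigenvectors.

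With that repair, your geodesic route is slightly cleaner than the paper's. The term is a product of two quantities, each linear in the single entry $X_{jk}$ at first order, so the Hessian is automatically diagonal in the $X_{jk}$. You therefore do not need the paper's sign-flip argument to show that mixed partials between distinct Givens planes vanish.
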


\paragraph{A limitation of $\var(\alignscore)$ minimization}
Variance reduction comes with an undesirable property: Eigen rotation not only minimizes the variance, but it also minimizes the expected alignment score, resulting in overly conservative update.

\begin{theorem}[Rotation as alignment magnitude minimization]
Under the assumptions of \cref{thm: optimal align score variance reduction}, define the expected alignment magnitude as
\begin{align*}
    \dot{\loss}
    \;=\;
    \E\left[\left\langle\mEG,\; \mR\sign(\mR^T\mNG)\right\rangle\right].
\end{align*}
Then,
\begin{equation}
    \mR^* \;=\; \EVD(\mEG\mEG^T)
\end{equation}
is a local minimizer of $\dot{\loss}$ under RCM.
\label{thm: rotation as loss rate minimizer}
\end{theorem}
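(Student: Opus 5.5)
The plan is to reduce $\dot{\loss}$ to an explicit function of the columns of $\mR$ using the low-SNR linearization, and then run a second-order perturbation analysis on $SO(m)$ around $\mR^*=\EVD(\mEG\mEG^T)$.

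\textbf{Step 1 (closed form for the expectation).} Write $r_j$ for the $j$-th column of $\mR$. By \cref{assumpt: noise geometry}, the $j$-th entry of $\mR^T\mNG_i$ is Gaussian with mean $a_{ij}:=r_j^T\mEG_i$ and variance $s_j:=r_j^T\cov r_j$. For $x\sim\mathcal N(a,s)$ we have $\E[\sign(x)]=\erf\!\big(a/\sqrt{2s}\big)$. Applying \cref{assump: low SNR} and keeping only the first-order Taylor term, $\erf(z)\approx \tfrac{2}{\sqrt\pi}z$, gives
\begin{align*}
\dot{\loss}=\sum_{i,j} a_{ij}\,\erf\!\Big(\tfrac{a_{ij}}{\sqrt{2s_j}}\Big)\approx \sqrt{\tfrac{2}{\pi}}\sum_j \frac{r_j^T\mEG\mEG^T r_j}{\sqrt{r_j^T\cov r_j}}=:\sqrt{\tfrac{2}{\pi}}\,F(\mR).
\end{align*}
Only the mean of $\sign$ enters here, so neither the column independence nor any higher moment is needed at this stage.

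\textbf{Step 2 (stationarity and reduction to plane rotations).} Let $\mU=\EVD(\mEG\mEG^T)$. By \cref{assumpt: noise geometry}, $\mU$ also diagonalizes $\cov$, with eigenvalues $\lambda_j$ and $D_j$ respectively. Perturb as $\mR=\mU\exp(t\mK)$ with $\mK$ skew-symmetric.

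For the first variation: $\delta r_j=\sum_k K_{kj}u_k$, and the cross terms $u_j^T\mEG\mEG^Tu_k$ and $u_j^T\cov u_k$ vanish for $k\ne j$. Hence $dF/dt|_{0}=0$, so $\mU$ is a critical point.

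For the second variation: expanding $F$ to order $t^2$ should give a quadratic form that is a sum over unordered pairs $\{j,k\}$ of $K_{jk}^2\,c_{jk}$. Diagonality of both matrices in the $\mU$ basis kills all mixed pair terms, and the $O(t^2)$ contribution to $r_j$ that comes from $\mK^2$ combines with the first-order terms into exactly the pair coefficients. It therefore suffices to examine a single Givens rotation in the $(u_j,u_k)$ plane: $r_j=\cos t\,u_j+\sin t\,u_k$ and $r_k=-\sin t\,u_j+\cos t\,u_k$. Expanding the two affected terms of $F$ to order $t^2$ gives
\begin{align*}
c_{jk}=(\lambda_k-\lambda_j)\Big(\tfrac{1}{\sqrt{D_j}}-\tfrac{1}{\sqrt{D_k}}\Big)+\tfrac{D_j-D_k}{2}\Big(\tfrac{\lambda_j}{D_j^{3/2}}-\tfrac{\lambda_k}{D_k^{3/2}}\Big).
\end{align*}

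\textbf{Step 3 (positivity via RCM; the main obstacle).} The key step is to show $c_{jk}>0$ for every pair. The first summand is positive, because by RCM $D_j>D_k$ forces $\lambda_j>\lambda_k$. The second summand has no obvious sign, so the two must be combined.

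Without loss of generality take $D_j>D_k$, and set $a=\sqrt{D_j}>b=\sqrt{D_k}$, $x=\lambda_j/D_j$, $y=\lambda_k/D_k$; RCM gives $x>y$. A direct rearrangement yields
\begin{align*}
c_{jk}=\frac{a-b}{ab}\Big[x a^2-y b^2+\tfrac12(a+b)(xb-ya)\Big].
\end{align*}
The bracket is increasing in $x$, since the coefficient of $x$ is $a^2+\tfrac12(ab+b^2)>0$. Therefore, using $x>y$, the bracket is strictly larger than its value at $x=y$, which is $\tfrac{y}{2}(a^2-b^2)>0$. Hence $c_{jk}>0$.

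Because the eigenvalues are non-repeated, $c_{jk}>0$ holds for every pair, so the Hessian on the tangent space of $SO(m)$ at $\mU$ is positive definite. Thus $\mU$ is a strict local minimizer of the linearized $\dot{\loss}$, which proves the claim. I expect the delicate points to be verifying the pair-decoupling of the second variation and carrying out this algebraic recombination correctly.
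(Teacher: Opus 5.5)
Your proposal is correct, and its skeleton matches the paper's proof. Both use the same $\erf$ linearization to $\sum_j r_j^\top\mEG\mEG^\top r_j/\sqrt{r_j^\top\cov r_j}$ and pass to the shared eigenbasis. Your pair coefficient $c_{jk}$ is identical to the paper's $c_{pq}$, and you use the same $a=\sqrt{D_j}$, $b=\sqrt{D_k}$ substitution for the RCM sign argument. Your ``increasing in $x$, evaluate at $x=y$'' step is just a regrouping of the paper's $(r_p-r_q)(a^2+ab+2b^2)+r_p(a^2-b^2)$.

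Where you differ is the step that rules out cross-plane terms in the second-order expansion. The paper parametrizes a neighbourhood of $\mI$ by a product of Givens rotations. It then kills each mixed partial $\partial^2/\partial\theta_{pq}\partial\theta_{rs}$ by symmetry: $F(\mS\mQ\mS)=F(\mQ)$ for diagonal sign matrices $\mS$, and flipping an index lying in exactly one of the two planes makes $g(u,v)$ even in $u$.

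You instead expand along $\mU\exp(t\mK)$ directly, and the claim you hedged with ``should give'' does hold. Because $K_{jj}=0$ and both matrices are diagonal in the $\mU$ basis, $r_j^\top\mEG\mEG^\top r_j=\lambda_j+t^2\sum_l(\lambda_l-\lambda_j)K_{lj}^2+O(t^3)$ and $r_j^\top\cov r_j=D_j+t^2\sum_l(D_l-D_j)K_{lj}^2+O(t^3)$. Hence the Hessian is explicitly $2\sum_{j<k}c_{jk}K_{jk}^2$.

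Your route is more elementary and produces the full quadratic form in one computation. The paper's symmetry argument avoids writing out the general second variation, and it is reused verbatim for the variance theorem, whose objective also involves arcsines of off-diagonal correlations.

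Finally, you keep the inequalities strict, using non-repeated eigenvalues, RCM, and $\lambda_k>0$. This gives a positive-definite Hessian and hence a strict local minimizer. The paper records only $c_{pq}\ge 0$ before concluding local minimality. Its decomposition yields strictness in exactly the same way, but your version states the second-order sufficient condition explicitly.
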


\paragraph{Sensitivity to dominated noise assumption}
One of the key assumptions in the above theorem is \cref{assump: low SNR}, where high order terms during the analysis can be ignored. However, in our MNIST setup, the $\var(\alignscore)$ reduction effect seem to persist even at the early stage of training. We conducted a simple synthetic experiment to show that eigen-rotation under relatively high signal-to-noise ratio can still significantly reduce the $\var(\alignscore)$ and $\E[\alignscore]$, although it may not be the minimizer. 

We simulate $\mEG$ and noise covariance in a structured manner. First, we build
$$
D_i \propto i^{-\alpha_D},\qquad r_i \propto i^{-\alpha_{\text{ratio}}},\qquad \lambda_i = D_i r_i,\quad i=1,\dots,m,
$$
where $\alpha_D$ and $\alpha_{\text{ratio}}$ are two hyperparameters, with 1 and 0.1, respectively. 

Next, sample random orthonormal basis $\mU$ and $\mV$,respectively, and define
$$
\mEG = s_g\,\mU\operatorname{diag}(\sqrt{\lambda})\mV^\top,
$$
where $s_g$ controls the signal strength. 

For noise, define
$$
\mSigma=s_\sigma\,\mU \operatorname{diag}(D)\mU^\top,
$$
where $s_{\sigma}$ controls the noise strength.
Then, we will sample one $\mEG$ with $s_g=1$, followed by $1000$ $\mNG$ with $s_\sigma=0.1$. This mimic the scenario where signal strength is higher than noise magnitude.

Then, we consider the following baselines for rotation: (1) eigen-rotation $\EVD(\mEG\mEG^\top)$; (2) ARO rotation with identity initialization, where $\mR_{t-1}$ in \cref{eq: gf} are set to $\mI$; (3) ARO rotation with perturbed eigen-rotation, where $\mR_{t-1}$ is set to the perturbed eigen-rotation. (3) aims to mimic the actual ARO rotation, where $\mR_{t-1}$ is from the previous solution, which we argue is more closed to the eigen-rotation than identity. 

To check if they reach the minimum, we test the above rotations against (1) $1000$ random rotations (top row of \cref{fig: variance mean synthetic exp}), and (2) $1000$ random rotations close to eigen-rotation (bottom row of \cref{fig: variance mean synthetic exp}). \Cref{fig: variance mean synthetic exp} shows the histogram of comparisons. We can observe even under the signal-to-noise ratio of $10$, eigen-rotation still achieves close to minimum of both $\var(\alignscore)$ and $\E[\alignscore]$. The two ARO variations achieves higher $\E[\alignscore]$ but without increasing the $\var(\alignscore)$ too much, especially when $\mR_{t-1}$ is initialized to be the perturbed eigen-rotation. This can be checked by inspecting the relative position w.r.t the histogram in the bottom row. 

\begin{figure}
    \centering
    \begin{minipage}{0.45\textwidth}
        \includegraphics[width=\linewidth]{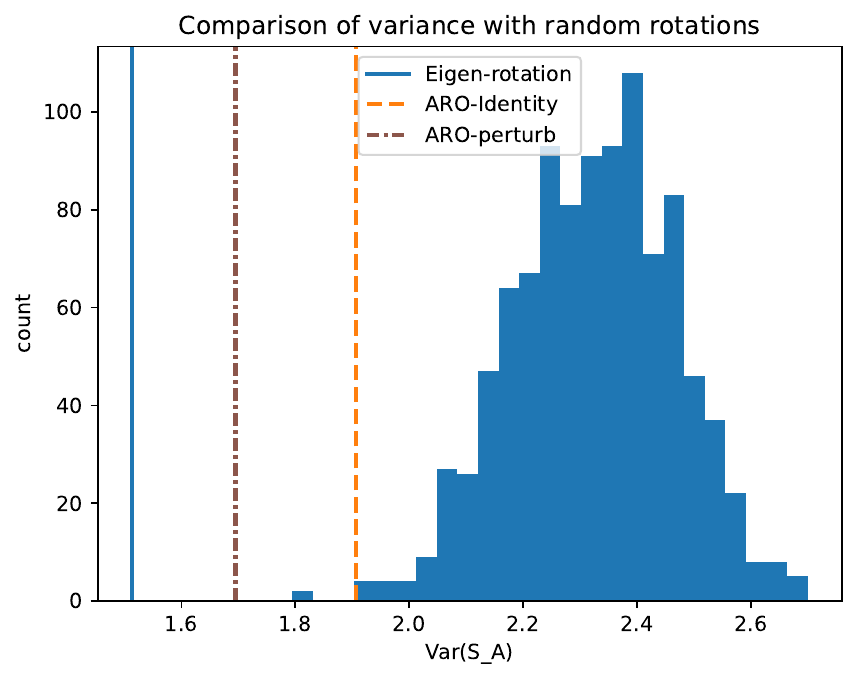}
        \label{fig: synthetic random variance}
    \end{minipage}\hfill
    \begin{minipage}{0.45\textwidth}
        \includegraphics[width=\linewidth]{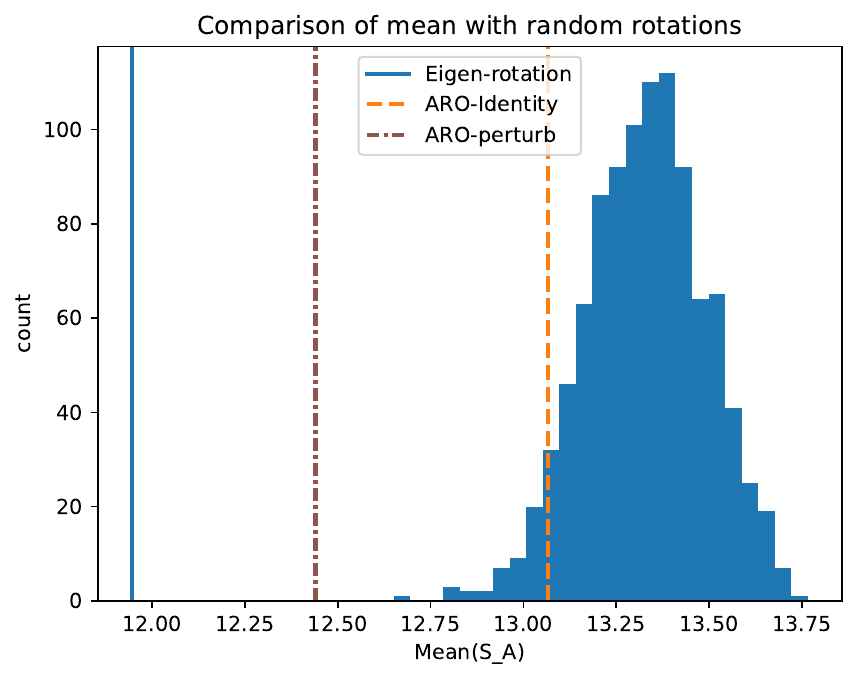}
        \label{fig: synthetic random mean}
    \end{minipage}\\
    \begin{minipage}{0.45\textwidth}
        \includegraphics[width=\linewidth]{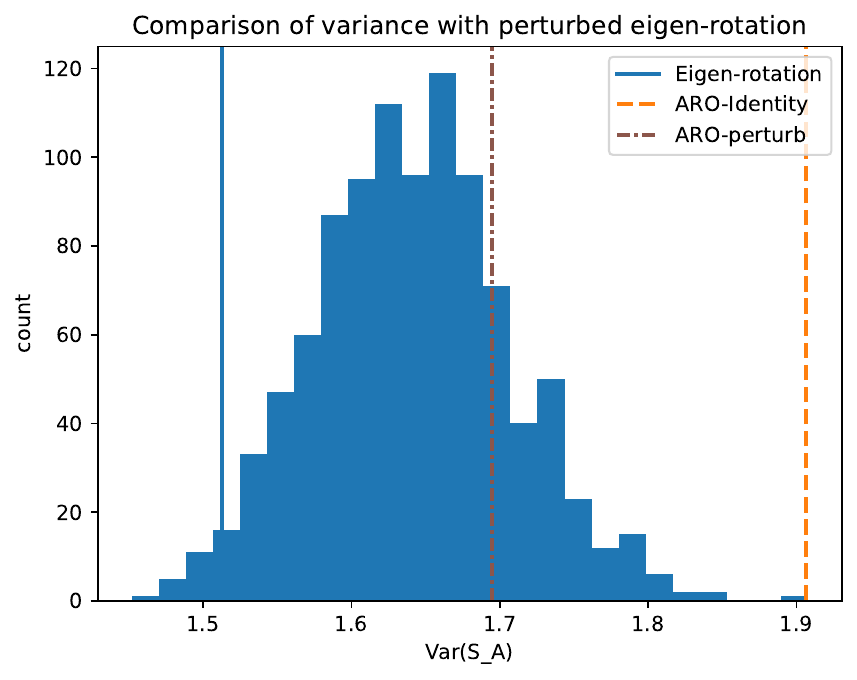}
        \label{fig: synthetic local variance}
    \end{minipage}\hfill
    \begin{minipage}{0.45\textwidth}
        \includegraphics[width=\linewidth]{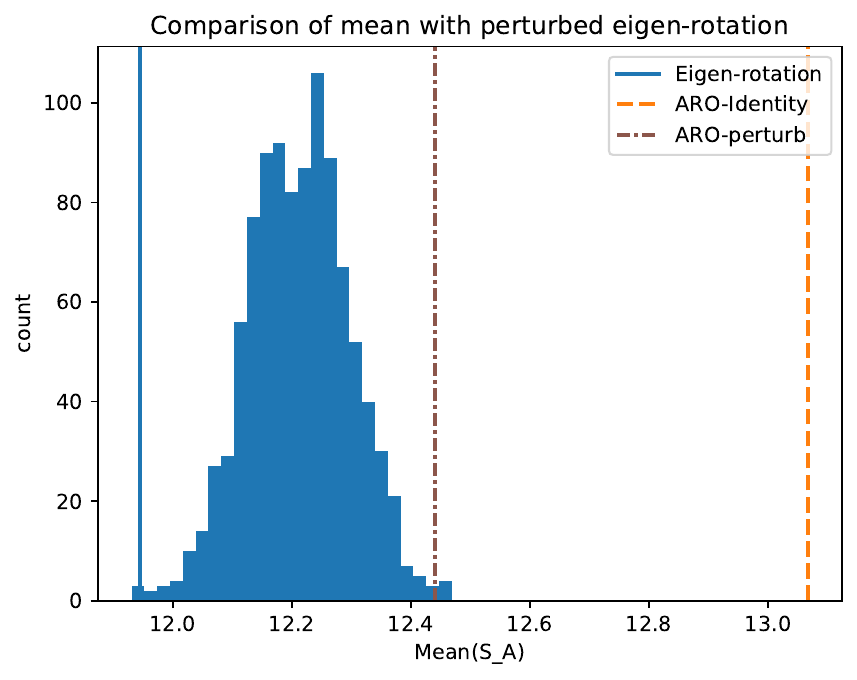}
        \label{fig: synthetic local mean}
    \end{minipage}
    \caption{The histogram shows the $\var(\alignscore)$ and $\E[\alignscore]$ under the synthetic samples of $\mEG$ and $\mNG$. }
    \label{fig: variance mean synthetic exp}
\end{figure}

\subsection{MNIST experiment setup}
\label{subapp: MNIST setup}

This section outlines the MNIST experimental setup.

\paragraph{Dataset and model}
We use the original MNIST dataset and randomly truncate the training set to $20000$ images to make computation of the full-batch gradient $\mEG$ feasible. We use a two-layer MLP with $256$ hidden units and ReLU \citep{agarap2018deep} as the classifier.

\paragraph{Hyperparameters}
For all optimizers, we follow the fairness guidelines in \cref{exp: guidelines}. Specifically, we use the same learning rate (lr$=10^{-3}$) throughout, and follow \cref{sec:hyperparam} to align their update RMS norms. We use $\beta_1=0.95$ and $\beta_2=0.99$ for all optimizers. We train the network for $3000$ steps with batch size $512$ to ensure full convergence. We also use a cosine learning-rate scheduler (similar to LLM training) that decays to $10\%$ of the initial learning rate, with $100$ steps of linear warm-up.
\section{Proofs of \cref{app: role of rotation}}
\label{app: proofs denoise}

\subsection{Proof of \cref{thm: failure of polar}}
\label{subapp: proof of failure of polar}

\begin{proof}
Let
$$
\mV \;:=\; \mR f(\mR^T\mM).
$$
Since $\Delta \mW \propto \mV$, it suffices to show $\langle \mEG,\mV\rangle < 0$ under the conditioned mentioned in the theorem. 

Before the proof details, we will summarize the logic and strategy behind it. We aims to first separate and simplify the alignment $\langle\mEG, \mV\rangle$ into two components: one is orthogonal to momentum, and one is not. Then, we can derive an upper bound of $\langle\mEG, \mV\rangle$ through Cauchy inequality. Then, we analyzie the condition when this upper bound will be negative, such that the original alignment $\langle\mEG,\mV\rangle$ is guaranteed to be negative.

Define the scalar
$$
\alpha \;:=\; \frac{\tr(\mEG^T\mM)}{\|\mM\|_F^2},
$$
and the orthogonal component
$$
\mEG_{\perp} \;:=\; \mEG - \alpha \mM,
\qquad\text{so that}\qquad
\mEG = \alpha \mM + \mEG_{\perp}.
$$
Then $\mEG_{\perp}$ is orthogonal to $\mM$:
\begin{align*}
\tr(\mEG_{\perp}^T\mM)
&= \tr\!\big((\mEG-\alpha\mM)^T\mM\big) \\
&= \tr(\mEG^T\mM) - \alpha\,\tr(\mM^T\mM) \\
&= \tr(\mEG^T\mM) - \frac{\tr(\mEG^T\mM)}{\|\mM\|_F^2}\,\|\mM\|_F^2 \\
&= 0.
\end{align*}

Now, by substitution, we have
\begin{align*}
\langle \mEG,\mV\rangle
&= \langle \alpha\mM + \mEG_{\perp}, \mV\rangle \\
&= \alpha \langle \mM,\mV\rangle + \langle \mEG_{\perp},\mV\rangle.
\end{align*}

We first rewrite $\langle \mM,\mV\rangle$:
\begin{align*}
\langle \mM,\mV\rangle
&= \tr(\mM^T \mR f(\mR^T\mM)) \\
&= \tr\!\big(\mM f(\mR^T\mM)^T \mR^T\big).
\end{align*}
Hence, with the definition of $k$,
$$
\langle \mM,\mV\rangle
= \|\mM\|_F\,\|f(\mR^T\mM)^T\|_F\, k.
$$

Next, bound the orthogonal term using Cauchy--Schwarz:
$$
\langle \mEG_{\perp},\mV\rangle \le \|\mEG_{\perp}\|_F \,\|\mV\|_F
= \|\mEG_{\perp}\|_F\,\|f(\mR^T\mM)^T\|_F.
$$

So we obtain the upper bound
\begin{align*}
\langle \mEG,\mV\rangle
\le \|f(\mR^T\mM)^T\|_F \Big(\alpha \|\mM\|_F k + \|\mEG_{\perp}\|_F\Big).
\end{align*}

Now compute $\|\mEG_{\perp}\|_F$ explicitly. Expanding the square,
\begin{align*}
\|\mEG_{\perp}\|_F^2
&= \|\mEG - \alpha \mM\|_F^2 \\
&= \tr\!\big((\mEG-\alpha\mM)^T(\mEG-\alpha\mM)\big) \\
&= \|\mEG\|_F^2 - 2\alpha\,\tr(\mEG^T\mM) + \alpha^2 \|\mM\|_F^2 \\
&= \|\mEG\|_F^2 - 2\frac{\tr(\mEG^T\mM)}{\|\mM\|_F^2}\tr(\mEG^T\mM)
   + \Big(\frac{\tr(\mEG^T\mM)}{\|\mM\|_F^2}\Big)^2 \|\mM\|_F^2 \\
&= \|\mEG\|_F^2 - \frac{\tr(\mEG^T\mM)^2}{\|\mM\|_F^2}.
\end{align*}
Let $c:=\cos(\theta_{GM})=\frac{\tr(\mEG^T\mM)}{\|\mEG\|_F\|\mM\|_F}$. Then
$$
\frac{\tr(\mEG^T\mM)^2}{\|\mM\|_F^2}
= \frac{(\|\mEG\|_F^2\|\mM\|_F^2 c^2)}{\|\mM\|_F^2}
= \|\mEG\|_F^2 c^2,
$$
so
$$
\|\mEG_{\perp}\|_F^2 = \|\mEG\|_F^2(1-c^2)
\qquad\Rightarrow\qquad
\|\mEG_{\perp}\|_F = \|\mEG\|_F\sqrt{1-c^2}.
$$
Also
$$
\alpha = \frac{\tr(\mEG^T\mM)}{\|\mM\|_F^2}
= \frac{\|\mEG\|_F\|\mM\|_F c}{\|\mM\|_F^2}
= \frac{\|\mEG\|_F}{\|\mM\|_F}c.
$$
Plugging these into the upper bound yields
\begin{align*}
\langle \mEG,\mV\rangle
&\le \|f(\mR^T\mM)^T\|_F
\Big(\|\mEG\|_F c\, k + \|\mEG\|_F\sqrt{1-c^2}\Big) \\
&= \|\mEG\|_F\,\|f(\mR^T\mM)^T\|_F \Big(kc + \sqrt{1-c^2}\Big).
\end{align*}

Next, we verify the condition when $kc+\sqrt{1-c^2}<0$ so that $\langle\mEG, \mV\rangle<0$. 

Assume the momentum mis-aligns with $\mEG$, i.e. $c=\cos(\theta_{GM})<0$.
If
$$
c < -\frac{1}{\sqrt{1+k^2}},
$$
then squaring both sides (both are negative) gives
$$
c^2 > \frac{1}{1+k^2}
\;\Rightarrow\;
(1+k^2)c^2 > 1
\;\Rightarrow\;
k^2c^2 > 1-c^2
\;\Rightarrow\;
|kc| > \sqrt{1-c^2}.
$$
Since $c<0$ and $k\ge 0$ since $f$ is defined by dual norm (\cref{subsec: normed steepest descent}), we have $|kc|=-kc$, hence
$$
-kc > \sqrt{1-c^2}
\;\Leftrightarrow\;
kc+\sqrt{1-c^2}<0.
$$
Therefore,
$$
\langle \mEG,\mV\rangle
\le \|\mEG\|_F\,\|f(\mR^T\mM)^T\|_F \Big(kc + \sqrt{1-c^2}\Big) < 0.
$$
This implies $\langle \mEG, \Delta\mW\rangle < 0$ (,
i.e. $\Delta \mW$ is guaranteed to mis-align with $\mEG$ under the stated condition.
\end{proof}

\subsection{Proof of \cref{thm: optimal align score variance reduction}}
\label{subapp: proof of minimium variance reduction}

Before proving this theorem, we need to introduce a useful lemma that can compute the analytic form of the a specific expectation:
\begin{lemma}[Arcsine law for signs of a bivariate normal]
\label{lem:arcsine}
Let $(X,Y)$ be jointly Gaussian with
$$
\E[X]=\E[Y]=0,\qquad \Var(X)=\Var(Y)=1,\qquad \Corr(X,Y)=\rho\in[-1,1].
$$
Then
$$
\E[\sign(X)\sign(Y)] = \frac{2}{\pi}\arcsin(\rho).
$$
\end{lemma}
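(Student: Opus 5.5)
We must prove the arcsine law: $\E[\sign(X)\sign(Y)] = \frac{2}{\pi}\arcsin\rho$.

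The plan is to reduce the expectation to a probability and compute that probability geometrically. Since $\sign(X)\sign(Y)\in\{\pm1\}$ almost surely (ties at zero have probability zero when $|\rho|<1$), we have
\[
\E[\sign(X)\sign(Y)] = \Pr(XY>0)-\Pr(XY<0) = 1-2\Pr(XY<0).
\]
It therefore suffices to show $\Pr(XY<0)=\frac{1}{\pi}\arccos\rho$. Then
\[
1-\frac{2}{\pi}\arccos\rho=\frac{2}{\pi}\Big(\frac{\pi}{2}-\arccos\rho\Big)=\frac{2}{\pi}\arcsin\rho .
\]

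To compute this probability, represent the pair with a standard Gaussian vector $\mathbf{Z}\in\R^2$. Write $X=\langle \mathbf{u},\mathbf{Z}\rangle$ and $Y=\langle \mathbf{v},\mathbf{Z}\rangle$ with unit vectors $\mathbf{u},\mathbf{v}$ satisfying $\langle\mathbf{u},\mathbf{v}\rangle=\rho$. Concretely, take $\mathbf{u}=(1,0)$ and $\mathbf{v}=(\rho,\sqrt{1-\rho^2})$; this gives exactly the stated means, unit variances and correlation, and since the joint law is Gaussian it is determined by these. The angle between $\mathbf{u}$ and $\mathbf{v}$ is $\theta=\arccos\rho\in[0,\pi]$.

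By rotational invariance of $\mathbf{Z}$, its direction is uniform on the circle. The event $XY<0$ says that the line orthogonal to $\mathbf{Z}$ separates $\mathbf{u}$ from $\mathbf{v}$. This happens exactly when the direction of $\mathbf{Z}$ lies in one of two antipodal arcs, each of angular length $\theta$, so
\[
\Pr(XY<0)=\frac{2\theta}{2\pi}=\frac{\theta}{\pi}.
\]

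The main step needing care is this separating-line count: checking that the set of directions $\mathbf{w}$ with $\langle\mathbf{w},\mathbf{u}\rangle\langle\mathbf{w},\mathbf{v}\rangle<0$ has measure $2\theta$. Parametrize $\mathbf{w}=(\cos\phi,\sin\phi)$. The sign of $\cos\phi$ flips at $\phi=\pm\pi/2$, and the sign of $\cos(\phi-\theta)$ flips at $\phi=\theta\pm\pi/2$. The product is negative exactly on the two arcs lying between consecutive flips, namely $(\pi/2,\pi/2+\theta)$ and its antipode. For the degenerate cases $\rho=\pm1$ we have $Y=\pm X$ almost surely, so the identity holds directly ($\pm1=\frac{2}{\pi}\arcsin(\pm1)$), and null ties at $0$ do not matter.
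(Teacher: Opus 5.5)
Your proof is correct and takes essentially the paper's route: the same representation $X=U$, $Y=\rho U+\sqrt{1-\rho^2}\,V$ with a uniformly distributed angle. The only difference is that you measure the two discordant arcs of total length $2\arccos\rho$, whereas the paper measures the single concordant wedge $\{U>0,\ V>-aU\}$ of angle $\frac{\pi}{2}+\arctan\bigl(\rho/\sqrt{1-\rho^2}\bigr)$. Your explicit treatment of $\rho=\pm1$ is a small improvement, since the paper's $a=\rho/\sqrt{1-\rho^2}$ is undefined there.
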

    
\begin{proof}
Because $(X,Y)$ is standard bivariate normal with correlation $\rho$, throught change of variable, we can define a standard normals $(U,V)\sim\mathcal{N}(\mathbf{0},\mI_2)$ such that
$$
X = U,\qquad Y = \rho U + \sqrt{1-\rho^2}\,V.
$$

Due to the property of sign, there are limited possible combinations of $\sign(X)\sign(Y)$, and we can examine each of them and compute their corresponding expectations. 


Then
$$
\sign(X)\sign(Y)=
\begin{cases}
+1,& \text{if } (X>0 \text{ and }Y>0)\text{ or }(X<0\text{ and }Y<0),\\
-1,& \text{if } (X>0 \text{ and }Y<0)\text{ or }(X<0\text{ and }Y>0).
\end{cases}
$$
Because $\Pr(X=0\text{ or }Y=0)=0$ for a continuous normal, we ignore this scenario.

Hence
\begin{align*}
\E[\sign(X)\sign(Y)]
&= (+1)\Pr(X>0,Y>0) + (+1)\Pr(X<0,Y<0)\\
&\qquad +(-1)\Pr(X>0,Y<0) + (-1)\Pr(X<0,Y>0).
\end{align*}

But by symmetry of the normal distribution, we have
$$
\Pr(X>0,Y>0)=\Pr(X<0,Y<0),
\qquad
\Pr(X>0,Y<0)=\Pr(X<0,Y>0),
$$
so the above expectation becomes
\begin{align*}
\E[\sign(X)\sign(Y)]
&= 2\Pr(X>0,Y>0) - 2\Pr(X>0,Y<0).
\end{align*}
Also:
$$
\Pr(X>0,Y>0)+\Pr(X>0,Y<0)=\Pr(X>0)=\frac{1}{2}.
$$
Therefore $\Pr(X>0,Y<0)=\frac{1}{2}-\Pr(X>0,Y>0)$ and
\begin{align}
\E[\sign(X)\sign(Y)]
&= 2\Pr(X>0,Y>0) - 2\left(\frac{1}{2}-\Pr(X>0,Y>0)\right)\nonumber\\
&= 4\Pr(X>0,Y>0) - 1.
\label{eq:signprod_in_terms_of_quadrant_prob}
\end{align}

So it remains to compute $\Pr(X>0,Y>0)$.
Using the $(U,V)$ representation:
$$
X>0 \iff U>0,
\qquad
Y>0 \iff \rho U + \sqrt{1-\rho^2}\,V > 0
\iff
V > -\frac{\rho}{\sqrt{1-\rho^2}}\,U.
$$
Thus
$$
\Pr(X>0,Y>0)=\Pr\!\left(U>0,\; V > -aU\right),
\qquad
a:=\frac{\rho}{\sqrt{1-\rho^2}}.
$$

Now use the fact that $(U,V)$ is a standard 2D normal, which is rotationally symmetric:
its density is proportional to $\exp(-(r^2)/2)$ and $r^2=u^2+v^2$.
Therefore the direction angle $\Theta\in[0,2\pi)$ of the random vector $(U,V)$ is uniform on $[0,2\pi)$.

The region $\{U>0,\; V>-aU\}$ is a wedge in the $(u,v)$ plane:
\begin{itemize}
\item $U>0$ means the half-plane to the right of the $v$-axis, i.e.\ angles $\Theta\in(-\pi/2,\pi/2)$.
\item $V>-aU$ is the half-plane above the line $v=-au$, which is a line through the origin with angle
$\phi$ satisfying $\tan(\phi)=-a$. One convenient angle is $\phi = -\arctan(a)$.
\end{itemize}
Therefore,
$$
\text{wedge angle} = \frac{\pi}{2} - \left(-\arctan(a)\right)
= \frac{\pi}{2} + \arctan(a).
$$
Therefore, the probability of falling into this wedge is
$$
\Pr(X>0,Y>0)=\frac{\frac{\pi}{2}+\arctan(a)}{2\pi}
=\frac{1}{4}+\frac{1}{2\pi}\arctan(a).
$$
Finally, note that
$$
a=\frac{\rho}{\sqrt{1-\rho^2}}
\quad\Longrightarrow\quad
\arctan(a)=\arctan\!\left(\frac{\rho}{\sqrt{1-\rho^2}}\right)=\arcsin(\rho),
$$

Thus
$$
\Pr(X>0,Y>0)=\frac{1}{4}+\frac{1}{2\pi}\arcsin(\rho).
$$
Plugging into \eqref{eq:signprod_in_terms_of_quadrant_prob} gives
$$
\E[\sign(X)\sign(Y)]
=4\left(\frac{1}{4}+\frac{1}{2\pi}\arcsin(\rho)\right)-1
=\frac{2}{\pi}\arcsin(\rho).
$$
\end{proof}

Now, let's prove the main theorem. This is a rather long and involved proof. However, the main complexity is algebraic manipulation, and the logic is rather simple. The core strategy is to 
\begin{itemize}
    \item Simplify and re-write the alignment variance into its element-wise format.
    \item Then, under the noise dominate assumption, the above term can be further simplified to a format that involves the expectation of two $\sign(\cdot)$, as shown in \cref{lem:arcsine}. 
    \item Then, we leverage a tool, called 2D Given's rotation, and show that when the rotation is eigen-based rotation, a single 2D rotation will increase the above objective.
    \item Then, we show any disjoint seqeunce of such 2D rotation will also increase the variance. This is achieved by examining the Hessian cross term. 
    \item With the above analysis, we can show eigen-based rotation is a local minimizer of the variance. 
\end{itemize}

\begin{proof}[Proof of \cref{thm: optimal align score variance reduction}]
Define
$$
A := \mR^T\mEG\in\R^{m\times n},
\qquad
Z := \mR^T\noise\in\R^{m\times n}.
$$
Then
$$
\mR^T\mNG =  A+Z.
$$
Also,
\begin{align*}
\alignscore = \langle A,\;\underbrace{\sign(A+Z)}_{H}\rangle.
\end{align*}

Then, we decompose the $\alignscore$ into its elementwise format:
\begin{equation}
\label{eq:Y_expand}
\alignscore=\langle A,H\rangle = \sum_{i=1}^m\sum_{j=1}^n A_{ij}H_{ij}.
\end{equation}

Next, let's rewrite the $\var(\alignscore)$ also into its element-wise format:
From \eqref{eq:Y_expand}, 
$$
\E[\alignscore^2]
=\sum_{i=1}^m\sum_{j=1}^n\sum_{k=1}^m\sum_{l=1}^n
A_{ij}A_{kl}\,\E[S_{ij}H_{kl}].
$$
Similarly,
$$
\E[\alignscore]
= \E\left[\sum_{i,j}A_{ij}H_{ij}\right]
= \sum_{i,j}A_{ij}\E[H_{ij}],
$$
so
$$
\E[\alignscore]^2
=\sum_{i,j,k,l} A_{ij}A_{kl}\,\E[H_{ij}]\,\E[H_{kl}].
$$
Therefore, the alignment variance is 
\begin{align}
\var(\alignscore)=\sum_{i,j,k,l} A_{ij}A_{kl}\underbrace{\Big(\E[H_{ij}H_{kl}] - \E[H_{ij}]\E[H_{kl}]\Big)}_{\Cov(H_{ij},H_{kl})}
\label{eq:AV_quad_sum}
\end{align}

By assumption, the noise columns $\noise_{j}$ are independent across $j$.
Since $Z=\mR^T\noise$ and $\mR$ is deterministic, the columns $Z_{j}=\mR^T\noise_{j}$ are also independent across $j$.

Fix indices $(i,j)$ and $(k,l)$ with $j\neq l$.
Because $Z_{j}$ and $Z_{l}$ are independent, $H_{ij}$ and $H_{kl}$ are independent,
hence
$$
\Cov(H_{ij},H_{kl})=0\quad\text{when }j\neq l.
$$
Therefore the quadruple sum \eqref{eq:AV_quad_sum} reduces to:
\begin{equation}
\label{eq:AV_same_column}
\var(\alignscore)
=\sum_{j=1}^n\sum_{i=1}^m\sum_{k=1}^m
A_{ij}A_{kj}\,\Cov(H_{ij},H_{kj}).
\end{equation}

Fix a column $j$ and two rows $i,k$.
Let
$$
X := A_{ij}+Z_{ij},\qquad Y := A_{kj}+Z_{kj}.
$$
Then
$$
H_{ij}=\sign(X),\qquad H_{kj}=\sign(Y).
$$
The pair $(Z_{ij},Z_{kj})$ is jointly Gaussian with mean $0$ and covariance determined by
$$
\cov_R := \mR^T\cov\mR.
$$
By definition, the correlation of $(Z_{ij},Z_{kj})$ is
\begin{equation}
\label{eq:rho_def}
\rho_{ik}(\mR) := \frac{(\cov_R)_{ik}}{\sqrt{(\cov_R)_{ii}(\cov_R)_{kk}}}.
\end{equation}

Now, $X$ and $Y$ are jointly Gaussian with mean vector $(A_{ij},A_{kj})$ and the same covariance as $(Z_{ij},Z_{kj})$.
Under the low-SNR assumption, the standardized means
$$
\frac{A_{ij}}{\sqrt{2(\Sigma_R)_{ii}}}\approx 0,\qquad
\frac{A_{kj}}{\sqrt{2(\Sigma_R)_{kk}}}\approx 0,
$$
so we approximate $(X,Y)$ by the zero-mean pair $(Z_{ij},Z_{kj})$ at leading order.
Thus
$$
\Cov(\sign(X),\sign(Y))
\approx
\Cov(\sign(Z_{ij}),\sign(Z_{kj})).
$$
But $\E[\sign(Z_{ij})]=0$ and $\E[\sign(Z_{kj})]=0$ by symmetry,
so
$$
\Cov(\sign(Z_{ij}),\sign(Z_{kj}))
=\E[\sign(Z_{ij})\sign(Z_{kj})].
$$
Applying Lemma~\ref{lem:arcsine},
we obtain
\begin{equation}
\label{eq:Cov_sign_arcsin}
\Cov(H_{ij},H_{kj})
\approx
\frac{2}{\pi}\arcsin\!\big(\rho_{ik}(\mR)\big).
\end{equation}

Thus, we can greatly simplify the $\var(\alignscore)$ (\eqref{eq:AV_same_column}) by substituting \eqref{eq:Cov_sign_arcsin}:
\begin{align}
\var(\alignscore)= \frac{2}{\pi}\sum_{j=1}^n\sum_{i=1}^m\sum_{k=1}^m
A_{ij}A_{kj}\,\arcsin\!\big(\rho_{ik}(\mR)\big).
\label{eq:AV_triple_sum}
\end{align}

Next, we can express $\arcsin(\rho_{ik}(R))$ into a matrix format for later analysis.
Define the diagonal matrix of coordinate standard deviations of $\cov_R$:
$$
\mD_R := \diag\big((\Sigma_R)_{11},\dots,(\Sigma_R)_{mm}\big)\in\R^{m\times m}.
$$
Define the correlation matrix
$$
C_R := \mD_R^{-1/2}\cov_R \mD_R^{-1/2},
$$
so $(C_R)_{ik}=\rho_{ik}(\mR)$.
Define $\Phi_R:=\arcsin(C_R)$  by applying $\arcsin$ in a element-wise manner.

Thus, the $\var(\alignscore)$ can be further re-written as a compact matrix multiplication format. 

For a fixed column $j$, let $a_j:=A_{j}\in\R^m$. Then
$$
\sum_{i=1}^m\sum_{k=1}^m A_{ij}A_{kj}\,\arcsin(\rho_{ik}(\mR))
= \sum_{i,k} (a_j)_i (a_j)_k (\Phi_R)_{ik}
= a_j^T \Phi_R a_j.
$$
Then, \eqref{eq:AV_triple_sum} becomes
\begin{equation}
\label{eq:AV_sum_quadratic}
\var(\alignscore)= \frac{2}{\pi}\sum_{j=1}^n a_j^T \Phi_R a_j.
\end{equation}
Therefore, we have
$$
\sum_{j=1}^n a_j^T \Phi_R a_j
= \sum_{j=1}^n \tr(a_j^T \Phi_R a_j)
= \tr(\Phi_R AA^T).
$$
Also, $A=\mR^T\mG$.
Therefore
\begin{equation}
\label{eq:AV_trace_form}
\var(\alignscore)\approx \frac{2}{\pi}\tr\!\Big(\Phi_R\,\mR^T(\mEG\mEG^T)\mR\Big)
\end{equation}

Next, we will examine whether the eigen-based rotation serves as the local minimum by analyzing the its interactions with $\Phi$.

By the shared-eigenspace assumption, $\mEG\mEG^T=\mU\bm{\lambda}\mU^T$ and $\cov=\mU \bm{D}\mU^T$.
Let $\mR=\mU\mQ$ with some rotation $\mQ\in O(m)$.
Then
$$
\mR^T(\mEG\mEG^T)\mR
= \mQ^T \mU^T(\mU\bm{\lambda}\mU^T)\mU\mQ
= \mQ^T\bm{\lambda}\mQ,
$$
and
$$
\mR^T\cov\mR
= \mQ^T \mU^T(\mU \bm{D}\mU^T)\mU\mQ
= \mQ^T \bm{D}\mQ.
$$
So \eqref{eq:AV_trace_form} becomes a function of $\mQ$:
\begin{equation}
\label{eq:AV_of_Q}
\var(\alignscore)\approx \frac{2}{\pi}\tr\!\Big(\Phi_R\;\mQ^T\bm{\lambda}\mQ\Big),
\end{equation}
where
$$
\Phi_R:=\arcsin\!\Big(\Corr(\mQ^T \bm{D}\mQ)\Big).
$$

Next, we plan to show that any disjoint set of 2D rotation (i.e. Givens rotation) will increase $\var(\alignscore)$. If that is the case, then $\mQ=\mI$ is the local minimum. 

Fix any pair of indices $1\le p<q\le m$, and define the Givens rotation $\mG_{pq}(\theta)\in O(m)$
that rotates only the $(p,q)$-coordinates:
$$
\mG_{pq}(\theta) =
\begin{bmatrix}
1 & & & & & \\
& \ddots & & & & \\
& & \cos\theta & \cdots & -\sin\theta & \\
& & \vdots & \ddots & \vdots & \\
& & \sin\theta & \cdots & \cos\theta & \\
& & & & & \ddots
\end{bmatrix},
$$
i.e.\ it equals the identity on all coordinates except the $2\times2$ block on rows/cols $(p,q)$.

Let's assume that $\mQ$ is such Givens rotation, that rotate the $(p,q)$ plane by $\theta$, and we define
$$
\mQ(\theta):=\mG_{pq}(\theta),\qquad c:=\cos\theta,\quad s:=\sin\theta.
$$

\subparagraph{Step A: Compute $\mQ(\theta)^T\bm{\lambda}\mQ(\theta)$.}
Because $\bm{\lambda}$ is diagonal, the transform
differs from $\bm{\lambda}$ only in the $(p,q)$-block. A direct multiplication gives:
\begin{align}
\label{eq:B_block}
B(\theta):=\mQ(\theta)^T\bm{\lambda}\mQ(\theta)
\quad\Rightarrow\quad
\begin{cases}
B_{pp}(\theta)=\lambda_p c^2+\lambda_q s^2,\\
B_{qq}(\theta)=\lambda_p s^2+\lambda_q c^2,\\
B_{pq}(\theta)=B_{qp}(\theta)=(\lambda_p-\lambda_q)cs,
\end{cases}
\end{align}
and all other off-diagonal entries of $B(\theta)$ are zero, and other diagonal remains intact. 

\subparagraph{Step B: Compute $\cov(\theta):=\mQ(\theta)^T \bm{D}\mQ(\theta)$ and the induced correlation.}
Similarly, since $\bm{D}$ is diagonal, $\cov(\theta)$ differs from $\bm{D}$ only in the $(p,q)$-block:
\begin{align}
\label{eq:Sigma_block}
\cov(\theta):=\mQ(\theta)^T \bm{D}\mQ(\theta)
\quad\Rightarrow\quad
\begin{cases}
\cov_{pp}(\theta)=D_p c^2+D_q s^2,\\
\cov_{qq}(\theta)=D_p s^2+D_q c^2,\\
\cov_{pq}(\theta)=\cov_{qp}(\theta)=(D_p-D_q)cs,
\end{cases}
\end{align}
and all other off-diagonal entries are zero, and other diagonals remain intact. 

By definitionm the correlation coefficient between coordinates $p$ and $q$:
\begin{equation}
\label{eq:rho_pq_exact}
\rho_{pq}(\theta)
:=
\frac{\cov_{pq}(\theta)}{\sqrt{\cov_{pp}(\theta)\cov_{qq}(\theta)}}
=
\frac{(D_p-D_q)cs}{\sqrt{(D_p c^2+D_q s^2)(D_p s^2+D_q c^2)}}.
\end{equation}
The correlation matrix $\Corr(\cov(\theta))$ has ones on the diagonal, and its only nonzero
off-diagonal entries are $(p,q)$ and $(q,p)$, both equal to $\rho_{pq}(\theta)$.

Therefore $\Phi(\mQ(\theta))=\arcsin(\Corr(\cov(\theta)))$ has:
$$
\Phi_{ii}(\theta)=\arcsin(1)=\frac{\pi}{2}\quad\text{for all }i,
\qquad
\Phi_{pq}(\theta)=\Phi_{qp}(\theta)=\arcsin(\rho_{pq}(\theta)),
$$
and all other off-diagonal entries are zero.

\subparagraph{Step C: Reduce the trace.}
Because $\Phi(\theta)$ and $B(\theta)$ are symmetric and only the $(p,q)$ off-diagonal entries can be nonzero,
\begin{align}
\var(\alignscore)
&:=\frac{2}{\pi}\tr\!\big(\Phi(\theta)B(\theta)\big)\nonumber\\
&=\frac{2}{\pi}\sum_{i=1}^m \Phi_{ii}(\theta)B_{ii}(\theta) + 2\Phi_{pq}(\theta)B_{pq}(\theta)\nonumber\\
&=\sum_{i=1}^m B_{ii}(\theta) + \frac{4}{\pi}\,\arcsin(\rho_{pq}(\theta))\,B_{pq}(\theta).
\label{eq:T_theta_decompose_derivative}
\end{align}
Moreover, $\sum_i B_{ii}(\theta)=\tr(B(\theta))=\tr(\bm{\lambda})$ is constant in $\theta$. Hence, the entire $\theta$-dependence is
$$
\var(\alignscore)=\underbrace{\tr(\bm{\lambda})}_{\text{constant}} \;+\;\frac{4}{\pi}\,g(\theta)\,h(\theta),
\quad\text{where}\quad
g(\theta):=\arcsin(\rho_{pq}(\theta)),\ \ h(\theta):=B_{pq}(\theta).
$$

\subparagraph{Step D: Compute the derivative of $h(\theta)=B_{pq}(\theta)$ at $\theta = 0$.}
\begin{equation}
\label{eq:Bpq_theta}
h(\theta)=B_{pq}(\theta)=(\lambda_p-\lambda_q)c(\theta)s(\theta).
\end{equation}
Differentiate:
$$
h'(\theta)=(\lambda_p-\lambda_q)\big(c'(\theta)s(\theta)+c(\theta)s'(\theta)\big).
$$
Therefore,
\begin{equation}
\label{eq:hprime0}
h'(0)=(\lambda_p-\lambda_q)\big(0\cdot 0 + 1\cdot 1\big)=\lambda_p-\lambda_q.
\end{equation}
Also $h(0)=(\lambda_p-\lambda_q)c(0)s(0)=0$.

\subparagraph{Step D.2: Compute $\rho_{pq}'(0)$.}
Define the numerator and denominator:
$$
u(\theta):=\Sigma_{pq}(\theta),
\quad
v(\theta):=\sqrt{\Sigma_{pp}(\theta)\Sigma_{qq}(\theta)}.
$$

First, evaluate at $\theta=0$:
$$
u(0)=(D_p-D_q)c(0)s(0)=0,
$$
$$
\Sigma_{pp}(0)=D_p,\quad \Sigma_{qq}(0)=D_q
\quad\Longrightarrow\quad
v(0)=\sqrt{D_pD_q}>0.
$$
Next, compute $u'(0)$:
$$
u'(\theta)=(D_p-D_q)\big(c'(\theta)s(\theta)+c(\theta)s'(\theta)\big),
$$
so
\begin{equation}
\label{eq:uprime0}
u'(0)=(D_p-D_q)\big(0\cdot 0 + 1\cdot 1\big)=D_p-D_q.
\end{equation}
Now compute $v'(0)$. Note that $\Sigma_{pp}(\theta)$ and $\Sigma_{qq}(\theta)$ depend only on
$c(\theta)^2$ and $s(\theta)^2$, hence they are \emph{even} functions of $\theta$.
Therefore their product $\Sigma_{pp}(\theta)\Sigma_{qq}(\theta)$ is even, and so is its square root $v(\theta)$.
The derivative of an even differentiable function at $0$ is $0$, hence
\begin{equation}
\label{eq:vprime0}
v'(0)=0.
\end{equation}

Now we have:
$$
\rho_{pq}'(\theta)=\frac{u'(\theta)v(\theta)-u(\theta)v'(\theta)}{v(\theta)^2}.
$$
And, 
\begin{equation}
\label{eq:rhoprime0}
\rho_{pq}'(0)=\frac{u'(0)v(0)-0\cdot v'(0)}{v(0)^2}
=\frac{(D_p-D_q)\sqrt{D_pD_q}}{(D_pD_q)}
=\frac{D_p-D_q}{\sqrt{D_pD_q}}.
\end{equation}
Also $\rho_{pq}(0)=u(0)/v(0)=0$.

\subparagraph{Step D.3: Differentiate $g(\theta)=\arcsin(\rho_{pq}(\theta))$ at $0$.}
We have
$$
g'(\theta)=\frac{\rho_{pq}'(\theta)}{\sqrt{1-\rho_{pq}(\theta)^2}}.
$$
At $\theta=0$, $\rho_{pq}(0)=0$, so $\sqrt{1-\rho_{pq}(0)^2}=1$, and 
\begin{equation}
\label{eq:gprime0}
g'(0)=\rho_{pq}'(0)=\frac{D_p-D_q}{\sqrt{D_pD_q}}.
\end{equation}
Also $g(0)=\arcsin(0)=0$.

\subparagraph{Step E: Compute the first and second derivatives of $\var(\alignscore)$ w.r.t.~$\theta$ at $0$.}
Recall
$$
\var(\alignscore)=\tr(\bm{\lambda})+\frac{4}{\pi}g(\theta)h(\theta).
$$
Therefore,
$$
\var(\alignscore)'=\frac{4}{\pi}\big(g'(\theta)h(\theta)+g(\theta)h'(\theta)\big).
$$
At $\theta=0$, $g(0)=0$ and $h(0)=0$, hence
\begin{equation}
\label{eq:Tprime0}
\var(\alignscore)'|_{\theta=0}=\frac{4}{\pi}\big(g'(0)\cdot 0 + 0\cdot h'(0)\big)=0.
\end{equation}
Differentiate again:
\begin{align*}
\var(\alignscore)''
&= \frac{4}{\pi}\big(g''(\theta)h(\theta)+g'(\theta)h'(\theta)+g'(\theta)h'(\theta)+g(\theta)h''(\theta)\big)\\
&=\frac{4}{\pi}\big(g''(\theta)h(\theta)+2g'(\theta)h'(\theta)+g(\theta)h''(\theta)\big).
\end{align*}
At $\theta=0$, again $g(0)=h(0)=0$, so the first and last terms vanish and we obtain
\begin{equation}
\label{eq:Tsecond0}
\var(\alignscore)''|_{\theta=0}=\frac{4}{\pi}\cdot 2g'(0)h'(0)=\frac{8}{\pi}g'(0)h'(0).
\end{equation}
Substitute \eqref{eq:hprime0} and \eqref{eq:gprime0} into \eqref{eq:Tsecond0}:
\begin{equation}
\label{eq:var_second_derivative}
\var(\alignscore)''|_{\theta=0}
= \frac{8(D_p-D_q)(\lambda_p-\lambda_q)}{\pi\sqrt{D_pD_q}}.
\end{equation}

\subparagraph{Step F: $\mQ=\mI$ is the local minimizer under RCM}
Under the RCM condition and the fact that $(D_i)$ is sorted in descending order,
$D_p\ge D_q$ implies $\lambda_p\ge \lambda_q$, hence
$$
(D_p-D_q)(\lambda_p-\lambda_q)\ge 0.
$$
Therefore \eqref{eq:var_second_derivative} implies the second derivative at $\theta=0$
is nonnegative for every pair $(p,q)$.
Because also $\var{\alignscore}'=0$ by \eqref{eq:Tprime0}, it follows that $\theta=0$ is a local minimizer of
$\var(\alignscore)$ for every 2D Givens rotation direction.

But this alone is not enough. Next, we need to show for arbitrary multi-plane rotation (i.e.~a sequence of 2D Givens rotation with disjoint set of planes), this claim is still true. 

To achieve this, we need to examine the off-diagonal terms in the Hessian matrix of $\var(\alignscore)$ w.r.t.~rotation $\theta$ at arbitrary set of planes $(p,q)$ and $(r,s)$. 

\begin{equation}
\label{eq:StepF_objective_def}
F(\mQ)
:=
\tr\!\Big(\Phi(\mQ)\,\mQ^T\bm{\lambda}\,\mQ\Big) = \frac{\pi}{2}\var(\alignscore),
\qquad
\Phi(\mQ):=\arcsin\!\Big(\Corr(\mQ^T \mD\,\mQ)\Big).
\end{equation}

Next, we show some properties of this term. 

\subparagraph{(i) Sign-flip invariance: $F(\mS\mQ\mS)=F(\mQ)$.}
Let $\mS=\diag(s_1,\dots,s_m)$ be any diagonal sign matrix with $s_i\in\{\pm1\}$, so that
$\mS^T=\mS$ and $\mS^2=\mI$. We claim
\begin{equation}
\label{eq:StepF_signflip_invariance}
F(\mS\mQ\mS)=F(\mQ)\qquad\text{for all }\mQ\in O(m).
\end{equation}

To prove \eqref{eq:StepF_signflip_invariance}, first note that since $\bm{\lambda}$ and $D$ are diagonal,
they commute with $\mS$, hence $\mS\bm{\lambda}\mS=\bm{\lambda}$ and $\mS D \mS = D$.
Define
$$
B(\mQ):=\mQ^T\bm{\lambda}\mQ,
\qquad
\Sigma(\mQ):=\mQ^T D \mQ.
$$
Then
\begin{align}
B(\mS\mQ\mS)
&=\mS\,B(\mQ)\,\mS,
\label{eq:StepF_B_transform}\\
\Sigma(\mS\mQ\mS)
&=\mS\,\Sigma(\mQ)\,\mS.
\label{eq:StepF_Sigma_transform}
\end{align}
Next, we can show
$$
(\mS\Sigma\mS)_{ii}=\Sigma_{ii}.
$$
and,
\begin{align}
\Corr(\Sigma(\mS\mQ\mS))
&=\diag(\Sigma(\mS\mQ\mS))^{-1/2}\,\Sigma(\mS\mQ\mS)\,\diag(\Sigma(\mS\mQ\mS))^{-1/2}\notag\\
&=\diag(\Sigma(\mQ))^{-1/2}\,(\mS\Sigma(\mQ)\mS)\,\diag(\Sigma(\mQ))^{-1/2}\notag\\
&=\mS\Big(\diag(\Sigma(\mQ))^{-1/2}\Sigma(\mQ)\diag(\Sigma(\mQ))^{-1/2}\Big)\mS
=\mS\,\Corr(\Sigma(\mQ))\,\mS.
\label{eq:StepF_Corr_transform}
\end{align}
Based on the above, we prove the invariance:
\begin{equation}
\label{eq:StepF_arcsin_transform}
\Phi(\mS\mQ\mS)
=\arcsin(\Corr(\Sigma(\mS\mQ\mS)))
=\arcsin(\mS\,\Corr(\Sigma(\mQ))\,\mS)
=\mS\,\arcsin(\Corr(\Sigma(\mQ)))\,\mS
=\mS\,\Phi(\mQ)\,\mS.
\end{equation}
Finally, combine \eqref{eq:StepF_B_transform} and \eqref{eq:StepF_arcsin_transform}:
\begin{align*}
F(\mS\mQ\mS)
&=\tr\!\Big(\Phi(\mS\mQ\mS)\,B(\mS\mQ\mS)\Big)
=\tr\!\Big((\mS \Phi(\mQ)\mS)(\mS B(\mQ)\mS)\Big)\\
&=\tr\!\Big(\mS \Phi(\mQ) B(\mQ)\mS\Big)
=\tr\!\Big(\Phi(\mQ) B(\mQ)\mS\mS\Big)
=F(\mQ).
\end{align*}

\subparagraph{(ii) The evenness of alignment variance}
Fix two \emph{distinct} planes $(p,q)\neq(r,s)$ with $p<q$ and $r<s$.
Define the two-angle function
\begin{equation}
\label{eq:StepF_g_def}
g(u,v):=F\!\big(\mG_{pq}(u)\,\mG_{rs}(v)\big).
\end{equation}
Without loss of generality, assume
$t\in\{p,q\}$ and $t\notin\{r,s\}$.

Let $\mS=\diag(1,\dots,1,\underbrace{-1}_{t\text{-th entry}},1,\dots,1)$.
Then conjugation by $\mS$ flips the sign of the $(p,q)$-rotation angle and leaves the $(r,s)$-rotation unchanged:
\begin{equation}
\label{eq:StepF_givens_flip}
\mS\,\mG_{pq}(u)\,\mS=\mG_{pq}(-u),
\qquad
\mS\,\mG_{rs}(v)\,\mS=\mG_{rs}(v).
\end{equation}
Using \eqref{eq:StepF_signflip_invariance} and \eqref{eq:StepF_givens_flip}, we obtain
\begin{align*}
g(u,v)
&=F\!\big(\mG_{pq}(u)\mG_{rs}(v)\big)
=F\!\big(\mS\,\mG_{pq}(u)\mG_{rs}(v)\,\mS\big)\\
&=F\!\big((\mS\mG_{pq}(u)\mS)(\mS\mG_{rs}(v)\mS)\big)
=F\!\big(\mG_{pq}(-u)\mG_{rs}(v)\big)
=g(-u,v).
\end{align*}
Thus, for each fixed $v$, the function $u\mapsto g(u,v)$ is \emph{even}:
$$
g(u,v)=g(-u,v).
$$
Therefore, from the property of even function, we can differentiate:
$$
\frac{\partial g}{\partial u}(0,v)=0\quad\text{for all }v.
$$
Differentiate the above identity with respect to $v$ and then set $v=0$:
\begin{equation}
\label{eq:StepF_mixed_partial_zero}
\frac{\partial^2 g}{\partial v\,\partial u}(0,0)=0.
\end{equation}
Equation \eqref{eq:StepF_mixed_partial_zero} states exactly that the mixed quadratic term $uv$
vanishes at the origin for any pair of distinct rotation planes $(p,q)\neq(r,s)$.
Hence \emph{all} mixed second-order terms between different Givens angles vanish at $\mQ=\mI$.

\subparagraph{(iii) Second-order Taylor expansion has no cross-plane terms.}
Let $\theta=\{\theta_{pq}\}_{1\le p<q\le m}$ collect all small Givens angles, and fix any smooth local
parameterization of $O(m)$ near $\mI$ using a product of Givens rotations
$$
\mQ(\theta)=\prod_{p<q}\mG_{pq}(\theta_{pq}),
\qquad \mQ(\mathbf{0})=\mI,
$$
with some fixed order of multiplication. 
From \eqref{eq:StepF_mixed_partial_zero}, we have for any distinct $(p,q)\neq(r,s)$,
$$
\left.\frac{\partial^2 F(\mQ(\theta))}{\partial \theta_{pq}\,\partial \theta_{rs}}\right|_{\theta=\mathbf{0}}=0.
$$
Therefore, the second-order Taylor expansion of $F(\mQ(\theta))$ at $\theta=\mathbf{0}$ has no mixed quadratic terms:
\begin{equation}
\label{eq:StepF_taylor_sum_squares}
F(\mQ(\theta))
=F(\mQ(0))
+\frac12\sum_{p<q}\left.\frac{\partial^2 F(\mQ(\theta))}{\partial \theta_{pq}^2}\right|_{\theta=\mathbf{0}}\theta_{pq}^2
+o(\|\theta\|^2).
\end{equation}
Moreover, we have proved that each $\frac{\partial^2 F(\mQ(\theta))}{\theta^2_{pq}}\|_{\theta=0}$ be nonnegative under RCM.
Hence the quadratic form in \eqref{eq:StepF_taylor_sum_squares} is nonnegative for all small $\theta$,
and $\mQ=\mI$ is a local minimizer of $F$ under RCM.
\end{proof}

\subsection{Proof of \cref{thm: rotation as loss rate minimizer}}
\label{subapp: proof of minimium loss decrease rate}

\begin{proof}
This proof follows the identical logic as \cref{thm: optimal align score variance reduction}, with easier algebraic manipulations. 

\textbf{Step 1: Rewrite $\dot{\loss}$ in summation form.} 
First, 
\begin{align*}
\dot{\loss}
&=\E\!\left[\tr\!\left(\mEG^T\mR\sign(\mR^T\mNG)\right)\right].
\end{align*}
Then, we can easily show
\begin{align}
\dot{\loss} 
&=\sum_{i=1}^n \E[\mEG_i^T \Big(\mR\sign(\mR^T\mNG)\Big)_i]\\
&=\sum_{i=1}^n \E[\mEG_i^T \mR\,\sign(\mR^T\mNG_i)].
\label{eq:dotloss-colsum-givens}
\end{align}
Expanding the inner term for each $i$, we have
$$
(\mR^T\mEG_i)^T\,\sign(\mR^T\mNG_i)
=\sum_{j=1}^m (\mR^T\mEG_i)_j\,\sign\!\big((\mR^T\mNG_i)_j\big).
$$
Substituting into \eqref{eq:dotloss-colsum-givens} yields
\begin{align}
\dot{\loss}
=\sum_{i=1}^n\sum_{j=1}^m (\mR^T\mEG_i)_j\;
\E\!\left[\sign\!\big((\mR^T\mNG_i)_j\big)\right]
\label{eq:dotloss-double-sum-givens}
\end{align}

\textbf{Step 2: Compute $\E[\sign(\cdot)]$ and apply low-SNR.}  

Because $\noise_i\sim\mathcal N(\mathbf 0,\cov)$ and $\mR$ is orthogonal,
$$
\mR^T\noise_i\sim\mathcal N(\mathbf 0,\mR^T\cov\,\mR),
$$
so the scalar $(\mR^T\noise_i)_j$ is Gaussian with mean $0$ and variance $(\mR^T\cov\,\mR)_{jj}$:
$$
(\mR^T\noise_i)_j\sim \mathcal N\!\left(0,\;(\mR^T\cov\,\mR)_{jj}\right).
$$
Define
$$
\mu_{ij}:=(\mR^T\mEG_i)_j,
\qquad
\sigma_{j}^2(\mR):=(\mR^T\cov\,\mR)_{jj}.
$$
Then
$$
(\mR^T\mNG_i)_j=\mu_{ij}+Z_{ij},\qquad Z_{ij}\sim\mathcal N(0,\sigma_j^2(\mR)).
$$
Next, we compute $\E[\sign(\mu_{ij}+Z_{ij})]$ exactly. Since $Z_{ij}$ has a continuous density,
$P(\mu_{ij}+Z_{ij}=0)=0$, hence
\begin{align*}
\E[\sign(\mu_{ij}+Z_{ij})]
&=1\cdot P(\mu_{ij}+Z_{ij}>0)+(-1)\cdot P(\mu_{ij}+Z_{ij}<0)\\
&= P(\mu_{ij}+Z_{ij}>0)- P(\mu_{ij}+Z_{ij}<0).
\end{align*}
So
\begin{align*}
\E[\sign(\mu_{ij}+Z_{ij})]
&=P(Z_{ij}>-\mu_{ij})-P(Z_{ij}<-\mu_{ij})\\
&=\bigl(1-P(Z_{ij}\le -\mu_{ij})\bigr)-P(Z_{ij}<-\mu_{ij})\\
&=1-2P(Z_{ij}\le -\mu_{ij}).
\end{align*}
Let $\Phi$ be the standard normal CDF. Since $Z_{ij}/\sigma_j(\mR)\sim\mathcal N(0,1)$,
$$
P(Z_{ij}\le -\mu_{ij})=\Phi\!\left(\frac{-\mu_{ij}}{\sigma_j(\mR)}\right).
$$
Therefore,
\begin{align*}
\E[\sign(\mu_{ij}+Z_{ij})]
&=1-2\Phi\!\left(\frac{-\mu_{ij}}{\sigma_j(\mR)}\right)
=2\Phi\!\left(\frac{\mu_{ij}}{\sigma_j(\mR)}\right)-1.
\end{align*}
Using the identity $2\Phi(x)-1=\erf(x/\sqrt2)$, we obtain

\begin{align}
\E\!\left[\sign\!\big((\mR^T\mNG_i)_j\big)\right]
=\erf\!\left(\frac{(\mR^T\mEG_i)_j}{\sqrt{2(\mR^T\cov\,\mR)_{jj}}}\right).
\label{eq:sign-erf-givens}
\end{align}

Now apply Assumption~\ref{assump: low SNR}. Under low-SNR,
$$
\frac{(\mR^T\mEG_i)_j}{\sqrt{2(\mR^T\cov\,\mR)_{jj}}}\approx 0,
$$
and we ignore terms beyond first order in the Taylor expansion. Since
$$
\erf(x)=\frac{2}{\sqrt{\pi}}x+O(x^3)\quad\text{as }x\to 0,
$$
Substitution in \eqref{eq:sign-erf-givens} yields
\begin{align}
\E\!\left[\sign\!\big((\mR^T\mNG_i)_j\big)\right]
=\sqrt{\frac{2}{\pi}}\frac{(\mR^T\mEG_i)_j}{\sqrt{(\mR^T\cov\,\mR)_{jj}}}.
\label{eq:sign-lowSNR-givens}
\end{align}

\paragraph{Step 3: Reduce $\dot{\loss}$ (low-SNR) to a diagonal objective.}
Therefore, the $\dot{\loss}$ can be re-written as 
\begin{align*}
\dot{\loss}
&=\sqrt{\frac{2}{\pi}}\sum_{i=1}^n\sum_{j=1}^m
\frac{\big((\mR^T\mEG_i)_j\big)^2}{\sqrt{(\mR^T\cov\,\mR)_{jj}}}\\
&=\sqrt{\frac{2}{\pi}}\sum_{j=1}^m
\frac{\sum_{i=1}^n\big((\mR^T\mEG_i)_j\big)^2}{\sqrt{(\mR^T\cov\,\mR)_{jj}}}.
\end{align*}

Define $\mS:=\mEG\mEG^T\in\R^{m\times m}$. It is easy to show
\begin{align}
\sum_{i=1}^n\big((\mR^T\mEG_i)_j\big)^2=(\mR^T\mS\mR)_{jj}.
\label{eq:sum-squares-diag-givens}
\end{align}
Therefore,
\begin{align}
\dot{\loss}\approx \sqrt{\frac{2}{\pi}}\sum_{j=1}^m
\frac{(\mR^T\mS\mR)_{jj}}{\sqrt{(\mR^T\cov\,\mR)_{jj}}}.
\label{eq:dotloss-diag-form-givens}
\end{align}
Therefore, minimizing $\dot{\loss}$ under the low-SNR is equivalent to minimizing
\begin{align}
J(\mR):=\sum_{j=1}^m
\frac{(\mR^T\mS\mR)_{jj}}{\sqrt{(\mR^T\cov\,\mR)_{jj}}}.
\label{eq:J-R-givens}
\end{align}
Next, we follow the same logic as \cref{thm: optimal align score variance reduction}, and show eigen-based rotation is the local minimizer through Givens rotation. 
\paragraph{Step 4: Leveraging shared eigen-space.}
By Assumption~\ref{assumpt: noise geometry}, $\cov$ and $\mEG\mEG^T$ share the same eigen-space.
Hence
$$
\mS=\mU\mLambda\mU^T,\qquad \cov=\mU\mD\mU^T.
$$
with $\lambda_1\ge\cdots\ge\lambda_m\ge 0$ and $D_1\ge\cdots\ge D_m>0$.
Write $\mR=\mU\mQ$ with $\mQ\in\R^{m\times m}$ orthogonal. Then
\begin{align*}
\mR^T\mS\mR
=\mQ^T\bm{\lambda}\mQ,\;\;\;\;
\mR^T\cov\,\mR
=\mQ^T\mD\mQ.
\end{align*}
Substituting into \eqref{eq:J-R-givens} yields
\begin{align}
J(\mU\mQ)=F(\mQ):=\sum_{j=1}^m
\frac{(\mQ^T\bm{\lambda}\mQ)_{jj}}{\sqrt{(\mQ^T\mD\mQ)_{jj}}}.
\label{eq:F-Q-givens}
\end{align}
Thus it suffices to show that $\mQ=\mI$ is a local minimizer of $F(\mQ)$ over orthogonal $\mQ$.

\paragraph{Step 5: local minimum through Givens rotations}
Fix indices $1\le p<q\le m$. Define the Givens rotation $\mG_{pq}(\theta)\in\R^{m\times m}$ by
$$
\mG_{pq}(\theta)=\mI
\quad\text{except on rows and cols $(p,q)$ where it equals}\quad
\begin{bmatrix}
\cos\theta & -\sin\theta\\
\sin\theta & \cos\theta
\end{bmatrix}.
$$ similar to the proof of \cref{thm: optimal align score variance reduction}.

Consider the function
$$
\phi_{pq}(\theta):=F(\mG_{pq}(\theta)).
$$
Because $\mG_{pq}(\theta)$ is the identity outside the $(p,q)$-plane, for any $j\notin\{p,q\}$ we have
$$
(\mG_{pq}(\theta)^T\bm{\lambda}\mG_{pq}(\theta))_{jj}=\lambda_j,
\qquad
(\mG_{pq}(\theta)^T\mD\mG_{pq}(\theta))_{jj}=D_j,
$$
so the corresponding $F$ are constant.
Hence, we have
\begin{align}
\phi_{pq}(\theta)
=\sum_{j\neq p,q}\frac{\lambda_j}{\sqrt{D_j}}
+\frac{A_p(\theta)}{\sqrt{B_p(\theta)}}
+\frac{A_q(\theta)}{\sqrt{B_q(\theta)}},
\label{eq:phi-decomp-givens}
\end{align}
where
\begin{align*}
A_p(\theta)&:=(\mG_{pq}(\theta)^T\bm{\lambda}\mG_{pq}(\theta))_{pp},&
B_p(\theta)&:=(\mG_{pq}(\theta)^T\mD\mG_{pq}(\theta))_{pp},\\
A_q(\theta)&:=(\mG_{pq}(\theta)^T\bm{\lambda}\mG_{pq}(\theta))_{qq},&
B_q(\theta)&:=(\mG_{pq}(\theta)^T\mD\mG_{pq}(\theta))_{qq}.
\end{align*}
Because $\bm{\lambda}$ and $\mD$ are diagonal and $\mG_{pq}(\theta)$ only mixes coordinates $p$ and $q$,
a direct computation gives
\begin{align}
A_p(\theta)=\lambda_p\cos^2\theta+\lambda_q\sin^2\theta,\qquad
A_q(\theta)=\lambda_p\sin^2\theta+\lambda_q\cos^2\theta,
\label{eq:ApAq-givens}\\
B_p(\theta)=D_p\cos^2\theta+D_q\sin^2\theta,\qquad
B_q(\theta)=D_p\sin^2\theta+D_q\cos^2\theta.
\label{eq:BpBq-givens}
\end{align}

Define
$$
f_p(\theta):=\frac{A_p(\theta)}{\sqrt{B_p(\theta)}},\qquad
f_q(\theta):=\frac{A_q(\theta)}{\sqrt{B_q(\theta)}}.
$$

Differentiate \eqref{eq:ApAq-givens}--\eqref{eq:BpBq-givens}, we obtain
the derivative $f_p'(\theta)$ using the product/chain rules:
\begin{align}
f_p'(\theta)
&=\frac{d}{d\theta}\Big(A_p(\theta)\,B_p(\theta)^{-1/2}\Big)\\
&=A_p'(\theta)\,B_p(\theta)^{-1/2}
+A_p(\theta)\,\frac{d}{d\theta}\Big(B_p(\theta)^{-1/2}\Big)\\
&=A_p'(\theta)\frac{1}{\sqrt{B_p(\theta)}}
+A_p(\theta)\left(-\frac12\right)B_p(\theta)^{-3/2}B_p'(\theta)\\
&=\frac{A_p'(\theta)}{\sqrt{B_p(\theta)}}
-\frac12\,\frac{A_p(\theta)B_p'(\theta)}{B_p(\theta)^{3/2}} \\
&=\sin(2\theta)\left(
\frac{\lambda_q-\lambda_p}{\sqrt{B_p(\theta)}}
-\frac12\,\frac{A_p(\theta)(D_q-D_p)}{B_p(\theta)^{3/2}}
\right).
\label{eq:fp-prime}
\end{align}
Similarly,
\begin{align}
f_q'(\theta)
=\sin(2\theta)\left(
\frac{\lambda_p-\lambda_q}{\sqrt{B_q(\theta)}}
-\frac12\,\frac{A_q(\theta)(D_p-D_q)}{B_q(\theta)^{3/2}}
\right).
\label{eq:fq-prime}
\end{align}
Then, we have
$$
f_p'(0)=0,\qquad f_q'(0)=0,
$$
at $\theta =0$.
Hence ,$\phi_{pq}'(0)=0$. Because the $(p,q)$-Givens rotations generate the tangent directions of the orthogonal group at $\mI$, this shows that $\mQ=\mI$ is a stationary point of $F$.

\subparagraph{Second derivative at $\theta=0$.}
Re-written \eqref{eq:fp-prime} as
$$
f_p'(\theta)=\sin(2\theta)\,H_p(\theta),
\quad\text{where}\quad
H_p(\theta):=
\frac{\lambda_q-\lambda_p}{\sqrt{B_p(\theta)}}
-\frac12\,\frac{A_p(\theta)(D_q-D_p)}{B_p(\theta)^{3/2}}.
$$
Differentiate:
$$
f_p''(\theta)=2\cos(2\theta)\,H_p(\theta)+\sin(2\theta)\,H_p'(\theta).
$$
Evaluating at $\theta=0$ gives 
\begin{align}
f_p''(0)=2H_p(0).
\label{eq:fp-second-0}
\end{align}
With simple substitution, we obtain
\begin{align}
H_p(0)=\frac{\lambda_q-\lambda_p}{\sqrt{D_p}}
-\frac12\,\frac{\lambda_p(D_q-D_p)}{D_p^{3/2}},
\label{eq:Hp0}
\end{align}
and 
\begin{align}
f_p''(0)
=2\left(
\frac{\lambda_q-\lambda_p}{\sqrt{D_p}}
-\frac12\,\frac{\lambda_p(D_q-D_p)}{D_p^{3/2}}
\right).
\label{eq:fp-second-final}
\end{align}

Similarly,
\begin{align}
f_q''(0)
=2\left(
-\frac{\lambda_q-\lambda_p}{\sqrt{D_q}}
+\frac12\,\frac{\lambda_q(D_q-D_p)}{D_q^{3/2}}
\right).
\label{eq:fq-second-final}
\end{align}

Therefore,
\begin{align}
\phi_{pq}''(0)
&=f_p''(0)+f_q''(0)\notag\\
&=2(\lambda_q-\lambda_p)\left(\frac{1}{\sqrt{D_p}}-\frac{1}{\sqrt{D_q}}\right)
+(D_q-D_p)\left(\frac{\lambda_q}{D_q^{3/2}}-\frac{\lambda_p}{D_p^{3/2}}\right)\\
&=2c_{pq},
\label{eq:phi-second-0}
\end{align}
where
\begin{align}
c_{pq}:=(\lambda_q-\lambda_p)\left(\frac{1}{\sqrt{D_p}}-\frac{1}{\sqrt{D_q}}\right)
+\frac{D_q-D_p}{2}\left(\frac{\lambda_q}{D_q^{3/2}}-\frac{\lambda_p}{D_p^{3/2}}\right),
\label{eq:cpq-def-givens}
\end{align}

Because $D_1\ge\cdots\ge D_m$, for $p<q$ we have $D_p\ge D_q$.
Define
$$
a:=\sqrt{D_p},\qquad b:=\sqrt{D_q},\qquad
r_p:=\frac{\lambda_p}{D_p},\qquad r_q:=\frac{\lambda_q}{D_q}.
$$
Then $a\ge b>0$, $\lambda_p=r_p a^2$, and $\lambda_q=r_q b^2$.
Therefore, 
\begin{align*}
c_{pq}
&=(r_q b^2-r_p a^2)\frac{b-a}{ab}
+\frac{(b-a)(b+a)}{2}\left(\frac{r_q}{b}-\frac{r_p}{a}\right)\\
&=(b-a)\left(\frac{r_q b^2-r_p a^2}{ab}+\frac{b+a}{2}\left(\frac{r_q}{b}-\frac{r_p}{a}\right)\right)\\
&=(b-a)\cdot\frac{2(r_q b^2-r_p a^2)+(b+a)(r_q a-r_p b)}{2ab}\\
&=\frac{a-b}{2ab}\left(r_p(2a^2+ab+b^2)-r_q(a^2+ab+2b^2)\right)
\end{align*}

Now, since $a\ge b$, the prefactor $\frac{a-b}{2ab}\ge 0$.
Next rewrite the bracket as
\begin{align*}
r_p(2a^2+ab+b^2)-r_q(a^2+ab+2b^2)
&=\Big(r_p(a^2+ab+2b^2)-r_q(a^2+ab+2b^2)\Big)+r_p(a^2-b^2)\\
&=(r_p-r_q)(a^2+ab+2b^2)+r_p(a^2-b^2).
\end{align*}
Since $a^2+ab+2b^2>0$ and $a^2-b^2\ge 0$, it remains to note that $r_p-r_q\ge 0$.
Indeed, by the RCM condition (Definition~\ref{def: ratio co-monotonic}), for $D_p>D_q$ we have
$$
\frac{\lambda_p}{D_p}>\frac{\lambda_q}{D_q}\quad\Longleftrightarrow\quad r_p>r_q,
$$
Hence in all cases $r_p-r_q\ge 0$ and $r_p\ge 0$,
Therefore $c_{pq}\ge 0$, and
$$
\phi_{pq}''(0)=2c_{pq}\ge 0.
$$

\paragraph{Step 7: Conclude $\mQ=\mI$ is a local minimizer of $F$.}
We have shown for every $p<q$ that $\phi_{pq}'(0)=0$ and $\phi_{pq}''(0)\ge 0$, i.e., along every Givens-rotation direction through $\mI$, the first derivative vanishes and the second derivative is nonnegative. This represents that $\mQ=\mI$ is the local minimizer upto a single 2D rotation. Now, we have to prove that it is still a minimizer under a sequence of rotations with disjoint set of planes.  This argument is identical to \textbf{Step F} in the proof of \cref{thm: optimal align score variance reduction}. In the following, we include it for completeness.

\subparagraph{(i) A simple symmetry: $F$ is invariant under diagonal sign flips.}
For any diagonal sign matrix $\mS=\diag(s_1,\dots,s_m)$ with $s_k\in\{\pm 1\}$, we can easily show
\begin{align}
F(\mS\mQ\mS)=F(\mQ)\qquad\text{for all }\mQ\in\mathrm{O}(m).
\label{eq:signflip-invariance-step7}
\end{align}

\subparagraph{(ii) Mixed second-order terms vanish at $\mI$ by an evenness argument.}
Fix two \emph{distinct} planes $(p,q)\neq(r,s)$ with $p<q$ and $r<s$.
Consider the two-plane rotation
$$
g(u,v):=F\!\big(\mG_{pq}(u)\mG_{rs}(v)\big).
$$

Because $\{p,q\}\neq\{r,s\}$ as sets, there exists an index $t$ belonging to exactly one of the sets (i.e., $t\in\{p,q\}\triangle\{r,s\}$).
Assume $t\in\{p,q\}$ and $t\notin\{r,s\}$ (the other case is symmetric), and define the sign matrix
$$
\mS=\diag(1,\dots,1,\underbrace{-1}_{t\text{-th entry}},1,\dots,1).
$$
Then conjugation by $\mS$ flips the angle in the $(p,q)$-plane and leaves the $(r,s)$-plane unchanged:
\begin{align}
\mS\,\mG_{pq}(u)\,\mS=\mG_{pq}(-u),
\qquad
\mS\,\mG_{rs}(v)\,\mS=\mG_{rs}(v).
\label{eq:givens-flip-step7}
\end{align}
Using \eqref{eq:signflip-invariance-step7} and \eqref{eq:givens-flip-step7},
\begin{align*}
g(u,v)
&=F\!\big(\mG_{pq}(u)\mG_{rs}(v)\big)
=F\!\big(\mS\,\mG_{pq}(u)\mG_{rs}(v)\,\mS\big)\\
&=F\!\big((\mS\mG_{pq}(u)\mS)(\mS\mG_{rs}(v)\mS)\big)
=F\!\big(\mG_{pq}(-u)\mG_{rs}(v)\big)
=g(-u,v).
\end{align*}
Thus $g(u,v)$ is an \emph{even} function of $u$ for every fixed $v$:
$$
g(u,v)=g(-u,v).
$$
Differentiate with respect to $u$ and set $u=0$ to get
$$
\frac{\partial g}{\partial u}(0,v)=0\quad\text{for all small }v.
$$
Differentiating this identity with respect to $v$ and then setting $v=0$ yields
\begin{align}
\frac{\partial^2 g}{\partial v\,\partial u}(0,0)=0.
\label{eq:mixed-uv-zero-step7}
\end{align}
This is from the property of even function w.r.t. $u$. 
Equation \eqref{eq:mixed-uv-zero-step7} exactly says that there is \emph{no} mixed quadratic term $uv$ at the origin when we turn on two different Givens angles. Since the choice of $(p,q)\neq(r,s)$ was arbitrary, \emph{all} mixed second-order terms between distinct Givens planes vanish.

\subparagraph{(iii) Second-order Taylor expansion has no cross-plane terms}
Let $\theta=\{\theta_{pq}\}_{1\le p<q\le m}$ collect all small Givens angles, and consider any smooth local parameterization of $\mathrm{SO}(m)$ near $\mI$ by these Given's rotations.

From previous analysis, we know for $(p,q)\neq(r,s)$,
$$
\left.\frac{\partial^2 F(\mQ(\theta))}{\partial \theta_{pq}\,\partial \theta_{rs}}\right|_{\theta=0}=0.
$$
Therefore, the second-order Taylor expansion of $F(\mQ(\theta))$ at $\theta=0$ has \emph{no} mixed quadratic terms and takes the form
\begin{align}
\psi(\theta)
=F(\mQ(0))+\frac12\sum_{p<q}\left.\frac{\partial^2F(\mQ(\theta))}{\partial \theta_{pq}^2}\right|_{\theta=0}\theta_{pq}^2
+o(\|\theta\|^2)
=F(\mQ(0))+\sum_{p<q} c_{pq}\,\theta_{pq}^2+o(\|\theta\|^2).
\label{eq:taylor-sum-of-squares-step7}
\end{align}
Since each $c_{pq}\ge 0$, the quadratic term $\sum_{p<q} c_{pq}\theta_{pq}^2$ is nonnegative for every $\theta$.

Therefore, we conclude that $\mR=\mU\mQ$ is locally minimized at $\mQ=\mI$ under the RCM condition.

\end{proof}

\end{document}